\documentclass{article}

\usepackage[utf8]{inputenc} %

\usepackage[preprint]{jmlr_custom}

\addbibresource{references.bib}

\usepackage{graphicx} %
\usepackage{enumitem}
\setlist[enumerate]{label=(\roman*), leftmargin=*, itemsep=1pt, topsep=2pt}
\usepackage{microtype}
\usepackage{pdflscape}
\usepackage{amsmath}

\usepackage{etoolbox}
\BeforeBeginEnvironment{remark}{\vspace{6pt}}

\usepackage{amsfonts}
\usepackage{amssymb,amsthm}
\usepackage{bm}
\usepackage{bbm}
\usepackage{parskip}
\usepackage{rotating}
\usepackage{mathtools}
\usepackage{mathrsfs}
\usepackage{mathbbol}
\usepackage{xcolor}
\usepackage[dvipsnames]{xcolor}
\usepackage{adjustbox}
\usepackage[capitalize]{cleveref}
\usepackage{mathtools}
\usepackage[only,llbracket,rrbracket]{stmaryrd}

\usepackage{booktabs}  %
\usepackage{array}     %
\usepackage[most]{tcolorbox}  %

\usepackage{tikz}
\usetikzlibrary{shapes.geometric, arrows.meta, positioning, calc}

\definecolor{intropath}{RGB}{93, 178, 96}
\definecolor{advancedpath}{RGB}{218, 153, 76}
\definecolor{expertpath}{RGB}{172, 109, 218}

\renewcommand{\arraystretch}{1.3}  %
\definecolor{rowgray}{gray}{0.95}  %

\newtcolorbox{continousbox}[1][]{%
  enhanced,
  breakable,
  frame hidden,
  boxrule=0pt,
  parbox=false,
  borderline={0pt}{0pt}{white},
  colback=blue!50!gray!10!white,
  colbacklower=blue!35!gray!10!white,
  arc=2mm,
  title={#1},
  fonttitle=\bfseries,
  coltitle=black, %
  attach boxed title to top left={xshift=2mm,yshift*=-2mm},
  varwidth boxed title,
  boxed title style={
    frame hidden,
    boxrule=0pt,
    colback=blue!15!white,
    arc=1mm,
    left=2mm, right=2mm, top=0.6mm, bottom=0.6mm
  }
}

\newcommand{\continuous}[2][]{%
  \begin{continousbox}[#1]
    #2
  \end{continousbox}
}

\definecolor{customblue}{HTML}{4A90E2}

\newtcolorbox{discretebox}[1][]{%
  enhanced,
  breakable,
  frame hidden,
  boxrule=0pt,
  parbox=false,
  borderline={0pt}{0pt}{white},
  colback=red!50!gray!10!white,
  colbacklower=red!35!gray!10!white,
  arc=2mm,
  title={#1},
  fonttitle=\bfseries,
  coltitle=black, %
  attach boxed title to top left={xshift=2mm,yshift*=-2mm},
  varwidth boxed title,
  boxed title style={
    frame hidden,
    boxrule=0pt,
    colback=red!15!white,
    arc=1mm,
    left=2mm, right=2mm, top=0.6mm, bottom=0.6mm
  }
}

\newcommand{\discrete}[2][]{%
  \begin{discretebox}[#1]#2\end{discretebox}%
}

\definecolor{generalbg}{HTML}{FDF1DB}   
\definecolor{generaltitle}{HTML}{F3DEAA}  

\definecolor{continuous}{HTML}{5580b0}   
\definecolor{general}{HTML}{f3b059}
\definecolor{discrete}{HTML}{b72d40}

\newtcolorbox{generalbox}[1][]{%
  enhanced,
  breakable,
  frame hidden,
  boxrule=0pt,
  parbox=false,
  colback=generalbg,
  colbacklower=generalbg!4!gray,
  arc=2mm,
  title={#1},
  fonttitle=\bfseries,
  coltitle=black,
  attach boxed title to top left={xshift=2mm,yshift*=-2mm},
  varwidth boxed title,
  boxed title style={
    frame hidden,
    boxrule=0pt,
    colback=generaltitle,
    arc=1mm,
    left=2mm, right=2mm, top=0.6mm, bottom=0.6mm
  }
}

\newcommand{\general}[2][]{%
  \begin{generalbox}[#1]#2\end{generalbox}%
}

\newcommand{\transp}{^\top}

\newcommand{\e}{\mathbf{e}}

\newcommand{\ub}{\mathbf{u}}
\newcommand{\x}{\mathbf{x}}

\newcommand{\y}{\mathbf{y}}

\newcommand{\w}{\mathbf{w}}
\newcommand{\p}{\mathbf{p}}
\newcommand{\q}{\mathbf{q}}
\newcommand{\z}{\mathbf{z}}

\newcommand{\mub}{\boldsymbol{\mu}}

\newcommand{\epsilonb}{\boldsymbol{\epsilon}}
\newcommand{\phib}{{\boldsymbol{\phi}}}
\newcommand{\psib}{{\boldsymbol{\psi}}}

\newcommand{\Sigmab}{\boldsymbol{\Sigma}}
\newcommand{\etab}{\boldsymbol{\eta}}
\newcommand{\thetab}{{\boldsymbol{\theta}}}

\newcommand{\s}{\mathbf{s}}

\newcommand{\f}{\mathbf{f}}
\newcommand{\wtilde}{\tilde{\w}}
\newcommand{\Q}{\mathbf{Q}}
\newcommand{\Qseq}{\boldsymbol{\mathcal Q}}
\newcommand{\R}{\mathbf{R}}
\newcommand{\Rseq}{\boldsymbol{\mathcal R}}
\newcommand{\I}{\mathbf{I}}
\newcommand{\J}{\mathbf{J}}
\newcommand{\C}{\mathbf{C}}
\newcommand{\B}{\mathbf{B}}
\newcommand{\Sb}{\mathbf{S}}

\newcommand{\zerob}{\mathbf{0}}
\newcommand{\calX}{\mathcal{X}}
\newcommand{\calD}{\mathcal{D}}

\newcommand{\RR}{\mathbb{R}}
\newcommand{\Normal}{\mathcal{N}}
\newcommand{\Cat}{\mathrm{Cat}}

\newcommand{\ind}{\mathbbm{1}}
\newcommand{\Qpath}{\mathbb{Q}}
\newcommand{\Ppath}{\mathbb{P}}
\newcommand{\Lgen}{\mathscr{L}}
\newcommand{\Ggen}{\mathscr{G}}

\newcommand{\diff}{\mathrm{d}}
\newcommand{\dt}{\diff t}
\newcommand{\dx}{\diff \x}
\newcommand{\dw}{\diff \w}

\newcommand{\KL}{D_\mathrm{KL}}

\newcommand{\E}{\mathbb{E}}

\newcommand{\cond}{\,|\,}

\newcommand{\qdata}{q_{\mathrm{data}}}
\newcommand{\pnoise}{\p_{\mathrm{noise}}}

\definecolor{figblue}{HTML}{3A62B4}
\definecolor{figred}{HTML}{C8143C}
\definecolor{figyellow}{HTML}{FDF2DC}

\title{Foundations of Diffusion Models in General State Spaces:\\A Self-Contained Introduction}

\author{%
\textbf{Vincent Pauline}\thanks{Correspondence to: \texttt{vincent.paulinef@gmail.com} and \texttt{andrea.dittadi@gmail.com}.} \affnum{,1,2,3} \and
\textbf{Tobias Höppe}\affnum{1,2,3} \and
\textbf{Kirill Neklyudov}\affnum{4,5} \\[2pt]
\textbf{Alexander Tong}\affnum{4,5,6} \and
\textbf{Stefan Bauer}\affnum{1,2,3} \and
\textbf{Andrea Dittadi}\footnotemark[1] \affnum{,1,2,3}
\\[8pt]
\affiliation{1}{Technical University of Munich}\and
\affiliation{2}{Helmholtz AI}\and 
\affiliation{3}{Munich Center for Machine Learning (MCML)} \\
\affiliation{4}{Mila -- Quebec AI Institute} \and
\affiliation{5}{Université de Montréal}\and
\affiliation{6}{AITHYRA}
}

\begin{document}

\maketitle

\begin{abstract}
Although diffusion models now occupy a central place in generative modeling, introductory treatments commonly assume Euclidean data and seldom clarify their connection to discrete-state analogues. This article is a \emph{self-contained} primer on diffusion over \emph{general state spaces}, unifying continuous domains and discrete/categorical structures under one lens. We develop the discrete-time view (forward noising via Markov kernels and learned reverse dynamics) alongside its continuous-time limits---stochastic differential equations (SDEs) in $\mathbb{R}^d$ and continuous-time Markov chains (CTMCs) on finite alphabets---and derive the associated Fokker--Planck and master equations. A common variational treatment yields the ELBO that underpins standard training losses.
We make explicit how forward corruption choices---Gaussian processes in continuous spaces and structured categorical transition kernels (uniform, masking/absorbing and more) in discrete spaces---shape reverse dynamics and the ELBO. The presentation is layered for three audiences: newcomers seeking a self-contained intuitive introduction; diffusion practitioners wanting a global theoretical synthesis; and continuous-diffusion experts looking for an analogy-first path into discrete diffusion. The result is a unified roadmap to modern diffusion methodology across continuous domains and discrete sequences, highlighting a compact set of reusable proofs, identities, and core theoretical principles.\looseness=-1
\end{abstract}

\begin{figure}[h!]
    \centering
    \includegraphics[width=1\linewidth]{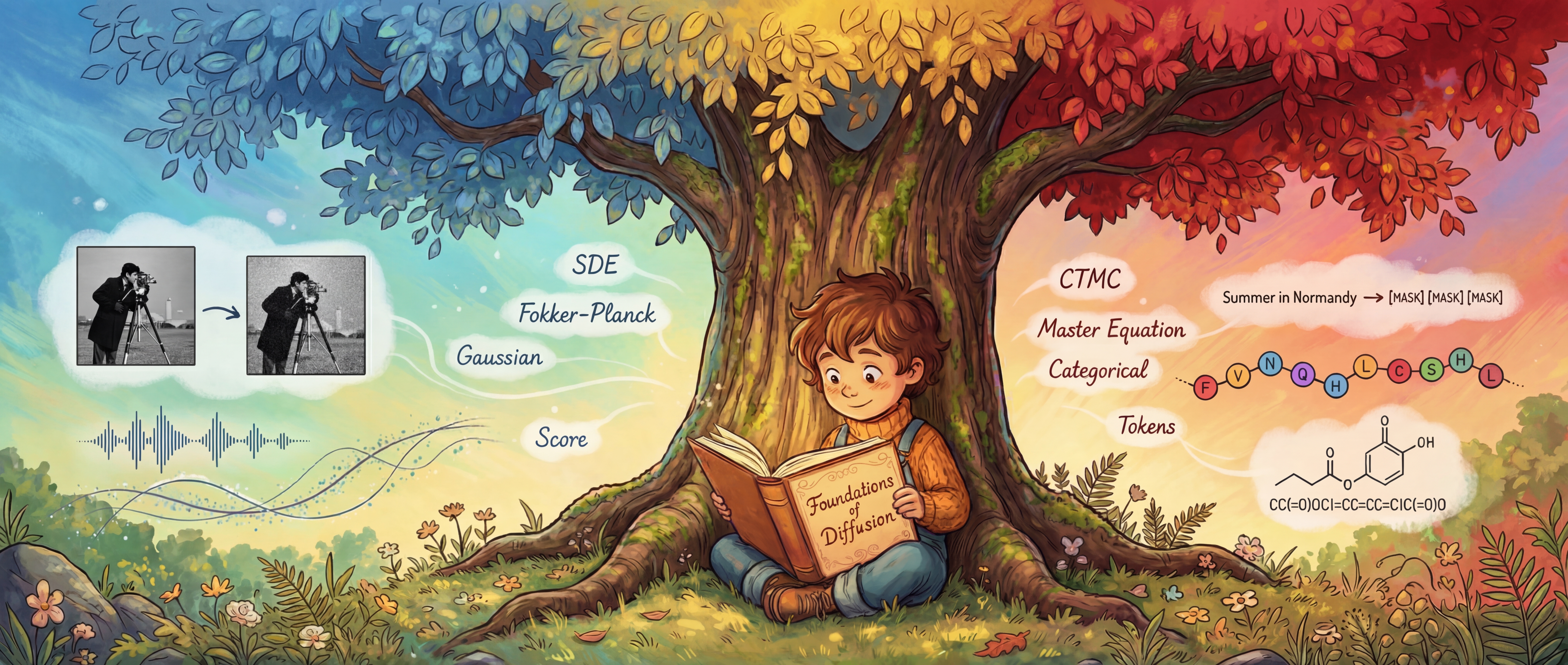}
    \caption{Young researcher reading \emph{Foundations of Diffusion Models in General State Spaces: A Self-Contained Introduction}. The left side illustrates key ideas from continuous-state diffusion models, and the right side highlights corresponding principles for discrete-state models. Image generated with gemini-3-pro-image-preview \cite{team2023gemini}.}
    \label{fig:image_diffusion}
\end{figure}

\newpage

\tableofcontents

\newpage

\section{Overview}

\subsection{Manuscript structure}

\begin{figure}[h!]
    \centering
    \vspace{10pt}
    \includegraphics[width=0.98\linewidth]{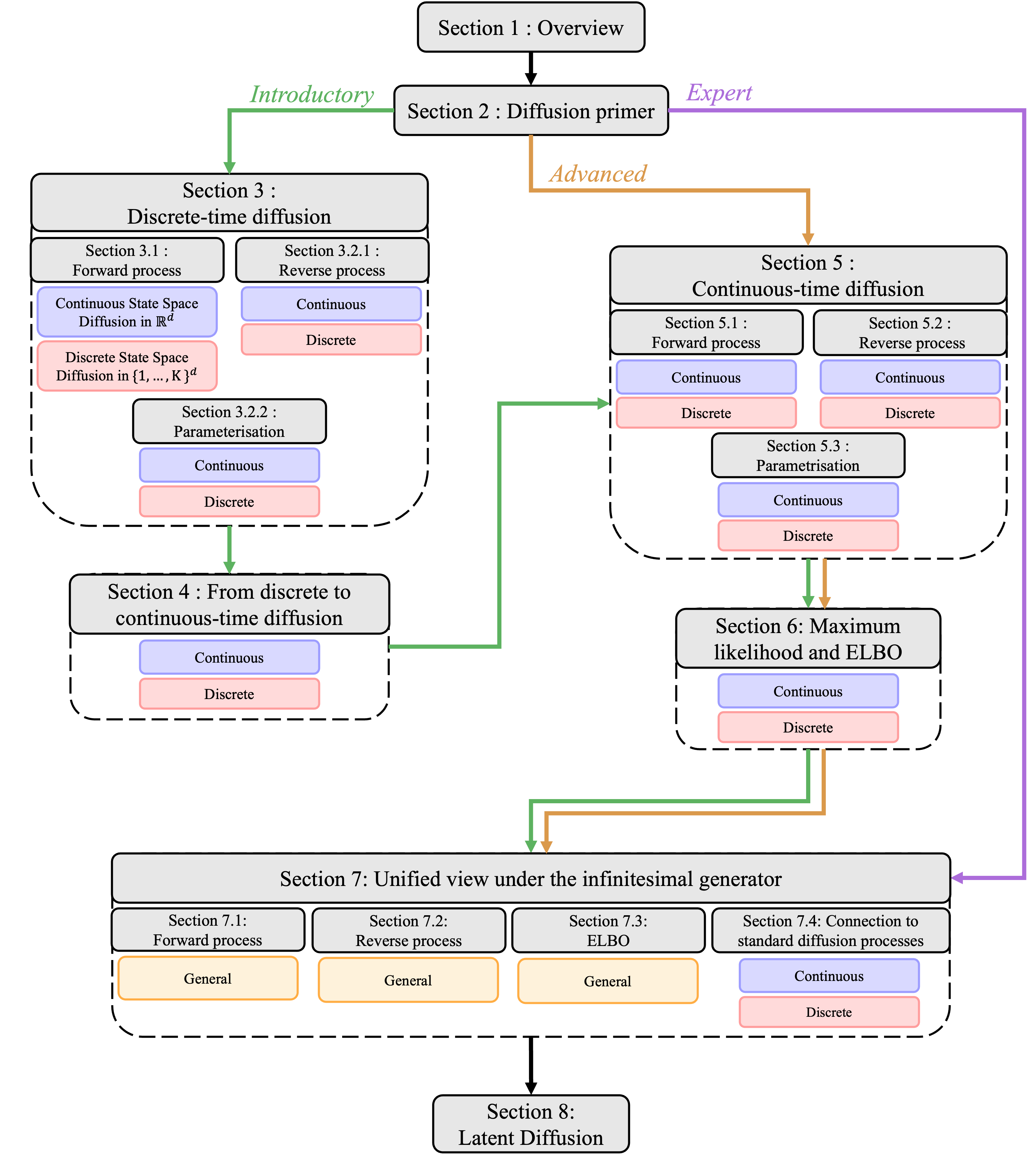}
    \caption{Visual roadmap of the manuscript. Three suggested reading paths are indicated: the \textcolor{intropath}{\textit{introductory}} path (green) for newcomers to diffusion models, the \textcolor{advancedpath}{\textit{advanced}} path (brown) for practitioners familiar with discrete-time diffusion seeking continuous-time theory, and the \textcolor{expertpath}{\textit{expert}} path (purple) for readers seeking the most general theoretical framework.}
    \label{fig:roadmap}
\end{figure}

This manuscript provides a unified treatment of diffusion models across continuous and discrete state spaces, developing the theory from discrete-time formulations through to continuous-time limits, and establishing connections to modern generator-based perspectives. It is structured to serve multiple audiences with different backgrounds and goals:

\textit{First read advice for newcomers to diffusion models} (\textcolor{intropath}{\textit{introductory}} path in \cref{fig:roadmap}): We recommend that newcomers begin with \cref{sect:diffusion-primer} (Diffusion Primer), which provides intuition and historical context from a probabilistic modeling perspective, then proceed to \cref{sect:discrete_time_diffusion} (Discrete-time introduction) for the foundational discrete-time formulation, consult \cref{sect:ELBO} (Maximum likelihood and ELBO) for training objectives, and \cref{sect:latent-diffusion} (Latent diffusion) for performing diffusion in latent spaces. After gaining familiarity with this core material, the continuous-time formulation (\cref{sect:discrete-to-continuous-time-csp,sect:continuous-diffusion}) and the generator perspective (\cref{sect:generator-perspective}) provide valuable theoretical depth and unification.

\textit{First read advice for practitioners familiar with discrete time diffusion} (\textcolor{advancedpath}{\textit{advanced}} path in \cref{fig:roadmap}): If you are familiar with discrete-time diffusion and interested in the continuous-time theory. You may skip directly to \cref{sect:continuous-diffusion} (Continuous-time diffusion) and refer if necessary to \cref{sect:discrete_time_diffusion} (Discrete-time introduction) and \cref{sect:discrete-to-continuous-time-csp} (Continuous-to-discrete time diffusion) to relate continuous to discrete time.

\textit{First read advice for experts seeking a complete synthesis} (\textcolor{expertpath}{\textit{expert}} path in \cref{fig:roadmap}): The entire document provides a unified treatment with parallel derivations for continuous and discrete state spaces. The generator formalism in \cref{sect:generator-perspective} offers the most general framework, subsuming both SDEs and CTMCs as special cases and providing a unified view of time reversal and maximum-likelihood training.

Throughout this article, we maintain parallel treatment of continuous and discrete state spaces, highlighting both their shared structure and their distinctive features. Main results are organised using colored boxes for easy navigation:
\begin{itemize}
    \item \textcolor{continuous}{\textbf{Blue boxes}} contain results specific to \textbf{continuous state spaces} (e.g., Gaussian diffusion, SDEs, Fokker–Planck equations).
    \item \textcolor{discrete}{\textbf{Red boxes}} contain results specific to \textbf{discrete state spaces} (e.g., categorical diffusion, CTMCs, master equations).
    \item \textcolor{general}{\textbf{Yellow boxes}} contain \textbf{general results} valid for both continuous and discrete settings.
\end{itemize}

The key sections cover:
\begin{itemize}
    \item \textbf{\cref{sect:diffusion-primer}:} High-level intuition and historical motivation from a probabilistic modeling perspective.
    \item \textbf{\cref{sect:discrete_time_diffusion}:} Discrete-time formulation with explicit forward/reverse processes and parameterisations.
    \item \textbf{\cref{sect:discrete-to-continuous-time-csp}:} The limiting procedure from discrete to continuous time.
    \item \textbf{\cref{sect:continuous-diffusion}:} Continuous-time theory via Kolmogorov equations, SDEs, and CTMCs.
    \item \textbf{\cref{sect:ELBO}:} Maximum likelihood training and the ELBO derivation.
    \item \textbf{\cref{sect:generator-perspective}:} General framework unifying continuous and discrete diffusion through the infinitesimal generator and its adjoint, showing how both SDEs and CTMCs emerge as special cases.
    \item \textbf{\cref{sect:latent-diffusion}:} Performing diffusion in learned latent spaces and connections between continuous and discrete approaches for discrete data.

\end{itemize}

\subsection{Notation}
We adopt the following notational conventions throughout:

\vspace{0.5em}
\begingroup
\renewcommand{\arraystretch}{1.25}
\begin{center}
\begin{tabular*}{\textwidth}{@{\extracolsep{\fill}}>{\centering\arraybackslash}p{1.4in} p{4.5in}@{}}
\hline
\textbf{Notation} & \textbf{Meaning} \label{table:notation1}
\\
\hline
\multicolumn{2}{l}{\textit{Vectors, matrices, and indexing}} \\
$\alpha$, $\sigma$, $t$   & scalars (noise schedule parameters, time) \\
$\x$, $\y$, $\z$       & vectors or vector-valued random variables \\
$x^{(k)}$       & $k$-th element of vector $\x$ \\
$\x^{\setminus k}$ & vector $\x$ with all elements except the $k$-th \\
$\R$, $\Q$       & matrices (rate matrix, transition matrix) \\
$\R_{ij}$ or $[\R_t]_{ij}$   & element $(i, j)$ of matrix $\R$ or $\R_t$ \\
$\I$, $\mathbf{1}$, $\mathbf{0}$ & identity matrix, vector of ones, vector of zeros \\
$\e_k$ & $k$-th standard basis (one-hot) vector \\
\hline
\multicolumn{2}{l}{\textit{Matrix and vector operations}} \\
$^\top$ & matrix/vector transpose \\
$\odot$ & Hadamard (element-wise) product \\
$\otimes$ & Kronecker product \\
$\oplus$ & Kronecker sum \\
\hline
\multicolumn{2}{l}{\textit{Differential operators}} \\
$\diff$, $\diff t$, $\diff \x$ & differential, time differential, state differential \\
$\partial_t$ & partial derivative with respect to $t$ \\
$\nabla_\x$ & gradient with respect to $\x$ \\
$\nabla_\x \cdot$ & divergence with respect to $\x$ \\
$\Delta_\x$ & Laplacian with respect to $\x$ \\
\hline
\multicolumn{2}{l}{\textit{Stochastic processes and path measures}} \\
$\x_{[s,t]}$ & sample path from time $s$ to time $t$ \\
$\Ppath$, $\Qpath$ & path measures \\
$\w_t$, $\diff \w_t$ & Wiener process (Brownian motion) and its increment \\
\hline
\multicolumn{2}{l}{\textit{Probability distributions and densities}} \\
$q_t(\x)$ & marginal density/probability at time $t$ \\
$q(\x_t \cond \x_s)$ or $q_{t|s}$ & conditional transition from time $s$ to $t$ \\
$\qdata$ & data distribution \\
$p_{\text{noise}}$  & noise/prior distribution \\
$p^\thetab$ & learned/parametric reverse distribution \\
$\mathcal{N}(x;\mub,\Sigmab)$ & Gaussian distribution over $\calX$ with mean $\mub$ and covariance $\Sigmab$ \\
$\Cat(x;\q)$ & categorical distribution over $\calX$ with probability vector $\q$ \\
\hline
\end{tabular*}
\end{center}
\endgroup
\vspace{0.5em}

\vspace{0.5em}
\begingroup
\renewcommand{\arraystretch}{1.25}
\begin{center}
\begin{tabular*}{\textwidth}{@{\extracolsep{\fill}}>{\centering\arraybackslash}p{1.4in} p{4.5in}@{}}
\hline
\textbf{Notation} & \textbf{Meaning} \label{table:notation2}
\\
\hline
\multicolumn{2}{l}{\textit{Probabilistic operators and functionals}} \\
$\Lgen_t$, $(\Lgen_t)^*$ & infinitesimal generator and its adjoint \\
$\Ggen_t$ & extended generator ($\partial_t + \Lgen_t$) \\
$\E$, $\E_{q}[\cdot]$ & expectation (under distribution $q$) \\
$\KL(\cdot \| \cdot)$ & Kullback--Leibler divergence \\
$\ind_A$, $\delta_{x,y}$ & indicator function of set $A$; Kronecker delta \\
\hline
\multicolumn{2}{l}{\textit{Time reversal (hat convention)}} \\
$t \in [0,1]$ & forward time: data at $t=0$, noise at $t=1$ \\
$s \in [0,1]$ & reverse time: noise at $s=0$, data at $s=1$; related by $s = 1-t$ \\
$\hat q_s \coloneqq q_{1-s}$ & time-reversed marginal (index substitution) \\
$\hat \x_s \coloneqq \x_{1-s}$ & time-reversed process (index substitution) \\
$\hat\Lgen_s$, $\hat\Rseq_s$ & reverse process generator/rate matrix \\
$\hat\Qpath$, $\hat\Ppath^\thetab$ & path measures of the true and approximated reverse process \\
\hline
\multicolumn{2}{l}{\textit{Function spaces}} \\
$\mathcal{D}(\Lgen)$ & domain of definition of operator $\Lgen$ \\
$\langle f, g \rangle$ & inner product $\int f(\x) g(\x) \diff\mu(\x)$ \\
$L^2(\calX^d)$ & functions on $\calX^d$ with finite $L^2$-norm \\
\hline
\multicolumn{2}{l}{\textit{State spaces}} \\
$\calX$, $\calX^d$ & state space \\
$\RR^d$ & $d$-dimensional Euclidean space \\
$\Delta^{K-1}$ & probability simplex over $K$ categories \\
\hline
\end{tabular*}
\end{center}
\endgroup
\vspace{0.5em}

\section{Diffusion primer}\label{sect:diffusion-primer}

This section provides a concise, self-contained primer on diffusion models, situated within the broader development of probabilistic generative modeling. We begin by revisiting maximum-likelihood learning, latent-variable models, and variational inference, using this foundation to articulate the structural limitations that diffusion models overcome. We then introduce diffusion models as latent-variable constructions built around a fixed forward Markov corruption process with tractable marginals and a learned reverse process, emphasising how this design alleviates simulation cost, identifiability, and parameterisation challenges. From this standpoint, we distill a three-step recipe---selecting the forward noising process, parameterising the reverse denoising dynamics, and optimising an ELBO-based objective---which we subsequently instantiate for both discrete and continuous time, and for continuous and discrete state spaces.

\vspace{9pt}

To understand how diffusion models emerged, it is helpful to revisit the foundations of probabilistic modeling. Probabilistic models have long been designed to capture complex phenomena through tractable parametric families \cite{mackay2003information}. Typically, we specify a parametric \emph{probabilistic model} $p^{\thetab}(\x)$ and estimate its parameters by maximising the likelihood of the observed data:
\begin{align}
    \thetab^* = \arg\max_{\thetab} \E_{\x \sim \qdata}\left[\log p^{\thetab}(\x)\right] \ ,
\end{align}
where $\qdata$ denotes the true data distribution. Equivalently, maximum likelihood minimises the KL divergence between the data distribution and the model:
\begin{align}
    \thetab^* = \arg\min_{\thetab} \KL(\qdata \,\|\, p^{\thetab})\,.
\end{align}

A central challenge in probabilistic modeling is achieving both efficient sample generation and tractable density evaluation. Some model families, such as normalising flows \cite{rezende2015variational,tabak2013family,dinh2014nice}, prioritise tractable density evaluation but restrict expressivity. Others, such as generative adversarial networks (GANs) \cite{goodfellow2014generative}, bypass likelihoods entirely and instead rely on adversarial training, often at the cost of stability. A third route enhances expressivity by introducing latent variables.

Latent-variable models enrich the representational capacity of the model while maintaining tractability. They augment the observed variable $\x$ with an unobserved latent variable $\z$ and define a joint distribution $p^{\thetab}(\x,\z)$. The resulting model is specified by the marginal:
\begin{align}
    p^{\thetab}(\x) = \int p^{\thetab}(\x \cond \z) p(\z)\, \diff\z\,,
\end{align}
where we use $\diff\z$ to denote integration over continuous spaces or summation over discrete spaces.

While sampling from this model is straightforward (sample $\z \sim p(\z)$, then $\x \sim p^{\thetab}(\x\cond \z)$), computing the log-likelihood requires integrating over all possible values of $\z$, which is typically intractable except for very simple models. To make learning feasible, we introduce a \emph{variational posterior} distribution $q^{\phib}(\z\cond\x)$ and lower-bound the log-likelihood as follows:
\begin{align}
    \log p^{\thetab}(\x) &= \log \int p^{\thetab}(\x\cond \z) p(\z)\, \diff\z\\
    &= \log \int q^{\phib}(\z\cond \x)\frac{p^{\thetab}(\x\cond \z) p(\z)}{q^{\phib}(\z\cond \x)}\, \diff\z\\
    &\geq \int q^{\phib}(\z\cond \x)\log \frac{p^{\thetab}(\x\cond \z) p(\z)}{q^{\phib}(\z\cond \x)}\, \diff\z\\
    &= \E_{q^{\phib}(\z\cond \x)}\left[\log p^{\thetab}(\x\cond \z)\right] - \KL(q^{\phib}(\z\cond \x) \,\|\, p(\z))\ ,
\end{align}
where the inequality follows from Jensen’s inequality. This is the familiar evidence lower bound (ELBO), which is equal to the log-likelihood (i.e., the ELBO is tight) if and only if $q^{\phib}(\z\cond \x) = p^{\thetab}(\z \cond \x) \propto p^{\thetab}(\x\cond \z) p(\z)$ almost surely.

The variational distribution $q^{\phib}(\z\cond\x)$ plays the same role as a proposal distribution in importance sampling: it guides sampling toward the regions of latent space that meaningfully contribute to the marginal likelihood. If we attempted to estimate $p^{\thetab}(\x) = \int p^{\thetab}(\x\cond\z) p(\z) \diff\z$ by drawing samples $\z \sim p(\z)$, almost all of these samples would assign negligible likelihood to the specific data point $\x$, especially when the latent space is high-dimensional. As a result, the estimator would have extremely high variance, since only a vanishingly small fraction of samples contribute appreciably to the integral. By instead sampling from $q^{\phib}(\z\cond\x)$, we concentrate probability mass in regions where the integrand $p^{\thetab}(\x\cond\z) p(\z)$ is large, dramatically reducing variance. Variational inference formalises this idea: we choose a tractable family for $q^{\phib}(\z\cond\x)$ and optimize it so that it approximates the true posterior, yielding a tight and computable lower bound on the log-likelihood.

Parameterising $p^{\thetab}(\x\cond \z)$ (decoder) and $q^{\phib}(\z\cond \x)$ (encoder) as neural networks, fixing $p(\z)$ as a simple prior distribution, and maximising this lower bound yields the Variational Auto-Encoder (VAE) \citep{kingma2014auto,rezende2014stochastic}.
VAEs had substantial impact across machine learning and beyond: they are easy to implement and train, and they enabled a new class of scalable generative models. Alongside GANs, VAEs became a foundational framework in modern deep generative modeling. They demonstrated the usefulness of the following principles:
\begin{itemize}[label={}]
    \item (+) Use a conditional distribution $p^{\thetab}(\x\cond \z)$ that is easy to sample from and evaluate.
    \item (+) Combine this with a simple prior $p(\z)$ to define the generative model $p^{\thetab}(\x) = \int p^{\thetab}(\x\cond\z) p(\z) \diff\z$
\end{itemize}
To make the generative model more expressive, one can introduce multiple layers of latent variables \cite{rezende2014stochastic,rezende2015variational,burda2015importance,kingma2014semi}.
While these latent variables are typically denoted $\z_1, \ldots, \z_T$, for notational consistency with the diffusion-model literature we write $\x_1, \ldots, \x_T$ and reserve $\x_0$ for the observed data. The generative model defines a top-down hierarchy of latent variables:
\begin{align}
    \label{eq:backward_process}
    p^{\thetab}(\x_0) = \int p(\x_T)\prod_{t=1}^T p^\thetab(\x_{t-1}\cond \x_t)\, \diff \x_{1:T}\,,
\end{align}
where $p(\x_T)$ is a simple prior and the conditional distributions $p^\thetab(\x_{t-1}\cond \x_t)$ are parameterised by neural networks.
A natural first choice for the variational posterior is a bottom-up Markov chain $q^{\phib}(\x_{1:T}\cond \x_{0}) = \prod_{t=1}^{T} q^{\phib}(\x_{t}\cond \x_{t-1})$,
which yields an ELBO in which the KL term decomposes into a sum of per-layer KL divergences.
In practice, however, such ``bottom-up'' inference networks often perform poorly: the inference model and the generative model have incompatible factorisations, share no structure or parameters, and the resulting ELBO involves expectations over multiple stochastic layers, leading to high-variance gradient estimates and weak usage of higher-level latents.

Modern hierarchical VAEs \cite{sonderby2016ladder,child2020very,vahdat2020nvae} adopt a different factorisation for the variational posterior that mirrors the top-down structure of the generative model:
\begin{align}
    q^{\phib}(\x_{1:T}\cond \x_{0}) = q^{\phib}(\x_{T}\cond \x_{0}) \prod_{t=2}^{T} q^{\phib}(\x_{t-1}\cond \x_{t}, \x_0) \; .
\end{align}
This top-down inference allows the encoder to share parameters and structure with the decoder, and supports amortised inference, where a single network predicts approximate posterior parameters for each datapoint.%
\footnote{In more recent formulations \citep{child2020very,vahdat2020nvae,lievin2019towards,maaloe2019biva}, both the generative model and the variational posterior need not be Markov in the generative direction, allowing dependencies such as $q^{\phib}(\x_{t-1}\cond \x_{t:T}, \x_0)$ and $p^{\thetab}(\x_{t-1}\cond \x_{t:T})$.}
The log-likelihood can now be lower-bounded as follows:
\begin{align}
    \log p^\thetab(\x_0) 
    &= \log \int p(\x_T)\prod_{t=1}^{T} p^\thetab(\x_{t-1}\cond \x_t)\, \diff \x_{1:T} \\
    &= \log \int q^{\phib}(\x_{1:T}\cond \x_0) \frac{p(\x_T)\prod_{t=1}^{T} p^\thetab(\x_{t-1}\cond \x_t)}{q^{\phib}(\x_{1:T}\cond \x_0)}\, \diff \x_{1:T}\\
    &= \log \int q^{\phib}(\x_T\cond \x_0)\prod_{t=2}^{T} q^{\phib}(\x_{t-1}\cond \x_t,\x_0) \frac{p(\x_T)\prod_{t=1}^{T} p^\thetab(\x_{t-1}\cond \x_t)}{q^{\phib}(\x_T\cond \x_0)\prod_{t=2}^{T} q^{\phib}(\x_{t-1}\cond \x_t,\x_0)}\, \diff \x_{1:T}\\
    &\geq \E_{q^{\phib}(\x_{1:T}\cond \x_0)} \left[\log \frac{p(\x_T)}{q^{\phib}(\x_T\cond \x_0)} + \sum_{t=2}^{T} \log \frac{p^\thetab(\x_{t-1}\cond \x_t)}{q^{\phib}(\x_{t-1}\cond \x_t,\x_0)} + \log p^\thetab(\x_0\cond \x_1)\right] \\
    &= -\KL\bigl(q^{\phib}(\x_T \cond \x_0)\,\|\,p(\x_T)\bigr) 
    - \sum_{t=2}^{T}\E_{q^{\phib}(\x_{t}\cond \x_0)}\Bigl[\KL\bigl(q^{\phib}(\x_{t-1}\cond \x_t,\x_0)\,\|\,p^{\thetab}(\x_{t-1}\cond \x_t)\bigr)\Bigr] \nonumber \\
    &\phantom{= } \; - \E_{q^{\phib}(\x_1 \cond \x_0)}\left[\log p^\thetab(\x_0 \cond \x_1)\right] \;.
    \label{eq:elbo_kl_form}
\end{align}

Hierarchical VAEs substantially improved over earlier latent-variable models, e.g., for image generation and density estimation, and already shared several structural features with diffusion models. Yet they did not trigger a comparable breakthrough in generative modeling. Several challenges limited their impact:
\begin{itemize}[label={}]
    \item (–) \emph{Costly simulation during training.} Sampling from the marginals
    \begin{align*}
        q^{\phib}(\x_t \cond \x_0) = \int q^{\phib}(\x_T \cond \x_0) \prod_{\tau=t+1}^{T} q^{\phib}(\x_{\tau-1}\cond \x_{\tau}, \x_0) \; \dx_{t+1:T} \ ,
    \end{align*}
    requires repeatedly applying the multi-step inference model. This significantly increases computational cost and can lead to numerical instabilities when backpropagating through long stochastic chains.

    \item (–) \emph{Non-identifiability.}  There are infinitely many sequences of intermediate marginals $\{q^{\phib}(\x_t)\}_{t=1}^T$ that interpolate between the data distribution $\qdata(\x_0)$ and the prior $p(\x_T)$. Consequently, many forward--backward process pairs are equally valid. This lack of identifiability can make optimization unstable, as the model may drift toward different, equally plausible latent trajectories.

    \item (–) \emph{Simple parametric forms.}  
    Although increasing the number of latent steps boosts expressiveness, it remained unclear whether both conditionals---$q^{\phib}(\x_{t-1}\cond \x_t,\x_0)$ in the inference model and $p^\thetab(\x_{t-1}\cond \x_t)$ in the generative model---could be adequately captured by simple parametric families.
\end{itemize}

In this context, \citet{sohl-dickstein_deep_2015} made the key innovation of defining the forward (encoding) process $q(\x_t\cond \x_{t-1})$ to be a \textit{fixed diffusion process}. For continuous spaces, this meant Gaussian transitions with linear structure; the discrete-space analogue was later developed \cite{song_denoising_2022,hoogeboom_argmax_2021,austin2021structured,campbell_continuous_2022}. This fixed forward process bears the following benefits:
\begin{itemize}[label={}]
    \item (+) \textit{Avoiding simulation during training.} Although diffusion processes generally require expensive simulations, specific families admit closed-form marginals. For continuous spaces, Gaussian processes with linear drift (Ornstein--Uhlenbeck processes) can be integrated analytically. For discrete spaces, certain transition matrices (e.g., uniform or absorbing processes) also yield tractable marginals. Thus, sampling from $q(\x_t \cond \x_0)$ can be performed directly without simulating multiple transition kernels $q(\x_t\cond \x_{t-1})$.
    \item (+) \textit{Identifiability.} By fixing the forward process, the reverse process becomes fully determined. This is clear even with two marginals: given $\qdata(\x)$ and $p(\z)$, there exist infinitely many conditionals $p^{\thetab}(\x\cond \z)$ such that $\qdata(\x) = \int p^{\thetab}(\x\cond \z)p(\z)\, \diff\z$. However, once we fix $q(\z\cond \x)$, Bayes' rule determines $p^{\thetab}(\x\cond \z) = q(\z\cond \x)\qdata/p(\z)$.
    \item (+) \textit{Reverse process shares the forward structure.} From the theory of stochastic processes, we know that time-reversing a diffusion process yields another process of the same family \citep{feller1949theory,anderson1982reverse}. For continuous spaces, reversing an SDE yields an SDE. For discrete spaces, reversing a CTMC yields another CTMC. This mathematical guarantee enables simple parametric modeling of the reverse process.
\end{itemize}

Diffusion models revolutionised the field of generative modeling, but not immediately upon their introduction by \citet{sohl-dickstein_deep_2015}. It took substantial engineering efforts, architecture optimisation, and hyperparameter tuning to make diffusion models practical \citep{ho_denoising_2020}. For discrete spaces, the development followed a similar trajectory, with early theoretical work \cite{song_denoising_2022,austin2021structured,campbell_continuous_2022} followed by practical advances. However, the details of these engineering efforts and design choices are beyond the scope of the current manuscript.

\paragraph{Modern formulation: A three-step recipe.}
Having traced the evolution from VAEs through hierarchical VAEs to diffusion models, we can now distill the key insights into a unified recipe. The innovations introduced by diffusion models---fixing the forward process to enable tractable marginals, leveraging time-reversal structure, and optimising the ELBO with reduced variance---together form the foundation of the modern approach.

The modern perspective can be distilled into a three-step recipe that applies universally across continuous and discrete state spaces, and across discrete and continuous time formulations:
\begin{enumerate}
    \item \textbf{Define the forward process:} Choose a Markov process that gradually corrupts data $\x_0 \sim \qdata$ into noise $\x_T \sim p^{\text{noise}}$. This process should have tractable marginals $q(\x_t \cond \x_0)$ and, ideally, tractable posteriors $q(\x_{t-1} \cond \x_t, \x_0)$.
    
    \item \textbf{Define the reverse process:} Parameterise a learned reverse process $p^\thetab(\x_{t-1} \cond \x_t)$ that denoises from $\x_T$ to $\x_0$. The parameterisation should leverage the structure of the forward process (e.g., Gaussian for continuous spaces, categorical for discrete spaces).
    
    \item \textbf{Define the training objective:} Maximise the ELBO (or equivalently minimise the variational upper bound), typically by matching the learned reverse transitions to the true posteriors: 
    \begin{align}
        \mathcal{L}^\thetab(\x_0) = \sum_{t=2}^{T}\E_{q(\x_{t}\cond \x_0)}\left[\KL(q(\x_{t-1}\cond \x_{t},\x_0) \| p^\thetab(\x_{t-1}\cond \x_t))\right]\,.
    \end{align}
\end{enumerate}

This recipe underpins all diffusion models. In the following, we first analyse the discrete-time formulation (\cref{sect:discrete_time_diffusion}), then show how the continuous-time formulation arises as the limit of the discrete case (\cref{sect:discrete-to-continuous-time-csp}), and finally present the full continuous-time framework (\cref{sect:continuous-diffusion}). We then derive objective functions corresponding to maximum-likelihood learning of these models (\cref{sect:ELBO}). Both continuous and discrete state spaces are treated throughout, and a later section unifies these settings using the framework of infinitesimal generators of stochastic processes (\cref{sect:generator-perspective}).

\section{Discrete-time introduction to diffusion} \label{sect:discrete_time_diffusion}

Before delving into the continuous-time theory of diffusion models, it is useful to start with an intuitive understanding of diffusion in discrete time \cite{ho_denoising_2020}. Although researchers accustomed to SDE formulations might find this approach less direct, we adopt this perspective, as it provides a direct analogy to the first discrete state-space formulations of diffusion models. Readers specifically interested in the continuous-time formulation can move on to \cref{sect:continuous-diffusion}.

To define a diffusion model, one starts with clean data $\x_0$, drawn from an unknown complex distribution $q_0 \coloneqq \qdata$ that we want to learn. This distribution is gradually transformed through a sequence of $T$ small incremental ``noising'' steps into a distribution $q_T \approx p_\text{noise}$ where $p_\text{noise}$ is a fixed known noise distribution, easy to sample from.
Each noisy version of the data is represented by a random variable and its corresponding probability distribution.
Formally, we define a \textit{Markov Chain}\footnote{A Markov chain is a sequence of states where the probability of the next event depends only on the current event, not on the past.} $\{ \x_t \}_{ t \in \{0, \dots, T\}}$ evolving in a state-space $\calX^d$, where $\calX = \RR$ in the continuous setting and  $\calX=\{1, \dots, K\}$ in the discrete setting. This process is governed by a \emph{transition distribution} $q_{t\cond t-1}$, often referred to as the \emph{noising kernel}. In the following, the subscripts of probability distributions will sometimes be omitted when unambiguous, given the arguments, e.g., we will denote $q_{t|t-1}(\x_t \cond \x_{t-1})$ by $q(\x_t \cond \x_{t-1})$.

The interpretation of this probability distribution depends on the state space. In continuous spaces or ordinal-valued spaces (e.g., images), it represents the probability density (e.g., over pixel values). In contrast, in categorical discrete spaces (e.g., protein sequences or text), it captures the probability mass (e.g., amino acids types or token values).

Crucially, these noising \textit{transitions} are most often applied \textit{independently across dimensions}: each pixel in an image is perturbed independently of others, and each amino acid position can change to any other type regardless of modifications at other positions.
The path distribution is thus defined as the product of the conditional distributions:
\begin{align}
q\left(\x_{0:T}\right) &= \qdata\left(\x_0\right)\prod_{t=1}^T q\left(\x_{t} \cond \x_{t-1}\right) %
\\
&= \qdata\left(\x_0\right)\prod_{t=1}^T \prod_{k=1}^d q\left(x^{(k)}_{t} \cond x^{(k)}_{t-1} \right) 
\label{eq:dimensional_factorisation}
\end{align}

However, this factorisation does not generally extend to the reverse (denoising) process. The reverse process requires modeling complex dependencies across dimensions to reconstruct coherent data. For instance, in face generation, pixel correlations encode spatial relationships between facial features; in sequence modeling, positional dependencies capture syntactic and semantic structure.

For now, we focus on the construction of the forward process. The key takeaway is that, due to its conditional independence across dimensions, we can formulate and analyse the forward process directly in $\calX$ instead of $\calX^d$. We will revisit the structure of the reverse process in \cref{sect:reverse-process-dt}.

\subsection{Forward process}
\label{sect:forward-noising-process}

In this section, we develop the discrete-time forward noising process, where clean data $\x_0 \sim \qdata$ is progressively corrupted through $T$ steps until reaching a final distribution $q_T \approx p_{\text{noise}}$ at time $t=T$. A first unifying perspective arises from expressing the conditional distribution at time $t$ as a \textit{convex combination} of \textit{signal} and \textit{noise}. In continuous spaces, this corresponds to additive Gaussian noise, whereas in discrete spaces, it corresponds to categorical resampling.

The process $\{\x_t\}_{t \in \{0,\ldots,T\}}$ is governed by one-step transition distributions $q(\x_t \cond \x_{t-1})$ that are applied \textit{independently across dimensions} (see \cref{eq:dimensional_factorisation}). The one-step transitions take the following forms:

\continuous[One-step transition]{In continuous Euclidean spaces,\footnote{In non-Euclidean spaces, see the following literature for more details \citet{mathieu2023geometric,huang2022riemannian,chen2023flow}.} $x_t \in \RR$ (or $\x_t \in \RR^d$ under i.i.d.\ noise across dimensions), the \textit{transition distribution} is typically defined as a simple Gaussian, due to its analytical tractability and empirical performance:
\begin{align}
   q\left(x_t \cond x_{t-1}\right) = \mathcal{N}\left(x_t \ ; \ \tilde{\alpha}_t x_{t-1},\tilde{\sigma}_t^2 \right),
   \quad
   q(\x_t \cond \x_{t-1}) = \mathcal{N}(\x_t \ ; \ \tilde{\alpha}_t \x_{t-1}, \tilde{\sigma}_t^2 \I),
   \label{eq: forward transition distribution continuous diffusion}
\end{align}
where $\tilde{\alpha}_t$ and $\tilde{\sigma}_t$ are scalar, time-dependent parameters.}

\discrete[One-step transition]{In discrete state spaces, $x_t \in \{1,\ldots,K\}$, the \emph{transition distribution} is defined by a \emph{categorical distribution} with transition matrices $\tilde{\Q}_t \in [0,1]^{K \times K}$:
\begin{align}
    q(x_t\cond x_{t-1})=\Cat \left(x_t;\q = \tilde{\Q}_t\,\mathbf{e}_{x_{t-1}}\right), \qquad [\tilde{\Q}_t]_{ij} = q(x_t = i \cond x_{t-1} = j),
    \label{eq:def-transition-matrix}
\end{align}
where $\q\in \Delta^{K-1}$ and $\e_{x_{t-1}} \in \{0,1\}^K$ is the column one-hot vector encoding of $x_{t-1}$.\footnote{The multi-dimensional extension to $\calX^d$ under i.i.d. noise across dimension will be introduced at the marginal level \cref{eq:seq-transition-matrix}.}

\emph{Indexing note:} We place the destination state in the row index (common in machine learning), reversing the usual Markov-chain convention.}

By recursively applying these transitions, we obtain the multi-step transition distributions. For all $s,t \in \{0,\ldots,T\}$ with $s<t$:
\begin{align}
q\left(x_t \cond x_{s}\right) &= \mathcal{N}\left(x_t \ ; \ \alpha_{t|s} x_{s}, \sigma_{t|s}^2 \right),
\qquad
q(x_t\cond x_s) = \Cat \left(x_t;\mathbf{Q}_{t|s}\mathbf{e}_{x_s}\right),
    \label{eq:t|s-dt-dp-transition}
\end{align}
where $\alpha_{t\cond s}\coloneqq\prod_{i=s+1}^t\tilde \alpha_i$, $\sigma_{t \cond s}^2=\sigma_t^2-\alpha_{t \cond s}^2 \sigma_s^2$ (continuous), and $\mathbf{Q}_{t|s}\coloneqq \prod_{i=s+1}^{t}\tilde{\mathbf{Q}}_i$ (discrete). We write $\alpha_t \coloneqq \alpha_{t|0}$ and $\Q_t \coloneqq \Q_{t|0}$ for brevity.

Importantly, the multi-step transitions in \cref{eq:t|s-dt-dp-transition} specialise to a closed-form expression of $q(\x_t \cond \x_0)$ when $s=0$, expressing the noisy state as a \emph{convex combination} of clean data and noise (proof \cref{apx:interpolation-marginals}).

\continuous[Gaussian forward transition]{
For the standard parameterisation of Gaussian diffusion, the transition from time $0$ to $t$ is
\begin{align}
   q(x_t \cond x_0) 
    =  \mathcal{N}\Bigl(x_t \, ; \,\alpha_t\, x_0,\;\sigma_t^{2}\Bigr)
   \quad\Leftrightarrow\quad
   x_t 
    =  \alpha_t\, x_0 \;+\; \sigma_t \,\epsilon\ ,
   \label{eq:interpolation-continuousdiff}
\end{align}
where $\epsilon \sim \mathcal{N}(0,1)$ (or $\epsilonb\sim\mathcal N(\mathbf 0,\I)$ for $\x_t \in \RR^d$), and the coefficients are (proof \cref{apx:interpolation-marginals-continuous-diff}):
\begin{align}
   \alpha_t 
    \coloneqq  \prod_{j=1}^{t} \tilde{\alpha}_j, 
   \quad\text{and}\quad
   \sigma_t^{2} 
    \coloneqq  
   \sum_{k=1}^{t} 
   \Bigl(\prod_{j=k+1}^{t} \tilde{\alpha}_j\Bigr)^{2}\,\tilde{\sigma}_k^{2}\ .
\end{align}
}

\begin{remark}[Noise schedules]\label{remark:csp_dt_noise_schedules}
The time-dependent pair of coefficients $(\alpha_t,\sigma_t)$ 
is often referred to as the \emph{noise schedule}, as it governs the balance between signal and noise over time.  For instance, to approximately map clean data to a standard Gaussian, we require $\alpha_0 =1,\ \sigma_0=0$ and $\alpha_T\!\approx\!0,\ \sigma_T\!\approx\!1$. For common choices of the schedule and further discussion on their relations, we refer to \citet{kingma2023understanding}.
\end{remark}

\discrete[Categorical forward transition]{
The standard parameterisation of the transition matrices forms a convex combination of staying in the current state and resampling from a noise distribution $\pnoise\in\Delta^{K-1}$:\footnote{This parameterisation is common in the discrete flows literature \cite{gat_discrete_2024,campbell_generative_2024}, paralleling the continuous-state convex combination.}
\begin{align}
\tilde\Q_t=\tilde\alpha_t\,\I+(1-\tilde\alpha_t)\,\pnoise\,\mathbf 1^\top,
\label{eq:interpolation-discretediff}
\end{align}
where $\tilde{\alpha}_t\in [0,1]$ is the time-dependent weight. Consequently, the transition distribution from time $0$ to $t$ is
\begin{align}
q(x_t\cond x_0)=\Cat\bigl(x_t;\, \q = \alpha_t\,\e_{x_0}+(1-\alpha_t)\,\pnoise\bigr)\ ,
\label{eq:forward-transition-interpolation-discretediff}
\end{align}
where $\alpha_t \coloneqq \prod_{j=1}^{t}\tilde{\alpha}_j$ is the cumulative attenuation (proof in \cref{apx:interpolation-marginals-discrete-state}).
}

\begin{proof}[Derivation sketch:]
Using $(\pnoise\mathbf{1}^{\top})(\pnoise\mathbf{1}^{\top})=\pnoise\mathbf{1}^{\top}$, one obtains
$\mathbf{Q}_{t|t-2}
= \tilde{\alpha}_t\tilde{\alpha}_{t-1}\mathbf{I}
+ \bigl(1-\tilde{\alpha}_t\tilde{\alpha}_{t-1}\bigr)\pnoise\mathbf{1}^{\top}$, and iterating gives the general form above.
By the Woodbury identity, $\mathbf{Q}_t$ is invertible and has the following expression (proof \cref{apx:multistep-woodburry-identity}):
\begin{align*}
\mathbf{Q}_t^{-1}
= \frac{1}{\alpha_t}\Bigl(\mathbf I - (1-\alpha_t)\,\pnoise\,\mathbf{1}^\top\Bigr),
\qquad
\mathbf{Q}_{t|s}=\mathbf{Q}_s^{-1}\mathbf{Q}_t
= \tfrac{\alpha_t}{\alpha_s}\mathbf{I}
+ \Bigl(1-\tfrac{\alpha_t}{\alpha_s}\Bigr)\pnoise\mathbf{1}^{\top}.
\end{align*}
\end{proof}

\begin{remark}[Discrete schedules \& factorised $d$-dimensional process\label{eq:seq-transition-matrix}]
\leavevmode\\[-1.5em]
\begin{enumerate}
\item The attenuation $\alpha_t$ plays the role of a signal coefficient: $\alpha_0=1$ (no noise), and choosing $\alpha_T\!\approx\!0$ drives $q_T\simeq p_{\mathrm{noise}}$.
\item The joint transition matrix over $\mathcal{X}^d$ is defined as the Kronecker product\footnote{The Kronecker product $A \otimes B$ of matrices $A \in \mathbb{R}^{m \times n}$ and $B \in \mathbb{R}^{p \times q}$ is the block matrix in $\mathbb{R}^{mp \times nq}$ formed by multiplying each entry $a_{ij}$ of $A$ by the entire matrix $B$.} of per-dimension transition matrices. Under the standard assumption of independent and identically distributed (i.i.d.) noising across dimensions, this matrix takes the form $\Qseq_t = \Q_t^{\otimes d} \in \mathbb{R}^{K^d \times K^d}$, where $\otimes$ corresponds to the Kronecker product.\footnote{Equivalently, for $\mathbf x\in\mathcal X^d$ with counts $c_k(\mathbf x)=\sum_{i=1}^d \ind_k(x^{(i)})$, the marginal is multinomial:
\(
q_t(\mathbf x)
= \dfrac{d!}{\prod_{k=1}^K c_k(\mathbf x)!}\,\prod_{k=1}^K \bigl(\q_t^{(k)}\bigr)^{c_k(\mathbf x)}.
\)
} Due to the exponential size of $\Qseq_t$, models are defined via their one-dimensional transitions, and this factorisation is useful only for theoretical analysis \label{eq:seq-transition-matrix}.
\end{enumerate}
\end{remark}

\discrete[Absorbing state and uniform diffusion]{
Two widely used reference noise distributions in discrete space are:

\emph{Absorbing (masking) process} \cite{austin2021structured,chang_maskgit_2022,sahoo_simple_2024,he_diffusionbert_2022,shi_simplified_2025}:  
The state space is augmented with a special [MASK] token, giving $K{+}1$ total categories. At each step, each token has a probability of being replaced by [MASK]. The noise distribution is $\pnoise=\mathbf{e}_{\text{[MASK]}}$.

\emph{Uniform diffusion} \cite{hoogeboom_argmax_2021,lee_deterministic_2018,austin2021structured, campbell_continuous_2022,esser_imagebart_2021, savinov_step-unrolled_2022}:  
Each token transitions uniformly. One may use the uniform law over $\mathcal X$ with $\pnoise=\frac{1}{K}\mathbf{1}$, or, when sharing the alphabet with masking, the uniform law over $\mathcal X_+ \coloneqq \mathcal X\cup\{\text{[MASK]}\}$ with $\pnoise=\frac{1}{K+1}\mathbf{1}$.
}

In contrast to the continuous Euclidean case, where the isotropic Gaussian reference distribution is the standard choice for analytical convenience, discrete diffusion offers a broader design space in terms of noising strategies.  
The absorbing state process is particularly effective in text modeling, where masking aligns naturally with pretraining objectives of Masked Language Models \citep{devlin_bert_2018}, see \cref{eq:mlm-ELBO}. 
Uniform diffusion encourages exploration across all token categories.  
Beyond these standard choices, discretised Gaussian kernels \cite{austin2021structured,campbell_continuous_2022} and non-Markovian diffusion processes \cite{song_denoising_2022,wang2025remasking} have also been explored.

\discrete[Combining noise processes]{
The interpolation formulation allows mixtures of different reference noise distributions. For example, we can combine uniform diffusion and masking in a single transition \cite{gu_vector_2022,von2025generalized,gat_discrete_2024}:
\begin{align}
\mathbf{Q}_t
&= \alpha^1_t\,\mathbf{I}
+ \alpha^u_t\,\pnoise^{u}\,\mathbf{1}^\top
+ \alpha^m_t\,\pnoise^{m}\,\mathbf{1}^\top,
\label{eq:mixture-dtMC}
\end{align}
where $\pnoise^{u}=\frac{1}{K+1}\mathbf{1}$ denotes the uniform distribution on the shared alphabet (including [MASK]) and $\pnoise^{m}=\mathbf{m}$ is the [MASK]-only distribution. The coefficients satisfy $\alpha_t^i \ge 0$ and $\sum_{i=1}^3 \alpha_t^i = 1$ for column-stochasticity. Equivalently, set
\begin{align}
\alpha_t\coloneqq \alpha_t^1\;,\quad \beta_t\coloneqq 1-\alpha_t\;,\quad
p_t \coloneqq
\begin{cases}
\dfrac{\alpha_t^u}{\beta_t}\,\pnoise^{u}+\dfrac{\alpha_t^m}{\beta_t}\,\pnoise^{m} & \beta_t>0\;,\\[6pt]
\text{arbitrary in }\Delta^{K}& \beta_t=0\;,
\end{cases}
\end{align}
so that $\mathbf{Q}_t=\alpha_t\,\mathbf I+\beta_t\,\p_t\,\mathbf 1^\top$ is defined with the minimal assumptions $\alpha_t\in[0,1]$, $p_t\in\Delta$ \;\cite{von2025generalized}.
}

Here, the identity component $\alpha^1_t$ preserves tokens, the uniform term $\alpha^u_t$ replaces tokens with random categories, and the masking term $\alpha^m_t$ replaces tokens with [MASK].  
Other mixtures can be defined analogously by combining additional reference distributions. 

Finally, some discrete diffusion processes modify not only token identities but also sequence length. 
One example, inspired by Levenshtein Transformers \cite{gu2019levenshtein,ruis2020insertion}, is based on \emph{Levenshtein edit operations} \cite{johnson_beyond_2021,reid_diffuser_2022}, which allow insertion, deletion, and replacement of tokens during the forward process.  
This enables the model to generate sequences with variable length, rather than being restricted to fixed-length substitution. Subsequent work that further advances variable-length generation includes \cite{kim2025any,havasi2025edit,baron2025diffusion}.

\subsection{Reverse process\label{sect:reverse-process-dt}}

Once the forward diffusion process has fully converted data into noise, the goal is to recover its clean version from the noise state by learning a reverse or denoising process. This process can then be used to generate new data.
In principle, the generative process that reverses the forward noising is a Markov chain that starts from the terminal distribution $q_T \approx p_{\mathrm{noise}}$ and evolves according to the exact reverse transitions (Bayes rule):
\begin{align*}
q(\x_{t-1}\cond \x_t) = \frac{q(\x_t\cond \x_{t-1})\,q(\x_{t-1})}{q(\x_t)},
\end{align*}
whenever $q(\x_t)>0$. In practice, both $q_T$ and the exact reverse transitions are unknown or expensive. We therefore define a feasible denoising chain that starts from a tractable reference $p_T\coloneqq p_{\mathrm{noise}}$ and uses learnable reverse transitions $p^{\thetab}(\x_{t-1}\cond \x_t)$ to approximate $q(\x_{t-1}\cond \x_t)$. 

As the standard per-coordinate factorisation does not hold for the \textit{reverse process}, we adopt a factorised approximation of the fully coupled conditional for efficiency, and write  $p^{\thetab}(\x_{t-1}\cond \x_t)\coloneqq \prod_{k=1}^d p^{\thetab}(x_{t-1}^{(k)}\cond \x_t)$. When formulas are dimension-agnostic, we drop the superscript $k$. The model induces a trajectory law
\begin{align}
p^{\thetab}(\x_{0:T})  =  p_T(\x_T)\,\prod_{t=1}^{T}p^{\thetab}(\x_{t-1}\cond \x_t)\;,\qquad \x_T\sim p_T\coloneqq p_{\mathrm{noise}}\;, \label{eq:reverse_p_theta_dt}
\end{align}
and aims for $p^\thetab_{0} \approx q_0\coloneqq \qdata$.

\subsubsection{Reverse transitions}
While the true reverse transition distributions $q(\x_{t-1} | \x_t)$ are generally intractable, they have a closed-form expression when conditioning on the clean data $\x_0$ (proof in \cref{apx:closed-form-reverse-kernel}):
\begin{align*}
q(\x_{t-1}\cond \x_t,\x_0)  =  \prod_{k=1}^d q\!\left(x_{t-1}^{(k)} \cond x_t^{(k)}, x_0^{(k)}\right),
\end{align*}
with continuous and discrete instances below. 

\continuous[Reverse Gaussian transitions]{
For continuous Gaussian diffusion, the conditional reverse transition can be expressed as:
\begin{align}
q(x_{t-1}\cond x_t,x_0) &= \mathcal{N}\!\bigl(x_{t-1} \,;\, \mu_{t-1|t}(x_t,x_0),\, \sigma_{t-1|t}^2 \bigr)\;,
\label{eq:discrete-time-reversal-cont}
\end{align}
where the reverse mean and variance are:
\begin{align}
\mu_{t-1|t}(x_t,x_0)  \coloneqq  \frac{\alpha_{t|t-1}\,\sigma_{t-1}^2}{\sigma_t^2}\,x_t \;+\; \frac{\alpha_{t-1}\,\sigma_{t|t-1}^2}{\sigma_t^2}\,x_0\;,
\quad
\sigma_{t-1|t}  \coloneqq  \sigma_{t|t-1}\,\frac{\sigma_{t-1}}{\sigma_t}\;.
\label{eq:discrete_time_reverse_mu_x0}
\end{align}
}

\discrete[Reverse categorical transitions]{
In discrete state-spaces, conditioning on the clean data yields (proof in \cref{apx:closed-form-reverse-kernel}):
\begin{align}
q(x_{t-1}\cond x_t,x_0) &= \Cat \left(x_{t-1} \,;\, \frac{\Q_{t|t-1}^\top \e_{x_t} \odot \Q_{t-1}\e_{x_0}}{\e_{x_t}^\top \Q_t \e_{x_0}}\right)
\label{eq:discrete-time-reversal}
\end{align}
where $\odot$ denotes element-wise (Hadamard) multiplication and $\Q_{t|t-1}$ is the one-step transition matrix introduced in \cref{eq:t|s-dt-dp-transition}.

Further simplifying \cref{eq:discrete-time-reversal} under the interpolation parameterisation of $\Q_t$ (\cref{eq:interpolation-discretediff}) yields
\begin{align}
q(x_{t-1} \cond x_t, x_0)
= \Cat \left(
x_{t-1} \,;\,
\tfrac{ \bigl[\alpha_{t|t-1}\e_{x_t} + (1-\alpha_{t|t-1})\,\mathbf{1}\,\pnoise^\top \e_{x_t}\bigr]
\odot \bigl[\alpha_{t-1}\e_{x_0} + (1-\alpha_{t-1})\,\pnoise\bigr]}
{\alpha_t \,\e_{x_t}^\top \e_{x_0} + (1-\alpha_t)\,\e_{x_t}^\top \pnoise}
\right).
\label{eq:discrete-time-reversal-interpolationview}
\end{align}
This form is convenient for processes such as masking, where $\pnoise$ is concentrated on a single [MASK] state.}

Using \cref{eq:interpolation-discretediff}, \citet{zheng_reparameterized_2024,zhao_unified_2024} show that \cref{eq:discrete-time-reversal-interpolationview} can be reparameterised by splitting the cases $x_t=x_0$ vs.\ $x_t\neq x_0$:
\begin{align}
q\!\left(x_{t-1} \cond x_t, x_0\right)=
\begin{cases}
(1-\lambda_{t|t-1})\,\e_{x_t}+\lambda_{t|t-1}\,\pnoise, & x_t=x_0,\\[2pt]
(1-\mu_{t|t-1})\,\e_{x_0}+ \mu_{t|t-1}\,\alpha_{t|t-1}\,\e_{x_t} + \mu_{t|t-1}(1-\alpha_{t|t-1})\,\pnoise, & x_t\neq x_0,
\end{cases}
\label{eq:reversal-interpolation-zheng}
\end{align}
with
\begin{align*}
\lambda_{t|t-1} \coloneqq \frac{(1-\alpha_{t-1})(1-\alpha_{t|t-1})\,\e_{x_t}^\top \pnoise}
{\alpha_t+(1-\alpha_t)\,\e_{x_t}^\top \pnoise},
\qquad
\mu_{t|t-1} \coloneqq \frac{1-\alpha_{t-1}}{1-\alpha_t}.
\end{align*}

\discrete[Simplified reverse transition for masked diffusion]{
For masked diffusion with $\Q_t=\alpha_t \I + (1-\alpha_t)\,\e_{\text{[MASK]}}\,\mathbf{1}^\top$, noticing $\e_{x_t}^\top \e_{\text{[MASK]}}=1$ if $x_t=\text{[MASK]}$ and 0 otherwise, \cref{eq:reversal-interpolation-zheng} reduces to \cite{sahoo_simple_2024,shi_simplified_2025}:
\begin{align}
q(x_{t-1} \cond x_t, x_0) =
\begin{cases}
\Cat \left(x_{t-1} \,;\, \e_{x_t}\right), & x_t \neq \text{[MASK]},\\[4pt]
\Cat \left(x_{t-1} \,;\, \dfrac{(1-\alpha_{t-1}) \e_{\text{[MASK]}} + (\alpha_{t-1}-\alpha_t)\,\e_{x_0}}{1-\alpha_t}\right), & x_t = \text{[MASK]}.
\end{cases}
\label{eq:discrete-time-reversal-maskdiff}
\end{align}

The case distinction enables simple and efficient implementations in practice.
}

\subsubsection{Parameterisation}

We approximate $q(\x_{t-1}\cond \x_t)$ with a parametric distribution whose parameters are predicted by a neural network. Specifically, we define:
\begin{align}
    p^{\thetab}(\x_{t-1}\cond \x_t) 
    = \prod_{k=1}^d p^{\thetab}(x^{(k)}_{t-1}\cond \x_t)
    = \prod_{k=1}^d \mathcal{F} \bigl(x_{t-1}\,;\,\eta^{(k)}_{\thetab}(\x_t,t)\bigr) \ ,
    \label{eq:dt-reversal-factorisation}
\end{align}
where $\mathcal{F}$ denotes a chosen family of probability distributions (e.g., Gaussian in the continuous case, categorical in the discrete case) and
$\etab_{\thetab}$ is a learned function (the neural network) with parameters~$\thetab$ \cite{sohl-dickstein_deep_2015,ho_denoising_2020,austin2021structured}. Although the forward process is typically factorised across dimensions, the reverse process must reconstruct complex correlations present in the clean data distribution $\qdata$. The network allows for capturing these dependencies by leveraging the entire structure of $\x_t$ at each denoising step. Consequently, conditioning on the \emph{full} corrupted state $\x_t$ is essential. 

\continuous[Direct neural parameterisation]{
In continuous Euclidean spaces, the single-dimension reverse transition \cref{eq:dt-reversal-factorisation} is defined as a Gaussian with mean parameterised by a neural network, and with the same variance as $q(x_{t-1} \cond x_t, x_0)$ (\cref{eq:discrete_time_reverse_mu_x0}):
\begin{align}
p^\thetab(x_{t-1}\cond \x_t)
 \coloneqq  \mathcal{N}\Bigl(\x_{t-1}\ ;\ \mu^\thetab(\x_t,t),\ \sigma_{t-1\cond t}^2\Bigr)\ .
\label{eq:condition-reversal-nn-cont}
\end{align}
}

\discrete[Direct neural parameterisation]{
In discrete categorical spaces, the single-dimension reverse transition \cref{eq:dt-reversal-factorisation} is defined as a categorical distribution with probabilities predicted by a neural network:
\begin{align}
p^\thetab(x_{t-1}\cond \x_t)
\coloneqq \Cat\Bigl(x_{t-1}\ ;\ \etab^\thetab(\x_t,t)\Bigr)\ .
\label{eq:condition-reversal-nn-disc}
\end{align}
}

Observing that the only unknown in \cref{eq:discrete-time-reversal,eq:discrete-time-reversal-cont} is $x_0$, many approaches instead train a neural network to estimate $x_0$ ($x_0^\thetab \coloneqq NN^\thetab(\x_t,t)$), and then substitute this estimate in the closed-form expression for the reverse distribution. One can also manipulate the expression \cref{eq:discrete-time-reversal,eq:discrete-time-reversal-cont} and parameterise other functions of $x_0$ \cref{eq:score-mean-eps-parameterisation}.

\continuous[Clean-data prediction and equivalences]{
Instead of predicting the mean directly, one can parameterise the denoising process in terms of the clean sample $x_0^\thetab$:
\begin{align}
p^\thetab(x_{t-1}\cond \x_t)
&= \mathcal{N}\!\left(
x_{t-1}\ ;\
\frac{\sqrt{\alpha_{t-1}}\, (1-\alpha_{t|t-1})}{1-\alpha_t}\, x_0^\thetab
\;+\;
\frac{\sqrt{\alpha_{t|t-1}}\, (1-\alpha_{t-1})}{1-\alpha_t}\, x_t,
\ \sigma_{t|t-1}^2
\right).
\label{eq:reversal-x0-cond-cont}
\end{align} 
This can be rewritten in the standard noise-, score-, and $v$-parameterisations via
\begin{align}
    x_0 = \frac{x_t - \sigma_t \epsilon}{\alpha_t}
         = \frac{\sigma_t^2 \, s_t + x_t}{\alpha_t}=\frac{\alpha_t x_t - \sigma_t v_t}{\sigma_t^2 +\alpha_t^2}
\label{eq:score-mean-eps-parameterisation}
\end{align}
where $s_t \coloneqq \nabla_{x_t}\log q(x_t \cond x_0)$ is the score and $v_t \coloneqq \alpha_t \epsilon - \sigma_t x_0$ is the $v$-parameterisation \cite{ho_imagen_2022}.}

The equivalence in \cref{eq:score-mean-eps-parameterisation} links \emph{four} standard parameterisations: 
(i) \textit{clean data} via $x^\thetab_0$, 
(ii) \textit{noise} via $\epsilon^\thetab$, 
(iii) \textit{score} via $s_t^\thetab$, and 
(iv) \textit{$v$-prediction} via $v^\thetab_t$.
Each of these parameterisations can be learned through a neural network ($NN^\thetab(\x_t,t)$) and are linearly related but leads to distinct training objectives and empirical behaviours \cite{karras2022elucidating}. The choice among them can significantly impact sample quality and has thus been extensively studied in the literature \cite{karras2022elucidating,gao2025diffusionmeetsflow,ho_imagen_2022}.

\discrete[Clean-data prediction and equivalences]{

In the discrete case, we define the soft probability vector over classes $\x_0^\thetab \in \Delta^{K-1}$ with components $[\x_0^\thetab]^{(k)} \coloneqq p^\thetab(x_0 = k \cond \x_t)$.
The denoising process \cref{eq:discrete-time-reversal} becomes:
\begin{align}
p^\thetab(x_{t-1}\cond \x_t)
 = 
\Cat \left(
x_{t-1}\ ;\
\p = \frac{\Q_{t|t-1}^{\!\top}\, \e_{x_t}\ \odot\ \Q_{t-1}\, \x_0^\thetab}
{\e_{x_t}^{\!\top}\, \Q_t\, \x_0^\thetab}
\right).
\label{eq:reversal-x0-cond-disc}
\end{align}

Alternatively, using \cref{eq:def-transition-matrix} and marginalising \cref{eq:discrete-time-reversal} over $x_0$ yields \cite{austin2021structured}:
\begin{align}
p^\thetab(x_{t-1}\cond \x_t)
 \coloneqq 
q(x_t\cond x_{t-1})\,
\left( \sum_{x_0 \in \calX}\frac{q\left(x_{t-1} \cond x_0 \right)}{q\left(x_t  \cond x_0\right)}
\, p^\thetab(x_0\cond \x_t)
\right).
\label{eq:reversal-austin-nn}
\end{align}
}

\begin{proof}[Derivation sketch] 
We provide a derivation skectch of \cref{eq:reversal-austin-nn}, the detailed derivation is provided in \cref{apx:reversal-austin}:
\begin{align} 
q\!\left(x_{t-1}^{(k)}\,\cond\, \x_t\right) 
&= \sum_{\x_{t-1}^{\setminus k}} q\!\left(\x_{t-1}\,\cond\, \x_t\right) \nonumber \\
&= \sum_{\x_{t-1}^{\setminus k}} q\!\left(\x_t\,\cond\, \x_{t-1}\right)\, \frac{q\!\left(\x_{t-1}\right)}{q\!\left(\x_t\right)} \nonumber \\
&= \sum_{\x_{t-1}^{\setminus k}} q\!\left(\x_t\,\cond\, \x_{t-1}\right)\, \sum_{\x_0}\frac{q\!\left(\x_{t-1}\,\cond\, \x_0\right)}{q\!\left(\x_t\,\cond\, \x_0\right)}\,q(\x_0\,\cond\, \x_t) \nonumber \\
&= \sum_{\x_0} q(\x_0\,\cond\, \x_t)\; \sum_{\x_{t-1}^{\setminus k}} \biggl[ \frac{q\!\left(x_t^{(k)}\,\cond\, x_{t-1}^{(k)}\right)\,q\!\left(x_{t-1}^{(k)}\,\cond\, x_{0}^{(k)}\right)} {q\!\left(x_t^{(k)}\,\cond\, x_{0}^{(k)}\right)} \prod_{j\neq k} \frac{q\!\left(x_t^{(j)}\,\cond\, x_{t-1}^{(j)}\right)\,q\!\left(x_{t-1}^{(j)}\,\cond\, x_{0}^{(j)}\right)} {q\!\left(x_t^{(j)}\,\cond\, x_{0}^{(j)}\right)} \biggr] \nonumber \\
&= q\!\left(x_t^{(k)}\,\cond\, x_{t-1}^{(k)}\right)\; \sum_{x_0^{(k)}} \frac{q\!\left(x_{t-1}^{(k)}\,\cond\, x_0^{(k)}\right)} {q\!\left(x_t^{(k)}\,\cond\, x_0^{(k)}\right)}\, q\!\left(x_0^{(k)}\,\cond\, \x_t\right). 
\label{eq:reversal-austin}
\end{align}
\end{proof}

\paragraph{Extension to arbitrary timesteps.} Importantly, these reverse transitions, \cref{eq:discrete-time-reversal,eq:discrete-time-reversal-interpolationview,eq:reversal-interpolation-zheng,eq:discrete-time-reversal-maskdiff}, can be extended to \textit{any two arbitrary times} $s<t$ by simply modifying $t-1$ to $s$ in our expressions. A key observation for our work is that, although $s$ and $t$ have so far been considered discrete indices in $ \{0, \dots, T\}$, the conditional transition formulas for both forward and reverse processes naturally extend to \emph{continuous time}, with $s,t \in [0,1]$.
We present this extension, along with the formal construction of the continuous-time diffusion, in the following section.

\section{From discrete to continuous-time diffusion \label{sect:discrete-to-continuous-time-csp}}

While the discrete-time approach offers an intuitive formulation for modeling the diffusion process, the explicit discretisation introduces some limitations and reduced flexibility. To address these limitations, we introduce the continuous-time formulation, which connects to a rich body of literature in stochastic processes and differential equations, making a broad range of analytical tools and methods applicable. We show that, in the limit as the number of noising steps $T$ tends to infinity, our discrete-time model converges naturally to its continuous-time counterpart.  Given the assumption of independence of the noise across dimensions, we first describe in this section the univariate case and later extend to the multivariate setting in \cref{sect:continuous-diffusion}.

To facilitate taking the limit $T \to \infty$, we define the normalised time points $t_i \coloneqq \frac{i}{T}$, for $i \in \{0, \dots, T \}$, such that $t_i \in [0,1]$. The discrete-time process can be equivalently indexed by these rescaled times, $\{x_{t_i} \}_{i\in \{0,\dots,T\}}$. Our goal is to consider the limit as $T \to \infty$ to study the continuous-time process $\{x_t\}_{t\in[0,1]}$. In this limit, the noise schedule parameters $\alpha_t$ and $\sigma_t$ become smooth functions over $[0,1]$, and we additionally assume that they are differentiable.

Intuitively, as $T$ grows, and for $x, y$ in the state space $\calX$ (whether continuous or discrete), the transitions $q(x_{t_i} = y \cond x_{t_{i-1}} = x)$ become increasingly peaked around $x = y$, making them insufficient to describe the limiting dynamics. What becomes meaningful is their \textit{rate} of change: how the distribution evolves instantaneously in time. We will show that it naturally leads to \textit{Stochastic Differential Equations (SDEs)} in continuous state space and \textit{Continuous-Time Markov Chains (CTMCs)} in discrete state space.

\continuous[Continuous-time limit (forward SDE)]{
In continuous state space, the forward dynamics are governed by the SDE
\begin{align}
\diff x_t = \underbrace{f_{t} \left(x_t\right) \dt}_{\text{drift term}} + \underbrace{g_t \diff w_t}_{\text{diffusion term}}, \qquad x_0 \sim \qdata
\end{align}
where $f_t$ is the drift, $g_t$ is the diffusion coefficient, and $w_t$ is a Wiener process (Brownian motion).
}

\citet{song_score-based_2021} first demonstrated that discrete-time diffusion models \cite{ho_denoising_2020} can be interpreted as the numerical approximation (i.e. time discretisation) of a continuous-time \textit{score-based} model \cite{song2019generative}. 
The connection between the SDE and the discrete-time process can be established as follows. Using the shorthand $x_{t_i} = x_t$, for a small $\Delta t \coloneqq \frac{1}{T}$, we can write the following approximation:
\begin{align}
     x_{t} - x_{t - \Delta t} 
    &= (\alpha_{t|t - \Delta t}-1) x_{t-\Delta t} + \sigma_{t|t-\Delta t} \epsilon_{t} \\
    &= \frac{\alpha_{t|t- \Delta t} - 1}{\Delta t}x_{t-\Delta t } \Delta t \,  + \frac{\sigma_{t|t- \Delta t}}{\sqrt{\Delta t}} \sqrt{\Delta t} \ \epsilon_{t}
\end{align}
By noticing that $\sqrt{\Delta t} \ \epsilon_t  = \Delta w_t$ is the infinitesimal increment of the Wiener process. We recover the usual \textit{Euler--Maruyama discretisation scheme} which converges for $\Delta t \rightarrow 0$ to the initial SDE:
\begin{align}
\underbrace{
\Delta x_t
 = 
\left[\frac{\alpha_{t\cond t-\Delta t}-1}{\Delta t}\right]\,x_{t-\Delta t}\,\Delta t
\;+\;
\left[\frac{\sigma_{t\cond t-\Delta t}}{\sqrt{\Delta t}}\right]\,\Delta w_t
}_{\xrightarrow[\Delta t \to 0]{} \; \diff x_t = f_t(x_t)\,\dt + g_t\,\diff w_t}.
\end{align}
Under some smoothness assumption over $\alpha_t , \sigma_t$ we recover:
\begin{align*}
    f_t 
    & \coloneqq \lim_{\Delta t \to 0} \frac{\alpha_{t|t-\Delta t} - 1}{\Delta t} x_{t-\Delta t}
    = \lim_{\Delta t \to 0} \frac{1}{\alpha_{t-\Delta t}} 
      \frac{\alpha_t - \alpha_{t-\Delta t}}{\Delta t} x_{t-\Delta t}
    = \frac{\diff \log \alpha_t}{\diff t} x_t, \\
    g_t^2 
    & \coloneqq \lim_{\Delta t \to 0} \frac{\sigma_{t|t-\Delta t}^2}{\Delta t}
    = \lim_{\Delta t \to 0} \alpha_t^2 
      \frac{\frac{\sigma_t^2}{\alpha_t^2} - \frac{\sigma_{t-\Delta t}^2}{\alpha_{t-\Delta t}^2}}{\Delta t} 
    = \alpha_t^2 \,\frac{\diff}{\diff t}\biggl[\frac{\sigma_t^2}{\alpha_t^2}\biggr].
\end{align*}

\continuous[Discrete to continuous time derivation]{Under the discrete-to-continuous time limit, the one-dimensional Gaussian diffusion process converges to the SDE:
\begin{align}
\diff x_t  =  f_t(x_t)\,\dt \;+\; g_t\,\diff w_t, \qquad f_t  = \frac{\diff \log \alpha_t}{\diff t} \, x_t ,
\quad
g_t^{\,2}  =  \alpha_t^{2}\,\frac{\diff}{\diff t}\!\left[\frac{\sigma_t^{2}}{\alpha_t^{2}}\right].
\label{eq:cont-time-limit-csp}
\end{align}
This result can be directly extended to the multi-dimensional Gaussian diffusion process.
Once a noise schedule $(\alpha_t, \sigma_t)$ is chosen, the expressions above determine the drift and diffusion coefficients of the corresponding SDE, which yields the desired marginal distributions.}

The discrete-state diffusion process also admits a continuous-time limit and leads to a \emph{Markov jump process} or \emph{continuous-time Markov chain (CTMC)} \cite{campbell_continuous_2022,zhao_unified_2024,austin2021structured}.\footnote{\emph{Markov jump processes} can be defined on either discrete or continuous state space, whereas \emph{continuous-time Markov chains (CTMCs)} are, by definition, restricted to discrete state spaces. In the context of discrete state spaces, these terms can be used interchangeably.}

\discrete[Continuous-time limit (CTMC and rate matrix)]{
Let $\Q_{t\cond t-\Delta t}$ be the one-step transition matrix. The limit process is a CTMC with \emph{rate matrix} $\R_t$ defined by:
\begin{align}
\R_t
 = 
\lim_{\Delta t\to 0}\frac{\Q_{t\cond t-\Delta t}-\I}{\Delta t},
\qquad
q_{t\cond t-\Delta t}(x \cond y)
 = 
\delta_{x,y}
\;+\;
\R_t(x,y)\,\Delta t
\;+\;
o(\Delta t).
\label{eq:def-transition-rate-matrix}
\end{align}
$\R_t(x,y)$ is  the instantaneous transition rate from $y$ to $x$ at time $t$: for $x\neq y$, the probability of a jump $y\!\to\!x$ over $\Delta t$ is approximately $\R_t(x,y)\,\Delta t$, while $-\R_t(y,y)\,\Delta t$ is the probability of leaving $y$ in that interval.}

\begin{remark}[Off-diagonal positivity, divergence-free \& partial derivative view]
\leavevmode\\[-1.5em]
\begin{enumerate}[label=(\roman*),leftmargin=*,itemsep=1pt,topsep=2pt]
\item The off-diagonal entries must be non-negative, i.e., $[\R_t]_{ij}\ge 0$ for $i\ne j$.
\item $\R_t$ must satisfy the divergence-free property, i.e., its columns must sum to zero: $\mathbf 1^\top \R_t=\mathbf 0^\top$. This ensures probability conservation: for each state, the total outgoing transition rate (encoded in the diagonal) equals the sum of incoming transition rates (off-diagonal entries).
\item  In some contexts, it is helpful to use the equivalent form of \cref{eq:def-transition-rate-matrix}: $\R_t(x,y)=\lim_{s \to t^-} \partial_t q(x_t = x \cond x_s=y)$.
\end{enumerate}
\end{remark}

We now consider a discrete state space process with transition matrix parameterised via convex combination (\cref{eq:interpolation-discretediff}), providing a natural bridge between the discrete and continuous time formulations.

\noindent

\discrete[Rate matrix for interpolation-parameterised transitions]{
For $\Q_t=\alpha_t \I + (1-\alpha_t)\,\pnoise \mathbf{1}^{\top}$ (\cref{eq:forward-transition-interpolation-discretediff}), and a non-increasing differentiable noise schedule $\alpha_t$, the marginal
\begin{align}
q(x_t \cond x_0)=\Cat \bigl(x_t \ ;\ \Q_t\,\e_{x_0}\bigr)
\label{eq:marginal-with-Q}
\end{align}
induces the rate matrix:
\begin{align}
\R_t  =  \frac{\alpha_t'}{\alpha_t}\,\bigl(\I - \pnoise \mathbf{1}^{\top}\bigr) \ ,
\label{eq:Rate-matrix-expression}
\end{align}
where $\alpha_t'=\frac{\diff}{\diff t}\alpha_t$ (\citeplain[][Appendix~A.6]{austin2021structured}; \citeplain{sahoo_simple_2024,shi_simplified_2025,zhao_unified_2024}). The detailed proof is provided in \cref{apx:discrete-to-continuous-time-discrete-state} and follows directly from \cref{eq:interpolation-discretediff,eq:def-transition-rate-matrix}.
}

\noindent

Having established the continuous-time limit for both continuous state spaces (SDEs, \cref{eq:cont-time-limit-csp}) and discrete state spaces (CTMCs with rate matrices, \cref{eq:def-transition-rate-matrix,eq:Rate-matrix-expression}), we now leave the discrete-time formulation behind and work directly in continuous time. In the following section, we develop the continuous-time theory systematically: deriving the Kolmogorov forward equations that govern marginal evolution, establishing the form of the reverse (generative) process, and discussing the parameterisations used in practice.

\section{Continuous-time diffusion\label{sect:continuous-diffusion}}

In this section, we develop the continuous-time formulation of diffusion models, where the forward noising process evolves continuously over time $t \in [0,1]$ rather than through discrete steps. This continuous perspective provides both theoretical elegance and practical advantages, enabling the use of powerful tools from stochastic calculus and partial differential equations.

\subsection{Forward process and Kolmogorov forward equation}

We consider a time-indexed stochastic process $\{\x_t\}_{t \in [0,1]}$ where $t=0$ represents data and $t=1$ represents the prior (noise). The process evolves through either:

\emph{Continuous state spaces:} $\x_t \in \RR^d$ evolves via a stochastic differential equation (SDE)
\begin{align}
\diff \x_t = \f_t(\x_t)\,\diff t + g_t\,\diff \w_t\ , \qquad \x_0 \sim \qdata
\label{eq:sde-continuous-diff}
\end{align}
where $\f_t: \RR^d \to \RR^d$ is the drift coefficient, $g_t \in \RR$ is the diffusion coefficient, and $\w_t$ is a standard $d$-dimensional Brownian motion.

\emph{Discrete state spaces:} $\x_t \in \calX^d$ (where $\calX$ is a finite alphabet) evolves as a continuous-time Markov chain (CTMC) with transition probabilities
\begin{align}
q(\x_t = \y \cond \x_{t-\Delta t} = \x) = \delta_{\x,\y} + \Rseq_t(\y,\x)\,\Delta t + o(\Delta t)\ ,
\label{eq:ctmc-transition}
\end{align}
where $\Rseq_t: \calX^d \times \calX^d \to \RR$ is the multi-dimensional rate matrix with $\Rseq_t(\x,\x) = -\sum_{\y \neq \x} \Rseq_t(\y,\x)$ ensuring probability conservation.

The forward process is designed such that $q_0(\x)$ matches the data distribution $p_{\text{data}}(\x)$ and $q_1(\x)$ matches a tractable prior $p_\text{noise}(\x)$ ($q_1 \approx p_\text{noise}$). We denote by $q_t(\x)$ the marginal distribution at time $t$, and by $q(\x_t = \x \cond \x_s = \y)$ or $q_{t \cond s} (\x \cond \y)$, the conditional distribution (transition kernel) from state $\y$ at time $s$ to state $\x$ at time $t$. In practice, the forward process is designed such that one can easily sample from $q_{t\cond 0}$.

In this section we focus on how the marginal distributions $\{q_t\}_{t \in [0,1]}$ evolve under a Markov process. This evolution is governed by a \textit{partial differential equation (PDE)}, the \emph{Kolmogorov forward equation (KFE)}, also known as the \emph{Fokker--Planck equation} in continuous state spaces and \emph{master equation} in discrete state spaces.

\continuous[Kolmogorov forward (Fokker--Planck) equation]{
For the process $\{\x_t\}_{t\in[0,1]}$ governed by the SDE in \cref{eq:sde-continuous-diff}, the Kolmogorov forward (Fokker–Planck) equation for the marginals is
\begin{align}
\partial_t q_t(\x) 
= - \underbrace{\nabla_\x \!\cdot \bigl(\f_t(\x)\, q_t(\x)\bigr)}_{\text{drift term}} + \underbrace{\tfrac{1}{2}\,g_t^2\,\Delta_\x q_t(\x) }_{\text{diffusion term}} \ .
\label{eq:KFE-continuous-sp}
\end{align}
With $g_t= 0$ we recover the \emph{continuity equation}, and with $f_t= 0$ the \emph{heat equation}. 
}

\begin{proof}[Derivation sketch:]
We first apply the law of total probability and the Markov property to the transition kernel (Chapman--Kolmogorov equation).\footnote{The Chapman--Kolmogorov equation says that for a Markov process to get from state $\y$ at time $s$ to state $\x$ at time $t$, you can take any intermediate route through state $\z$ at time $r$ (where $s<r<t$). The equation sums over all possible intermediate states, weighting each path by the probability of going $\y\to\z\to\x$.}
For any $t > r > s$:
\begin{align}
q(\x_t = \x \cond \x_s = \y) = \int_{\RR^d} q(\x_t = \x \cond \x_r = \z) q(\x_r = \z \cond \x_s = \y) \, \mathrm{d}\z \ .
\label{eq:CK}
\end{align}
From the SDE $\diff \x_t = \f_t(\x_t)\,\diff t + g_t\,\diff \w_t$, we consider an infinitesimal time step $\Delta t = t - r$ and write the Wiener increment as $\Delta \w_t = \sqrt{\Delta t}\, \epsilonb$ with $\epsilonb \sim \mathcal{N}(0, \I)$. The infinitesimal change becomes
\begin{align}
\Delta \x_t = \f_t(\z)\,\Delta t + g_t\,\sqrt{\Delta t}\,\epsilonb \ ,
\end{align}
yielding the infinitesimal moments
\begin{align}
    \E[\Delta \x_t \cond \x_t = \z] = \f_t(\z) \,\Delta t + o(\Delta t)\ , \qquad
    \E[(\Delta \x_t)(\Delta \x_t)^\top \cond \x_t = \z] = g_t^2 \,\I\, \Delta t + o(\Delta t)\ .
    \label{eq:diffusion-moments}
\end{align}
We multiply \cref{eq:CK} by a smooth test function $\phi(\x)$ with compact support and integrate w.r.t. $\x$:
\begin{align*}
\int_{\mathbb{R}^d} \phi(\x) q(\x_t = \x \cond \x_s = \y) \, \mathrm{d}\x 
= \int_{\mathbb{R}^d} \int_{\mathbb{R}^d} \phi(\x) q(\x_t = \x \cond \x_r = \z) q(\x_r = \z \cond \x_s = \y) \, \mathrm{d}\x \, \mathrm{d}\z\ .
\end{align*}
Taylor-expanding $\phi(\x)$ around $\z$ and using \cref{eq:diffusion-moments}, the RHS inner integral becomes:
\begin{align*}
\int_{\RR^d} \phi(\x) q(\x_t = \x \cond \x_r = \z) \, \mathrm{d}\x 
= \phi(\z) + \nabla_\z \phi(\z)^\top \f_r(\z)\, (t-r) + \tfrac{1}{2} g_r^2 \Delta_\z \phi(\z)\, (t-r) + o(t-r)\ .
\end{align*}
For the left-hand side, we expand the short-time evolution:
\begin{align}
\int_{\RR^d} \phi(\x) q(\x_t = \x \cond \x_s = \y) \, \mathrm{d}\x 
= &\int_{\RR^d} \phi(\z) q(\x_r = \z \cond \x_s = \y) \, \mathrm{d}\z \\
&+ (t-r) \int_{\RR^d} \phi(\z) \partial_t q(\x_t = \z \cond \x_s = \y)\big|_{t=r} \, \mathrm{d}\z+ o(t-r)\ .
\end{align}
Equating both sides in the limit $r \to t^-$ gives
\begin{align*}
\int_{\RR^d} \phi(\z) \partial_t q(\x_t = \z \cond \x_s = \y) \, \mathrm{d}\z 
= \int_{\RR^d} \bigl[\nabla_\z \phi(\z)^\top \f_t(\z) + \tfrac{1}{2} g_t^2 \Delta_\z \phi(\z)\bigr] q(\x_t = \z \cond \x_s = \y) \, \mathrm{d}\z\ .
\end{align*}
Integrating by parts (with vanishing boundary terms) and using the fundamental lemma of calculus of variations yields
\begin{align}
\partial_t q(\x_t = \x \cond \x_s = \y) = -\nabla_\x \cdot (\f_t(\x) q(\x_t = \x \cond \x_s = \y)) + \tfrac{1}{2} g_t^2 \Delta_\x q(\x_t = \x \cond \x_s = \y).
\end{align}
Marginalising over $\x_s = \y$ with $s=0$ recovers \cref{eq:KFE-continuous-sp}.
\end{proof}

\discrete[Kolmogorov forward (master) equation]{
For a CTMC defined with \cref{eq:ctmc-transition} with a multi-dimensional rate matrix $\Rseq_t$, such that $\Rseq_t(\x,\y) \geq 0$ for $\x \neq \y$, and for all $\x \in \calX^d$: $\Rseq_t(\x,\x) = - \sum_{\y \neq \x} \Rseq_t(\y,\x)$.\footnote{This ensures that the total \textit{probability flow} out of each state is balanced by the sum of incoming rates.} Under our convention (destination = row, source = column), the Kolmogorov forward (master) equation for the marginals is
\begin{align}
\partial_t q_t(\x) 
 =  \sum_{\y \in \calX^d} \Rseq_t(\x,\y)\, q_t(\y).
\label{eq:KFE-discrete-sp}
\end{align}
In vector form, i.e., denoting $q_t$ by the probability vector $\q_t$,
\begin{align}
\partial_t \q_t  =  \Rseq_t\,\q_t.
\end{align}
}
\begin{proof}[Derivation sketch:]  Similar to continuous state spaces, let $t \geq s \geq 0$ and apply the Chapman--Kolmogorov equation to the transition kernel, for any $t > r > s$:
\begin{align}
    q(\x_t = \x \cond \x_s = \y) = \sum_{\w \in \calX^d} q(\x_t = \x \cond \x_r = \w)q(\x_r = \w \cond \x_s = \y).
\end{align}
As this equation holds for any $t>r>s$, we probe the system's time evolution by differentiation with respect to $t$ and taking $r$ infinitesimally close to $t$. This yields: 
\begin{align} 
\partial_t q(\x_t = \x \cond \x_s = \y) &= \sum_{\w \in \calX^d} \partial_t q(\x_t = \x \cond \x_r = \w)q(\x_r = \w \cond \x_s = \y) \\ 
\partial_t q(\x_t = \x \cond \x_s = \y) &= \sum_{\w \in \calX^d} \lim_{r \to t^-} \partial_t q(\x_t = \x \cond \x_r = \w) q(\x_r = \w \cond \x_s = \y) \\ 
\partial_t q(\x_t = \x \cond \x_s = \y) &= \sum_{\w \in \calX^d} \Rseq_t(\x,\w) q(\x_t = \w \cond \x_s = \y),
\end{align} 
where $\Rseq_t(\x,\w) \coloneqq \lim_{r \to t^-} \partial_t q(\x_t = \x \cond \x_r = \w)$ is the sequence-level rate matrix as defined in \cref{eq:seq-rate-matrix}. By setting $s=0$ and marginalising, we recover the Kolmogorov forward (master) equation for the marginals: 
\begin{align} 
\partial_t q(\x_t = \x) &= \sum_{\w \in \calX^d} \Rseq_t(\x,\w) q(\x_t = \w). 
\end{align}
\end{proof}
\begin{remark}[Dimensional factorisation]
If forward per-coordinate jumps are independent (i.i.d. \ noising process across coordinate), the joint rate matrix is the \emph{Kronecker sum} of per-dimension generators, 
\begin{align}\Rseq_t=\bigoplus_{k=1}^d \R^{(k)}_t=\sum_{k=1}^d
    \Bigl(\I^{\otimes (k-1)} \otimes \R_t^{(k)} \otimes \I^{\otimes (d-k)}\Bigr),
    \label{eq:seq-rate-matrix}
\end{align}
so only single-coordinate jumps occur at $O(\Delta t)$ (detail provided in \cref{apx:rate-matrix-sequence-lvl}).
\end{remark}

Usually in discrete state diffusion literature, the rate matrix and KFE are derived in a single dimension and later extended to the multidimensional case. In our approach, when the forward noising process is independent per dimension, we recover the evolution of each coordinate marginal through a single-dimensional KFE with a single-dimensional rate matrix.\footnote{This result can be seamlessly extended to continuous-state spaces.}

\discrete[Single-dimension master equation (independent, discrete)]{
Let $\R_t$ be the rate matrix on $\calX$ (identical across coordinates in the i.i.d.\ case). For the coordinate-$k$ marginal probability vector $\q_t^{(k)}\in\Delta^{K-1}$,
\begin{equation}
\partial_t \q_t^{(k)}  =  \R_t\,\q_t^{(k)},
\qquad k=1,\dots,d.
\label{eq:kfe-1d-discrete}
\end{equation}
In practice, we will consider the coordinates identically distributed (i.e., $\q_t^{(k)}\coloneqq \q_t$ for all $k$). 
}

\subsection{Reverse process}

We have seen that the KFE provides a unified view of marginal evolution in both continuous and discrete spaces. Viewed through the KFE, reversing the generative dynamics amounts to running the marginal evolution backwards in time. Concretely, if $\{q_t\}_{t\in[0,1]}$ solves the forward KFE, then with the time change $s\mapsto 1-s$ the reversed marginals $\hat q_s\coloneqq q_{1-s}$ satisfy a \emph{reversed} KFE. In continuous state spaces, this corresponds to a reverse SDE with a drift correction involving the score $\nabla\log q_{1-s}$ (see \cref{eq:reverse-SDE}); in discrete state spaces, it corresponds to a reverse CTMC with a rate matrix tilted by the instantaneous marginals (see \cref{eq:reverse-rate-matrix}). We now derive this in detail.

The detailed formal derivation of the corresponding reverse (denoising) process from the forward equations can be found in the literature on SDEs and Markov processes \cite{anderson1982reverse,oksendal2003stochastic}. In diffusion generative modeling, we seek to define a process $\{\hat{\x}_s\}_{s \in [0,1]}$ with marginals $\{\hat{q}_s\}_{s \in [0,1]}$ such that, for all times $s$, $\hat{q}_s = q_{1-s}$. Consequently, the boundary conditions become $\hat{q}_0 = q_1 \approx p_{\text{noise}}$ and $\hat{q}_1 = q_0 = \qdata$, which ensures that at the end of the reverse process we recover the clean data distribution.\footnote{It is important to emphasise that the reverse process is not defined with respect to the forward filtration, but rather with respect to the natural filtration of the reversed process $\{\sigma(\x_{1-t})\}_{t\in[0,1]}$. Intuitively, the past of $\hat{\x}_s$ corresponds to the future of $\x_{1-s}$. Although these two processes differ in the information each “sees” at intermediate steps, they coincide \textit{in law} over their entire path; in terms of path measures (cf. \cref{table:notation1}), $\hat{\Qpath} = \Qpath$.}

\begin{remark}[Hat convention for time reversal]
\label{rem:hat-convention}
The hat notation $\hat{\cdot}$ indicates quantities associated with the \emph{reverse process}. The fundamental definition is the time-reversed process itself:
\begin{equation}
\hat\x_s  \coloneqq  \x_{1-s}, \qquad s \in [0,1].
\end{equation}
All quantities derived from the reverse process consequently inherit the hat. For marginal distributions, this reduces to simple index substitution: $\hat q_s = q_{1-s}$. However, for operators governing the reverse dynamics, such as rate matrices and other derived quantities, the relationship to their forward counterparts is more complex and will be derived in the following sections.
\end{remark}

\continuous[Reverse KFE and reverse SDE]{
With the time change $t\mapsto 1-s$, the reversed marginals $\{\hat q_s=q_{1-s}\}_{s \in [0,1]}$ satisfy
\begin{align}
\partial_s \hat q_s(\x)
= -\,\nabla_\x  \cdot \Bigl(\bigl[g_{1-s}^2 \nabla_\x \log \hat q_s(\x) - \f_{1-s}(\x)\bigr]\hat q_s(\x)\Bigr)
+ \tfrac12\, g_{1-s}^2\,\Delta_\x \hat q_s(\x)\;.
\label{eq:reverse-Fokker-Planck}
\end{align}
This corresponds to the reverse-time SDE:
\begin{align}
\diff \hat{\x}_s
= \Bigl[-\,\f_{1-s}(\hat{\x}_s) + g_{1-s}^{2}\,\nabla_\x \log \hat q_{s}(\hat{\x}_s)\Bigr]\diff s
+ g_{1-s}\,\diff \hat{\w}_s\;.
\label{eq:reverse-SDE}
\end{align}
}

\begin{proof}[Derivation sketch.] Given the KFE for the forward process marginals $q_t$, we derive the KFE for the reversed marginals $\hat{q}_s = q_{1-s}$ using the time change $t \mapsto 1-s$:
\begin{align}
\partial_s\,\hat{q}_s(\x)
&= -\partial_t\,q_{t}(\x)\Big|_{t=1-s} \\
&= \nabla_\x \cdot\left(\f_{1-s}(\x) \hat{q}_{s}(\x)\right) - \frac{1}{2}g_{1-s}^2 \Delta_\x \hat{q}_{s}(\x).
\end{align}
The negative diffusion term prevents direct interpretation as a forward SDE. To resolve this, we add and subtract $\text{C}_s(\x) \, g_{1-s}^2 \, \Delta_\x \hat{q}_{s}(\x)$ for an arbitrary scalar-valued function $\text{C}_s$:
\begin{align}
\partial_s\hat{q}_s(\x)
&= \nabla_\x \cdot\left(\f_{1-s}(\x) \hat{q}_{s}(\x)\right) - \left(\text{C}_s(\x)+\frac{1}{2}\right)g_{1-s}^2\Delta_\x\hat{q}_{s}(\x) + \text{C}_s(\x) g_{1-s}^2\Delta_\x\hat{q}_{s}(\x).
\label{eq:reverseKFE-trick}
\end{align}
We arbitrarily set $\text{C}_s(\x) \coloneqq \frac{1}{2}$. Using the Laplacian-divergence identity $\Delta_\x \hat{q}_s = \text{div}(\nabla_\x \hat q_s)= \nabla_\x \cdot [\nabla_\x \hat{q}_s]$ in the term of the RHS and factoring out $\hat{q}_s$ with $\nabla_\x\,\log\hat{q}_s = \tfrac{\nabla_\x\,\hat{q}_s}{\,\hat{q}_s}$, we recover the reverse KFE:
\begin{align}
    \partial_s\,\hat{q}_s(\x)
     = 
    \underbrace{-\;\nabla_\x \cdot\Bigl(\bigl[g_{1-s}^2\,\nabla_\x \log\hat{q}_s(\x) - \f_{1-s}(\x)\bigr]\,\hat{q}_s(\x)\Bigr)}_{\text{drift term}}
    \;+\;
    \underbrace{\frac12\,g_{1-s}^2\,\Delta_\x\!\bigl[\hat{q}_s(\x)\bigr],}_{\text{diffusion term}}
    \label{eq:reverse-FPE}
\end{align}
with the associated reverse SDE:
\begin{align}
    \diff \hat{\x}_s 
    = \left[-\f_{1-s}(\hat{\x}_s) + g_{1-s}^2 \nabla_\x \log q_{1-s}(\hat{\x}_s)\right] \diff s + g_{1-s} \diff \hat{\w}_s\ .
\end{align}
\end{proof}

While reversing the KFE only ensures that the marginals satisfy $\hat{q}_s = q_{1-s}$, for all $s\in [0,1]$, it does not by itself guarantee a full time reversal of the process. In fact, the SDE \eqref{eq:reverse-SDE} \emph{does} define the time-reversed dynamics of the forward process, meaning that $\{\hat{\x}_s \}_{s\in [0,1]}\overset{\text{law}}{=} \{\x_{1-s}\}_{s\in [0,1]}$. Establishing this equality in law over entire trajectories, however, requires a rigorous reversal of the SDE using more sophisticated stochastic methods. For the detailed derivation, we refer the reader to \citet{anderson1982reverse, haussmann_time_1986, cattiaux_time_2023}.

\continuous[Probability–flow ODE and exact likelihood]{
The reverse SDE in \cref{eq:reverse-SDE} is not unique: alternative decompositions of drift and diffusion terms yield different SDEs with identical marginal evolution. By absorbing all stochasticity into the drift, we obtain a deterministic process---the \textit{probability-flow ODE} \cite{chen_neural_2019}:
\begin{align}
    \diff \x_t
     = 
    \ub_t(\x_t)\,\diff t\ ,
    \qquad
    \text{where}\quad
    \ub_t(\x)
     \coloneqq \f_{t}(\x) -
    \frac{g_{t}^2}{2}\,\nabla_\x \log q_{t}(\x) \;.
    \label{eq:reverse-ODE}
\end{align}
This deterministic trajectory enables exact likelihood computation. For any path $\{\x_t\}_{t \in [0,1]}$ satisfying \cref{eq:reverse-ODE}, the log-likelihood is:
\begin{align}
\log q_0(\x_0)
 = 
\log q_1(\x_1)
 + 
\int_0^1 \nabla_\x\!\cdot \ub_t(\x_t)\,\diff t\ .
\label{eq:exact-likelihood-main}
\end{align}
}

\begin{proof}[Derivation sketch.]
To compute the \textit{probability-flow ODE}, we return to the reverse SDE derivation \cref{eq:reverseKFE-trick}. In this equation, the choice of $\text{C}_s(\x)\coloneqq\tfrac 12$ in \cref{eq:reverseKFE-trick}, which inserts the diffusion term into the drift, was arbitrary. 
Setting instead $\text{C}_s(\x) \coloneqq 0$ absorbs the diffusion term entirely into the drift, yielding the following Fokker--Planck (continuity) equation:
\begin{align}
    \partial_s\,\hat{q}_s(\x)
     = 
    -\nabla_\x \cdot\Bigl[\hat\ub_s(\x) \, \hat{q}_s(\x)\Bigr],
    \quad
    \text{where}\quad
    \hat \ub_s(\x)
     = 
    \frac{g_{1-s}^2}{2}\,\nabla_\x \log q_{1-s}(\x)
     - 
    \f_{1-s}(\x)\ .
    \label{eq:reverse-ODE-KFE}
\end{align}
While this formulation suffices for sampling from the ODE, exact likelihood computation requires expressing the dynamics in forward time $t \in [0,1]$. Substituting $t \coloneqq 1-s$ into \cref{eq:reverse-ODE-KFE} and using $\partial_s\,\hat{q}_s=-\partial_t q_t$ yields:
\begin{align}
   \partial_t q_{t}=
    -\nabla_\x \cdot\Bigl[\ub_{t}(\x) \, q_{t}(\x)\Bigr],
    \quad
    \text{where}\quad
     \ub_t(\x)
     = \f_{t}(\x)-
    \frac{g_{t}^2}{2}\,\nabla_\x \log q_{t}(\x)
    \ .
    \label{eq:forward-ODE-KFE}
\end{align}
This corresponds to the deterministic ODE $\diff \x_t=\ub_t(\x_t)\,\diff t$, which is the probability-flow ODE. 

For the \textit{exact likelihood}, we compute the total time derivative of $\log q_t(\x_t)$ along the deterministic trajectory:
\begin{align}
    \frac{\diff}{\diff t} q_t(\x_t) &= \partial_t q_t(\x_t) +\left[ \nabla_\x  q_t(\x_t) \right] \transp \frac{\diff \x_t }{\diff t}\\
    &= -\nabla_\x \cdot\Bigl[\ub_t(\x_t)  q_t(\x_t)\Bigr] + \left[\nabla_\x q_t(\x_t) \right] \transp   \ub_t(\x_t) \\
    &= -[\nabla_\x \cdot \ub_t(\x_t) ]   \,q_t(\x_t) \qquad \text{ with } \ \nabla_\x \cdot\Bigl[\ub_t  q_t\Bigr] =\nabla_\x \cdot \ub_t q_t+\left[\nabla_\x  q_t \right]\transp \ub_t  \\ 
    \Rightarrow  \frac{\diff}{\diff t}  \log  &q_t(\x_t) = - \nabla_\x \cdot \ub_t(\x_t)\;.
\end{align}
This leads to the following first-order differential equation:
\begin{align}
\frac{\diff}{\diff t}\left[\begin{array}{c}\x_t \\ \log q_t(\x_t)\end{array}\right]=\left[\begin{array}{c}\ub_t(\x_t)  \\ - \nabla_\x \cdot \ub_t(\x_t)\end{array}\right],
\end{align}
which integrates to the exact likelihood identity:
\begin{align}
  \log q_0(\x_0)
  =
  \log q_1(\x_1)
  +
  \int_{0}^{1}\nabla_\x\!\cdot
  \ub_{t\,}\!\bigl(\x_t\bigr)\,\diff t\;.
  \label{eq:exact-likelihood}
\end{align}
\end{proof}
In summary, various decompositions of the drift and diffusion terms are valid in the reverse KFE, each corresponding to different SDE formulations for the reverse process, all sharing the same evolution of marginals. 

We now turn to discrete state spaces, where an analogous construction applies.

\discrete[Reverse KFE and reverse CTMC]{
Let $\Rseq_t$ be the forward rate matrix.
The reversed marginals $\hat q_s=q_{1-s}$ satisfy:
\begin{align}
 \partial_s \hat{\boldsymbol q}_s=\hat{\Rseq}_s\,\hat{\q}_s\;,
 \label{eq:reverse-CTMC}
\end{align}
where the reverse rate matrix $\hat{\Rseq}_{s}$ satisfies: 
\begin{align}
    \hat{\Rseq}_{s}(\x,\z) =
    \begin{cases}
        \Rseq_{1-s}(\z,\x)\,\dfrac{q_{1-s}(\x)}{q_{1-s}(\z)} & \x \neq \z \ ,\\[1mm]
        -\displaystyle\sum_{\y \neq \x} \Rseq_{1-s}(\x,\y)\,\dfrac{q_{1-s}(\y)}{q_{1-s}(\x)} & \x = \z \ .
        \label{eq:reverse-rate-matrix}
    \end{cases}
\end{align}
Similar to the discrete-time formulation \cref{eq:reversal-austin}, one recovers the posterior-based parameterisation \cite{austin2021structured,campbell_continuous_2022}. For $\x \neq \y \in \calX^d$:
\begin{align}
\hat{\Rseq}_s(\x,\y)= \frac{q_{1-s}(\x)}{q_{1-s}(\y)}\Rseq_{1-s}(\y,\x) 
= \Rseq_{1-s}(\y,\x) \sum_{\x_0} \frac{q_{1-s\cond 0}(\x \cond \x_0)}{q_{1-s\cond 0}(\y \cond \x_0)} \, q_{0 \cond 1-s}(\x_0 \cond \y).
\label{eq:Reverse-R-with-posterior}
\end{align}
This yields the reverse CTMC through \cref{eq:def-transition-rate-matrix}.
}

\begin{proof}[Derivation sketch.] 
In discrete state spaces, using the forward-time KFE, the reversed marginal dynamics satisfy \cite{kelly_reversibility_2011,campbell_continuous_2022}:
\begin{align}
    \partial_s\,\hat{q}_s(\x)
\;&=\;
-\,\partial_s\,q_{1-s}(\x)\\
&= -\sum_{\y \in \calX} \Rseq_{1-s}(\x,\y)q_{1-s}(\y)\\
&=\underbrace{-\sum_{\y \neq \x} \Rseq_{1-s}(\x,\y)q_{1-s}(\y)}_{\text{non diagonal terms}} + \underbrace{\sum_{\z \neq \x}\Rseq_{1-s}(\z,\x)q_{1-s}(\x).}_{-\Rseq(\x,\x)q_{1-s}(\x)\text{ diagonal term}}
\end{align}
We now define the \textit{rate matrix} of the reverse process, denoted by $\hat{\R}_s$, such that it yields the same evolution of the marginals and satisfies the standard CTMC conditions (i.e. non-negative off-diagonals and columns summing to zero). A natural \textit{minimal} way is to ``switch'' the roles of diagonal and off-diagonal contributions:\footnote{This can be understood intuitively as a reversal of probability flow: during the reverse dynamics, what was previously incoming probability becomes outgoing, and conversely, outgoing becomes incoming.}
\begin{align}
    \hat{\Rseq}_{s}(\x,\z)\hat{q}_{s}(\z)=
    \begin{cases}
        \Rseq_{1-s}(\z,\x)q_{1-s}(\x) & \z \neq \x, \\[0.5em]
        -  \sum_{\y \neq \x} \Rseq_{1-s}(\x,\y)q_{1-s}(\y) & \z=\x.
    \end{cases}
\end{align}
Thus,
\begin{align*}
\hat{\Rseq}_{s}(\x,\z) =
\begin{cases}
\Rseq_{1-s}(\z,\x)\,\dfrac{q_{1-s}(\x)}{q_{1-s}(\z)} & \text{if } \x \neq \z\\[1mm]
-\displaystyle\sum_{\y \neq \x} \Rseq_{1-s}(\x,\y)\,\dfrac{q_{1-s}(\y)}{q_{1-s}(\x)} & \text{if } \x = \z
\end{cases}
\end{align*}
which satisfies off-diagonal non-negativity and the divergence-free property.
\end{proof}

Similar to continuous state-spaces where multiple SDEs (with varying diffusion coefficients) can generate the same marginal evolution, the discrete reverse process is not unique. The key insight is that any rate matrix $\C_s$ satisfying \emph{detailed balance} with respect to the reversed marginal $\hat{q}_s$ has no net effect on the marginal evolution and can therefore be added to $\hat{\Rseq}_s$ \cite{campbell_continuous_2022,campbell_generative_2024}:
\begin{align}
    \hat{q}_s(\x)\,\C_s(\y,\x) = \hat{q}_s(\y)\,\C_s(\x,\y), \quad \forall\, \x \neq \y\ .
    \label{eq:detailed-balance-discrete}
\end{align}
Intuitively, detailed balance ensures that the probability mass flowing from $\x$ to $\y$ exactly equals the return flow from $\y$ to $\x$, resulting in zero net probability transfer. Consequently, we obtain a family of admissible reverse rate matrices parameterised by a scalar $\gamma \ge 0$:
\begin{align}
    \hat{\Rseq}_s^\gamma(\x,\y)  \coloneqq  \hat{\Rseq}_s(\x,\y) \;+\; \gamma\, \C_s(\x,\y)\ .
    \label{eq:reverse-rate-family}
\end{align}
The parameter $\gamma\in \RR^{+}$ controls the \emph{stochasticity} of the reverse process. Setting $\gamma=0$ yields the ``minimal'' reverse process that generates the fewest jumps necessary to match the marginals. 

Increasing $\gamma$ introduces additional reversible jumps, increasing the stochasticity of the process trajectory. While an optimal stochasticity level may be task-dependent, $\gamma=0$ is often preferred in practice \cite{campbell_generative_2024}.

\begin{remark}[Connection to Helmholtz--Hodge decomposition]
\label{rem:helmholtz-hodge}
The non-uniqueness of the reverse process, both in continuous (\cref{eq:reverse-SDE,eq:reverse-ODE}) and discrete spaces (\cref{eq:reverse-rate-family}), can be understood through the Helmholtz--Hodge decomposition \cite{bhatia2012helmholtz}. Any probability dynamics can be written as a conservation equation
\begin{align}
    \partial_s \hat{q}_s(\x) = -\nabla_\x \cdot(\J_s(\x)) \quad\ \text{or} \quad\ \partial_s \hat{q}_s = -\sum_{\y\in \calX^d} \J_s(\y,\x) \quad \forall \ \x \in\calX^d\ ,
\end{align}
where $\J_s$ is the probability current. The Helmholtz--Hodge principle decomposes any current into a \emph{gradient} (curl-free) component and a \emph{rotational} (divergence-free) component. Since divergence-free currents satisfy $\nabla_\x \cdot(\J_s) = 0$, or $\sum_{\y \in \calX^d}\J_s(\y,\x)=0$ for all $\x$ in discrete spaces, they do not contribute to the evolution of marginals and can be freely added or removed.
\end{remark}

\discrete[Dimensional factorisation of the reverse process]{
Let the forward rate matrix factorise as a Kronecker sum (independent noising across coordinate \cref{eq:seq-rate-matrix}), for $\x\in\calX^d$:
\begin{align}
\Rseq_t(\y,\x)
= \sum_{k=1}^{d}\R^{(k)}_t(y^{(k)},x^{(k)})\,\delta_{\y^{\setminus k},\,\x^{\setminus k}} \ ,
\qquad
q(\x_t\cond \x_s)=\prod_{k=1}^d q\bigl(x^{(k)}_t \cond x^{(k)}_s\bigr) \ .
\label{eq:forward-decomp}
\end{align}
The reverse rate matrix generally does \emph{not} Kronecker-decompose because the tilt $q_{1-s}(\x)/q_{1-s}(\z)$ in \cref{eq:reverse-rate-matrix} couples all coordinates.
A naive evaluation therefore requires $K^d$ operations to compute all transitions $\x \to \y$. However, in continuous time, simultaneous multi-coordinate jumps occur with negligible probability over an interval $\Delta t$ (see \cref{apx:rate-matrix-sequence-lvl}). As a result, the reverse dynamics reduce to dimension-wise updates.

Concretely, transitions are only evaluated for pairs $(\x,\y)$ that differ in at most one coordinate (Hamming distance one). For a given coordinate $k$, transitions are computed conditional on $\x^{\setminus k} = \y^{\setminus k}$. This reduces the computational cost per state $\x$ from $K^d$ to $(K - 1) \,d + 1$:
\begin{align}
    \hat{\Rseq}_s(\x,\y)
    = \sum_{k=1}^{d} \hat \R^{(k)}_s\!\bigl(\y,\x\bigr)\,
    \delta_{\x^{\setminus k},\,\y^{\setminus k}},
    \qquad
    \hat \R^{(k)}_s\!\bigl(\y,\x\bigr)
    \coloneq \R^{(k)}_{1-s}(y^{(k)},x^{(k)})\,\frac{q_{1-s}(\y)}{q_{1-s}(\x)}\ .
    \label{eq:seq-reverse-hamming1}
\end{align}
Equivalently, using the dimension-wise posterior,
\begin{align}
    \hat \R^{(k)}_s\!\bigl(\y,\x\bigr)
    =
    \R^{(k)}_{1-s}(y^{(k)},x^{(k)})
    \sum_{x_0^{(k)}\in \calX} q_{0 \cond1-s}\bigl(x_0^{(k)} \cond \x\bigr)\,
    \frac{q_{1-s\cond 0}\!\bigl(y^{(k)} \cond x_0^{(k)}\bigr)}
         {q_{1-s\cond 0}\!\bigl(x^{(k)} \cond x_0^{(k)}\bigr)} \ .
    \label{eq:seq-reverse-posterior}
\end{align}
In practice, we often set $\R^{(k)}_t\coloneqq \R_t$ for all $k$.
}

\begin{remark}[Time indexing] Some works alternatively reparameterise the reverse process as $\left\{\hat{q}_t\right\}_{t \in[0, 1]}$, with time $t$ flowing backwards (from $1$ to $0$). This avoids the notational asymmetry introduced by the time reversal $t \mapsto 1-s$, and aligns the time indices of the forward and reverse processes.
\end{remark}

\subsection{Approximating the reverse process}

In the previous section we characterised the \emph{exact} reverse dynamics in both continuous and discrete spaces: given the true marginals $q_t$, the reverse SDE (\cref{eq:reverse-SDE}), probability-flow ODE (\cref{eq:reverse-ODE}), and reverse CTMC (\cref{eq:reverse-rate-matrix}), all produce trajectories whose marginals satisfy $\hat q_s = q_{1-s}$. In practice, however, we never have analytic access to these marginals---only samples from $\qdata$ and tractable forward transitions $q_{t\cond 0}$. Approximating the reverse process therefore amounts to replacing the intractable quantities that appear in the exact formulas (scores $\nabla \log q_t$, marginal ratios $q_t(\y)/q_t(\x)$, or posteriors $q_{0\cond t}$) with \emph{learned} surrogates, typically parameterised by neural networks ($NN^\thetab(\x_t,t)$), and trained using objectives based on maximum likelihood. The goal of this section is to make this approximation step explicit
by introducing several equivalent parameterisations that predict different but interconvertible targets (e.g., scores, clean data, noise, or velocity in continuous spaces, and posteriors or discrete ``scores'' in discrete spaces). Plugging these learned quantities back into the reverse processes yields implementable generative samplers that approximate the true reverse dynamics.

\paragraph{Continuous state spaces.}
The reverse SDE \eqref{eq:reverse-SDE} and reverse KFE \eqref{eq:reverse-FPE} are fully determined once the \emph{(Stein) score} $\nabla_\x \log q_t$ is known: if we had access to this score for all $t$, we could sample exactly from the reverse process. In practice, however, the marginal $q_t$ is typically a complex non-Gaussian distribution, so its score cannot be computed in closed form. To make the problem tractable, we usually define $\f_t$ to be linear in the state and $g_t$ to be a state-independent scalar, such that the forward transitions $q_{t\cond 0}$ are simple (see \citet[Section~5.5]{sarkka2019applied}):
\begin{align}
    q(\x_t \cond \x_0) = \mathcal{N}\bigl(\x_t \; ; \; \alpha_t \x_0, \sigma_t^2 \I\bigr) \ ,
    \label{eq:ct-forward-marginal}
\end{align}
where the time-dependent coefficients $\alpha_t$ and $\sigma_t$ satisfy the differential relations (derived in \cref{eq:cont-time-limit-csp}):
\begin{align}
\f_t(\x_t) = \frac{\diff \log \alpha_t}{\diff t} \x_t\ , 
\qquad
g_t^2 = \alpha_t^2 \frac{\diff}{\diff t}\!\left[\frac{\sigma_t^2}{\alpha_t^2}\right] \ .
\label{eq:ct-alpha-sigma-relation}
\end{align}
These allow us to work equivalently with the SDE coefficients $(\f_t, g_t)$ or the parameters $(\alpha_t, \sigma_t)$.

The following result, links the tractable conditional score $\nabla_\x \log q(\x_t \cond \x_0)$ to the intractable marginal score $\nabla_\x\log q_t$ (see \cref{apx:tweedie-identity} for a proof).

\continuous[Score identity]{
The \emph{score identity} expresses the marginal score as a conditional expectation of the tractable conditional score:
\begin{align}
    \nabla_\x \log q_t(\x)=\mathbb{E}_{q(\x_0\cond\x_t)}\left[\nabla_\x \log q(\x_t \cond \x_0)\right].
    \label{eq:Tweedie-identity-ct}
\end{align}
}

This insight motivates the following parameterisations, all of which learn approximations of tractable conditional quantities that can be converted into the required score function. Recall from the discrete-time discussion (\cref{eq:score-mean-eps-parameterisation}) that the following identities relate different prediction targets:
\begin{align}
\x_0 = \frac{\x_t - \sigma_t \epsilon}{\alpha_t}
         = \frac{\sigma_t^2 \nabla_{\x_t}\log q(\x_t \cond \x_0) + \x_t}{\alpha_t}
         = \frac{\alpha_t \x_t - \sigma_t v_t}{\alpha_t^2+\sigma_t^2},
\label{eq:ct-param-equivalences}
\end{align}
where $\epsilon \sim \mathcal{N}(0,\I)$ is the noise, and $v_t \coloneqq \alpha_t \epsilon - \sigma_t \x_0$ is the $v$-parameterisation \cite{salimans2022progressive}. These equivalences enable seamless conversion between parameterisations.

\continuous[Parameterisation of the reverse process]{
In continuous state spaces, we parameterise the reverse process by learning a neural network that predicts one of the following quantities, from which the score can be derived:

\begin{enumerate}[label=(\roman*),leftmargin=*,itemsep=1pt,topsep=2pt]
\item \emph{Score-based:} Directly approximate the score function $\nabla_\x \log q_t(\x)$ by learning $\mathbf{s}^\thetab(\x_t,t) \approx \nabla_\x \log q(\x_t \cond \x_0)$ \cite{song2019generative,song_score-based_2021}. The reverse SDE becomes:
\begin{align}
\diff \x_{1-t} = \Bigl[-\,\f_{t}(\x_{1-t}) + g_{t}^{2}\,\mathbf{s}^\thetab(\x_{1-t},t)\Bigr]\diff t + g_{t}\,\diff \w_t.
\label{eq:ct-score-param}
\end{align}

\item \emph{Clean-data prediction ($\x_0$-prediction):} Learn $\x_0^\thetab(\x_t,t) \approx \x_0$ and recover the score via
\begin{align}
\mathbf{s}^\thetab(\x_t,t) = \frac{\alpha_t\x_0^\thetab(\x_t,t) - \x_t}{\sigma_t^2}.
\label{eq:ct-x0-to-score}
\end{align}
This parameterisation is particularly stable in low-noise regimes \cite{karras2022elucidating}.

\item \emph{Noise prediction ($\epsilonb$-prediction):} Learn $\epsilonb^\thetab(\x_t,t) \approx \epsilonb$ where $\x_t = \alpha_t \x_0 + \sigma_t \epsilonb$, and recover the score via
\begin{align}
\s^\thetab(\x_t,t) = -\frac{\epsilonb^\thetab(\x_t,t)}{\sigma_t}.
\label{eq:ct-eps-to-score}
\end{align}
This is the parameterisation used in DDPM \cite{ho_denoising_2020}.

\item \emph{Velocity prediction ($\mathbf{v}$-prediction):} Learn $\mathbf{v}^\thetab(\x_t,t) \approx \mathbf{v}_t = \alpha_t \epsilonb - \sigma_t \x_0$ and recover the score via
\begin{align}
\s^\thetab(\x_t,t) = -\frac{\sigma_t \x_t + \alpha_t \mathbf{v}^\thetab(\x_t,t)}{\sigma_t(\alpha_t^2+\sigma_t^2)}.
\label{eq:ct-v-to-score}
\end{align}
This parameterisation can provide better balance across noise levels \cite{salimans2022progressive,ho_imagen_2022,gao2025diffusionmeetsflow}.
\item \emph{Probability-flow ODE velocity ($\ub$-prediction):} Directly learn the velocity field $\ub^\thetab(\x_t,t) \approx \ub_t(\x_t)$ of the deterministic ODE (\cref{eq:reverse-ODE}) and recover the score via
\begin{align}
\mathbf{s}^\thetab(\x_t,t) = \frac{2}{g_t^2}\Bigl[\f_t(\x_t)-\ub^\thetab(\x_t,t)  \Bigr].
\label{eq:ct-u-to-score}
\end{align}
This parameterisation is particularly useful for exact likelihood computation and deterministic sampling \cite{chen_neural_2019,lipman_flow_2024}.
\end{enumerate}
For schedules derived from the discrete-to-continuous limit, the relationship \eqref{eq:ct-alpha-sigma-relation} connects the SDE coefficients to the marginal parameters. Substituting into the reverse SDE and using any of the above parameterisations yields a practical sampling algorithm.
}

\paragraph{Discrete state spaces.} In discrete state spaces, we parameterise the reverse rate matrix $\hat{\R}_s$ appearing in the reverse master equation $\partial_s \hat{q}_s = \hat{\R}_s \hat{q}_s$.

\discrete[Parameterisation of the reverse process]{
In discrete state spaces, \emph{three main parameterisations} have been introduced. We build on the dimensional factorisation \cref{eq:seq-reverse-hamming1}.

\begin{enumerate}
\item \emph{Direct rate matrix:} Directly parameterise $\hat \Rseq_s^\thetab(\x,\y)$ such that $\hat \Rseq_s^\thetab(\x,\y) \approx \hat\Rseq_s(\x,\y)$ and set the approximate reverse process evolution to $\partial_s\p^\thetab_s=\hat \Rseq_s^\thetab\,\p^\thetab_s$, which can be integrated under regularity assumptions. Under i.i.d.\ forward noising with a shared rate matrix ($\R^{(k)}_t=\R_t$ for all $k$), the sequence-level rate matrix is:
\begin{align}
\hat{\Rseq}^\thetab_s \coloneqq \sum_{k=1}^{d} \hat \R^\thetab_s\!\bigl(\y,\x\bigr)\,
\delta_{\x^{\setminus k},\,\y^{\setminus k}}\ .
\end{align}
While conceptually intuitive, this parameterisation is usually intractable in practice.

\item \emph{Posterior-based (clean-data prediction):} Learn $\p_{0\cond s}^\thetab(\x,s)$ such that ${\p_{0\cond s}^\thetab}^{(x_0)}(\x,s) \approx q(x_0 \cond \x_s)$ and substitute into \cref{eq:seq-reverse-posterior} \cite{austin2021structured,campbell_continuous_2022}:
\begin{align}
\hat{\Rseq}^\thetab_s(\x,\y)
\coloneqq \sum^d_{k=1} \R_{1-s}(y^{(k)},x^{(k)})\sum_{x_0\in\calX}\frac{q_{1-s \cond 0} (y^{(k)}\cond x_0)}{q_{1-s \cond 0}(x^{(k)}\cond x_0)}\,{\p_{0\cond s}^\thetab}^{(x_0)}(\x,s).
\end{align}
This approach is analogous to $\x_0$-prediction in continuous spaces and is widely used in practice.

\item \emph{Concrete score (ratio-based):} Building on \textit{ratio matching} \cite{hyvarinen_extensions_2007, lyu_interpretation_2012}, learn the probability ratio $\s^\thetab(\x,s)$ such that $\s^\thetab(\x,s)^{(y^{(k)})} \approx \frac{q_{1-s}(\y)}{q_{1-s}(\x)}\delta_{\x^{\setminus k},\,\y^{\setminus k}}$ for $\y\neq \x$ and form \cite{sun_score-based_2023, meng_concrete_2023, lou_discrete_2024}:
\begin{align}
\hat{\Rseq}^\thetab_s(\x,\y)\coloneqq
\begin{cases}\displaystyle\sum_{k=1}^d \s^\thetab(\y,s)^{(x^{(k)})} \,\R^{(k)}_{1-s}(y^{(k)},x^{(k)})
 \delta_{\x^{\setminus k},\,\y^{\setminus k}}, & \y\neq \x,\\[3mm]
-\displaystyle\sum_{\y\neq \x}\hat{\Rseq}^\thetab_s(\y,\x), & \y=\x.
\end{cases}
\end{align}
This parameterisation mirrors the role of the continuous Stein score and has strong theoretical connections to score matching \cite{meng_concrete_2023}.
\end{enumerate}
}

\paragraph{From reverse dynamics to learning objectives.}
We have now established the reverse-time dynamics for both SDEs and CTMCs, along with their parameterisations. This naturally raises the question: how do we train models to learn these approximations? The answer lies in maximum likelihood estimation via the Evidence Lower Bound (ELBO), which we develop in the next section.

\section{Maximum likelihood and ELBO: derivation of the loss \label{sect:ELBO}}

In generative modeling, our primary objective is to maximise the likelihood of observed data under the learned model. As direct likelihood maximisation is often intractable, we instead maximise a variational lower bound on the log-likelihood: the \textit{evidence lower bound} (ELBO). This strategy aligns with the approach used in Variational Autoencoders (VAEs) \cite{kingma2014auto}. 

Following \cite{kingma_variational_2021},  we first derive the discrete-time ELBO using standard variational inference techniques, then take the continuous-time limit to obtain the corresponding continuous-time objective. A direct derivation in continuous time using path measures is provided in \cref{sect:generator-ELBO} \cite{song_how_2021,benton_denoising_2024,campbell_continuous_2022,lou_discrete_2024}.

\subsection{Discrete time ELBO}

We consider observations $\x_0 \sim \qdata$, where $\qdata$ denotes an unknown data distribution. For each data point, we define a generative model over a time-indexed trajectory $\x_{0:T}$ via a joint distribution $p^\thetab(\x_{0:T})$ with $t \in \{0,\ldots,T\}$ with the parameterisation from \cref{eq:condition-reversal-nn-cont}. The model is interpreted as generating trajectories in reverse time: the terminal marginal $p(\x_T)$ is a prescribed noise distribution, and the variables $\x_{1:T}$ serve as latent variables mediating the reverse-time generation of $\x_0$. To approximate the intractable posterior over latent trajectories, we introduce a variational distribution $q(\x_{1:T} \cond \x_0)$ conditioned on the observed data.
The log marginal likelihood for a data point $\x_0$ can be rewritten as follows:
\begin{align}
    \log p^\thetab(\x_0) 
    &= \int q(\x_{1:T}\cond \x_0) \log p^\thetab(\x_0) \,\diff \x_{1:T}\\
    &= \int  q(\x_{1:T}\cond \x_0) \log \frac{ p^\thetab(\x_{0:T})}{p^\thetab(\x_{1:T}\cond \x_0)}\,\diff \x_{1:T}\\
    &= \int  q(\x_{1:T}\cond \x_0) \log \frac{ q(\x_{1:T} \cond \x_0) p^\thetab(\x_{0:T})}{ q(\x_{1:T} \cond \x_0) p^\thetab(\x_{1:T}\cond \x_0)}\,\diff \x_{1:T} \\
   &= \underbrace{\E_{q(\x_{1:T}\cond \x_0)} \left[ \log p^\thetab(\x_0 \cond \x_{1:T}) \right]}_{\text{reconstruction term}} 
   - \underbrace{\KL \left( q(\x_{1:T} \cond \x_0) \| p^\thetab (\x_{1:T} ) \right)}_{\text{path-space KL}} \notag \\
  & \qquad  + \underbrace{\KL\left(q(\x_{1:T} \cond \x_0) \| p^\thetab(\x_{1:T} \cond \x_0)\right)}_{\geq 0}.
\end{align}
Since the final KL term is non-negative, we obtain a lower bound by discarding it:
\begin{equation}
\log p^\thetab(\x_0) 
\;\geq\; 
\E_{q(\x_{1:T}\cond \x_0)} \left[ \log p^\thetab(\x_0 \cond \x_{1:T}) \right]
- \KL \left( q(\x_{1:T} \cond \x_0) \| p^\thetab (\x_{1:T} ) \right).
\end{equation}
Exploiting the Markovian structure of both forward process $q(\x_{1:T} \cond \x_0)$ and reverse process $p^\thetab(\x_{T:0})$, the path-space KL divergence decomposes into a sum of step-wise terms (detailed proof in \cref{apx:discrete-time-ELBO}):
\general[Discrete time ELBO decomposition]{
The negative log-likelihood for a data point $\x_0$ is upper-bounded by:
\begin{align}
    -\log p^\thetab (\x_0)
        &\leq \mathcal{L}_T(\x_0) + \mathcal{L}^\thetab_{\text{diff}}(\x_0) + \mathcal{L}^\thetab_0(\x_0) 
        \label{eq:discrete-elbo-decomp}
\end{align}
where
\begin{align}
    \mathcal{L}_T(\x_0)
        & \coloneqq D_{\mathrm{KL}}\bigl(q(\x_T \cond \x_0)\,\|\,p(\x_T)\bigr) \\
    \mathcal{L}^\thetab_{\text{diff}}(\x_0) \label{eq:discrete-training-objective}
        &\coloneqq
        \sum_{t=2}^T
        \E_{\x_t\sim q(\cdot\cond\x_0)}
        \Bigl[
        D_{\mathrm{KL}}\!\left(q(\x_{t-1}\cond \x_t,\x_0)\,\|\,p^\thetab(\x_{t-1}\cond \x_t)\right)
        \Bigr] \\
    \mathcal{L}^\thetab_0(\x_0)
        &\coloneqq \E_{\x_1 \sim q(\cdot \cond \x_0)} \left[-\log p^\thetab(\x_0\cond \x_1)\right] \ .
\end{align}
In practice, training diffusion models typically amounts to minimising the diffusion loss $\mathcal{L}_{\text{diff}}$, since $\mathcal{L}_T$ is constant and $\mathcal{L}_0$ is often assumed to be negligible compared to the diffusion loss $\mathcal{L}^\thetab_{\text{diff}}$.
}

\begin{remark}
The ELBO decomposes into three types of terms:
\begin{enumerate}[label=(\roman*),leftmargin=*,itemsep=1pt,topsep=2pt]
\item \textit{Prior matching ($\mathcal{L}_T$):} Ensures the forward process endpoint $q(\x_T \cond \x_0)$ matches the noise prior $p(\x_T)$. This term is constant with respect to $\thetab$ and vanishes when the forward process sufficiently destroys information (i.e., $q(\x_T \cond \x_0) \approx p(\x_T)$ for all $\x_0$).

\item \textit{Diffusion loss ($\mathcal{L}_{\mathrm{diff}}$):} Measures how well the learned reverse transitions $p^\thetab(\x_{t-1} \cond \x_t)$ match the true posterior transitions $q(\x_{t-1} \cond \x_t, \x_0)$ (which condition on the data $\x_0$). This term is trainable and forms the core of the optimization objective.

\item \textit{Reconstruction ($\mathcal{L}_0$):} Evaluates the decoder quality at the final denoising step. We consider this term negligible as it goes to $0$ in the continuous time limit \cite{kingma_variational_2021,ho_denoising_2020}.
\end{enumerate}
\end{remark}
We now specialise this general framework to continuous and discrete state spaces.

\subsection{Continuous state space}

In continuous state spaces, each denoising term of $\mathcal{L}_{\mathrm{diff}}$ involves the KL divergence between two Gaussian distributions (cf.\ \cref{eq:condition-reversal-nn-cont,eq:discrete-time-reversal-cont,eq:discrete_time_reverse_mu_x0}), which can be computed in closed form:\footnote{For two Gaussians in \(\RR^D\), $\KL\bigl(\mathcal N(\mub_q,\Sigmab_q)\,\|\,\mathcal N(\mub_p,\Sigmab_p)\bigr)
=\frac12\bigl[\log\frac{\det\Sigmab_p}{\det\Sigmab_q}-D+\mathrm{tr}(\Sigmab_p^{-1}\Sigmab_q)+(\mub_p-\mub_q)^\top\Sigmab_p^{-1}(\mub_p-\mub_q)\bigr]$.
Here, $\mub_q=\mub_{t-1\cond t}(\x_t,\x_0)$, $\mub_p=\mub^\thetab(\x_t,t)$, and $\Sigmab_q = \Sigmab_p = \sigma_{t-1\cond t}\I$. Note that, while in continuous time the variances must be equal for the ELBO to be finite \citep{archambeau2007gaussian}, in discrete time this does not have to be the case. In fact, the optimal $\Sigmab_p$ is different from $\Sigmab_q$ \citep{nielsen2024diffenc}.}
\begin{align}
    D_{\mathrm{KL}} \big( q_{t-1 \cond t, 0}\,\|\,p^{\thetab}_{t-1 \cond t} \big)
    &=
    \frac12\,
    \frac{\bigl\|\mub_{t-1\cond t}(\x_t,\x_0)-\mub^\thetab(\x_t,t)\bigr\|^2}{\sigma^2_{t-1\cond t}} \ .
\end{align}

\continuous[Per-step loss under common parameterisations]{
The diffusion loss $\mathcal{L}_{\mathrm{diff}}$ becomes a mean-squared error under every standard parameterisation:
\begin{alignat*}{3}
    &\text{Clean-data: } \quad
    &&\E_{\x_t\sim q(\cdot\cond\x_0)}\left[\frac12 \left(\frac{\alpha_{t-1}^2}{\sigma_{t-1}^2}-\frac{\alpha_t^2}{\sigma_t^2}\right)
    \bigl\|\x_0-\x_0^\thetab(\x_t,t)\bigr\|^2 \right] \ ,\\
    &\text{Noise: } \quad
    &&\E_{\x_t\sim q(\cdot\cond\x_0)}\left[\frac12 \left(\frac{\alpha_{t-1}^2 \sigma_t^2}{\alpha_t^2 \sigma_{t-1}^2}-1\right)
    \bigl\|\epsilonb-\epsilonb^\thetab(\x_t,t)\bigr\|^2\right] \ ,\\
    &\text{Score: } \quad
    &&\E_{\x_t\sim q(\cdot\cond\x_0)}\left[\frac12 \left(\frac{\alpha_{t-1}^2}{\sigma_{t-1}^2}-\frac{\alpha_t^2}{\sigma_t^2}\right)
    \frac{\sigma_t^4}{\alpha_t^2} \; 
    \left\|\nabla_{\x_t}\log q(\x_t\cond \x_0)-\s^\thetab(\x_t,t)\right\|^2\right] \ .
\end{alignat*}
}

To derive the continuous-time diffusion loss, we take the limit as $T \to \infty$. As in \cref{sect:discrete-to-continuous-time-csp}, set $\Delta t = 1/T$ and $t_i = i \Delta t$:
\begin{align}
    \mathcal{L}^\thetab_{\text{diff}}(\x_0) 
        &= \sum_{i=2}^{T} \E_{\x_{t_i} \sim q( \cdot \cond \x_0)}\left[ \frac12\left(\frac{\alpha_{t_i-\Delta t}^2}{\sigma_{t_i - \Delta t}^2} - \frac{\alpha_{t_i}^2}{\sigma_{t_i}^2} \right) \frac{\sigma_{t_i}^4}{\alpha_{t_i}^2} \bigl\| \nabla_{\x_{t_i}} \log q(\x_{t_i} \cond \x_0)- \s^\thetab(\x_{t_i},{t_i})\bigr\|_2^2 \right] \nonumber \\
        &= \sum_{i=2}^{T} \E_{\x_{t_i} \sim q( \cdot \cond \x_0)}\left[ \frac12 \left(\alpha_{t_i}^2 \frac{\left(\frac{\sigma_{{t_i}-\Delta t}^2}{\alpha_{{t_i} - \Delta t}^2} - \frac{\sigma_{t_i}^2}{\alpha_{t_i}^2} \right)}{\Delta t} + o(1)\right) \bigl\| \nabla_{\x_{t_i}} \log q(\x_{t_i} \cond \x_0)- \s^\thetab(\x_{t_i},{t_i})\bigr\|_2^2 \right]  \Delta t, \nonumber \\
    \lim_{\Delta t  \to 0}  \mathcal{L}^\thetab_{\text{diff}}(\x_0) 
    &= 
    \frac{1}{2} \int_{0}^{1}
    \E_{\x_t \sim q\left(\cdot \cond \x_0\right)} 
    \Bigl[
    g_t^2\,\|\nabla_{\x}\log q_t(\x_t\cond\x_0) - \s^\thetab(\x_t,t)\|_2^2
    \Bigr]\,\dt.
    \label{eq:discrete-to-continuous-time-csp-ELBO}
\end{align}

\continuous[Denoising score matching loss]{
In the limit $T\to\infty$, the diffusion loss becomes:
\begin{align}
    \mathcal{L}^\thetab_{\text{diff}}(\x_0)
    =
    \frac{1}{2}\,\int_0^1
    g^2_t \; \E_{\x_t\sim q(\cdot\cond\x_0)}
    \left[\left\|\nabla_{\x}\log q_t(\x_t\cond\x_0) - \mathbf s^\thetab(\x_t,t)\right\|_2^2 \right]
    \,\diff t \ ,
    \label{eq:elbo-score-matching}
\end{align}
which is a time-weighted \emph{denoising score matching} objective \cite{vincent_connection_2011}.}

\subsection{Discrete state space}
In discrete state spaces, the intermediate KL divergence terms in $\mathcal{L}_{\mathrm{diff}}$ are:
\begin{align}
    D_{\mathrm{KL}}\bigl(q_{t-1 \cond t, 0}\,\|\,p^{\thetab}_{t-1 \cond t}\bigr)
    = \sum_{\x_{t-1}} q(\x_{t-1} \cond \x_t , \x_0) \log \frac{q(\x_{t-1} \cond \x_t , \x_0)}{p^\thetab(\x_{t-1} \cond \x_t)},
\end{align}
for $t = 2, \ldots, T$.
Applying Bayes' rule and exploiting the Markov property of the forward process for $ q(\x_{t-1} \cond \x_t , \x_0)$, we obtain:
\begin{align}
  D_{\mathrm{KL}}\bigl(q_{t-1 \cond t, 0}\,\|\,p^{\thetab}_{t-1 \cond t}\bigr)  = \sum_{\x_{t-1}}\frac{q(\x_{t-1} \cond \x_0)}{q(\x_t \cond \x_0)}q(\x_{t} \cond \x_{t-1}) \left(  \log \frac{q(\x_{t} \cond \x_{t-1})}{p^\thetab(\x_{t-1} \cond \x_t)} +  \log \frac{q(\x_{t-1} \cond \x_0)}{q(\x_t \cond \x_0) } \right) .
\end{align}
Using the same timestep notation as in continuous spaces (\cref{eq:discrete-to-continuous-time-csp-ELBO}), with $\Delta t = 1/T$ and $t_i = i \Delta t$, the \emph{diffusion loss} becomes:
\begin{align}
    \mathcal{L}^\thetab_{\text{diff}}(\x_0)  = \sum_{i=2}^T \E_{\x_{t_i} \sim q( \cdot \cond \x_0)} \left[ \sum_{\x_{t_{i-1}}}\frac{q(\x_{t_{i-1}} \cond \x_0)}{q(\x_{t_i} \cond \x_0)} q(\x_{t_i} \cond \x_{t_{i-1}})\, \mathcal{F}_i\right] \ ,
    \label{eq:elbo-dsp-discrete-time-ti}
\end{align}
with
\begin{align}
     \mathcal{F}_i =\log q(\x_{t_i} \cond \x_{t_{i-1}}) -  \log p^\thetab(\x_{t_{i-1}} \cond \x_{t_i}) 
+  \log \frac{q(\x_{t_{i-1}} \cond \x_0)}{q(\x_{t_i} \cond \x_0) } \ .
\end{align}
To derive the continuous-time limit, we analyse each term of $\mathcal{F}_i$ separately.
For two discrete states $\x, \y \in \calX^d$ with $\lim_{\Delta t \to 0}\x_{t_{i-1}}= \y$ and $\lim_{\Delta t \to 0}\x_{t_{i}}= \x$, we have:
\begin{align}
    \lim_{\Delta t \to 0} \frac{q(\x_{t_{i-1}} \cond \x_0)}{q(\x_{t_i} \cond \x_0)}= \delta_{\x,\y} + (1-\delta_{\x,\y}) \frac{q(\y \cond \x_0)}{q(\x \cond \x_0)} \ .
    \label{eq:quotient-taylor-discrete-elbo}
\end{align}
For the transitions, following \citet{campbell_continuous_2022} and \citet{von2025generalized}, and using the jump equation \eqref{eq:def-transition-rate-matrix}, the logarithm expands as:
\begin{align}
    \log q(\x_{t_i}\cond\x_{t_{i-1}}) = \begin{cases}
    \Rseq_t(\x_{t_i},\x_{t_{i-1}})\,\Delta t + o(\Delta t), & \text{if } \x_{t_i} = \x_{t_{i-1}}\ ,\\
    \log \bigl(\Rseq_t(\x_{t_i},\x_{t_{i-1}})\,\Delta t\bigr) + o(1), & \text{if } \x_{t_i} \neq \x_{t_{i-1}} \ .
    \end{cases}
\end{align}
An analogous expression holds for $\log p^\thetab(\x_{t_{i-1}} \cond \x_{t_i})$ with $\Rseq_t$ replaced by $\hat \Rseq_t^\thetab$.
After combining the Taylor expansions and keeping only first-order terms in $\Delta t$, the product term becomes:
\begin{align}
     q(\x_{t_i} \cond \x_{t_{i-1}}) \Bigl( \log q(\x_{t_i} \cond \x_{t_{i-1}}) -  \log p^\thetab(\x_{t_{i-1}} \cond \x_{t_i}) \Bigr) = \mathcal{T}_{\text{diag}} + \mathcal{T}_{\text{off-diag}} + o(\Delta t),
     \label{eq:Tdiag-Toffdiag-discrete-elbo}
\end{align}
where
\begin{align}
\mathcal{T}_{\text{diag}} &= \delta_{\x_{t_i},\x_{t_{i-1}}}\, \Bigl[
         \Rseq_t(\x_{t_i},\x_{t_{i-1}})
         - \hat\Rseq^\thetab_t(\x_{t_{i-1}},\x_{t_i})\Bigr] \Delta t\ ,\\
\mathcal{T}_{\text{off-diag}} &= (1-\delta_{\x_{t_i},\x_{t_{i-1}}})\,\Rseq_t(\x_{t_i},\x_{t_{i-1}})\,
\log \!\frac{\Rseq_t(\x_{t_i},\x_{t_{i-1}})} {\hat\Rseq^\thetab_t(\x_{t_{i-1}},\x_{t_i})} \,\Delta t\ .
\end{align}

Multiplying \cref{eq:quotient-taylor-discrete-elbo,eq:Tdiag-Toffdiag-discrete-elbo}, taking the limit $\Delta t \to 0$ and using the column-sum-to-zero property $\Rseq_t(\x,\x) = -\sum_{\y \neq \x} \Rseq_t(\y,\x)$, we recover the following ELBO \cite{campbell_continuous_2022,lou_discrete_2024,chen_convergence_2024,von2025generalized}:
\begin{align}
     \mathcal{L}^\thetab_{\text{diff}}(\x_0) = \int_0^1 \E_{\x_t \sim q(\cdot\cond \x_0)} \Bigg[ &\sum_{\y \neq \x_t} \hat\Rseq^\thetab_t(\y,\x_t)
         - \Rseq_t(\y,\x_t) \\&+\Rseq_t(\x_t,\y)\frac{q(\y \cond \x_0)}{q(\x_t \cond \x_0)} \left[ \log \frac{\Rseq_t(\x_t,\y)} {\hat\Rseq^\thetab_t(\y,\x_t)}
         +   \log \frac{q(\y \cond \x_0)}{q(\x_t \cond \x_0)} \right] \Bigg] \dt \ ,
 \end{align}

Grouping constants and focusing on learnable terms yields the \emph{Denoising Score Entropy} objective.

\discrete[Denoising score entropy loss]{

The continuous-time ELBO yields the \emph{denoising score entropy} objective, which equals the diffusion loss up to a constant:
\begin{align}
    \mathcal{L}^\thetab_\text{DSE}(\x_0)
    =
    \int_0^1
    \E_{\x_t\sim q(\cdot\cond\x_0)}
    \Bigg[\sum_{\y\neq \x_t}
    \bigl(\hat\Rseq_t^\thetab(\y,\x_t)
    -\Rseq_t(\x_t,\y)\tfrac{q(\y\cond\x_0)}{q(\x_t\cond\x_0)}\log \hat\Rseq_t^\thetab(\y,\x_t)\bigr)\Bigg]
    \,\dt \ ,
    \label{eq:seq-elbo}
\end{align}
which can be rewritten in terms of concrete score \cite{lou_discrete_2024,meng_concrete_2023}:
\begin{align*}
    \mathcal{L}^\thetab_\text{DSE}(\x_0) = \int_0^1 \E_{\x_t \sim q(\cdot \cond \x_0)} \left[ \sum_{\y \neq \x_t} \Rseq_t\!\left(\x_t, \y\right)\left( \langle \s^\thetab\left(\x_t, t\right), \e_\y \rangle - \frac{q(\y \cond \x_0)}{q(\x_t \cond \x_0)} \log\langle \s^\thetab(\x_t, t), \e_\y \rangle\right) \right] \dt \ .
\end{align*}
Or posterior:
\begin{align}
    \mathcal{L}^\thetab_\text{DSE}(\x_0)
    =
    \int_0^1\!
    \E_{\x_t\sim q(\cdot\cond\x_0)}
    \Bigl[\sum_{\y\neq \x_t}
    \bigl(\Rseq_t(\x_t,\y)\sum_{\tilde\x_0}\tfrac{q(\y\cond\tilde\x_0)}{q(\x_t\cond\tilde\x_0)}p^\thetab(\tilde\x_0\cond \x_t)
    \nonumber \\-\Rseq_t(\x_t,\y)\tfrac{q(\y\cond\x_0)}{q(\x_t\cond\x_0)}\log p^\thetab(\x_0 \cond \x_t \bigr)\Bigr]
    \,\dt \ .
\end{align}
Only transitions with non-zero forward rate $\Rseq_t(\x,\y)$ contribute, enabling sums over local moves (e.g., single-token changes).
}

Since this sum is weighted by the transition rates $\Rseq_t(\x,\y)$, only transitions with non-zero probability contribute to the loss. Therefore, the sum can be restricted to local transitions (e.g., Hamming distance 1 or single-token edits) \cite{campbell_continuous_2022, lou_discrete_2024}.

\discrete[Reduction to token-level objective (used in practice)]{
Since $\Rseq_t$ can be factorized for Hamming-1 transitions,
\begin{align}
\Rseq_t(\x_t,\y)
 = 
\sum_{k=1}^{d}\R^{(k)}_t(x_t^{(k)},y^{(k)})\,
\delta_{\x_t^{\setminus k},\,\y^{\setminus k}}.
\end{align}
Moreover, the forward marginals factorise as
$q(\x_t\cond \x_0)=\prod_{k=1}^d q(x_t^{(k)}\cond x_0^{(k)})$,
so that the ratio tilt cancels on unchanged coordinates:
\begin{align}
\frac{q(\y\cond \x_0)}{q(\x_t\cond \x_0)}
=
\frac{q(y^{(k)}\cond x_0^{(k)})}{q(x_t^{(k)}\cond x_0^{(k)})}
\quad
\text{whenever $\y$ and $\x_t$ differ only at coordinate $k$.}
\end{align}
Expanding the sequence-level sum in \eqref{eq:seq-elbo} thus yields the token-level objective:
\begin{align}
    {\mathcal{L}_{\text{DSE}}^{\text{seq}^\thetab}}(\x_0)
    =
    \int_0^1
    \E_{\x_t\sim q(\cdot\cond \x_0)}
    \Biggl[
    &\sum_{k=1}^{d}
    \sum_{y^{(k)}\neq x_t^{(k)}}
    \Bigl(
    \hat{\R}_t^{\thetab}(y^{(k)},x_t^{(k)})
    \\&-\R_t(x_t^{(k)},y^{(k)})\,
    \tfrac{q(y^{(k)}\cond x_0^{(k)})}{q(x_t^{(k)}\cond x_0^{(k)})}
    \log \hat{\R}_t^{\thetab}(y^{(k)},x_t^{(k)})
    \Bigr)
    \Biggr]
    \diff t,
    \label{eq:token-elbo}
\end{align}
which is the form implemented in practice (typically with shared per-token rates $\R^{(k)}_t\coloneqq \R_t$ for all $k$).}

We now specialise to (absorbing) masked diffusion with interpolation marginals.
$
q_t(x\!\cond\!x_0)=\alpha_t\,\delta_{x_0}(x)+(1-\alpha_t)\,\delta_{\text{[MASK]}}(x),
$
for which the only off-diagonal forward moves are token $\to$ [MASK].

The ELBO simplifies to a time-weighted masked-token cross-entropy, recovering the MLM loss used in practice \cite{ou_your_2024,devlin_bert_2018}.

\discrete[ELBO reduction to MLM loss.]{Under the (absorbing) masked diffusion with interpolation marginals
$
q_t(x|x_0)=\alpha_t\,\delta_{x_0}(x)+(1-\alpha_t)\,\delta_{\text{[MASK]}}(x),
$. The token level loss \cref{eq:token-elbo} reduces to weighted the MLM loss:
\begin{align}
    \mathcal{L}^\thetab_\text{MLM}(\x_0)
    =
    \int_0^1
    \E_{\x_t\sim q(\cdot\cond\x_0)}\frac{\alpha'_t}{1-\alpha_t}\sum_{k=1}^d
    -\delta_{\text{[MASK]}}(x_t^{(k)})\log p^\thetab(x_0 \cond x_t^{(k)} = \text{[MASK]})
    \,\diff t.
    \label{eq:mlm-ELBO}
\end{align}
}

\begin{proof}[Derivation sketch:]
We consider the token-level ELBO using posterior prediction
\begin{align}
    \mathcal{L}^\thetab_\text{DSE}(\x_0)
    =
    \int_0^1
    \E_{\x_t\sim q(\cdot\cond\x_0)}\sum_{k=1}^d
    \Bigl[\sum_{y\neq x^{(k)}_t}\underbrace{
    \bigl(\R_t(x^{(k)}_t,y)\sum_{\tilde x_0}\frac{q(y\cond\tilde x_0)}{q(x^{(k)}_t\cond\tilde x_0)}\,p^\thetab(\tilde x_0\cond x^{(k)}_t)}_{\text{(A)}}
    \\-\underbrace{\R_t(x^{(k)}_t,y)\tfrac{q(y\cond x_0)}{q(x^{(k)}_t\cond x_0)}\log p^\thetab(x_0 \cond x^{(k)}_t\bigr)}_{\text{(B)}}\Bigr]
    \,\diff t.
    \label{eq:token-posterior-ELBO}
\end{align}

Under the (absorbing) masked diffusion with interpolation marginals
$
q_t(x\!\cond\!x_0)=\alpha_t\,\delta_{x_0}(x)+(1-\alpha_t)\,\delta_{\text{[MASK]}}(x),
$
the only off-diagonal forward moves are token $\to$ \text{[MASK]}:
\begin{align*}
\R_t(\text{[MASK]},y)=\beta_t,\quad y\neq\text{[MASK]},\qquad
\R_t(x,y)=0\ \text{otherwise},\qquad
\beta_t\coloneqq -\tfrac{\diff}{\diff t}\log\alpha_t \ge 0.
\end{align*}

We are thus restricted to $x_t=\text{[MASK]} , y \neq \text{[MASK]}$. So we have that the sum in (A) is reduced to a $\thetab$-independent term:
\begin{align*}
\sum_{y\neq \text{[MASK]}}\!\R_t(\text{[MASK]},y)\!
\sum_{\tilde x_0 \ne \text{[MASK]}}\!
\tfrac{q(y\cond\tilde x_0)}{q(\text{[MASK]}\cond\tilde x_0)}\,p^\thetab(\tilde x_0\cond \text{[MASK]})
&= \beta_t\tfrac{\alpha_t}{1-\alpha_t}\!\sum_{y\neq \text{[MASK]}}\!p^\thetab(y\cond \text{[MASK]})\\
&= \beta_t\frac{\alpha_t}{1-\alpha_t}.
\end{align*}

For (B), using the marginal interpolating formula, we have
$\tfrac{q(y\cond x_0)}{q(\text{[MASK]}\cond x_0)}=\tfrac{\alpha_t}{1-\alpha_t}\,\delta_{x_0}(y)$, and:
\begin{align*}
\sum_{y\neq \text{[MASK]}}\R_t(\text{[MASK]},y)\tfrac{q(y\cond x_0)}{q(\text{[MASK]}\cond x_0)}
= \beta_t\,\frac{\alpha_t}{1-\alpha_t}=-\frac{\alpha'_t}{1-\alpha_t}.
\end{align*}

We thus recover the mask language modeling (MLM) objective:
\begin{align}
    \mathcal{L}^\thetab_\text{CE}(\x_0)
    =
    \int_0^1
    \E_{\x_t\sim q(\cdot\cond\x_0)}\frac{\alpha'_t}{1-\alpha_t}\sum_{k=1}^d
    -\delta_{\text{[MASK]}}(x_t^{(k)})\log p^\thetab(x_0 \cond x_t^{(k)} = \text{[MASK]})
    \,\diff t.
    \label{eq:mlm-loss-final}
\end{align}
\end{proof}

Having established the training objectives, we now have all components necessary to model and train a generative process that learns the data distribution $\qdata$, enabling the generation of new samples.

\section{Unified view under the infinitesimal generator} \label{sect:generator-perspective}

\begin{figure}[h!]
    \centering
    \includegraphics[width=1\linewidth]{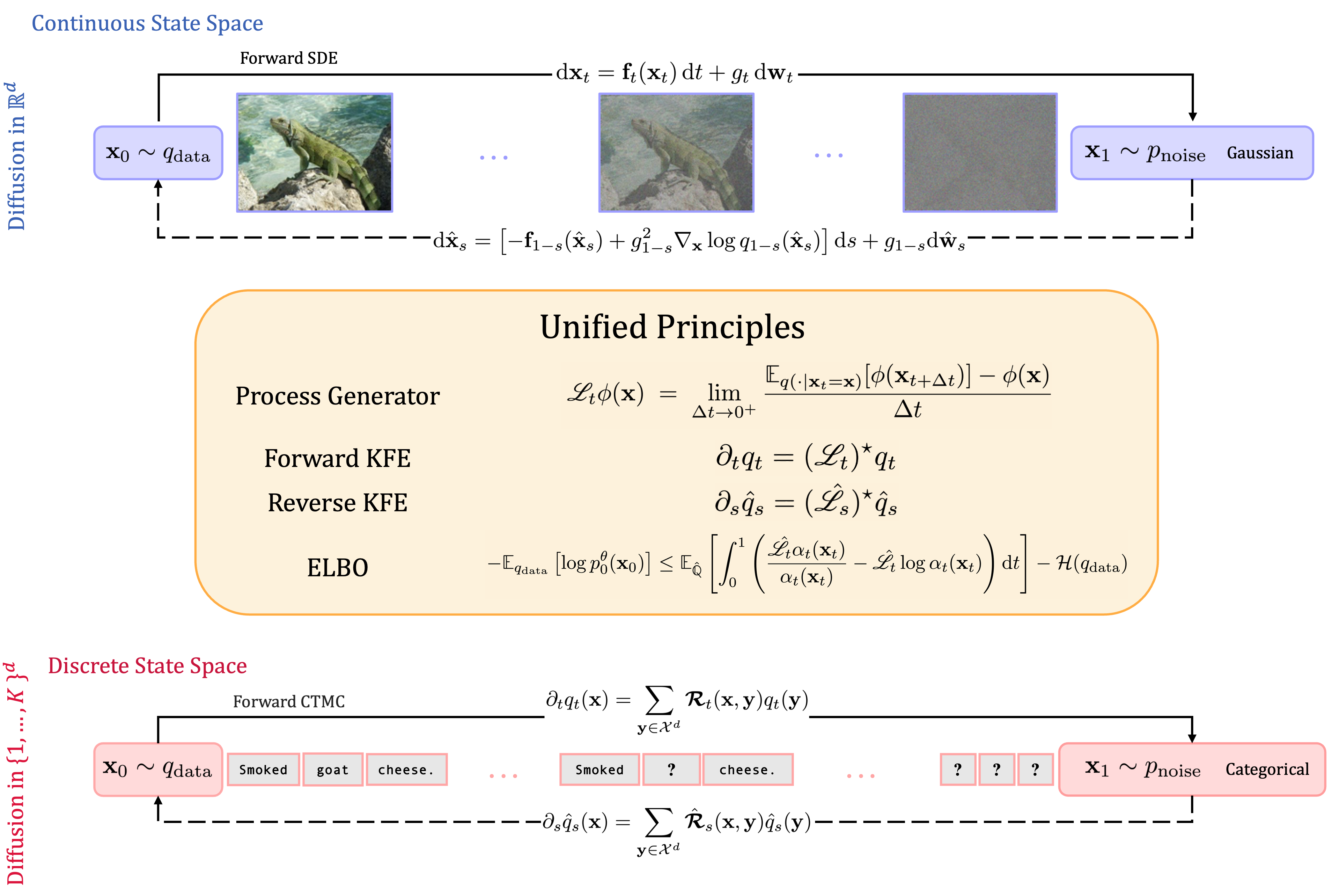}
    \caption{Unified perspective on diffusion models in continuous and discrete state spaces.
    \textcolor{figblue}{(Top)}~An image $\x_0 \sim q_{\text{data}}$ is corrupted via a forward SDE to Gaussian noise $\x_1 \sim p_{\text{noise}}$; a reverse SDE reconstructs the clean data. 
    \textcolor{figred}{(Bottom)}~A discrete sequence is corrupted via a forward CTMC through masking; a reverse CTMC recovers it. 
    \textcolor{general}{(Middle)}~Both processes are governed by infinitesimal generators $\Lgen_t$ and Kolmogorov forward equations, providing a unified theoretical framework.}
    \label{fig:main-fig}
\end{figure}

In this section we generalise the results from \cref{sect:continuous-diffusion,sect:ELBO} and show that it is also possible to define a generative process directly through a powerful tool called the Markov process \emph{infinitesimal generator}. We then reverse the process by identifying the generator of the time-reversed process. This operator-theoretic perspective not only unifies continuous and discrete diffusion under a single formalism but also offers an alternative, more direct route to the ELBO that clarifies the fundamental structure of time reversal in Markov processes. Moreover, this framework naturally extends to broader classes of stochastic processes beyond SDEs and CTMCs, such as jump-diffusions and piecewise-deterministic processes \cite{bertazzi2024piecewise,han2025distillkac}. Recent work exploring this perspective include \cite{holderrieth_generator_2024,benton_denoising_2024,ren2025unified,rojas2025diffuse}.

Consider a potentially time-inhomogeneous Markov process $\{\x_t\}_{t\in [0,1]}$ on a state space $\calX^d$.\footnote{A Markov process is called time-inhomogeneous if the transition probabilities $q_{t|s}$ depend on both the source time $s$ and target time $t$; it is time-homogeneous if $q_{t|s}=q_{t-s}$ for all $s,t$.} The \emph{(spatial) infinitesimal generator} $\Lgen_t$, is the fundamental operator characterising the (spatial) instantaneous local dynamics of the process.  It acts on a class of test functions $\phi$ belonging to its domain $\calD(\Lgen_t)$, which we assume includes all functions of interest.\footnote{In continuous spaces, $\calD(\Lgen_t)$ typically consists of twice-differentiable functions with compact support $C_0^2(\calX^d)$; in discrete spaces, $\calD(\Lgen_t) = \ell_\infty(\calX^d)$ suffices.} For a comprehensive analysis of these operators, we refer the reader to \citet{ruschendorf2016comparison,pavliotis_stochastic_2014,karlin_second_1981}.

\general[Infinitesimal generator]{
For a test function $\phi \in \calD(\Lgen_t)$, the \emph{(spatial) infinitesimal generator} $\Lgen_t$ is defined as:
\begin{equation}
\Lgen_t \phi(\x_t)
\coloneqq
\lim_{\Delta t \to 0^+}
\frac{\E_{q(\x_{t + \Delta t} \cond \x_t)}\!\left[\phi(\x_{t+\Delta t})\right] - \phi(\x_t)}{\Delta t}\ ,
\label{eq:gen-def}
\end{equation}
whenever the limit exists. Intuitively, $\Lgen_t\phi(\x_t)$ quantifies the instantaneous rate of change of the expected value of the test function $\phi$ when the process is at state $\x_t$ at time $t$.}

\begin{remark}[Extended generator for inhomogeneous processes]
For a time-inhomogeneous process $\{\x_t\}_{t\in [0,1]}$, we note that it can be turned into a time-homogeneous process by defining augmented states $\w_t \coloneqq (\x_t,t)$, effectively expanding its state space to $\calX^d \times [0,1]$. In this expanded formulation, the test function $\phi$ acts on $\calX^d \times [0,1]$, and we define  $\phi_t(\x) \coloneqq \phi(\x,t)$. In the general case of time-inhomogeneous processes and time-dependent test functions $\phi_t(\x)$, the entire evolution is captured by the \emph{extended generator} (see \cref{apx:time-inh-generator-decomposition} for a proof of this decomposition):
\begin{equation}
\Ggen_t \phi_t(\x) 
 =  
\partial_t \phi_t(\x) + \Lgen_t \phi_t(\x) \ .
\label{eq:extended-generator}
\end{equation}
In this work, we simplify the presentation by focusing solely on the spatial part ($\Lgen_t$) of the infinitesimal generator ($\Ggen_t$).
\end{remark}

This formalism provides a \emph{local} description of how expectations evolve under the Markov process. The generator framework is powerful because the same operator, $\Lgen_t$, governs both the evolution of expectations and, through its adjoint, the \emph{global} evolution of probability distributions. We now introduce these concepts and show this connection.

\subsection{Forward process}

The infinitesimal generator $\Lgen_t$ characterises local dynamics through expectations of test functions. To obtain a global picture of how the \emph{marginal distribution} $q_t$ evolves, we leverage the adjoint operator, \cref{eq:adjoint-definition}, and exploit the duality between forward and backward evolution.

Under standard regularity and smoothness assumptions, allowing for the exchange of limit and expectation, one can simply derive the evolution of the marginal expectation of a test function $\phi$ by differentiating and applying the law of total expectation:
\begin{align}
    \frac{\mathrm{d}}{\mathrm{d}t}\, \E_{q(\x_t)}[\phi(\x_t)] 
    &= \lim_{\Delta t \to 0^+} \frac{\E[\phi(\x_{t+\Delta t})] - \E[\phi(\x_t)]}{\Delta t} \\
    &= \lim_{\Delta t \to 0^+} \frac{1}{\Delta t}\, \E_{q(\x_t)}\Big[\E_{q(\x_{t+\Delta t} \cond \x_t)}\big[\phi(\x_{t+\Delta t})\big] - \phi(\x_t) \Big] \nonumber
    \\
    &= \E_{q(\x_t)}\bigg[ \underbrace{\lim_{\Delta t \to 0^+} \frac{\E_{q(\x_{t+\Delta t} \cond \x_t)}[\phi(\x_{t+\Delta t})] - \phi(\x_t)}{\Delta t}}_{\Lgen_t \phi(\x_t)}\bigg] \ .
    \label{eq:dynkin-step2}
\end{align}
Integrating \cref{eq:dynkin-step2} up to a time $\tau$, for any test function $\phi \in \cap_{s \leq \tau}\calD(\Lgen_s)$ yields a simplified \emph{Dynkin's formula}:
\begin{align}
    \E[\phi(\x_\tau)] - \E[\phi(\x_0)]
    = \int_0^\tau\E\left[\Lgen_t \phi(\x_t)\right]  \dt\ . 
    \label{eq:dynkin-formula-lose}
\end{align}
The general formulation is provided below.
\general[Dynkin's formula]{
For any time $\tau \in [0,1]$ and test function $\phi \in \cap_{s \leq \tau}\calD(\Lgen_s)$, \emph{Dynkin's formula} \cite{kallenberg1997foundations,dynkin1965markov} expresses the change in $\phi$ along a sample path as:
\begin{align}
\phi(\x_\tau) - \phi(\x_0) = \int_0^\tau(\partial_t+\Lgen_t )\phi(\x_t) \dt + M_\tau, \label{eq:general-Dynkin}
\end{align}
where $M_\tau$ is a martingale.\footnote{A martingale is a stochastic process whose expected future value, conditioned on all past information, equals its present value. Intuitively, it models a fair game with no systematic drift.} Taking expectations and using the fact that $\E[M_\tau] = 0$ yields: 
\begin{align}
    \E_{q_t(\x_\tau\cond\x_0)}[\phi(\x_\tau)] - \phi(\x_0)
    = \E_{q(\x_\tau \cond \x_0)}\left[\int_0^\tau\Lgen_t \phi(\x_t) \dt\right] 
    \label{eq:dynkin-formula}
\end{align}
Informally, \cref{eq:dynkin-formula} can be recovered from \cref{eq:dynkin-step2} starting from the conditional expectation w.r.t. $\x_0$ and exchanging the time and space integrals.
}

By the definition of the adjoint operator $(\Lgen_t)^*$ with respect to the inner product $\langle f, g \rangle = \int f(\x)\, g(\x)\, \diff\mu(\x)$, we have:
\begin{equation}
    \langle \Lgen_t \phi, q_t \rangle 
    =
    \langle \phi, (\Lgen_t)^* q_t \rangle\ .
    \label{eq:adjoint-definition}
\end{equation}
Substituting into \cref{eq:dynkin-step2}, for all $\phi \in \calD(\Lgen_t)$, we obtain two equivalent formulations known as the \emph{Kolmogorov backward equation} (KBE) and \emph{Kolmogorov forward equation} (KFE):
\begin{align}
    \frac{\diff}{\diff t} \langle \phi, q_t \rangle 
    &=
    \langle\Lgen_t  \phi, q_t \rangle
    \quad &\text{(KBE)},\\
    &= 
    \langle \phi, (\Lgen_t)^* q_t \rangle
    \quad &\text{(KFE)}.
    \label{eq:duality-argument}
\end{align}
Since this identity holds for all test functions $\phi$, by the fundamental lemma of calculus of variations we conclude that $q_t$ must satisfy the differential equation $\partial_t q_t = (\Lgen_t)^* q_t$. This is formalised below.

\general[General Kolmogorov forward equation]{Assuming sufficient regularity of $q_t$ in $\x$ and $t$, and given an initial distribution $\qdata$:
\begin{equation}\label{eq:KFE-eq}
\partial_t q_t  =  (\Lgen_t)^* q_t,
\qquad
q_0=\qdata.
\end{equation}
Moreover, while often used for the marginals, the \textit{KBE} and \emph{KFE} also extend to transition distributions. For any times $0 \leq s<t \leq 1$ and $\x,\y \in \calX^d$:
\begin{align}
\partial_t\, q(\x_t=\x \cond \x_s=\y) &= (\Lgen_t)^*\, q(\x_t=\x \cond \x_s=\y), \nonumber
\\& \qquad \qquad \text{(forward, acts on terminal variable $\x$).} \label{eq:KFE-eq-transition}\\[0.8em]
\partial_s\, q(\x_t=\x \cond \x_s=\y) &= -\, \Lgen_s\, q(\x_t=\x \cond \x_s=\y), \nonumber
\\& \qquad \qquad \text{(backward, acts on initial variable $\y$)}. \label{eq:backward-transition-KBE}
\end{align}
}

\begin{remark}[Backward Kolmogorov equation]
Despite its name, the backward equation \eqref{eq:backward-transition-KBE} does \textbf{not} reverse the stochastic process, it merely describes how transition probabilities evolve when we differentiate with respect to the initial time $s$ rather than the terminal time $t$. Both equations still characterise the same forward-time process. The actual reverse process, which generates samples by running backward in time from noise to data, requires a different construction discussed in \cref{sec:reverse-generator}.
\end{remark}

\paragraph{From forward diffusion to reverse generation.} Having established the forward process through the KFE \eqref{eq:KFE-eq}, which describes how an initial data distribution $\qdata$ evolves toward a noise distribution under the adjoint generator $\Lgen_t^\star$, we now address the central challenge of generative modeling: \emph{reversing} this process. Given a sample from the terminal noise distribution $q_1$, how can we generate samples from $\qdata$ by running the process backward in time?

The key insight is that time reversal can be characterised entirely through the generator formalism. Under appropriate regularity conditions, the reverse-time dynamics are themselves Markovian and can be expressed through a \emph{reverse generator} $\hat{\Lgen}_t$. This reverse generator depends on both the forward generator $\Lgen_t$ and the forward marginals $\{q_t\}_{t \in [0,1]}$, providing a unified framework for backward generation across continuous and discrete state spaces.

\subsection{Reverse Process}\label{sec:reverse-generator}
The natural starting point for time reversal is to define the \emph{reversed process} $\hat{\x}_s \coloneqq \x_{1-s}$ for $s \in [0,1]$, which runs from noise at $s=0$ to data at $s=1$. Its marginal distributions are denoted $\hat{q}_s \coloneqq q_{1-s}$. By the chain rule and the forward KFE \eqref{eq:KFE-eq}, these reversed marginals satisfy:
\begin{align}
    \partial_s \hat{q}_s
    = -(\Lgen_{1-s})^{*}\,\hat{q}_s \ .
    \label{eq:time-reversal-marginal}
\end{align}
However, this cannot directly be used to \emph{simulate} the reverse process.
To see why, observe that the forward generator $\Lgen_{1-s}$, when expressed in terms of $\hat{\x}_s$, describes transitions to the \emph{previous} reversed time:
\begin{equation}
    \Lgen_{1-s} \phi(\hat{\x}_s) = -\lim_{\Delta s\to0^+}\frac{\E\left[\phi\big(\hat{\x}_{s-\Delta s}\big) \,\big|\, \hat{\x}_s\right]-\phi(\hat{\x}_s)}{\Delta s}\ .
\end{equation}
Thus, this operator computes the expected value at the previous reversed time $s - \Delta s$ given the current state at time $s$.

To obtain a tractable simulation scheme, we need to compute transitions in the \emph{reverse} direction: given the current state $\hat{\x}_s$, we must predict the state at the \emph{next} reversed time $s + \Delta s$. For this, we seek a \emph{reversed generator} $\hat{\Lgen}_s$ such that the reversed process $\{\hat{\x}_s\}_{s\in[0,1]}$ satisfies the standard generator definition with forward-in-$s$ conditioning:
\begin{equation}\label{eq:reversed-generator-def}
\hat{\Lgen}_s \phi(\x)  =  \lim_{\Delta s\to0^+}\frac{\E\left[\phi\big(\hat{\x}_{s+\Delta s}\big) \,\big|\, \hat{\x}_s=\x\right]-\phi(\x)}{\Delta s}\ .
\end{equation}
The following result provides an explicit formula for $\hat{\Lgen}_s$ in terms of the forward generator $\Lgen_t$ and the marginal density $q_t$.

\general[Reversed generator]{\label{prop:reversed-generator}
The reversed process $\{\hat{\x}_s\}_{s\in[0,1]}$ with $\hat{\x}_s = \x_{1-s}$ has infinitesimal generator \cite{palmowski2002technique}:
\begin{equation}\label{eq:reversed-generator}
\hat{\Lgen}_s \phi(\x)
= \frac{1}{q_{1-s}(\x)}\Big((\Lgen_{1-s})^\star\left[q_{1-s}\phi\right](\x) - \phi(\x)\,(\Lgen_{1-s})^\star \left[q_{1-s}\right](\x) \Big)\ .
\end{equation}
}

We can verify through the KFE that this reversed generator yields the correct evolution of the marginals of the reverse process. While this offers a useful consistency check, it is weaker than establishing that this operator is in fact the generator of the reverse process itself (as opposed to merely generating a process with matching marginals). That derivation is provided in \cref{apx:reverse-generator}.

\begin{proof}[Derivation sketch:]
The adjoint of $\hat{\Lgen}_s$ satisfies for all test function $\phi \in \mathcal{D}(\Lgen_t)$:
\begin{align}
    \int [(\hat{\Lgen}_s)^* \hat{q}_s](\x)\,\phi(\x)\,\diff\mu(\x)
    &= \int \hat q_s(\x)\,[\hat{\Lgen}_s\phi](\x)\,\diff\mu(\x) \nonumber \\
    &= \int \frac{\hat q_s(\x)}{q_{1-s}(\x)}\,(\Lgen_{1-s})^\star\left[q_{1-s}\phi\right](\x)\,\diff\mu(\x)\\
     & \qquad  - \int \frac{\hat q_s(\x)}{q_{1-s}(\x)}\,\phi(\x)\,[(\Lgen_{1-s})^\star q_{1-s}](\x)\,\diff\mu(\x) \nonumber \\
    &= \underbrace{\int (\Lgen_{1-s})^\star\left[q_{1-s}\phi\right](\x)\,\diff\mu(\x)}_{\mathcal I_1}
     - \underbrace{\int \phi(\x)\,(\Lgen_{1-s})^\star q_{1-s}(\x)\,\diff\mu(\x)}_{\mathcal I_2}\ .
    \label{eq:v1}
\end{align}
For the first term $\mathcal I_1$, we use adjointness with the constant function $\mathbf 1$ and notice that the generator applied to a constant test function is null:
\begin{align}
    \mathcal I_1
    = \int (\Lgen_{1-s})^\star\left[q_{1-s}\phi\right]\,\diff\mu
    = \int (q_{1-s}\phi)\,(\Lgen_{1-s}\mathbf 1)\,\diff\mu = 0\ .
\end{align}
For the second term,
\begin{align}
\mathcal I_2 = \int \phi\,[(\Lgen_{1-s})^\star q_{1-s}]\,\diff\mu.
\end{align}
Combining the two, \cref{eq:v1} becomes
\begin{align}
    \int [(\hat{\Lgen}_s)^* \hat{q}_s]\,\phi\,\diff\mu
    = - \int \phi\,[(\Lgen_{1-s})^\star q_{1-s}]\,\diff\mu
    = - \int \phi\,[(\Lgen_{1-s})^\star \hat q_s]\,\diff\mu\ ,
\end{align}
and therefore $(\hat{\Lgen}_s)^* \hat{q}_s = -(\Lgen_{1-s})^\star \hat{q}_s$, which is exactly \cref{eq:time-reversal-marginal}. 
\end{proof}

\subsection{Evidence Lower Bound (ELBO)\label{sect:generator-ELBO}}

Having established the reverse-generator formalism, we now introduce a tractable, parameterised reverse process intended to approximate the true (but intractable) time-reversal dynamics, and which can subsequently be used for data generation. As in \cref{sect:ELBO}, our derivation proceeds by constructing a variational lower bound on the log-likelihood---the evidence lower bound (ELBO)---which serves as the training objective for diffusion models. In contrast to \cref{sect:ELBO}, however, the present treatment is formulated directly at the level of generators, thereby providing a unified perspective on the ELBO that applies uniformly to both continuous and discrete state spaces.

Consider a Markov process $\{\x_t\}_{t\in [0,1]}$ on a state space $\calX^d$ with generator $\Lgen_t$ and path measure $\Qpath$. We denote by $\{\hat\x_s\}_{s\in [0,1]}$ the time-reversal of $\{\x_t\}_{t\in [0,1]}$ with generator $\hat\Lgen_s$ and path measure $\hat \Qpath$ such that its marginals satisfy $\hat q_{s}=q_{1-s}$ for all $s\in[0,1]$.

From the expression of the true reverse generator in \cref{eq:reversed-generator}, we see that it depends on the intractable marginals $q_t$. To obtain tractable reverse dynamics, we introduce approximate marginals $\hat{p}_s^\thetab \approx \hat{q}_s = q_{1-s}$ and define the \emph{approximate reverse generator} by directly substituting $\hat{p}_s^\thetab$ for $q_{1-s}$ in the closed-form expression of the true reverse generator:
\begin{align}
    \hat{\Lgen}^\thetab_s \phi(\x)
    \coloneqq \frac{1}{\hat{p}^\thetab_s(\x)}\Big((\Lgen_{1-s})^\star\left[\hat{p}_s^\thetab\phi\right](\x) - \phi(\x)\,(\Lgen_{1-s})^\star \hat{p}_s^\thetab(\x) \Big) \ ,\qquad \forall \x \in \calX^d\ .
\end{align}
The generator $\hat{\Lgen}^\thetab_s$ induces an \emph{approximate reverse process} $\{\hat{\x}^\thetab_s\}_{t\in[0,1]}$ with corresponding path measure $\hat{\Ppath}^\thetab$. When the approximation is exact, i.e., $\hat{p}_s^\thetab = \hat{q}_s$ for all $s$, we recover the true reverse generator $\hat{\Lgen}_s$ and the true reverse path measure $\hat{\Qpath}$.

Our goal is to quantify the discrepancy between the approximate reverse process (with path measure $\hat{\Ppath}^\thetab$) and the true one (with path measure $\hat{\Qpath}$). We introduce the strictly positive ratio 
$\alpha_s \coloneqq \frac{\hat{p}^\thetab_s}{\hat q_s}$. A key question is whether we can assess this discrepancy at the generator level. We establish the relationship between the approximate generator $\hat{\Lgen}^\thetab_s$ and the true reverse generator $\hat{\Lgen}_s$ using $\alpha_s$.

For any test function $\phi\in\mathcal{D}(\hat \Lgen_s)$ and all $\x \in \calX^d$, we can express the approximate generator as:
\begin{align}
    \hat{\Lgen}^\thetab_s \phi(\x)
= \frac{1}{\alpha_s(\x)\hat q_{s}(\x)}\Big(\underbrace{(\Lgen_{1-s})^\star\left[\alpha_s\hat q_{s}\phi\right](\x)}_{(A)} - \underbrace{\phi(\x)\,(\Lgen_{1-s})^\star\left[ \alpha_s \hat q_{s}\right](\x) }_{(B)}\Big) \ .
\end{align}
To relate $(A)$ and $(B)$ to the true reverse generator, we apply the definition of $\hat \Lgen_s$ from \cref{eq:reversed-generator}. Under smoothness assumptions ensuring that $\alpha_s \phi$ and $\alpha_s$ belong to $\mathcal{D}(\hat\Lgen_s)$ for all $s$, we obtain:
\begin{align}
    \hat \Lgen_s[\alpha_s \phi](\x) &= \frac{1}{\hat q_s(\x)} \Bigl( (\Lgen_{1-s})^\star\left[\hat q_{s}\alpha_s\phi\right](\x)- \alpha_s(\x) \phi(\x) (\Lgen_{1-s})^\star \hat q_s(\x) \Bigr)\ ,\\
    \phi(\x)\hat \Lgen_s\alpha_s(\x) &= \frac{1}{\hat q_s(\x)} \Bigl( \phi(\x)(\Lgen_{1-s})^\star\left[\hat q_{s}\alpha_s\right](\x)-  \phi(\x) \alpha_s(\x) (\Lgen_{1-s})^\star \hat q_s(\x) \Bigr)\ .
\end{align}
Rearranging these equations to express $(A)$ and $(B)$:
\begin{align}
    (A) &= \hat q_s(\x) \hat \Lgen_s\left[\alpha_s \phi\right](\x) + \alpha_s(\x) \phi(\x) (\Lgen_{1-s})^\star \hat q_t(\x)\ , \\
    (B) &= \hat q_s(\x)\phi(\x)\hat \Lgen_s\alpha_t(\x) + \phi(\x) \alpha_s(\x) (\Lgen_{1-s})^\star\hat q_s(\x)\ .
\end{align}
Subtracting $(B)$ from $(A)$, yields:
\begin{align}
    \hat{\Lgen}^\thetab_s \phi(\x) = \frac{1}{\alpha_s(\x)} \Bigl( \hat \Lgen_s\left[\alpha_s \phi\right](\x) - \phi(\x)\hat \Lgen_s\alpha_s(\x) \Bigr).
\label{eq:parametrized-generator-relation}
\end{align}

This key identity expresses the approximate generator $\hat{\Lgen}^\thetab_s$ in terms of the true reverse generator $\hat{\Lgen}_s$ via the density ratio $\alpha_t$. We now leverage this connection via the Girsanov theorem to derive the ELBO. The following result is adapted from \citet[Theorem 4.2]{palmowski2002technique}.

\general[General Girsanov formula via the generator]{
Let $\{\x_s\}_{t\in[0,1]}$ be a Markov process with infinitesimal spatial generator $\Lgen_s$ and marginal densities $q_t$ under measure $\Qpath$, and denote by $\x_{[0,1]} = (\x_s)_{s\in[0,1]}$ a sample path up to time $1$. Suppose $\alpha_s>0$ is a strictly positive function satisfying regularity conditions ensuring that the exponential martingale
\begin{equation}
M_s^\alpha \coloneqq \frac{\alpha_s(\x_s)}{\alpha_0(\x_0)} \exp\left\{-\int_0^s \frac{(\partial_\tau + \Lgen_\tau) \alpha_\tau(\x_\tau)}{\alpha_\tau(\x_\tau)} \diff \tau\right\},
\end{equation}
is a true martingale.\footnote{Sufficient conditions (Proposition 3.2 of 
\citet{palmowski2002technique}) include: (i) both $\alpha_s$ and 
$\frac{(\Lgen_s\alpha_s)}{\alpha_s}$ are bounded; or (ii) both $\alpha_s$ and 
$\Lgen_s\alpha_s$ are bounded with $\inf_\x \alpha_s(\x) > 0$.} Then there exists a probability measure $\Ppath$ absolutely continuous with respect to $\Qpath$ with Radon-Nikodym derivative:
\begin{equation}
\frac{\diff \Ppath}{\diff \Qpath}(\x_{[0,1]}) = M_1^\alpha = \frac{\alpha_1(\x_1)}{\alpha_0(\x_0)} \exp\left\{-\int_0^1 \frac{(\partial_s + \Lgen_s) \alpha_s(\x_s)}{\alpha_s(\x_s)} \diff s\right\}.
\label{eq:general-girsanov}
\end{equation}
Under $\Ppath$, the process $\{\x_s\}_{s\in[0,1]}$ is Markovian with the transformed spatial generator, for all $\phi \in \mathcal{D}(\Lgen_s)$:
\begin{equation}
\hat{\Lgen}_s \phi(\x) = \frac{1}{\alpha_s(\x)}\Big[\Lgen_s[\alpha_s \phi](\x) - \phi(\x)\,\Lgen_s \alpha_s(\x)\Big],
\label{eq:transformed-generator}
\end{equation}
where $\mathcal{D}(\hat{\Lgen}_s) = \mathcal{D}(\Lgen_{s})$.
}

We now connect this result to the ELBO computation. As discussed in \cref{sect:ELBO}, the goal in diffusion generative modeling is to maximise the likelihood of observed data under the learned model. We will show that the negative log-likelihood can be upper-bounded by a KL divergence between path measures.

\paragraph{From marginal to path-space KL divergence.} 
We begin by expressing the negative log-likelihood in terms of the KL divergence of the data distribution $\qdata$ from the model-induced sample distribution at time 0, denoted $p_0^\thetab$:
\begin{align}
    - \E_{\qdata}\left[\log p_0^\thetab(\x_0)\right] 
    = \KL(\qdata\|p_0^\thetab) + \underbrace{\E_{\qdata}\left[-\log\qdata(\x_0)\right],}_{\mathcal{H}(\qdata)}
\end{align}
where $\mathcal{H}(\qdata)$ is the entropy of the data distribution and is constant w.r.t. $\thetab$.
By the data processing inequality (detailed in \cref{apx:data_processing_inequality}), the marginal KL is bounded by the path-space KL:
\begin{align}
    \KL(\qdata\|p_0^\thetab) \leq \KL(\hat \Qpath \| \hat \Ppath^\thetab) = \E_{\hat \Qpath}\left[\log \frac{\diff \hat \Qpath}{\diff \hat \Ppath^\thetab}\right] \ ,
\end{align}
where $\hat\Qpath$ and $\hat\Ppath^\thetab$ denote the path measures of the true and approximated reverse processes, respectively.
This yields the following upper bound on the negative log-likelihood:
\begin{align}
     - \E_{\qdata}\left[\log p_0^\thetab(\x_0)\right] 
     &\leq \E_{\hat \Qpath}\left[\log \frac{\diff \hat \Qpath}{\diff \hat \Ppath^\thetab}\right] + \mathcal{H}(\qdata)\ .
\end{align}

\paragraph{Applying the Girsanov formula.}
Recall that $\alpha_s= \frac{\hat p_0^\thetab}{\hat q_s}$ relates the approximated and true reverse marginals. Assuming sufficient regularity conditions such that the Girsanov formula \eqref{eq:general-girsanov} applies with $\Qpath = \hat\Qpath$, $\Ppath = \hat\Ppath^\thetab$, and $\Lgen_s = \hat\Lgen_s$, we obtain:
\begin{align}
\frac{\diff \hat \Ppath^\thetab}{\diff \hat \Qpath}(\x_{[0,1]}) 
= \frac{\alpha_1(\hat \x_1)}{\alpha_0(\hat \x_0)} \exp\left\{-\int_0^1 \frac{(\partial_s + \hat\Lgen_s) \alpha_s(\hat \x_s)}{\alpha_s(\hat \x_s)} \diff s\right\}.
\end{align}

Taking the negative logarithm and expectation under $\hat\Qpath$ gives the negative ELBO.

\general[ELBO via the generator]{
The negative log-likelihood is upper-bounded by the following negative ELBO expressed in terms of the infinitesimal generator:
\begin{align}
    - \E_{\qdata}\left[\log p_0^\thetab(\x_0)\right] 
    &\leq \E_{\hat \Qpath}\left[\int_0^1 \frac{(\partial_s+\hat\Lgen_s) \alpha_s(\hat \x_s)}{\alpha_s(\hat \x_s)} \diff s 
    - \log\left(\frac{\alpha_1(\hat \x_1)}{\alpha_0(\hat \x_0)}\right) \right] + \mathcal{H}(\qdata)\ .
    \label{eq:ELBO-generator}
\end{align}

Applying Dynkin's formula \cref{eq:general-Dynkin} to absorb the time derivative into the generator expectation, this can be rewritten as:
\begin{align}
    - \E_{\qdata}\left[\log p_0^\thetab(\x_0)\right] 
    &\leq \E_{\hat \Qpath}\left[\int_0^1 \left(\frac{\hat\Lgen_s \alpha_s(\hat \x_s)}{\alpha_s(\hat \x_s)}  - \hat \Lgen_t\log\alpha_s(\hat \x_s)\right)\diff s \right] + \mathcal{H}(\qdata)\ .
    \label{eq:ELBO-generator-Dynkin}
\end{align}
}

\begin{proof}[Derivation of equivalence \eqref{eq:ELBO-generator-Dynkin}]
Using Dynkin's formula \eqref{eq:general-Dynkin} with the test function $\log\alpha_t(\x_t)$ and taking expectations on both sides, we obtain:
\begin{align}
    \E_{\hat \Qpath} \left[\log\alpha_1(\hat\x_1) - \log\alpha_0(\hat\x_0) \right]
    = \E_{\hat \Qpath} \left[\int_0^{1}(\partial_s+\hat \Lgen_s)\log\alpha_s(\hat\x_s)\,\diff s \right]\, .
\end{align}
Substituting this into \eqref{eq:ELBO-generator}, the time-derivative terms cancel since $\partial_s \log \alpha_s = \frac{\partial_s \alpha_s}{\alpha_s}$. The remaining terms form the generator difference in \eqref{eq:ELBO-generator-Dynkin}:
\begin{equation}
    \frac{(\partial_s + \hat{\Lgen}_s)\alpha_s}{\alpha_s} - (\partial_s + \hat{\Lgen}_s)\log\alpha_s 
    = \frac{\hat{\Lgen}_s\alpha_s}{\alpha_s} - \hat{\Lgen}_s\log\alpha_s.
\end{equation}
This quantity is non-negative (a consequence of Jensen's inequality) and represents the local discrepancy between the true and approximated reverse processes.
\end{proof}
Having derived the general reverse generator applicable to any Markov process, we now specialise to the canonical cases of continuous and discrete state spaces, recovering the familiar forward/reverse SDE and CTMC formulas.

\subsection{Connection to standard diffusion processes}

\subsubsection{Forward and reverse process}

The abstract generator framework developed in \cref{sec:reverse-generator} applies to arbitrary Markov processes. We now demonstrate how it \emph{specialises} to the two fundamental cases: stochastic differential equations (SDEs) in continuous state spaces, and continuous-time Markov chains (CTMCs) in discrete state spaces. This connection reveals that the familiar continuous and discrete diffusion formulas are not independent derivations, but rather concrete instantiations of the unified reverse generator from \cref{eq:reversed-generator}. While we focus on these canonical examples, the framework extends naturally to more general processes such as jump-diffusions and piecewise-deterministic Markov processes (see \citet{palmowski2002technique,benton_denoising_2024,ren2025unified} for applications to these settings). The detailed derivations are provided in \cref{apx:derivation-infinitesimal-generator}.

\continuous[The forward \& reverse generator]{
Let $\{\x_t\}_{t\in[0,1]}$ be a diffusion process taking values in $\calX^d \subseteq \RR^d$ and governed by the SDE $\dx_t = \f_t(\x_t) \,\dt + g_t \, \dw_t$ with state-independent diffusion coefficient $g_t$ (see \cref{eq:sde-continuous-diff}). Then the adjoint generator acting on a test function $\phi$ is given by:
\begin{align}
    (\Lgen_t)^*\phi(\x) 
    = - \nabla_\x  \cdot  \bigl( \f_t(\x)\,\phi(\x) \bigr) + \Delta_\x \left( \tfrac{1}{2} g_t^2\, \phi(\x) \right) \;, \qquad \x \in \calX^d \; .
\end{align}
Therefore, the Kolmogorov forward (Fokker–Planck) equation for the marginals is
\begin{align}
    \partial_t q_t(\x) 
    = - \underbrace{\nabla_\x \!\cdot \bigl(\f_t(\x)\, q_t(\x)\bigr)}_{\text{drift term}} + \underbrace{\tfrac{1}{2}\,g_t^2\,\Delta_\x q_t(\x) }_{\text{diffusion term}} \ .
    \label{eq:generator-KFE-continuous}
\end{align}
which corresponds to \cref{eq:KFE-continuous-sp}. Applying the reverse generator formula \eqref{eq:reversed-generator} yields:
\begin{align}
    \hat{\Lgen}_s \phi(\x)
    = \tfrac{1}{2}\,g_{1-s}^2\,\Delta_\x \phi(\x)
   + \Big[-\f_{1-s}(\x) + g_{1-s}^2\,\nabla_\x \log q_{1-s}(\x)\Big] \cdot \nabla_\x \phi(\x) \ .
    \label{eq:reversed-generator-closed-form}
\end{align}
This expression exactly recovers the reverse Fokker--Planck equation \cref{eq:reverse-Fokker-Planck}.
}

\begin{proof}[Derivation sketch:]
Starting from \cref{eq:reversed-generator} with $g_{1-s}$ independent of $\x$:
\begin{align}
\hat{\Lgen}_s \phi(\x) 
&= \underbrace{q^{-1}_{1-s}(\x)\,(\Lgen_{1-s})^\star\!\big(q_{1-s}\phi\big)(\x)}_{\mathcal I_1}
   \;\underbrace{-\, q^{-1}_{1-s}(\x)\,\phi(\x)\,(\Lgen_{1-s})^\star q_{1-s}(\x)}_{\mathcal I_2}.
\end{align}
We study both terms separately:
\begin{align}
\mathcal I_2
&= -\, q^{-1}_{1-s}(\x)\,\phi(\x)\,
    \Big[\Delta_\x\!\Big(\tfrac{1}{2} g_{1-s}^2\, q_{1-s}(\x)\Big)
          - \nabla_\x  \cdot  \bigl( \f_{1-s}(\x)\,q_{1-s}(\x) \bigr)\Big] \nonumber\\
&=  - \tfrac{1}{2} g_{1-s}^2 \frac{\Delta_\x q_{1-s}}{q_{1-s}}\,\phi
  + (\nabla_\x  \cdot  \f_{1-s})\,\phi
  + \f_{1-s} \cdot \frac{\nabla_\x q_{1-s}}{q_{1-s}}\,\phi \ ,
\\[4pt]
\mathcal I_1
&= \; q_{1-s}^{-1}(\x)\,
     \Big[\Delta_\x \!\Big( \tfrac{1}{2} g_{1-s}^2\, q_{1-s}(\x)\phi(\x) \Big)
           - \nabla_\x  \cdot  \bigl( \f_{1-s}(\x)\,q_{1-s}(\x)\phi(\x) \bigr)\Big] \nonumber \\
&= \tfrac{1}{2} g_{1-s}^2 \Delta_\x\phi
   + g_{1-s}^2 \frac{\nabla_\x q_{1-s}}{q_{1-s}}  \cdot  \nabla_\x\phi
   + \tfrac{1}{2} g_{1-s}^2 \frac{\Delta_\x q_{1-s}}{q_{1-s}}\,\phi \nonumber\\
   &\quad - \f_{1-s} \cdot \nabla_\x\phi
   - (\nabla_\x  \cdot  \f_{1-s})\,\phi
   - \f_{1-s} \cdot \frac{\nabla_\x q_{1-s}}{q_{1-s}}\,\phi \ ,
\end{align}
using the product rules
$\Delta_\x(uv)=u\,\Delta_\x v+2\nabla_\x u\cdot\nabla_\x v+v\,\Delta_\x u$ and
$\nabla_\x\!\cdot(u\mathbf{v})=\nabla_\x u\cdot\mathbf{v}+u\,\nabla_\x\!\cdot\mathbf{v}$\,.
All zeroth-order terms (in $\phi$) cancel in $\mathcal I_1+\mathcal I_2$, leaving
\begin{align*}
\hat{\Lgen}_s \phi
= \tfrac{1}{2}\,g_{1-s}^2\,\Delta_\x \phi
  + \big(-\f_{1-s} + g_{1-s}^2\,\nabla_\x \log q_{1-s}\big) \cdot \nabla_\x \phi \ .
\end{align*}
\end{proof}

\discrete[The forward \& reverse generator]{
Let $\{\x_t\}_{t \in [0,1]}$ be a continuous-time Markov chain (CTMC) with state space $\calX^d$ and time-dependent rate matrix $\Rseq_t$, defined under the convention that $\Rseq_t(\x, \y)$ denotes the transition rate from $\y$ to $\x$. The adjoint generator acting on a test function $\phi$ is given by:\footnote{In some literature, the adjoint generator equals the transpose of the rate matrix. The difference stems from our indexing convention introduced in \cref{eq:def-transition-matrix}.}
\begin{align}
    (\Lgen_t)^* \phi(\x) 
    = \sum_{\y \in \calX} \Rseq_t(\x,\y)\,\phi(\y).
\end{align}
The Kolmogorov forward (master) equation for the marginals is
\begin{align}
    \partial_t q_t(\x) 
    = \sum_{\y \in \calX} \Rseq_t(\x,\y)\, q_t(\y)\ ,
\end{align}
which recovers \cref{eq:KFE-discrete-sp}. Applying the reverse generator formula \eqref{eq:reversed-generator} yields:
\begin{align}
    \hat{\Lgen}_s\phi(\x) = \sum_{\y}\Rseq_{1-s}(\x,\y) \frac{q_{1-s}(\y)}{q_{1-s}(\x)} \big[\phi(\y)-\phi(\x)\big]\ .
    \label{eq:reversed-generator-discrete-sp}
\end{align}
}

\begin{proof}[Derivation sketch:]
Working with counting measure so that $(\Lgen_{1-s}^\star h)(\x)=\sum_{\y\in\calX^d}\Rseq_{1-s}(\x,\y)\,h(\y)$:
\begin{align}
\hat{\Lgen}_s\phi(\x) 
&= \frac{1}{q_{1-s}(\x)}\Bigg[\sum_{\y}\Rseq_{1-s}(\x,\y) q_{1-s}(\y) \phi(\y) - \phi(\x)\sum_{\y}\Rseq_{1-s}(\x,\y) q_{1-s}(\y)\Bigg] \nonumber\\
&= \sum_{\y}\frac{q_{1-s}(\y)}{q_{1-s}(\x)} \Rseq_{1-s}(\x,\y) \big[\phi(\y)-\phi(\x)\big].
\end{align}
Hence the off-diagonal rates of the reversed generator are
$\hat{\Rseq}_s(\x,\y)=\dfrac{q_{1-s}(\y)}{q_{1-s}(\x)}\,\Rseq_{1-s}(\x,\y)$ for $\y\neq\x$,
and the diagonal is determined by the column-sum-to-zero property:
$\hat{\Rseq}_s(\x,\x)=-\sum_{\y\neq \x}\hat{\Rseq}_s(\y,\x)$.
\end{proof}

\begin{remark}[Generator factorisation under independent noising] Assume the forward noising is \emph{independent across coordinates}. 
Then, for $\x_t = (x_t^{(1)},\dots,x^{(d)}_t) \in \calX^d$, 
the sequence-level spatial generator of the markov process \emph{decomposes additively}:
\begin{equation}
\Lgen_t =  \sum_{k=1}^d \Lgen_t^{(k)}, 
\label{eq:generator-dimensional-sum}
\end{equation}
where each $\Lgen_t^{(k)}$ acts only on the $k$-th coordinate $x_t^{(k)}$ while leaving other coordinates unchanged.
That is, for any function $\phi$,
\begin{equation}
(\Lgen_t^{(k)} \phi)(\x)  =  (\Lgen_t^{(k)} \phi)(x^{(1)}_t,\dots,x^{(d)}_t)
\end{equation}
depends on the action of the generator only through coordinate $x^{(k)}_t$.
Consequently, the \emph{marginal density of each coordinate}, 
$q_t(x^{(k)}) \coloneqq \int_{\calX^{d-1}} q_t(\x) \, \diff\mu(\x^{\setminus k})$,
evolves independently according to a \emph{single-dimension} KFE:
\begin{equation}
\partial_t q_t(x^{(k)})  =  (\Lgen_t^{(k)})^* q_t(x^{(k)}),
\label{eq:generator-factorisation-dimension}
\end{equation}
where $\x^{\setminus k}$ denotes all coordinates except the $k$-th (proof in \cref{apx:generator-decomposition-dimension}). This recovers the single-dimension KFE presented in \cref{eq:kfe-1d-discrete}.
\end{remark}

\subsubsection{ELBO}

Having established the general ELBO via the generator framework in \cref{eq:ELBO-generator-Dynkin}, we now specialise this result to continuous and discrete state spaces, recovering the familiar score-matching and rate-matrix objectives as concrete instances of the unified formula.
We start from the ELBO expression in \cref{eq:ELBO-generator-Dynkin}, reproduced here for convenience:
\begin{align}
    - \E_{\qdata}\left[\log p_0^\thetab(\x_0)\right] 
    &\leq \E_{\hat \Qpath}\Bigg[\int_0^1 \bigg(\underbrace{\frac{\hat\Lgen_s \alpha_s(\x_s)}{\alpha_s(\x_s)}}_{(A)}  - \underbrace{\hat \Lgen_s\log\alpha_s(\x_s)}_{(B)}\bigg)\diff s \Bigg] + \mathcal{H}(\qdata) \ .
    \label{eq:csp-elbo-generator-terms}
\end{align}
Recall that we denote the ratio between the approximated and true reverse marginals by $\alpha_s \coloneqq \frac{\hat p^\thetab_s}{\hat q_s}$.
To evaluate the ELBO, we must explicitly compute the two generator terms $(A)$ and $(B)$, both of which involve the reverse-time generator $\hat{\Lgen}_s$ given in \cref{eq:reversed-generator-closed-form}.
In what follows, we compute $(A)$ and $(B)$ under the continuous-time diffusion dynamics. An analogous derivation for discrete-state Markov chains will be presented afterward.

\paragraph{Continuous state space.}
\emph{Computing term (A):}
Using the closed-form reverse generator \eqref{eq:reversed-generator-closed-form}, we obtain:
\begin{align}
    (A) 
    &= \frac{1}{\alpha_s(\hat\x_s)}\left[ \tfrac{1}{2}\,g_{1-s}^2\,\Delta_\x \alpha_s(\hat\x_s)
   + \Big[-\,\f_{1-s}(\hat\x_s) + g_{1-s}^2\,\nabla_\x \log \hat q_s(\hat\x_s)\Big] \cdot \nabla_\x \alpha_s(\hat\x_s) \right] \notag\\
   &= \tfrac{1}{2}\,g_{1-s}^2\,\frac{\Delta_\x \alpha_t(\hat\x_s)}{\alpha_s(\hat\x_s)}
   + \Big[-\,\f_{1-s}(\hat\x_s) + g_{1-s}^2\,\nabla_\x \log \hat q_s(\hat\x_s)\Big] \cdot \nabla_\x \log\alpha_t(\hat\x_s) \ .
    \label{eq:term-A-expansion}
\end{align}
To simplify the Laplacian quotient, we introduce $\gamma_s(\hat\x_s) \coloneqq \log \alpha_s(\hat\x_s)$, so that $\alpha_s(\hat\x_s) = e^{\gamma_s(\hat\x_s)}$. By the chain rule:
\begin{align}
    \Delta_\x \alpha_s
    &= \nabla_\x \cdot\left(e^\gamma_s \nabla_\x \gamma_s\right)
    = e^\gamma_s\left(\Delta_\x \gamma_s + \|\nabla_\x \gamma_s\|^2\right)
    = \alpha_s\left(\Delta_\x \log \alpha_s + \| \nabla_\x \log \alpha_s \|^2\right)\ .
\end{align}

\emph{Computing term (B):}
Applying the reverse generator to $\log\alpha_s(\x_s)$ yields:
\begin{align}
    (B) 
    &= \hat \Lgen_s\log\alpha_s(\hat\x_s) \notag\\
    &= \tfrac{1}{2}\,g_{1-s}^2\,\Delta_\x \log\alpha_s(\hat\x_s)
   + \Big[-\,\f_{1-s}(\hat\x_s) + g_{1-s}^2\,\nabla_\x \log \hat q_s(\hat\x_s)\Big] \cdot \nabla_\x \log\alpha_s(\hat\x_s) \ .
    \label{eq:term-B-expansion}
\end{align}

\emph{Combining terms:}
Subtracting \eqref{eq:term-B-expansion} from \eqref{eq:term-A-expansion}, we find that all first-order gradient terms cancel, leaving only:
\begin{align}
(A) - (B)
&= \tfrac{1}{2} g_{1-s}^2 \| \nabla_\x \log \alpha_s(\hat\x_s)\|^2 \notag\\
&= \tfrac{1}{2} g_{1-s}^2 \|\nabla_\x \log \hat p^\thetab_s(\hat\x_s) - \nabla_\x\log \hat q_s(\hat\x_s)\|^2.
\label{eq:score-diff-squared}
\end{align}

Substituting \eqref{eq:score-diff-squared} into \eqref{eq:csp-elbo-generator-terms} and applying the Fubini--Tonelli theorem to exchange the order of integration:
\begin{align}
- \E_{\qdata}\left[\log p_0^\thetab(\x_0)\right] 
&\leq \int_0^1 \E_{\hat \Qpath}\left[\tfrac{1}{2} g_{1-s}^2 \|\nabla_\x \log \hat p^\thetab_s(\hat \x_s) - \nabla_\x\log \hat q_s(\hat \x_s)\|^2 \right] \diff s + \mathcal{H}(\qdata).
\end{align}
Since the expectation under the path measure $\hat{\Qpath}$ of a function depending only on $\hat \x_s$ reduces to the marginal expectation under $\hat{q}_s$, and recalling that $\hat{q}_s = q_{1-s}$, we obtain:
\begin{align}
- \E_{\qdata}\left[\log p_0^\thetab(\x_0)\right] 
&\leq \int_0^1 \E_{\hat q_s}\left[\tfrac{1}{2} g_{1-s}^2 \|\nabla_\x \log \hat p^\thetab_{s}(\hat \x_s) - \nabla_\x\log \hat q_s(\hat\x_s)\|^2 \right] \diff t + \mathcal{H}(\qdata).
\end{align}

Changing variables from reverse time $s$ to forward time $t = 1-s$, dropping the constant entropy term, and applying the score identity (\cref{eq:Tweedie-identity-ct}), we recover the standard denoising score matching loss.

\continuous[Generator-based ELBO for continuous spaces]{
The general generator ELBO \eqref{eq:ELBO-generator-Dynkin} specialises to the denoising score matching objective:
\begin{align}
    \mathcal{L}^\thetab_{\text{DSM}}
    &= \E_{\x_0 \sim \qdata}\int_0^1 \E_{\x_t \sim q(\cdot|\x_0)}\left[\tfrac{1}{2} g_t^2 \|\mathbf{s}^\thetab(\x_t,t) - \nabla_\x\log q(\x_t|\x_0)\|^2 \right] \dt\ ,
\end{align}
which recovers the standard training loss in \cref{eq:elbo-score-matching}.
}

\paragraph{Discrete state spaces.} 
\emph{Computing term (A):}
Applying \cref{eq:reversed-generator-discrete-sp} to $\alpha_s(\x_s)$:
\begin{align}
(A) 
&= \frac{\hat\Lgen_s \alpha_s(\hat\x_s)}{\alpha_s(\hat\x_s)} \notag\\
&= \frac{1}{\alpha_s(\hat\x_s)} \sum_{\y \in \calX^d} \frac{\hat q_s(\y)}{\hat q_s(\hat\x_s)} \Rseq_{1-s}(\hat\x_s,\y) \bigl[\alpha_s(\y) - \alpha_s(\hat\x_s)\bigr] \notag\\
&= \sum_{\y \neq \hat\x_s} \frac{\hat q_s(\y)}{\hat q_s(\hat\x_s)} \Rseq_{1-s}(\hat\x_s,\y) \left[\frac{\alpha_s(\y)}{\alpha_s(\hat\x_s)} - 1\right] \notag\\
&= \sum_{\y \neq \hat \x_s} \frac{\hat q_s(\y)}{\hat q_s(\hat \x_s)} \Rseq_{1-s}(\hat \x_s,\y) \left[\frac{p^\thetab_s(\y) \hat q_s(\hat \x_s)}{p^\thetab_s( \hat \x_s) \hat q_s(\y)} - 1\right],
\label{eq:discrete-term-A}
\end{align}
where we used $\Rseq_{1-s}(\hat\x_s,\hat\x_s) = -\sum_{\y\neq\hat\x_s}\Rseq_{1-s}(\y,\hat \x_s)$ (column-sum-to-zero).

\emph{Computing term (B):}
For the logarithm term:
\begin{align}
(B) 
&= \hat\Lgen_s \log\alpha_s(\hat\x_s) \notag\\
&= \sum_{\y \neq \x_s} \frac{\hat q_s(\y)}{\hat q_s(\hat\x_s)} \Rseq_{1-s}(\hat\x_s,\y) \bigl[\log\alpha_s(\y) - \log\alpha_s(\hat\x_s)\bigr] \notag\\
&= \sum_{\y \neq \hat\x_s} \frac{\hat q_s(\y)}{\hat q_s(\hat\x_s)} \Rseq_{1-s}(\hat\x_s,\y) \log\!\left(\frac{\hat p^\thetab_s(\y) \hat q_s(\hat\x_s)}{\hat p^\thetab_s(\hat\x_s) \hat q_s(\y)}\right).
\label{eq:discrete-term-B}
\end{align}

\emph{Combining terms:}
Subtracting \cref{eq:discrete-term-B} from \cref{eq:discrete-term-A}, the $\hat q_s$ ratios cancel, yielding:
\begin{align}
(A) - (B)
&= \sum_{\y \neq \hat \x_s} \frac{\hat q_s(\y)}{\hat q_s(\hat \x_s)} \Rseq_{1-s}(\hat \x_s,\y) \left[\frac{\hat p^\thetab_s(\y) \hat q_s(\hat\x_s)}{\hat p^\thetab_s(\x_s) \hat q_s(\y)} - 1 - \log\!\left(\frac{\hat p^\thetab_s(\y) \hat q_s(\hat \x_s)}{\hat p^\thetab_s(\hat \x_s) \hat q_s(\y)}\right)\right] \notag\\
&= \sum_{\y \neq \hat \x_s} \frac{\hat q_s(\y)}{\hat q_s(\hat \x_s)} \Rseq_{1-s}(\hat \x_s,\y) \bigl[r_s(\y,\hat \x_s) - 1 - \log r_s(\y,\hat \x_s)\bigr],
\label{eq:discrete-combined}
\end{align}
where $r_s(\y,\hat \x_s) \coloneqq \frac{\hat p^\thetab_s(\y)/\hat p^\thetab_s(\x_s)}{\hat q_s(\y)/\hat q_s(\x_s)}$ is the ratio of learned to true probability ratios. The expression $r - 1 - \log r$ is non-negative for $r > 0$, vanishing when $r = 1$, and measures the discrepancy between the learned and true distributions.

Substituting into \eqref{eq:ELBO-generator-Dynkin} and changing time variable from reverse time $s$ to forward time $t = 1-s$ (so $\hat q_s = q_{1-t}$ and $\Rseq_{1-s} = \Rseq_t$), we obtain:
\begin{align}
    - \E_{\qdata}\left[\log p_0^\thetab(\x_0)\right] 
    &\leq \int_0^1 \E_{\x_t \sim q_t}\left[\sum_{\y \neq \x_t} \Rseq_t(\x_t,\y) \frac{q_t(\y)}{q_t(\x_t)} \Bigl(r_t(\y,\x_t) - 1 - \log r_t(\y,\x_t)\Bigr)\right] \diff t + \mathcal{H}(\qdata),
    \label{eq:discrete-elbo-ratio-form}
\end{align}
where now $r_t(\y,\x_t) = \frac{p^\thetab_t(\y)/p^\thetab_t(\x_t)}{q_t(\y)/q_t(\x_t)}$ in forward time.

To make explicit the connection to the ELBO derived in \cref{sect:ELBO}, we expand the expression $r_t - 1 - \log r_t$. Recall that the true reverse rate is $\hat{\Rseq}_t(\y,\x_t) = \Rseq_t(\x_t,\y) \frac{q_t(\y)}{q_t(\x_t)}$ (from \cref{eq:reverse-rate-matrix}), and the learned reverse rate approximates this with $\hat{\Rseq}^\thetab_t(\y,\x_t) = \Rseq_t(\x_t,\y) \frac{p^\thetab_t(\y)}{p^\thetab_t(\x_t)}$. Thus:
\begin{align}
\hat{\Rseq}^\thetab_t(\y,\x_t) 
&= \hat{\Rseq}_t(\y,\x_t) \cdot \frac{p^\thetab_t(\y)/p^\thetab_t(\x_t)}{q_t(\y)/q_t(\x_t)}
= \hat{\Rseq}_t(\y,\x_t) \cdot r_t(\y,\x_t).
\label{eq:learned-reverse-rate-ratio}
\end{align}
Since $\hat{\Rseq}_t(\y,\x_t) = \Rseq_t(\x_t,\y) \frac{q_t(\y)}{q_t(\x_t)}$, substituting into \cref{eq:discrete-elbo-ratio-form} and expanding:
\begin{align}
\Rseq_t(\x_t,\y) \frac{q_t(\y)}{q_t(\x_t)} \bigl(r_t - 1 - \log r_t\bigr) &= \hat{\Rseq}_t(\y,\x_t) \bigl(r_t - 1 - \log r_t\bigr) \notag\\
&= \hat{\Rseq}^\thetab_t(\y,\x_t) - \hat{\Rseq}_t(\y,\x_t) - \hat{\Rseq}_t(\y,\x_t) \log r_t \\
&= \hat{\Rseq}^\thetab_t(\y,\x_t) - \hat{\Rseq}_t(\y,\x_t) - \Rseq_t(\x_t,\y) \frac{q_t(\y)}{q_t(\x_t)} \log\!\left(\frac{\hat{\Rseq}^\thetab_t(\y,\x_t)}{\hat{\Rseq}_t(\y,\x_t)}\right). \notag\\
\label{eq:discrete-elbo-expanded-step}
\end{align}

As in the continuous case, we apply an identity analogous to the \emph{score identity} \cref{eq:Tweedie-identity-ct}, to replace intractable marginal ratios with tractable conditional ratios. For the forward process with $q_t(\x_t) = \int q(\x_t|\x_0)\,\qdata(\x_0)\,\diff\x_0$, the marginal ratio satisfies:\footnote{By Bayes' rule,
\(
\E_{q(\cdot\cond \x_t)}
\left[\frac{q(\y\cond \x_0)}{q(\x_t\cond \x_0)}\right]=
\int \frac{q(\x_t\cond \x_0)\,q_{\text{data}}(\x_0)}{q_t(\x_t)}
\cdot \frac{q(\y\cond \x_0)}{q(\x_t\cond \x_0)}\diff \x_0 =
\frac{1}{q_t(\x_t)}
\int q(\y\cond \x_0)q_{\text{data}}(\x_0)\diff \x_0 = \frac{q_t(\y)}{q_t(\x_t)}.
\)}
\begin{align}
    \frac{q_t(\y)}{q_t(\x_t)} 
    = \E_{\x_0 \sim q(\cdot|\x_t)}\left[\frac{q(\y|\x_0)}{q(\x_t|\x_0)}\right]\ .
    \label{eq:discrete-tweedie}
\end{align}
Rewriting the expectation over $q_t$ as a joint expectation over $\qdata$ and $q(\cdot|\x_0)$, and dropping constants independent of $\thetab$, we recover the denoising score entropy (DSE) loss, as shown below.

\discrete[Generator-based ELBO for discrete spaces]{
The general generator ELBO \eqref{eq:ELBO-generator-Dynkin} specialises to the denoising score entropy objective:
\begin{align}
    \mathcal{L}_{\text{DSE}}(\thetab)
    &= \E_{\x_0 \sim \qdata}\int_0^1 \E_{\x_t \sim q(\cdot|\x_0)}\Biggl[\sum_{\y \neq \x_t} \Bigl(\hat{\Rseq}^\thetab_t(\y,\x_t) \notag\\
    &\qquad\qquad- \Rseq_t(\x_t,\y)\frac{q(\y|\x_0)}{q(\x_t|\x_0)} \log \hat{\Rseq}^\thetab_t(\y,\x_t)\Bigr)\Biggr] \dt\ ,
    \label{eq:elbo-discrete-dse}
\end{align}
where $\hat{\Rseq}^\thetab_t(\y,\x_t)$ is the learned reverse rate from $\x_t$ to $\y$, and $\Rseq_t(\x_t,\y)$ is the forward rate from $\y$ to $\x_t$. This matches the DSE loss derived via the discrete-time ELBO in \cref{eq:seq-elbo}.
}

\paragraph{Concluding remarks.}
The infinitesimal-generator framework provides a unified lens for understanding diffusion-based generative modeling. We have shown that both SDEs (continuous state spaces) and CTMCs (discrete state spaces) arise as special cases of general Markov processes characterised by their generators $\Lgen_t$. The reverse generator formula \eqref{eq:reversed-generator} yields the time-reversed dynamics in both settings, and the Girsanov-based ELBO \eqref{eq:ELBO-generator-Dynkin} specialises to the denoising score matching objective for continuous spaces and the denoising score entropy objective for discrete spaces, recovering the same training objectives derived in \cref{sect:ELBO} via discrete-time variational arguments.

This unified perspective confirms that, despite their apparent differences, continuous and discrete diffusion models share the same underlying mathematical structure: a forward process that progressively corrupts data, a reverse process that requires approximating intractable marginal quantities (scores or ratios), and an ELBO-based training objective that enables learning from samples.

With these theoretical foundations, we turn to a practical consideration. Diffusion does not have to operate directly in the data space. In many applications, it is advantageous to first encode data into a learned latent representation and perform diffusion there. This approach, known as \emph{latent diffusion}, offers computational benefits and enables the application of continuous diffusion techniques to discrete data. We explore this paradigm in the following section.

\section{Latent diffusion \label{sect:latent-diffusion}}

\begin{figure}[h!]
    \centering
    \includegraphics[width=0.75\linewidth]{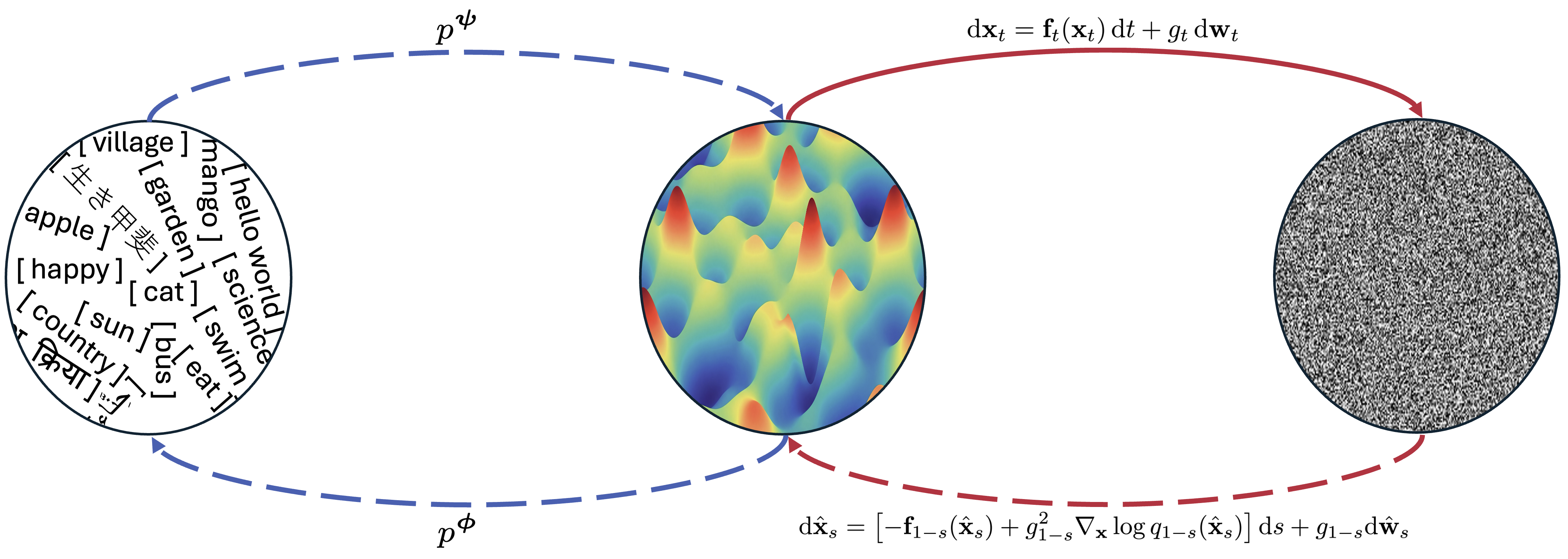}
    \caption{Latent diffusion example. Discrete text tokens are mapped to a continuous latent space via an encoder $p^\phib$. In this latent space, one can perform continuous diffusion, following an SDE, to converge toward the noise distribution. The clean latent representation is then reconstructed using the reverse SDE. Finally, the decoder $p^\psib$ maps the clean latent back to discrete text tokens.}
    \label{fig:latent-diffusion}
\end{figure}

In contemporary diffusion models, the denoising process is often conducted in a learned latent space rather than directly in the high-dimensional data space \cite{liu2023audioldm,podell2023sdxl}. This approach, known as latent diffusion \cite{kingma_variational_2021}, offers significant advantages in terms of scalability and computational efficiency. By operating in a compressed representation space, models can achieve more stable training dynamics and reduced memory requirements.

\subsection{Latent diffusion in general state-spaces}

The rationale behind latent diffusion stems from the observation that essential information in data, such as images, text, or biological sequences, is typically concentrated in specific regions. For instance, in images, semantic content often resides in localised areas, a principle already leveraged for a long time by compression algorithms like JPEG and MP3  \cite{dieleman2025latents}. Hence, latent diffusion often improves perceptual fidelity because the model perturbs semantically salient structure rather than raw pixels or tokens.

Formally, we define an encoder function $f: \calX \rightarrow \mathcal{Z}$ that maps data from the original space $\calX$ to a latent space $\mathcal{Z}$, and a decoder $f^{-1}: \mathcal{Z} \rightarrow \calX$ that reconstructs data back to the original space. The diffusion process is applied within the latent space:
\begin{align}
    \x_0 \xrightarrow{f} \z_0 \xrightarrow{\text{diffusion}} \z_1, \quad
    \z_1 \xrightarrow{\text{reverse diffusion}} \z_0 \xrightarrow{f^{-1}} \hat{\x}_0.
\end{align}

Depending on the generative model employed, the encoder $f$ can be a learned projection using usual latent models\footnote{Example of models that can be used to learn latent embeddings include Generative Adversarial Networks \cite{goodfellow2014generative} Variational Autoencoders (VAEs) \cite{kingma2014auto}, Vector Quantized VAEs (VQ-VAEs) \cite{van2017neural}, or VQGANs \cite{esser2021taming}}, resulting in learnt approximations $p^\psib \approx f$ and $p^\phib \approx f^{-1}$. Alternatively, $f$ can be a deterministic mapping, for instance, projecting finite data spaces onto their probability simplices.\footnote{The probability simplex over a finite set $\calX$ is the set of all probability distributions supported on $\calX$ (i.e. all functions $p: \calX \to [0,1]$ such that $\sum_{x \in \calX}p(x)=1$)}

In the context of learnt mapping, we can write the latent diffusion process as:
\begin{align}
\x_0 \xrightarrow{p^\psib} \z_0 \xrightarrow{q} \cdots \xrightarrow{q} \z_1 \xrightarrow{p^\thetab} \cdots \xrightarrow{p^\thetab} \hat{\z}_0 \xrightarrow{p^\phib} \hat{\x}_0 .
\end{align}

In the context of learnt projections, latent diffusion models can be trained using two different paradigms: joint training or two-stage training. In the joint training paradigm, the encoder, decoder, and diffusion model are optimised simultaneously via a single ELBO objective \cite{vahdat_score-based_2021,wehenkel_diffusion_2021}. Although theoretically well-founded, this approach has seen limited adoption in practice. One hypothesis is that, while improving likelihood consistency, joint training may introduce conflicting gradients between the reconstruction and diffusion terms.

Alternatively, the two-stage training strategy decouples the learning process \cite{rombach_high-resolution_2021}. Initially, an autoencoder is trained to learn the latent representation of the data. Subsequently, the diffusion model is trained on this fixed latent space. This separation offers flexibility in choosing loss functions for each stage and improving generation quality. For instance, the autoencoder is often trained using a combination of \textit{reconstruction loss}, \textit{perceptual loss}, and \textit{adversarial loss}. \footnote{The reconstruction loss is typically a mean squared error (MSE) in input space. Perceptual losses, computed in a learned feature space (e.g., using LPIPS for images \cite{zhang2018unreasonable}, spectral coefficients for audio \cite{yamamoto2020parallel,kong2020hifi}, or GNN-based embeddings for molecules \cite{gilmer2017neural}), preserve perceptual fidelity, especially high-frequency content. Adversarial losses, implemented via a discriminator co-trained with the generator, further enhance realism, as in GANs \cite{goodfellow2020generative,karras2019style}.} The diffusion model can then focus solely on modeling the distribution in the latent space without being influenced by reconstruction constraints.

When jointly training all components, the ELBO must account for the encoder and decoder mappings. We derive the following objective using standard variational upper bound derivation (proof \cref{apx:latent-diff-ELBO}):
\begin{align}
     - \log p^\thetab(\x_0) \leq \underbrace{\KL\left(q^\psib(\z_0 \cond \x_0) \| p^\thetab(\z_0) \right)}_{\text{diffusion loss}} - \underbrace{\E_{q^\psib(\z_0 \cond \x_0)}\left[ \log p^\phi(\x_0 \cond \z_0)\right],}_{\text{reconstruction loss}}
\end{align}
where $q^\psib(\z_0 \cond \x_0)$ is the approximate posterior, $p^\thetab(\z_0)$ the prior in latent space, and $p^\phi(\x_0 \cond \z_0)$ the decoder likelihood.

Applying the data processing inequality to the KL divergence yields:
\begin{align}
    \KL\left(q^\psib(\z_0 \cond \x_0) \| p^\thetab(\z_0) \right) \leq \KL\left(q^\psib(\z_{[0,1]} \cond \x) \| p^\thetab(\z_{[0,1]}) \right)
\end{align}
Thus, the ELBO becomes:
\begin{align}
     - \log p^\thetab(\x_0) \leq \underbrace{\KL\left(q^\psib(\z_{[0,1]} \cond \x_0) \| p^\thetab(\z_{[0,1]}) \right)}_{\text{diffusion loss}} \underbrace{- \E_{q^\psib(\z_0 \cond \x_0)}\left[ \log p^\phi(\x_0 \cond \z_0)\right].}_{\text{reconstruction loss}}
\end{align}
This unified ELBO objective facilitates the simultaneous training of the encoder, decoder, and diffusion process.

In practice, even when diffusion is performed in the data space, the denoised output $\x^\thetab_0$ is considered an approximation of the clean data at a small positive time $\epsilonb > 0$, allowing for a consistent ELBO derivation across both latent and data spaces.

Latent diffusion models have demonstrated significant success in continuous domains, such as image generation, due to their efficient training and sampling processes. In discrete domains, like text or molecular sequences, latent diffusion offers additional benefits. By mapping discrete data into a continuous latent space, standard diffusion techniques such as guidance can be applied more effectively, leveraging the flexibility of continuous models to handle discrete data.

\subsection{Continuous diffusion for discrete data}

An intuitive approach to performing diffusion on discrete data is first to project the data into a continuous space and then run a continuous diffusion process in this transformed space. This approach achieves competitive results, comparable to direct diffusion methods \cite{li_diffusion-lm_2022,gong_diffuseq_2023,dieleman_continuous_2022,strudel_self-conditioned_2022,lin_text_2023,yuan_seqdiffuseq_2022,lovelace_latent_2023, gao_empowering_2024,wu_ar-diffusion_2023}.

A primary challenge of this approach arises in the context of categorical data, where projection into a continuous space is not always straightforward. The difficulty lies in balancing the trade-off between maintaining a continuous representation that closely approximates the discrete data space and employing a more complex, expressive representation that may deviate further away from the original discrete space \cite{dieleman2025latents,jo2025continuous}.

Thus, the success of this approach heavily depends on the choice of the projection function $f$ and, in particular, of its approximate inverse $f^{-1}$ for mapping back to the discrete space.

Some approaches attempt to tackle this challenge by converting discrete tokens into binary bit strings (analogue bits) and modeling them as real values \cite{chen_analog_2023}, or by performing the diffusion process directly on the token logits (simplex diffusion) \cite{han_ssd-lm_2023,mahabadi_tess_2024}. From a more theoretical perspective, \citet{richemond_categorical_2022}  proposes specific SDE formulations for simplex diffusion using the Cox-Ingersoll-Ross process; however, these formulations lack experimental validation.

Other works have tried to leverage the latent representations of a pre-trained model as continuous representations \cite{gong_diffuseq_2023,li2024discdiff,mittal_symbolic_2021,wang_language_2023,lovelace_latent_2023} or to jointly learn representations and the diffusion process \cite{dieleman_continuous_2022, yuan_seqdiffuseq_2022, liu2024unified}.

The embedding of discrete data is particularly advantageous, as it allows for the application of standard gradient-based guidance methods commonly used in continuous diffusion. While some recent work has explored guidance methods for discrete diffusion \cite{nisonoff2024unlocking, guo_plug-and-play_2024,schiff_simple_2024, lee2025debiasing}, those for continuous diffusion remain more straightforward and well-developed \cite{ho_classifier-free_2022, skreta2025feynman}. For instance, Diffusion-LM \cite{li_diffusion-lm_2022} introduces a novel language model based on continuous diffusion, facilitating complex gradient-based control. Similarly, SED \cite{strudel_self-conditioned_2022} explored self-conditioning within this framework, while DiffuSeq \cite{gong_diffuseq_2023} employs continuous diffusion for text-to-text generation tasks by embedding words in a latent space and conditioning the diffusion process on input text embeddings. SSD-LM \cite{han_ssd-lm_2023}, directly applies continuous diffusion in the vocabulary space using simplex diffusion, allowing the incorporation of classifier guidance using off-the-shelf classifiers. Finally, CDCD \cite{dieleman_continuous_2022} explores continuous-time and state-space diffusion for discrete data and conducts a comprehensive evaluation of guidance methods by jointly learning latent representations and the diffusion process.

\subsection{Bridging continuous and discrete diffusion for discrete data} A growing line of work seeks to explicitly couple \emph{continuous} and \emph{discrete} diffusion for discrete data. Duo \cite{sahoo2025diffusion} shows that uniform-state discrete diffusion can be derived as a marginal of an underlying Gaussian process, enabling the transfer of Gaussian-diffusion techniques (e.g., curriculum schedules, fast few-step sampling) to the discrete setting.
CADD \cite{zheng2025continuously} augments a discrete Markov chain with a paired continuous latent diffusion so that masked tokens carry informative, graded latent states that guide discrete denoising, improving text/image/code generation quality.
CCDD \cite{zhou2025coevolutionary} goes further by defining a joint process on the union of a continuous representation space and a discrete token space, training a single model to co-denoise in both.
Together, these hybrids suggest a promising path to better likelihood training, guidance, and sampling efficiency in discrete generative modeling.

\section*{Acknowledgments}
We thank all members of Prof.\ Dr.\ Stefan Bauer's group, and Andrew Campbell for insightful discussions. This work was partially supported by the Munich Center for Machine Learning (MCML).

\clearpage
\begin{landscape}
    \begin{table}[htbp]
     \centering
    \renewcommand{\arraystretch}{1.5}
    \setlength{\tabcolsep}{6pt}
    \resizebox{\linewidth}{!}{%
    \begin{tabular}{@{} l c c @{}}
        \toprule
        & \textbf{Discrete State Space (CTMC)} & \textbf{Continuous State Space (SDE)} \\
        \midrule
        
        \multicolumn{3}{l}{\textit{Discrete-time formulation (\textsection\ref{sect:discrete_time_diffusion})}} \\[4pt]
        
        \textbf{Forward Marginals}
        &
        $\displaystyle
          q(\x_t \cond \x_0) 
          = \Cat \bigl(\x_t; \, \p = \Q_t\,\e_{\x_0}\bigr),
          \;
          \Q_t \coloneqq \prod_{i=1}^t \tilde{\Q}_i
        $
        &
        $\displaystyle
          q(\x_t \cond \x_0) 
          = \Normal\bigl(\x_t \,;\, \alpha_t\,\x_0,\, \sigma_t^2\,\I\bigr), 
          \;
          \alpha_t \coloneqq \prod_{j=1}^{t} \tilde{\alpha}_j, 
          \;
          \sigma_t^2 \coloneqq \sum_{k=1}^{t} 
          \bigl(\prod_{j=k+1}^{t} \tilde{\alpha}_j\bigr)^2
          \tilde{\sigma}_k^2
        $
        \\[10pt]
        
        \textbf{Convex Combination Form}
        &
        $\displaystyle
          q(\x_t \cond \x_0)
          = \Cat \bigl(\x_t \,;\, \p = \alpha_t\,\e_{\x_0} + (1{-}\alpha_t)\,\pnoise\bigr)
        $
        &
        $\displaystyle
          \x_t = \alpha_t\,\x_0 + \sigma_t\,\epsilonb, \quad \epsilonb\sim \Normal(\zerob,\I)
        $
        \\[10pt]
        
        \textbf{Conditional Reverse Kernel}
        &
        $\displaystyle
          q(\x_{t-1} \cond \x_t, \x_0)
          = \Cat \Bigl(\x_{t-1} \,;\,
             \p = \frac{\Q_{t|t-1}^\top\e_{\x_t} \odot \Q_{t-1}\e_{\x_0}}{\e_{\x_t}^\top\Q_t\e_{\x_0}}
           \Bigr)
        $
        &
        $\displaystyle
           q(\x_{t-1} \cond \x_t, \x_0)
          = \Normal\bigl(
            \x_{t-1} \,;\, \mub_{t-1|t}(\x_t, \x_0),\,
            \sigma_{t-1|t}^2\,\I
          \bigr)
          $
        \\[12pt]
        
        \midrule
        \multicolumn{3}{l}{\textit{Continuous-time formulation (\textsection\ref{sect:discrete-to-continuous-time-csp}--\ref{sect:continuous-diffusion})}} \\[4pt]
        
        \textbf{Dynamics} 
        & 
        $\displaystyle
        q_{t \cond t - \Delta t} (\x \cond \y)= \delta_{\x,\y} + \Rseq_t(\x , \y)\, \Delta t + o(\Delta t), \quad \Rseq_t = \bigoplus_{k=1}^d \R_t^{(k)}
        $
        & 
        $\displaystyle
        \diff\x_t = \f_t(\x_t)\, \dt + g_t\, \dw_t
        $ 
        \\[10pt]
            
        \textbf{Rate/Drift (Interpolation)}
        & 
        $\displaystyle
        \R_t = \frac{\alpha^\prime_t}{\alpha_t}\bigl(\I - \pnoise \mathbf{1}\transp\bigr)
        $
        & 
        $\displaystyle
        \f_t(\x_t) = \frac{\alpha^\prime_t}{\alpha_t}\, \x_t, \qquad g_t^2 = \alpha_t^2 \frac{\diff}{\dt} \Bigl[ \frac{\sigma_t^2}{\alpha_t^2} \Bigr]
        $
        \\[10pt]
        
        \textbf{Time Reversal} 
        & 
        $\displaystyle
        \partial_s \hat{\q}_s= \hat{\Rseq}_s\, \hat{\q}_s, \qquad
        \hat{\Rseq}_s(\x, \y)= \Rseq_{1-s}(\y, \x)\,\frac{q_{1-s}(\x)}{q_{1-s}(\y)} \quad (\x \neq \y)
        $
        & 
        $\displaystyle
        \diff \hat{\x}_s = \bigl[-\f_{1-s}(\hat{\x}_s) + g_{1-s}^{2}\,\nabla_\x \log \hat q_{s}(\hat{\x}_s)\bigr]\diff s + g_{1-s}\,\diff \hat{\w}_s
        $
        \\[10pt]
            
        \textbf{Generator Instantiation}
        &
        $\displaystyle
        \Lgen_t \phi(\x) = \sum_{\y \in \calX^d} \Rseq_t(\y, \x)\bigl[\phi(\y) - \phi(\x)\bigr]
        $
        &
        $\displaystyle
        \Lgen_t \phi(\x) = \tfrac{1}{2}g_t^2\,\Delta_\x \phi(\x) + \f_t(\x) \cdot \nabla_\x \phi(\x)
        $
        \\[12pt]
        
        \midrule
        \multicolumn{3}{l}{\textit{Unified generator perspective (\textsection\ref{sect:generator-perspective})}} \\[4pt]
        
        \textbf{Infinitesimal Generator}
        &
        \multicolumn{2}{c}{%
        $\displaystyle
        \Lgen_t \phi(\x)
        \;\coloneqq\;
        \lim_{\Delta t \to 0^+}
        \frac{\E_{q(\cdot \cond \x_t=\x)}\!\left[\phi(\x_{t+\Delta t})\right] - \phi(\x)}{\Delta t}
        $
        }
        \\[10pt]
        
        \textbf{Kolmogorov Forward Equations}
        &
        \multicolumn{2}{c}{%
        $\displaystyle
        \partial_t q_t \;=\; (\Lgen_t)^* q_t, \quad  q_0 = \qdata
         \qquad \qquad \partial_s \hat q_s \;=\; (\hat \Lgen_s)^* \hat q_s, \quad \hat q_0 = p_\text{noise}$
        }
        \\[10pt]
        
        \textbf{Reversed Generator}
        &
        \multicolumn{2}{c}{%
        $\displaystyle
        \hat{\Lgen}_s \phi(\x)
        \;=\; \frac{1}{q_{1-s}(\x)}\Bigl((\Lgen_{1-s})^*\bigl[q_{1-s}\phi\bigr](\x) \;-\; \phi(\x)\,(\Lgen_{1-s})^* q_{1-s}(\x) \Bigr)
        $
        }
        \\[12pt]
        
        \midrule
        \multicolumn{3}{l}{\textit{Training objective (\textsection\ref{sect:ELBO}--\ref{sect:generator-ELBO})}} \\[4pt]
        
        \textbf{ELBO via Generator}
        &
        \multicolumn{2}{c}{%
        $\displaystyle
        - \E_{\qdata}\!\left[\log p_0^\theta(\x_0)\right] 
        \;\leq\; \E_{\hat \Qpath}\!\left[\int_0^1 \!\left(\frac{\hat\Lgen_t \alpha_t(\x_t)}{\alpha_t(\x_t)}  - \hat \Lgen_t\log\alpha_t(\x_t)\right)\diff t \right] - \mathcal{H}(\qdata), \qquad \alpha_t \coloneqq \frac{p_t^\theta}{\hat q_t}
        $
        }
        \\[12pt]
        
        \textbf{Loss Instantiation}
        &
        $\displaystyle
        \mathcal{L}_{\text{DSE}} = \E_{\x_0}\!\int_0^1\!
        \E_{\x_t\cond\x_0}
        \Bigl[\sum_{\y\neq \x_t}
        \Bigl(\hat{\Rseq}_t^\theta(\y,\x_t)
        -\Rseq_t(\x_t,\y)\frac{q(\y\cond\x_0)}{q(\x_t\cond\x_0)}\log \hat{\Rseq}_t^\theta(\y,\x_t)\Bigr)\Bigr]
        \diff t
        $
        &
        $\displaystyle
        \mathcal{L}_{\text{DSM}} = \tfrac{1}{2}\E_{\x_0}\!\int_0^1\!
        g_t^2\,\E_{\x_t\cond\x_0}
        \bigl\|\nabla_{\x}\log q(\x_t\cond\x_0)-\mathbf{s}_\theta(\x_t,t)\bigr\|^2
        \diff t
        $
        \\[10pt]
        
        &
        \textit{Denoising Score Entropy (DSE)}
        &
        \textit{Denoising Score Matching (DSM)}
        \\[6pt]
        
        \bottomrule
      \end{tabular}
        }
      \caption{Summary of diffusion generative modelling in discrete and continuous state spaces. The table is organised into four parts: \textit{discrete-time formulation} (forward/reverse kernels), \textit{continuous-time formulation} (dynamics, KFEs, time reversal), \textit{unified generator perspective} (general formulas valid for any Markov process, shown in merged columns), and \textit{training objectives} (general ELBO and state-space specific instantiations). Notation: $\hat{q}_s \coloneqq q_{1-s}$, $\hat{\x}_s \coloneqq \x_{1-s}$ (time-reversed quantities); $(\Lgen_t)^*$ ($L^2$-adjoint of generator); $\hat\Qpath$ (path measure of reversed process); $\Rseq_t$ (rate matrix, $\mathbf{1}^\top\Rseq_t = \mathbf{0}^\top$); $\pnoise$ (noise distribution); $\e_{\x}$ (one-hot); $\odot$ (Hadamard); $\oplus$ (Kronecker sum).}
      \label{tab:diffusion_comparison}
    \end{table}
    \end{landscape}

\clearpage

\printbibliography

\clearpage

\appendix

\section{Derivation of the forward marginals \label{apx:interpolation-marginals}}

\subsection{Continuous state space\label{apx:interpolation-marginals-continuous-diff}}

\begin{lemma}[Forward marginal decomposition \label{lem:forward-marginal-continuous}]
Consider a discrete-time diffusion process with one-step transitions:
\begin{equation}
\x_t = \tilde{\alpha}_t \x_{t-1} + \tilde{\sigma}_t \epsilonb_t, \qquad \epsilonb_t \sim \mathcal{N}(\mathbf{0}, \I), \quad \epsilonb_t \perp \epsilonb_s \text{ for } t \neq s,
\end{equation}
where $\x_t \in \RR^d$ and $\{\tilde{\alpha}_t\}_{t \geq 1}$, $\{\tilde{\sigma}_t\}_{t \geq 1}$ are deterministic scalar sequences. Then for any $t \geq 1$:
\begin{equation}\label{eq:forward-decomposition}
\x_t = \alpha_t \x_0 + \sum_{k=1}^{t} \Bigg( \prod_{j=k+1}^{t} \tilde{\alpha}_j \Bigg) \tilde{\sigma}_k \epsilonb_k,
\end{equation}
where $\alpha_t \coloneqq \prod_{j=1}^{t} \tilde{\alpha}_j$ and we use the convention that empty products equal $1$.
\end{lemma}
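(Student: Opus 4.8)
The statement is an explicit unrolling of the one-step recursion, so the natural route is induction on $t$. The plan is to verify the base case $t=1$ directly and then show that the claimed closed form is stable under one additional application of the transition map, taking care of the empty-product conventions at the boundary of each sum.

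First I would check $t=1$: the one-step transition reads $\x_1 = \tilde{\alpha}_1 \x_0 + \tilde{\sigma}_1 \epsilonb_1$, while the right-hand side of \cref{eq:forward-decomposition} with $t=1$ is $\alpha_1 \x_0 + \big(\prod_{j=2}^{1}\tilde{\alpha}_j\big)\tilde{\sigma}_1 \epsilonb_1 = \tilde{\alpha}_1 \x_0 + \tilde{\sigma}_1 \epsilonb_1$, using $\alpha_1 = \tilde{\alpha}_1$ and the empty-product convention $\prod_{j=2}^{1}(\cdot)=1$. These agree. For the inductive step, assume \cref{eq:forward-decomposition} holds at time $t-1$. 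Substituting into $\x_t = \tilde{\alpha}_t \x_{t-1} + \tilde{\sigma}_t \epsilonb_t$ gives
\begin{align}
\x_t = \tilde{\alpha}_t\Bigg(\alpha_{t-1}\x_0 + \sum_{k=1}^{t-1}\Big(\prod_{j=k+1}^{t-1}\tilde{\alpha}_j\Big)\tilde{\sigma}_k\epsilonb_k\Bigg) + \tilde{\sigma}_t\epsilonb_t .
\end{align}
Now I would push $\tilde{\alpha}_t$ through: $\tilde{\alpha}_t\alpha_{t-1} = \alpha_t$ by definition, and $\tilde{\alpha}_t\prod_{j=k+1}^{t-1}\tilde{\alpha}_j = \prod_{j=k+1}^{t}\tilde{\alpha}_j$ for each $k\le t-1$. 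Finally, the leftover term $\tilde{\sigma}_t\epsilonb_t$ is exactly the $k=t$ summand, since $\prod_{j=t+1}^{t}\tilde{\alpha}_j = 1$ by the empty-product convention. Collecting terms yields \cref{eq:forward-decomposition} at time $t$, closing the induction.

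There is no real obstacle here; the only points demanding a little care are the empty-product conventions (for $\alpha_t$ when products are nonempty, and for the coefficient of $\epsilonb_t$ which must be read as an empty product equal to $1$) and the bookkeeping of the summation index when the new noise term is absorbed. As a complementary remark, one can then read off the Gaussian marginal $q(\x_t\cond\x_0) = \mathcal{N}(\x_t\,;\,\alpha_t\x_0,\,\sigma_t^2\I)$ of \cref{eq:interpolation-continuousdiff}: the mean follows immediately, and since the $\epsilonb_k$ are independent and isotropic, the covariance is $\sum_{k=1}^{t}\big(\prod_{j=k+1}^{t}\tilde{\alpha}_j\big)^2\tilde{\sigma}_k^2\,\I = \sigma_t^2\I$, matching the stated coefficient.
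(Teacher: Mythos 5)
Your proposal is correct and follows essentially the same route as the paper's proof: induction on $t$, with the base case checked directly, $\tilde{\alpha}_t$ distributed through the inductive hypothesis, and the new noise term absorbed as the $k=t$ summand via the empty-product convention. Your closing remark on the Gaussian marginal likewise matches the paper's subsequent corollary, so nothing is missing.
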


\begin{proof}
We proceed by induction on $t$.

\textbf{Base Case:} For $t = 1$, we have:

\begin{align}
\x_1 = \tilde{\alpha}_1 \x_0 + \tilde{\sigma}_1 \epsilonb_1
\end{align}

Substituting into our proposed formula:
First term: $\prod_{j=1}^{1} \tilde{\alpha}_j = \tilde{\alpha}_1$
Second term: $\sum_{k=1}^{1} \left( \prod_{j=k+1}^{1} \tilde{\alpha}_j \right) \tilde{\sigma}_k \epsilonb_k = \left( \prod_{j=2}^{1} \tilde{\alpha}_j \right) \tilde{\sigma}_1 \epsilonb_1$.
Since we define $\prod_{j=2}^{1} \tilde{\alpha}_j = 1$ (empty product), we get: $\x_1 = \tilde{\alpha}_1 \x_0 + \tilde{\sigma}_1 \epsilonb_1$.
This matches the given interpolation relation, so the base case holds.
\textbf{Induction:} Assume the formula holds for some $t = n \geq 1$:
\begin{align}
\x_n = \Bigg( \prod_{j=1}^{n} \tilde{\alpha}_j \Bigg) \x_0 + \sum_{k=1}^{n}  \Bigg( \prod_{j=k+1}^{n} \tilde{\alpha}_j \Bigg) \tilde{\sigma}_k \epsilonb_k
\end{align}
We need to show the formula holds for $t = n + 1$.
Starting with the interpolation relation:
\begin{align}\x_{n+1} = \tilde{\alpha}_{n+1} \x_n + \tilde{\sigma}_{n+1} \epsilonb_{n+1}\end{align}
Substituting the inductive hypothesis:
\begin{align}\x_{n+1} = \tilde{\alpha}_{n+1} \left[ \Bigg( \prod_{j=1}^{n} \tilde{\alpha}_j \Bigg) \x_0 + \sum_{k=1}^{n}  \Bigg( \prod_{j=k+1}^{n} \tilde{\alpha}_j \Bigg) \tilde{\sigma}_k \epsilonb_k \right] + \tilde{\sigma}_{n+1} \epsilonb_{n+1}\end{align}
Distributing $\tilde{\alpha}_{n+1}$:
\begin{align}\x_{n+1} = \tilde{\alpha}_{n+1} \Bigg( \prod_{j=1}^{n} \tilde{\alpha}_j \Bigg) \x_0 + \tilde{\alpha}_{n+1} \sum_{k=1}^{n}  \Bigg( \prod_{j=k+1}^{n} \tilde{\alpha}_j \Bigg) \tilde{\sigma}_k \epsilonb_k + \tilde{\sigma}_{n+1} \epsilonb_{n+1}\end{align}
For the first term:
\begin{align}
\tilde{\alpha}_{n+1} \Bigg( \prod_{j=1}^{n} \tilde{\alpha}_j \Bigg) = \prod_{j=1}^{n+1} \tilde{\alpha}_j
\end{align}
For the second term, we can factor out $\tilde{\alpha}_{n+1}$:
\begin{align}\tilde{\alpha}_{n+1} \sum_{k=1}^{n}  \Bigg( \prod_{j=k+1}^{n} \tilde{\alpha}_j \Bigg) \tilde{\sigma}_k \epsilonb_k = \sum_{k=1}^{n}  \Bigg( \prod_{j=k+1}^{n+1} \tilde{\alpha}_j \Bigg) \tilde{\sigma}_k \epsilonb_k
\end{align}
For the third term, note that:
\begin{align}\tilde{\sigma}_{n+1} \epsilonb_{n+1} = \left( \prod_{j=n+2}^{n+1} \tilde{\alpha}_j \right) \tilde{\sigma}_{n+1} \epsilonb_{n+1}\end{align}
where $\prod_{j=n+2}^{n+1} \tilde{\alpha}_j = 1$ (empty product).
Combining all terms:
\begin{align}\x_{n+1} = \Bigg( \prod_{j=1}^{n+1} \tilde{\alpha}_j \Bigg) \x_0 + \sum_{k=1}^{n}  \Bigg( \prod_{j=k+1}^{n+1} \tilde{\alpha}_j \Bigg) \tilde{\sigma}_k \epsilonb_k + \Bigg( \prod_{j=n+2}^{n+1} \tilde{\alpha}_j \Bigg) \tilde{\sigma}_{n+1} \epsilonb_{n+1}\end{align}
This can be written as:
\begin{align}\x_{n+1} = \Bigg( \prod_{j=1}^{n+1} \tilde{\alpha}_j \Bigg) \x_0 + \sum_{k=1}^{n+1}  \Bigg( \prod_{j=k+1}^{n+1} \tilde{\alpha}_j \Bigg) \tilde{\sigma}_k \epsilonb_k\end{align}
Therefore, the formula holds for $t = n + 1$.
\textbf{Conclusion}
By mathematical induction, the interpolation formula holds for all $t \geq 1$:
\begin{align}\x_t = \Bigg( \prod_{j=1}^{t} \tilde{\alpha}_j \Bigg) \x_0 + \sum_{k=1}^{t}  \Bigg( \prod_{j=k+1}^{t} \tilde{\alpha}_j \Bigg) \tilde{\sigma}_k \epsilonb_k\end{align}

\end{proof}

\begin{corollary}[Gaussian forward marginal]\label{cor:gaussian-forward-marginal}
Under the conditions of \cref{lem:forward-marginal-continuous}, the forward marginal $q(\x_t \cond \x_0)$ is Gaussian:
\begin{equation}
q(\x_t \cond \x_0) = \mathcal{N}\bigl(\x_t \,;\, \alpha_t \x_0,\, \sigma_t^2 \I\bigr),
\end{equation}
where the cumulative signal coefficient $\alpha_t$ and noise variance $\sigma_t^2$ are given by:
\begin{equation}
\alpha_t \coloneqq \prod_{j=1}^{t} \tilde{\alpha}_j, \qquad \sigma_t^2 \coloneqq \sum_{k=1}^{t} \Bigg( \prod_{j=k+1}^{t} \tilde{\alpha}_j \Bigg)^{\!2} \tilde{\sigma}_k^2.
\end{equation}
\end{corollary}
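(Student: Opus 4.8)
The plan is to derive the Gaussian form of $q(\x_t \cond \x_0)$ directly from the pathwise representation established in \cref{lem:forward-marginal-continuous}. By \cref{eq:forward-decomposition}, conditional on $\x_0$ the random variable $\x_t$ is an affine function of the jointly Gaussian, independent noise vectors $\epsilonb_1, \dots, \epsilonb_t$, namely a fixed vector $\alpha_t \x_0$ plus a linear combination $\sum_{k=1}^t c_{k,t}\,\epsilonb_k$ with deterministic scalar coefficients $c_{k,t} \coloneqq \bigl(\prod_{j=k+1}^t \tilde\alpha_j\bigr)\tilde\sigma_k$. An affine image of a Gaussian vector is Gaussian, so $q(\x_t \cond \x_0)$ is Gaussian and is determined entirely by its conditional mean and covariance.

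The first step is to compute the conditional mean: since $\E[\epsilonb_k] = \mathbf 0$ for every $k$, linearity of expectation gives $\E[\x_t \cond \x_0] = \alpha_t \x_0$. The second step is to compute the conditional covariance. Using independence of the $\epsilonb_k$ (so all cross-covariances vanish) together with $\operatorname{Cov}(\epsilonb_k) = \I$, we get
\begin{align*}
\operatorname{Cov}(\x_t \cond \x_0) = \sum_{k=1}^t c_{k,t}^2\,\I = \Bigl(\sum_{k=1}^t \bigl(\textstyle\prod_{j=k+1}^t \tilde\alpha_j\bigr)^2 \tilde\sigma_k^2\Bigr)\I = \sigma_t^2\,\I,
\end{align*}
which is exactly the claimed isotropic covariance with the stated $\sigma_t^2$. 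Combining the two moments yields $q(\x_t \cond \x_0) = \mathcal N(\x_t\,;\,\alpha_t\x_0,\sigma_t^2\I)$.

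There is no real obstacle here: the corollary is an immediate consequence of the lemma plus the elementary fact that affine pushforwards of Gaussians are Gaussian. The only points worth stating carefully are that the coefficients $c_{k,t}$ are deterministic (so the $\epsilonb_k$ remain the only source of randomness and the pushforward is genuinely affine), that the empty-product convention is inherited from the lemma so the $k=t$ term contributes $\tilde\sigma_t^2$, and that the independence assumption $\epsilonb_k \perp \epsilonb_s$ for $k \neq s$ is what makes the covariance diagonalise into a plain sum of variances. If one prefers to avoid invoking "affine images of Gaussians are Gaussian" as a black box, an alternative is a short induction on $t$ mirroring the lemma's induction, using the fact that if $\x_{t-1}\cond\x_0 \sim \mathcal N(\alpha_{t-1}\x_0, \sigma_{t-1}^2\I)$ and $\epsilonb_t \sim \mathcal N(\mathbf 0,\I)$ is independent, then $\tilde\alpha_t \x_{t-1} + \tilde\sigma_t\epsilonb_t$ is Gaussian with mean $\tilde\alpha_t\alpha_{t-1}\x_0 = \alpha_t\x_0$ and covariance $(\tilde\alpha_t^2\sigma_{t-1}^2 + \tilde\sigma_t^2)\I = \sigma_t^2\I$, the last identity being the recursion $\sigma_t^2 = \tilde\alpha_t^2\sigma_{t-1}^2 + \tilde\sigma_t^2$ which follows by splitting off the $k=t$ term in the definition of $\sigma_t^2$.
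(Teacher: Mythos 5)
Your proof is correct and follows essentially the same route as the paper: it invokes the pathwise decomposition from \cref{lem:forward-marginal-continuous}, observes that conditional on $\x_0$ the state is an affine function of the independent standard Gaussians $\epsilonb_1,\dots,\epsilonb_t$, and reads off the mean $\alpha_t\x_0$ and the isotropic covariance $\sigma_t^2\I$ from the vanishing cross-covariances. The additional remarks on the empty-product convention and the inductive alternative via the recursion $\sigma_t^2 = \tilde\alpha_t^2\sigma_{t-1}^2 + \tilde\sigma_t^2$ are sound but not needed.
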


\begin{proof}
Conditioned on $\x_0$, we have:
\begin{equation}
\x_t = \alpha_t \x_0 + \sum_{k=1}^{t} \Bigg( \prod_{j=k+1}^{t} \tilde{\alpha}_j \Bigg) \tilde{\sigma}_k \epsilonb_k.
\end{equation}

The term $\alpha_t \x_0$ is deterministic given $\x_0$. The second term is a linear combination of independent Gaussian random vectors. Since each $\epsilonb_k \sim \mathcal{N}(\mathbf{0}, \I)$ independently, and a linear combination of independent Gaussians is Gaussian with parameters:
\begin{equation}
\sum_{k=1}^{t} \Bigg( \prod_{j=k+1}^{t} \tilde{\alpha}_j \Bigg) \tilde{\sigma}_k \epsilonb_k \sim \mathcal{N}\Bigl(\mathbf{0},\, \sigma_t^2 \I\Bigr).
\end{equation}

Combining the deterministic mean and stochastic component yields the result.
\end{proof}

\begin{remark}[Equivalent reparameterisation]
The forward marginal can equivalently be written as:
\begin{equation}
\x_t = \alpha_t \x_0 + \sigma_t \epsilonb, \qquad \epsilonb \sim \mathcal{N}(\mathbf{0}, \I),
\end{equation}
which is the reparameterisation used in the main text (\cref{eq:interpolation-continuousdiff}). This form is computationally convenient as it requires sampling only a single noise vector $\epsilonb$ rather than $t$ independent vectors $\{\epsilonb_k\}_{k=1}^t$.
\end{remark}

\subsection{Discrete state space \label{apx:interpolation-marginals-discrete-state}
}

We derive the closed-form expression for the forward marginals under the convex combination transition matrices introduced in \cref{eq:interpolation-discretediff}.

\begin{lemma}[Cumulative transition matrix]\label{lem:cumulative-transition}
Let $\{\tilde{\Q}_i\}_{i=1}^t$ be one-step transition matrices of the form
\begin{equation}
\tilde{\Q}_i = \tilde{\alpha}_i\,\I + (1 - \tilde{\alpha}_i)\,\pnoise\,\mathbf{1}^\top,
\end{equation}
where $\tilde{\alpha}_i \in [0,1]$, $\I$ is the identity matrix, $\mathbf{1}$ is the all-ones column vector, and $\pnoise \in \Delta^{K-1}$ is the noise distribution. Then the cumulative transition matrix $\Q_t \coloneqq \prod_{i=1}^t \tilde{\Q}_i$ satisfies:
\begin{equation}\label{eq:cumulative-Q-form}
\Q_t = \alpha_t\,\I + (1 - \alpha_t)\,\pnoise\,\mathbf{1}^\top,
\end{equation}
where $\alpha_t \coloneqq \prod_{i=1}^t \tilde{\alpha}_i$. Consequently, the forward marginal is:
\begin{equation}
q(x_t \cond x_0) = \Cat\bigl(x_t;\, \Q_t\,\e_{x_0}\bigr) = \Cat\bigl(x_t;\, \alpha_t\,\e_{x_0} + (1-\alpha_t)\,\pnoise\bigr).
\end{equation}
\end{lemma}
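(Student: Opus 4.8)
The plan is to prove \eqref{eq:cumulative-Q-form} by induction on $t$, exploiting the single algebraic identity that makes the whole family of transition matrices tractable: writing $P \coloneqq \pnoise\mathbf{1}^\top$, one has $P^2 = \pnoise(\mathbf{1}^\top\pnoise)\mathbf{1}^\top = \pnoise\mathbf{1}^\top = P$, since $\pnoise \in \Delta^{K-1}$ implies $\mathbf{1}^\top\pnoise = 1$. Thus each $\tilde\Q_i = \tilde\alpha_i\I + (1-\tilde\alpha_i)P$ lives in the commutative algebra spanned by $\I$ and the idempotent $P$, and products stay in this two-dimensional span. (In particular the $\tilde\Q_i$ pairwise commute, so the order of the product $\prod_{i=1}^t\tilde\Q_i$ is immaterial.)

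For the base case $t=1$, the claim $\Q_1 = \tilde\Q_1 = \tilde\alpha_1\I + (1-\tilde\alpha_1)P$ holds by definition with $\alpha_1 = \tilde\alpha_1$. For the inductive step, I would assume $\Q_{t-1} = \alpha_{t-1}\I + (1-\alpha_{t-1})P$ and compute
\begin{align*}
\Q_t
&= \tilde\Q_t\,\Q_{t-1}
= \bigl(\tilde\alpha_t\I + (1-\tilde\alpha_t)P\bigr)\bigl(\alpha_{t-1}\I + (1-\alpha_{t-1})P\bigr)\\
&= \tilde\alpha_t\alpha_{t-1}\,\I
+ \bigl[\tilde\alpha_t(1-\alpha_{t-1}) + (1-\tilde\alpha_t)\alpha_{t-1} + (1-\tilde\alpha_t)(1-\alpha_{t-1})\bigr]P,
\end{align*}
where the three cross terms all collapse onto $P$ because $\I P = P\I = P^2 = P$. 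The identity coefficient is $\tilde\alpha_t\alpha_{t-1} = \prod_{i=1}^t\tilde\alpha_i = \alpha_t$, and the bracketed coefficient telescopes to $1 - \tilde\alpha_t\alpha_{t-1} = 1-\alpha_t$ (it is exactly the complement, since the four coefficients of $\I$ and $P$ in the expansion of a product of column-stochastic matrices of this form sum to one). This gives $\Q_t = \alpha_t\I + (1-\alpha_t)P$, completing the induction.

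Finally, the forward-marginal statement is an immediate corollary: applying $\Q_t$ to the one-hot vector $\e_{x_0}$ and using $\mathbf{1}^\top\e_{x_0} = 1$ yields $\Q_t\e_{x_0} = \alpha_t\e_{x_0} + (1-\alpha_t)\pnoise(\mathbf{1}^\top\e_{x_0}) = \alpha_t\e_{x_0} + (1-\alpha_t)\pnoise$, so $q(x_t\cond x_0) = \Cat\bigl(x_t;\,\Q_t\e_{x_0}\bigr) = \Cat\bigl(x_t;\,\alpha_t\e_{x_0} + (1-\alpha_t)\pnoise\bigr)$. There is no real obstacle here; the only point that warrants care is bookkeeping the coefficient of $P$ and noting explicitly why $P^2 = P$ (the probability-simplex constraint on $\pnoise$), which is what allows the rank-one part to stay rank-one rather than proliferating under composition.
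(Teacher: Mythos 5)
Your proof is correct and follows essentially the same route as the paper's: induction on $t$, with the inductive step resting on the idempotence $(\pnoise\mathbf{1}^\top)^2=\pnoise\mathbf{1}^\top$ coming from $\mathbf{1}^\top\pnoise=1$, and the coefficients of $\I$ and $\pnoise\mathbf{1}^\top$ collapsing to $\alpha_t$ and $1-\alpha_t$. Your explicit verification of the final marginal formula via $\mathbf{1}^\top\e_{x_0}=1$ is a small but welcome addition the paper leaves implicit.
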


\begin{proof}
We proceed by induction on $t$.

\paragraph{Base case:}
For $t = 1$, we have
$$
  \tilde \Q_1  = \tilde \alpha_1\,\I \;+\;\bigl(1-\tilde \alpha_1\bigr)\,\pnoise \mathbf{1}^T.
$$
This matches the required form with $\alpha_t = \tilde\alpha_1$ for $t=1$. Hence the base case holds.

\paragraph{Induction:}
Assume that for some $t \ge 1$,
\begin{align}
\Q_t  = \prod_{i=1}^t \tilde\Q_i  = \alpha_t\,\I \;+\;\bigl(1-\alpha_t\bigr)\, \pnoise\,\mathbf{1}^T
\end{align}
 
We need to show it holds for $t+1$.  We compute
$$
  \Q_{t+1}
   = \;\tilde \Q_{t+1} \Q_t 
   = 
  \Bigl(\tilde\alpha_{t+1}\,\I + \bigl(1-\tilde\alpha_{t+1}\bigr)\,\pnoise \mathbf{1}^T\Bigr)\,
  \Bigl(\alpha_t\,\I + \bigl(1-\alpha_{t}\bigr)\,\pnoise\mathbf{1}^T\Bigr).
$$

Expanding the product gives
$$
  \tilde\alpha_{t+1}\,\alpha_t\,\I
  \;+\; \tilde\alpha_{t+1}\bigl(1-\alpha_t\bigr)\,\pnoise\mathbf{1}^T
  \;+\; (1-\tilde\alpha_{t+1})\,\alpha_t\,\pnoise\mathbf{1}^T
  \;+\; \bigl(1-\tilde\alpha_{t+1}\bigr)\bigl(1-\alpha_t\bigr)
    \pnoise\mathbf{1}^T\pnoise\mathbf{1}^T.
$$

Since $\pnoise\mathbf{1}^T$ is a matrix whose columns are all equal to the vector $\pnoise$, it follows that 
$$
  \pnoise\mathbf{1}^T \pnoise\mathbf{1}^T
   =  \pnoise\underbrace{(\mathbf{1}^T \pnoise)}_{=1}\mathbf{1}^T =  
  \pnoise\mathbf{1}^T,
$$

because $\pnoise$ is a column probability vector and $\mathbf{1}^\top \pnoise$ sums its components, which equals $1$.

 Grouping terms yields
$$
  \Q_{t+1}
   = 
  \Bigl(\alpha_t\tilde\alpha_{t+1}\Bigr)\I
  \;+\;
  \bigl[\tilde\alpha_{t+1}(1-\alpha_t) + (1-\tilde\alpha_{t+1})\alpha_t + (1-\tilde\alpha_{t+1})(1-\alpha_t)\bigr]
  \pnoise\mathbf{1}^T.
$$
Using the definition of $\alpha_{t+1} = \tilde \alpha_{t+1}\alpha_t$, we recognise that the coefficient of $\pnoise\mathbf{1}^T$ simplifies to $1-\alpha_{t+1}$, ensuring that
$$
  \Q_{t+1}  =  \alpha_{t+1}\I + \bigl(1-\alpha_{t+1}\bigr)\pnoise\mathbf{1}^T.
$$
Thus we conclude the induction step, proving the claim.  
\end{proof}

\begin{remark}[Interpretation]
The cumulative transition matrix $\Q_t$ preserves the convex combination structure: with probability $\alpha_t$, the state remains unchanged from $x_0$; with probability $1 - \alpha_t$, it is resampled from the noise distribution $\pnoise$.
\end{remark}
\subsection{Forward multi-step transition in discrete state space \label{apx:multistep-woodburry-identity}}

\begin{lemma}[Invertibility and multi-step transitions]\label{lem:interpolation-inverse}
Let the transition matrix be parameterised as:
\begin{equation}
\Q_t \;=\; \alpha_t\,\I \;+\; (1-\alpha_t)\,\pnoise\,\mathbf{1}^\top,
\end{equation}
where $\alpha_t \in (0,1]$, $\pnoise \in \Delta^{K-1}$ is a probability vector satisfying $\mathbf{1}^\top \pnoise = 1$, and $\mathbf{1} \in \RR^K$ is the all-ones column vector. Then:
\begin{enumerate}
    \item $\Q_t$ is invertible for all $\alpha_t \neq 0$, with inverse:
    \begin{equation}\label{eq:Qt-inverse}
    \Q_t^{-1} \;=\; \frac{1}{\alpha_t}\Bigl(\I \;-\; (1-\alpha_t)\,\pnoise\,\mathbf{1}^\top\Bigr).
    \end{equation}
    
    \item The multi-step transition matrix from time $s$ to time $t$ (with $s < t$) is:
    \begin{equation}\label{eq:Qt|s-formula}
    \Q_{t\cond s} \;=\; \Q_s^{-1}\,\Q_t \;=\; \frac{\alpha_t}{\alpha_s}\,\I \;+\; \Bigl(1-\frac{\alpha_t}{\alpha_s}\Bigr)\,\pnoise\,\mathbf{1}^\top.
    \end{equation}
\end{enumerate}
\end{lemma}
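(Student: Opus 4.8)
The plan is to prove both parts by direct matrix algebra, exploiting the rank-one structure of the perturbation $\pnoise\mathbf{1}^\top$ together with the single key identity
\[
(\pnoise\mathbf{1}^\top)(\pnoise\mathbf{1}^\top) = \pnoise\,(\mathbf{1}^\top\pnoise)\,\mathbf{1}^\top = \pnoise\mathbf{1}^\top,
\]
which holds because $\pnoise\in\Delta^{K-1}$ forces $\mathbf{1}^\top\pnoise = 1$. This identity guarantees that every product of matrices of the form $a\,\I + b\,\pnoise\mathbf{1}^\top$ is again of the same form, so the entire argument reduces to bookkeeping the scalar coefficients $a,b$.

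For Part 1, rather than invoke the Woodbury/Sherman--Morrison formula, I would simply verify that the proposed matrix is a two-sided inverse. Set $\Q_t^{-1} \coloneqq \tfrac{1}{\alpha_t}\bigl(\I - (1-\alpha_t)\,\pnoise\mathbf{1}^\top\bigr)$ and expand the product $\Q_t\,\Q_t^{-1}$: the $\I$-coefficient is $\tfrac{\alpha_t}{\alpha_t}=1$, and collecting the $\pnoise\mathbf{1}^\top$-terms (using idempotency to collapse the cross term $(1-\alpha_t)\pnoise\mathbf{1}^\top\cdot(1-\alpha_t)\pnoise\mathbf{1}^\top$) the coefficient cancels to zero, leaving $\I$. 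Because all these matrices lie in the commutative algebra spanned by $\I$ and $\pnoise\mathbf{1}^\top$, the reversed product $\Q_t^{-1}\Q_t$ gives the same computation, so $\Q_t^{-1}$ is genuine; the only hypothesis used is $\alpha_t\neq 0$, needed for $\tfrac{1}{\alpha_t}$ to be defined.

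For Part 2, I would first record that by definition $\Q_t = \prod_{i=1}^t\tilde\Q_i = \bigl(\prod_{i=s+1}^t\tilde\Q_i\bigr)\bigl(\prod_{i=1}^s\tilde\Q_i\bigr) = \Q_{t\cond s}\,\Q_s$, and that $\Q_s$ is invertible by Part 1 (since $\alpha_s\neq 0$), so $\Q_{t\cond s} = \Q_t\Q_s^{-1} = \Q_s^{-1}\Q_t$, the last equality by commutativity. Then substitute the Part 1 formula for $\Q_s^{-1}$ and the cumulative form $\Q_t = \alpha_t\I + (1-\alpha_t)\pnoise\mathbf{1}^\top$ from \cref{lem:cumulative-transition}, expand, and apply idempotency once more: the $\I$-coefficient is $\alpha_t/\alpha_s$ and the $\pnoise\mathbf{1}^\top$-coefficient simplifies to $1-\alpha_t/\alpha_s$, which is exactly \eqref{eq:Qt|s-formula}. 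As a consistency check, this matches applying \cref{lem:cumulative-transition} directly to $\prod_{i=s+1}^t\tilde\Q_i$ with cumulative weight $\prod_{i=s+1}^t\tilde\alpha_i = \alpha_t/\alpha_s$.

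I do not expect a substantive obstacle: the computation is routine rank-one algebra. The only points requiring care are (i) that invertibility---and hence the very expression $\Q_s^{-1}\Q_t$---relies on $\alpha_s\neq 0$, which is the hypothesis $\alpha_t\in(0,1]$ read at time $s$; and (ii) that it is the \emph{idempotency} $(\pnoise\mathbf{1}^\top)^2 = \pnoise\mathbf{1}^\top$, not mere rank-one structure, that makes the coefficients close up. A one-line alternative for Part 1 via Sherman--Morrison would give the same inverse immediately, but the direct verification is self-contained.
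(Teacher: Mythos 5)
Your proposal is correct. It differs from the paper's proof only in Part~1: the paper derives $\Q_s^{-1}$ by instantiating the Sherman--Morrison--Woodbury formula with $\mathbf{u}=(1-\alpha_s)\pnoise$, $\mathbf{v}=\mathbf{1}$, and $\mathbf{v}^\top\mathbf{u}=1-\alpha_s$, whereas you verify the stated inverse directly by expanding $\Q_t\Q_t^{-1}$ and using the idempotency $(\pnoise\mathbf{1}^\top)^2=\pnoise\mathbf{1}^\top$. Both are sound; the Woodbury route \emph{produces} the inverse without guessing it, while your verification is self-contained, avoids citing an external identity, and correctly isolates the structural fact that makes everything work (idempotency, hence closure of the commutative algebra spanned by $\I$ and $\pnoise\mathbf{1}^\top$ under products --- not mere rank-one structure). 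For Part~2 the two computations coincide. You also add two points the paper leaves implicit and which genuinely tighten the argument: the factorisation $\Q_t=\Q_{t\cond s}\Q_s$ justifying why $\Q_{t\cond s}=\Q_t\Q_s^{-1}$ in the first place, and the commutativity that licenses writing this as $\Q_s^{-1}\Q_t$ as in the lemma statement. Your closing consistency check (applying the cumulative-transition lemma directly to $\prod_{i=s+1}^t\tilde\Q_i$ with weight $\alpha_t/\alpha_s$) is in fact an independent second proof of Part~2 that bypasses invertibility altogether.
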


\begin{proof}
We note $\Q_t = \alpha_t\,\I + (1-\alpha_t)\,\pnoise\,\mathbf{1}^\top$. The Sherman–Morrison–Woodbury formula \cite{golub2013matrix} states that for $\alpha \neq 0$ and vectors $\ub, \mathbf{v} \in \RR^K$:
$$
(\alpha\,\I + \mathbf{u}\mathbf{v}\transp)^{-1}
 = 
\frac{1}{\alpha}\,\I 
 - 
\frac{1}{\alpha^2}\;
\frac{\mathbf{u}\mathbf{v}\transp}{1 + \tfrac{1}{\alpha}\,\mathbf{v}^\top\mathbf{u}}.
$$

Here, $\alpha = \alpha_t$, $\mathbf{u} = (1-\alpha_t)\,\pnoise$, and $\mathbf{v} = \mathbf{1}$. Since $\mathbf{1}^\top \pnoise = 1$, we get
\begin{align}
\Q_s^{-1}
 = 
\bigl[\alpha_s\,\I + (1-\alpha_s)\,\pnoise\,\mathbf{1}^\top\bigr]^{-1}
 = 
\frac{1}{\alpha_s}\,\I 
 - 
\frac{(1-\alpha_s)}{\alpha_s}
\;\pnoise\,\mathbf{1}^\top.
\end{align}

We now compute:
\begin{align}
\Q_{s}^{-1}\,\Q_{t}
 = 
\Bigl(\tfrac{1}{\alpha_s}\,\I  -  \tfrac{1-\alpha_s}{\alpha_s}\,\pnoise\,\mathbf{1}^\top\Bigr)\,
\Bigl(\alpha_t\,\I \;+\; (1-\alpha_t)\,\pnoise\,\mathbf{1}^\top\Bigr).
\end{align}
By developing and simplifying, we get:
\begin{align}
\Q_s^{-1}\,\Q_t
 = 
\frac{\alpha_t}{\alpha_s}\,\I
\;+\;
\Bigl(1-\tfrac{\alpha_t}{\alpha_s}\Bigr)\,\pnoise\,\mathbf{1}^\top
\end{align}
\end{proof}

\section{Closed-form of the conditional reverse transition \label{apx:closed-form-reverse-kernel}}

\subsection{Continuous state space \label{apx:continuous-space-reverse-kernel}
}
We derive the closed-form expression for the reverse transition kernel $q(\x_{t-1} \cond \x_t, \x_0)$ in continuous state spaces, as stated in \cref{eq:discrete-time-reversal-cont}.

\begin{lemma}[Gaussian reverse kernel]\label{lem:gaussian-reverse-kernel}
Consider a discrete-time diffusion process in continuous state space $\RR^d$ with forward transitions:
\begin{equation}
q(\x_t \cond \x_{t-1}) = \mathcal{N}(\x_t \,;\, \alpha_{t|t-1} \x_{t-1},\, \sigma_{t|t-1}^2 \I),
\end{equation}
and forward marginals:
\begin{equation}
q(\x_t \cond \x_0) = \mathcal{N}(\x_t \,;\, \alpha_t \x_0,\, \sigma_t^2 \I),
\end{equation}
where $\alpha_t \coloneqq \prod_{j=1}^t \alpha_{j|j-1}$ and $\sigma_t^2 \coloneqq \sigma_t^2 - \alpha_{t|s}^2 \sigma_s^2$ for $s < t$.

Then the reverse kernel conditioned on $\x_0$ is Gaussian:
\begin{equation}
q(\x_{t-1} \cond \x_t, \x_0) = \mathcal{N}\left(\x_{t-1} \,;\, \mub_{t-1|t}(\x_t, \x_0),\, \sigma_{t-1|t}^2 \I\right),
\end{equation}
with mean and variance:
\begin{align}
\mub_{t-1|t}(\x_t, \x_0) &= \frac{\alpha_{t|t-1}\, \sigma_{t-1}^2}{\sigma_t^2}\, \x_t + \frac{\alpha_{t-1}\, \sigma_{t|t-1}^2}{\sigma_t^2}\, \x_0, \label{eq:reverse-mean-apx}\\[0.5em]
\sigma_{t-1|t} &= \sigma_{t|t-1} \cdot \frac{\sigma_{t-1}}{\sigma_t}. \label{eq:reverse-var-apx}
\end{align}

Moreover, the reverse kernel factorises across dimensions:
\begin{equation}
q(\x_{t-1} \cond \x_t, \x_0) = \prod_{k=1}^d q\left(x_{t-1}^{(k)} \cond x_t^{(k)}, x_0^{(k)}\right).
\end{equation}
\end{lemma}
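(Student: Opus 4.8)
The plan is to derive the Gaussian reverse kernel via Bayes' rule, exploiting the fact that the forward process is Markov so that $q(\x_{t-1}\cond\x_t,\x_0) = q(\x_t\cond\x_{t-1})\,q(\x_{t-1}\cond\x_0)/q(\x_t\cond\x_0)$, and all three factors on the right are known Gaussians from the forward construction (\cref{eq: forward transition distribution continuous diffusion,eq:interpolation-continuousdiff}). Since the state space is $\RR^d$ with isotropic covariances proportional to $\I$, the computation decouples across coordinates, which simultaneously yields the factorisation claim; so I would first note that it suffices to treat the scalar case $d=1$ and reinstate $\I$ at the end.

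First I would write out the three Gaussian densities and collect the exponent as a quadratic form in $\x_{t-1}$. Dropping terms independent of $\x_{t-1}$, the exponent is $-\tfrac12\bigl(\tfrac{\alpha_{t|t-1}^2}{\sigma_{t|t-1}^2} + \tfrac{1}{\sigma_{t-1}^2}\bigr)\|\x_{t-1}\|^2 + \bigl(\tfrac{\alpha_{t|t-1}}{\sigma_{t|t-1}^2}\x_t + \tfrac{\alpha_{t-1}}{\sigma_{t-1}^2}\x_0\bigr)^\top\x_{t-1} + \text{const}$. Completing the square identifies a Gaussian in $\x_{t-1}$ with precision $\tau^{-2} \coloneqq \tfrac{\alpha_{t|t-1}^2}{\sigma_{t|t-1}^2} + \tfrac{1}{\sigma_{t-1}^2}$ and mean $\tau^2\bigl(\tfrac{\alpha_{t|t-1}}{\sigma_{t|t-1}^2}\x_t + \tfrac{\alpha_{t-1}}{\sigma_{t-1}^2}\x_0\bigr)$. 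The remaining work is purely algebraic simplification: I would use the schedule identity $\sigma_{t|t-1}^2 = \sigma_t^2 - \alpha_{t|t-1}^2\sigma_{t-1}^2$ (the $s=t-1$ case of the relation in \cref{eq:t|s-dt-dp-transition}) together with $\alpha_t = \alpha_{t|t-1}\alpha_{t-1}$ to show the precision collapses to $\tau^{-2} = \sigma_t^2/(\sigma_{t|t-1}^2\sigma_{t-1}^2)$, hence $\tau^2 = \sigma_{t|t-1}^2\sigma_{t-1}^2/\sigma_t^2$, giving exactly \cref{eq:reverse-var-apx}; substituting $\tau^2$ back into the mean expression and again invoking $\sigma_{t|t-1}^2 = \sigma_t^2 - \alpha_{t|t-1}^2\sigma_{t-1}^2$ in the coefficient of $\x_0$ yields \cref{eq:reverse-mean-apx}.

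For the factorisation statement, since each forward density is a product over coordinates $k$ of scalar Gaussians depending only on $x^{(k)}$ (and the same scalar parameters across $k$), Bayes' rule applied dimension-wise gives $q(\x_{t-1}\cond\x_t,\x_0) = \prod_k q(x_{t-1}^{(k)}\cond x_t^{(k)},x_0^{(k)})$ immediately, and each factor is the scalar kernel just derived; the isotropic covariance in the multivariate statement is then just $\sigma_{t-1|t}^2\I$.

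I do not anticipate a genuine obstacle here — the only mild subtlety is bookkeeping with the noise-schedule identities, in particular making sure the relation $\sigma_{t|t-1}^2 = \sigma_t^2 - \alpha_{t|t-1}^2\sigma_{t-1}^2$ is used consistently (it is what makes the $\x_t$ and $\x_0$ coefficients sum in the ``right'' way and what makes the variance simplify). One should also record the harmless nondegeneracy assumption $\sigma_{t-1} > 0$ (equivalently $t-1 \ge 1$, since $\sigma_0 = 0$) so that $q(\x_{t-1}\cond\x_0)$ is a proper density and the division in Bayes' rule is legitimate; this is exactly why the diffusion loss sum in \cref{eq:discrete-training-objective} starts at $t=2$.
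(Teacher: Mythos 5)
Your proposal is correct and follows essentially the same route as the paper's proof: Bayes' rule writes $q(\x_{t-1}\cond\x_t,\x_0)$ as a ratio of three known Gaussians, completing the square in $\x_{t-1}$ gives the precision and mean, and the diagonal (isotropic) covariances yield the coordinate-wise factorisation. Your version is, if anything, slightly cleaner: you carry the general schedule identity $\sigma_{t|t-1}^2=\sigma_t^2-\alpha_{t|t-1}^2\sigma_{t-1}^2$ throughout (where the paper's appendix specialises to a variance-preserving schedule mid-proof), and your remark that $\sigma_{t-1}>0$, i.e.\ $t\ge 2$, is needed for the Bayes division is a worthwhile addition.
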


\begin{proof}
We proceed in three steps: (i) establish the Gaussian property via Bayes' rule, (ii) derive the parameters by completing the square, and (iii) verify dimensional factorisation.

\textbf{Part I: Establish Gaussian Property:}

The reverse kernel $q(\x_{t-1} \cond \x_t, \x_0)$ is Gaussian because, using Bayes' rule:
$$q(\x_{t-1} \cond \x_t, \x_0) = \frac{q(\x_t \cond \x_{t-1}, \x_0) \cdot q(\x_{t-1} \cond \x_0)}{q(\x_t \cond \x_0)}.$$
Where:
\begin{enumerate}
    \item $q(\x_t \cond \x_{t-1}, \x_0) = q(\x_t \cond \x_{t-1})$ is Gaussian with linear dependence on $\x_{t-1}$ in the mean.
    \item $q(\x_{t-1} \cond \x_0)$ is Gaussian (from the forward diffusion marginal).
    \item $q(\x_t \cond \x_0)$ is constant with respect to $\x_{t-1}$.
\end{enumerate}
The product of Gaussian distributions with linear relationships preserves the Gaussian property.

\textbf{Substitute known distributions:}

From the forward diffusion process, we know: 
\begin{align}q(\x_t \cond \x_{t-1}) &= \mathcal{N}(\x_t;\alpha_{t \cond t-1} \x_{t-1}, \sigma_{t \cond t-1}^2 \mathbf{I}) \qquad
q(\x_{t-1} \cond \x_0) = \mathcal{N}(\x_{t-1}; \alpha_{t-1} \x_0, \sigma_{t-1}^2 \mathbf{I})\\
q(\x_t \cond \x_0) &= \mathcal{N}(\x_t; \alpha_t \x_0,\sigma_t^2 \mathbf{I})
\end{align}
Where $\alpha_{t\cond s} = \prod_{k=s}^t \tilde\alpha_s$ , $\alpha_t=\alpha_{t\cond0}$, $\sigma_{t \cond t-1}^2 = 1 - \alpha_{t\cond t-1}$ and $\sigma_t^2 = 1-\alpha_t$ for all times $s<t$.

\begin{align}
&q\left(\x_{t-1} \cond \x_t, \x_0\right)   =q\left(\x_t \cond \x_{t-1}, \x_0\right) \frac{q\left(\x_{t-1} \cond \x_0\right)}{q\left(\x_t \cond \x_0\right)} \\
& \propto \exp \left(-\frac{1}{2}\left(\frac{\|\x_t-\tilde\alpha_t \x_{t-1}\|^2}{\sigma_{t \cond t-1}^2}+\frac{\|\x_{t-1}-\alpha_{t-1} \x_0\|^2}{\sigma^2_{t-1}}-\frac{\|\x_t-\alpha_t \x_0\|^2}{\sigma^2_t}\right)\right) \\
& =\exp \left(-\frac{1}{2}\left(\frac{\|\x_t\|^2-2 \tilde\alpha_t \x_t\transp   \x_{t-1}+\tilde\alpha_t \|\x_{t-1}\|^2}{\sigma_{t \cond t-1}^2}+\frac{\|\x_{t-1}\|^2-2 \alpha_{t-1} \x_0\transp \x_{t-1}+\alpha_{t-1} \|\x_0\|^2}{\sigma^2_{t-1}}-\frac{\|\x_t-\alpha_t \x_0\|^2}{\sigma^2_t}\right)\right)
\end{align}

\textbf{Part II: Complete the Square}

The exponent has the form $-\frac{1}{2}(A\|\x_{t-1}\|^2 + \B\x_{t-1} + C)$ where: 

\begin{align}
    A &= \frac{\tilde\alpha_t}{\sigma_{t \cond t-1}^2} + \frac{1}{\sigma_{t-1}^2} \qquad
    \B = -\frac{2\tilde\alpha_t \x_t}{\sigma_{t \cond t-1}^2} - \frac{2\alpha_{t-1} \x_0}{\sigma^2_{t-1}},\\
    C &= \frac{\|\x_t\|^2}{\sigma_{t \cond t-1}^2} + \frac{\alpha_{t-1} \|\x_0\|^2}{\sigma_{t-1}^2}-\frac{\|\x_t-\alpha_t \x_0\|^2}{\sigma^2_t}.
\end{align}

Completing the square: $A \|\x_{t-1}\|^2 + \B \x_{t-1} + C = \frac{\|\x_{t-1} - \mub\|^2}{\sigma^2} + \text{Const}_{\x_0,\x_t}$, where $\mub = -\frac{\B}{2A}$ and the variance $\sigma^2 = \frac{1}{A}$.

For the mean, we have:
\begin{align}
    \mub(\x_t, \x_0) = \frac{\frac{\tilde\alpha_t \x_t}{\sigma_{t \cond t-1}^2} + \frac{\alpha_{t-1} \x_0}{\sigma^2_{t-1}}}{\frac{\tilde\alpha_t}{\sigma_{t \cond t-1}^2} + \frac{1}{\sigma^2_{t-1}}}.
\end{align}
We first compute the denominator:
\begin{align}
\frac{\tilde\alpha_t}{\sigma_{t \cond t-1}^2} + \frac{1}{\sigma^2_{t-1}} &= \frac{\tilde\alpha_t(\sigma^2_{t-1}) + \sigma_{t \cond t-1}^2}{\sigma_{t \cond t-1}^2(\sigma^2_{t-1})} = \frac{\tilde\alpha_t - \tilde\alpha_t\alpha_{t-1} + \sigma_{t \cond t-1}^2}{\sigma_{t \cond t-1}^2(\sigma^2_{t-1})} = \frac{1 - \tilde\alpha_t\alpha_{t-1}}{\sigma_{t \cond t-1}^2(\sigma^2_{t-1})}\\
&= \frac{\sigma^2_t}{\sigma_{t \cond t-1}^2 (\sigma_{t-1}^2)}.
\end{align}
Noting $\mub_{t-1 \cond t}\coloneqq \mub$ and $\sigma_{t-1 \cond t}=\sigma$, we obtain:
\begin{align}
    \mub_{t-1 \cond t}(\x_t, \x_0) = \frac{\tilde\alpha_t \sigma_{t-1}^2 \x_t + \alpha_{t-1}\sigma_{t\cond t-1}^2 \x_0}{\sigma^2_t} \qquad 
    \sigma^2_{t-1 \cond t} = \frac{1}{\frac{\tilde\alpha_t}{\sigma_{t \cond t-1}^2} + \frac{1}{\sigma^2_{t-1}}} = \frac{\sigma_{t \cond t-1}^2(\sigma_{t-1}^2)}{\sigma^2_t}.
\end{align}

The reverse kernel is:
$$q(\x_{t-1} \cond \x_t, \x_0) = \mathcal{N}\left(\x_{t-1} ; \mub_{t-1 \cond t}(\x_t, \x_0), \sigma^2_{t-1 \cond t} \mathbf{I}\right)$$

with:
\begin{align}
    \mub_{t-1 \cond t}(\x_t, \x_0) = \frac{\alpha_{t\cond t-1} \sigma_{t-1}^2 \x_t + \alpha_{t-1}\sigma_{t\cond t-1}^2 \x_0}{\sigma^2_t} \qquad
\sigma_{t-1 \cond t}  = \sigma_{t\cond t-1} \frac{\sigma_{t-1}}{\sigma_t}
\end{align}

which establishes \cref{eq:reverse-mean-apx,eq:reverse-var-apx}.

\textbf{Part III: Dimensional factorisation:}
Since all covariance matrices in the forward process are diagonal (proportional to $\I$), the reverse kernel also has diagonal covariance $\sigma_{t-1|t}^2 \I$. Furthermore, the mean $\mub_{t-1|t}$ is computed component-wise:
\begin{equation}
\mu_{t-1|t}^{(k)}(x_t^{(k)}, x_0^{(k)}) = \frac{\alpha_{t|t-1}\, \sigma_{t-1}^2}{\sigma_t^2}\, x_t^{(k)} + \frac{\alpha_{t-1}\, \sigma_{t|t-1}^2}{\sigma_t^2}\, x_0^{(k)}.
\end{equation}

A multivariate Gaussian with diagonal covariance factorises into independent univariate Gaussians, yielding:
\begin{equation}
q(\x_{t-1} \cond \x_t, \x_0) = \prod_{k=1}^d \mathcal{N}\left(x_{t-1}^{(k)} \,;\, \mu_{t-1|t}^{(k)}(x_t^{(k)}, x_0^{(k)}),\, \sigma_{t-1|t}^2\right).
\end{equation}
\end{proof}

\subsection{Discrete state space \label{apx:discrete-space-reverse-kernel}}

We derive the closed-form expression for the reverse transition kernel $q(\x_{t-1} \cond \x_t, \x_0)$ in discrete state spaces, as stated in \cref{eq:discrete-time-reversal}.

\begin{lemma}[Factorisation of the reverse transitions]\label{lem:reverse-kernel-factorisation}
Let $\{\x_t\}_{t \in \{0,\ldots,T\}}$ be a discrete-time Markov chain on $\calX^d$ with i.i.d.\ forward noising across dimensions, governed by one-step transition matrices $\tilde{\Q}_t \in [0,1]^{K \times K}$. Define the cumulative transition matrices $\Q_t \coloneqq \prod_{i=1}^{t} \tilde{\Q}_i$ and $\Q_{t|s} \coloneqq \prod_{i=s+1}^{t} \tilde{\Q}_i$ for $s < t$ (so that $\Q_{t|t-1} = \tilde{\Q}_t$).

Then the reverse kernel conditioned on $\x_0$ factorises across dimensions:
\begin{equation}\label{eq:reverse-kernel-factorised}
q(\x_{t-1} \cond \x_t, \x_0) = \prod_{k=1}^d q\bigl(x_{t-1}^{(k)} \cond x_t^{(k)}, x_0^{(k)}\bigr),
\end{equation}
where each component is a categorical distribution:
\begin{equation}\label{eq:apx:reverse-kernel-categorical}
q\bigl(x_{t-1}^{(k)} \cond x_t^{(k)}, x_0^{(k)}\bigr) = \Cat \left(x^{(k)}_{t-1} \;;\; \p = \frac{\Q_{t|t-1}^\top \e_{x^{(k)}_t} \odot \Q_{t-1}\e_{x^{(k)}_0}}{\e_{x^{(k)}_t}^\top \Q_t \e_{x^{(k)}_0}}\right).
\end{equation}
Here $\e_{x} \in \{0,1\}^K$ denotes the one-hot encoding of state $x \in \calX$, and $\odot$ denotes the Hadamard (element-wise) product.
\end{lemma}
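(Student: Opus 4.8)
## Proof Proposal

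The plan is to prove the two claims --- dimensional factorisation \eqref{eq:reverse-kernel-factorised} and the per-coordinate closed form \eqref{eq:apx:reverse-kernel-categorical} --- by applying Bayes' rule to the reverse transition and then exploiting the product structure of the forward process across dimensions. First I would write, for the full vector-valued chain,
\begin{align*}
q(\x_{t-1}\cond \x_t,\x_0)
= \frac{q(\x_t\cond \x_{t-1})\,q(\x_{t-1}\cond \x_0)}{q(\x_t\cond \x_0)},
\end{align*}
using the Markov property $q(\x_t\cond \x_{t-1},\x_0)=q(\x_t\cond \x_{t-1})$. Since the forward noising is i.i.d.\ across coordinates (\cref{eq:dimensional_factorisation}), each of the three factors on the right is itself a product over $k=1,\dots,d$: $q(\x_t\cond \x_{t-1})=\prod_k q(x_t^{(k)}\cond x_{t-1}^{(k)})$, and similarly for $q(\x_{t-1}\cond \x_0)$ and $q(\x_t\cond \x_0)$ (the latter by \cref{eq:t|s-dt-dp-transition}). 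Hence the whole ratio splits as a product of per-coordinate ratios, which establishes \eqref{eq:reverse-kernel-factorised} with each factor equal to $q(x_{t-1}^{(k)}\cond x_t^{(k)},x_0^{(k)})=q(x_t^{(k)}\cond x_{t-1}^{(k)})\,q(x_{t-1}^{(k)}\cond x_0^{(k)})/q(x_t^{(k)}\cond x_0^{(k)})$.

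Next I would identify this scalar expression with the claimed categorical form. Dropping the superscript $k$ and writing states as one-hot vectors, the single-coordinate forward transitions are $q(x_t\cond x_{t-1})=\e_{x_t}^\top\Q_{t|t-1}\e_{x_{t-1}}$, $q(x_{t-1}\cond x_0)=\e_{x_{t-1}}^\top\Q_{t-1}\e_{x_0}$, and $q(x_t\cond x_0)=\e_{x_t}^\top\Q_t\e_{x_0}$, where I use $\Q_{t|t-1}=\tilde\Q_t$ and $\Q_t=\Q_{t|t-1}\Q_{t-1}$ by definition. The numerator $q(x_t\cond x_{t-1})\,q(x_{t-1}\cond x_0)$, viewed as a function of $x_{t-1}$, equals $(\e_{x_t}^\top\Q_{t|t-1}\e_{x_{t-1}})(\e_{x_{t-1}}^\top\Q_{t-1}\e_{x_0})$; since $\e_{x_t}^\top\Q_{t|t-1}\e_{x_{t-1}}=(\Q_{t|t-1}^\top\e_{x_t})^\top\e_{x_{t-1}}$ picks out the $x_{t-1}$-th entry of $\Q_{t|t-1}^\top\e_{x_t}$, and likewise $\e_{x_{t-1}}^\top\Q_{t-1}\e_{x_0}$ picks out the $x_{t-1}$-th entry of $\Q_{t-1}\e_{x_0}$, the product is exactly the $x_{t-1}$-th component of the Hadamard product $\Q_{t|t-1}^\top\e_{x_t}\odot\Q_{t-1}\e_{x_0}$. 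The denominator $\e_{x_t}^\top\Q_t\e_{x_0}$ is independent of $x_{t-1}$ and serves as the normalising constant, so the conditional distribution of $x_{t-1}$ is $\Cat$ with probability vector $(\Q_{t|t-1}^\top\e_{x_t}\odot\Q_{t-1}\e_{x_0})/(\e_{x_t}^\top\Q_t\e_{x_0})$, which is \eqref{eq:apx:reverse-kernel-categorical}. I would also note in passing that this vector automatically sums to one, since $\mathbf 1^\top(\Q_{t|t-1}^\top\e_{x_t}\odot\Q_{t-1}\e_{x_0})=\e_{x_t}^\top\Q_{t|t-1}\Q_{t-1}\e_{x_0}=\e_{x_t}^\top\Q_t\e_{x_0}$, a clean consistency check.

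I do not expect a genuine obstacle here; the argument is essentially bookkeeping. The one point requiring mild care is the identity $\sum_{x_{t-1}}q(x_t\cond x_{t-1})q(x_{t-1}\cond x_0)=q(x_t\cond x_0)$ used implicitly for the denominator --- this is the Chapman--Kolmogorov / matrix-product relation $\Q_{t|t-1}\Q_{t-1}=\Q_t$, which holds by the definition of the cumulative transition matrices, so the denominator is indeed the correct normaliser and no positivity issue arises as long as $\e_{x_t}^\top\Q_t\e_{x_0}>0$ (which we assume, as in the main text). A secondary subtlety is making sure the factorisation of $q(\x_t\cond\x_0)$ across dimensions is justified: this follows by induction from \cref{eq:dimensional_factorisation} (or directly from the Kronecker-product structure $\Qseq_t=\Q_t^{\otimes d}$ in \cref{eq:seq-transition-matrix}), and I would cite that rather than re-derive it. The whole proof is therefore: (i) Bayes' rule on the vector chain; (ii) factor each forward factor over coordinates; (iii) regroup into a product of per-coordinate ratios; (iv) rewrite each scalar ratio in one-hot/Hadamard notation.
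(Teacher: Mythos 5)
Your proposal is correct and follows essentially the same route as the paper's proof: Bayes' rule plus the Markov property, dimension-wise factorisation from the i.i.d.\ forward noising, and identification of each scalar ratio with the Hadamard-product categorical form under the column-indexing convention $[\tilde\Q_t]_{ij}=q(x_t=i\cond x_{t-1}=j)$. Your added normalisation check $\mathbf 1^\top(\Q_{t|t-1}^\top\e_{x_t}\odot\Q_{t-1}\e_{x_0})=\e_{x_t}^\top\Q_t\e_{x_0}$ is a nice touch not spelled out in the paper, but it does not change the argument.
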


\begin{proof}
Under i.i.d.\ forward noising, the joint transitions factorise:

\begin{align}
q\bigl(\x_t\cond\x_{t-1}\bigr)
=
\prod_{k=1}^d
q\bigl(x_t^{(k)}\cond x_{t-1}^{(k)}\bigr),
\qquad
q\bigl(\x_{t-1}\cond\x_0\bigr)
=
\prod_{k=1}^d
q\bigl(x_{t-1}^{(k)}\cond x_0^{(k)}\bigr),
\qquad
q\bigl(\x_t\cond\x_0\bigr)
=
\prod_{k=1}^d
q\bigl(x_t^{(k)}\cond x_0^{(k)}\bigr).
\end{align}
By the Markov property, $q(\x_t \cond \x_{t-1}, \x_0) = q(\x_t \cond \x_{t-1})$. Applying Bayes' rule:
$$
q(\x_{t-1}\cond\x_t,\x_0)
= \frac{q(\x_{t-1} \cond \x_0)}{q(\x_t \cond \x_0)}q(\x_t \cond \x_{t-1},\x_0)=
\prod_{k=1}^d
\frac{q\bigl(x_t^{(k)}\cond x_{t-1}^{(k)}\bigr)\,
      q\bigl(x_{t-1}^{(k)}\cond x_0^{(k)}\bigr)}
     {q\bigl(x_t^{(k)}\cond x_0^{(k)}\bigr)}
=
\prod_{k=1}^d
q\bigl(x_{t-1}^{(k)}\cond x_t^{(k)},x_0^{(k)}\bigr),
$$
We can now derive the single dimension reverse transition, for all $i,j \in \calX$:

\begin{equation}
    \begin{aligned}
q\left(x_{t-1}=i \cond x_t=j, x_0\right) &=\frac{q\left(x_t=j \cond x_{t-1}=i\right) q\left(x_{t-1}=i \cond x_0\right)}{q\left(x_t=j \cond x_0\right)} \\
& =\frac{[\tilde{\Q}_t]_{j,i} \, [\Q_{t-1} \e_{x_0}]_i}{[\Q_t \e_{x_0}]_j} \\
\implies q(x_{t-1} \cond x_t, x_0)  &=  \frac{\ \tilde{\Q}_t\transp  \e_{x_t}\odot  \Q_{t-1}\e_{x_0}}{\e_{x_t}\transp \Q_t \e_{x_0} }
\end{aligned}
\end{equation}

Where $\e_{x_t}$ the one hot vector representation of $x_t$ and $\Q_{t\cond s} = \prod_{k=s}^t \tilde \Q_k$ and $\Q_{t\cond t-1}=\tilde \Q_t$.
\end{proof}

\emph{Verification of the formula:} For $x_t = j$ (so $\e_{x_t} = \e_j$):
\begin{enumerate}
\item $\Q_{t|t-1}^\top \e_j$ is a $K \times 1$ column vector whose $(i)$-th component is $[\tilde{\Q}_t]_{j,i} = q(x_t = j \cond x_{t-1} = i)$ (since $\e_{x_t}$ picks out column $j$).

\item $\Q_{t-1} \e_{x_0}$ as a $K\times 1$ row vector whose $(i)$-th component is $q(x_{t-1} = i \cond x_0)$.  

\item The element-wise product $\odot$ of these two column vectors: the $(i)$-th coordinate becomes $q(x_t = j \cond x_{t-1} = i) \cdot q(x_{t-1} = i \cond x_0)$.  

\item Finally, $\e_{x_t}^\top \Q_t \e_{x_0} = [\Q_t]_{j, x_0} = q(x_t = j \cond x_0)$ because $\e_{x_t}\transp$ picks out coordinate $x_t=j$.
\end{enumerate}

\textbf{Conclusion:} The reverse kernel factorises across dimensions, with each component being a categorical distribution determined by the corresponding transition matrices and initial conditions. This completes the proof of the factorisation property in discrete state space.

\section{Clean data estimation via learned posteriors \label{apx:reversal-austin}}

This appendix derives the token-level reverse transition formula used in \cref{eq:reversal-austin,eq:reversal-austin-nn}, which enables parameterisation of the reverse process through a learned posterior $p_\theta(\x_0 \cond \x_t)$.

\begin{lemma}[Token-level reverse transition]\label{lem:token-reverse}
Consider a discrete-time Markov diffusion process $\{\x_t\}_{t\in\{0,\dots,T\}}$, where $\x_t = (x^{(1)}_t, \dots, x^{(d)}_t) \in \calX^d$. Assume the forward process factorises independently across coordinates:
\begin{equation}
q(\x_t \cond \x_{t-1}) = \prod_{k=1}^d q(x_t^{(k)} \cond x_{t-1}^{(k)}), 
\qquad
q(\x_t \cond \x_0) = \prod_{k=1}^d q(x_t^{(k)} \cond x_0^{(k)}).
\end{equation}
Then the token-level reverse transition satisfies:
\begin{equation}\label{eq:token-reverse-formula}
q(x_{t-1}^{(k)} \cond \x_t) = q(x_t^{(k)} \cond x_{t-1}^{(k)}) \sum_{x_0^{(k)}} \frac{q(x_{t-1}^{(k)} \cond x_0^{(k)})}{q(x_t^{(k)} \cond x_0^{(k)})} \, q(x_0^{(k)} \cond \x_t),
\end{equation}
where $q(x_0^{(k)} \cond \x_t) = \sum_{\x_0^{\setminus k}} q(\x_0 \cond \x_t)$ denotes the marginal posterior over the $k$-th coordinate.
\end{lemma}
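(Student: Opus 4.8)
The plan is to reproduce, with full rigour, the telescoping argument already sketched for \cref{eq:reversal-austin}. First I would start from the exact (intractable) reverse transition and marginalise out all coordinates except the $k$-th, writing $q(x_{t-1}^{(k)} \cond \x_t) = \sum_{\x_{t-1}^{\setminus k}} q(\x_{t-1} \cond \x_t)$. Then I would apply Bayes' rule in the form $q(\x_{t-1} \cond \x_t) = q(\x_t \cond \x_{t-1})\, q(\x_{t-1})/q(\x_t)$ (valid wherever $q(\x_t) > 0$), and expand the marginal ratio by inserting the clean data: combining $q(\x_{t-1}) = \sum_{\x_0} q(\x_{t-1}\cond\x_0)\, q(\x_0)$ with $q(\x_0) = q(\x_0\cond\x_t)\, q(\x_t)/q(\x_t\cond\x_0)$ yields the identity $q(\x_{t-1})/q(\x_t) = \sum_{\x_0} \frac{q(\x_{t-1}\cond\x_0)}{q(\x_t\cond\x_0)}\, q(\x_0\cond\x_t)$. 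This is exactly the third line of the derivation sketch preceding \cref{eq:reversal-austin}.

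Next I would substitute the standing i.i.d.-noising factorisations of $q(\x_t\cond\x_{t-1})$, $q(\x_{t-1}\cond\x_0)$ and $q(\x_t\cond\x_0)$ into the summand, so that it becomes a product over all $d$ coordinates of the factor $\frac{q(x_t^{(j)}\cond x_{t-1}^{(j)})\, q(x_{t-1}^{(j)}\cond x_0^{(j)})}{q(x_t^{(j)}\cond x_0^{(j)})}$. I would then split off the $j=k$ term, interchange the finite sums over $\x_0$ and over $\x_{t-1}^{\setminus k}$, and push the sum over each $x_{t-1}^{(j)}$ with $j\neq k$ inside. The key collapse is the single-coordinate Chapman--Kolmogorov identity $\sum_{x_{t-1}^{(j)}} q(x_t^{(j)}\cond x_{t-1}^{(j)})\, q(x_{t-1}^{(j)}\cond x_0^{(j)}) = q(x_t^{(j)}\cond x_0^{(j)})$, which makes each $j\neq k$ factor reduce to $1$. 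What remains is $\sum_{\x_0} q(\x_0\cond\x_t)\, \frac{q(x_t^{(k)}\cond x_{t-1}^{(k)})\, q(x_{t-1}^{(k)}\cond x_0^{(k)})}{q(x_t^{(k)}\cond x_0^{(k)})}$, and summing $q(\x_0\cond\x_t)$ over $\x_0^{\setminus k}$ produces the marginal posterior $q(x_0^{(k)}\cond\x_t)$, giving exactly \cref{eq:token-reverse-formula}. The factorisation \cref{eq:reverse-kernel-factorised} (established earlier via Bayes' rule) can either be cited or re-derived in passing at the start.

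The main obstacle I anticipate is purely bookkeeping rather than conceptual: one must verify that after interchanging the sums the summand genuinely factorises as a product of single-coordinate functions with disjoint summation variables, so that the $j\neq k$ collapse is legitimate, and that the posterior $q(\x_0\cond\x_t)$---which does \emph{not} factorise across coordinates---is summed only at the very end, after all the $x_{t-1}^{(j)}$ sums have been carried out. A minor point worth recording is that every manipulation involves only finite sums over $\calX=\{1,\dots,K\}$, so no convergence or interchange-of-limits issues arise, and the identity holds pointwise wherever the conditioning events have positive probability; degenerate states with $q(x_t^{(k)}\cond x_0^{(k)})=0$ lie outside the support and contribute nothing.
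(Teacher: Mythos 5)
Your proposal is correct and follows essentially the same route as the paper's own proof in \cref{apx:reversal-austin}: marginalise over $\x_{t-1}^{\setminus k}$, apply Bayes' rule, expand the marginal ratio through $\x_0$, factorise under i.i.d.\ noising, collapse the $j\neq k$ sums via the single-coordinate Chapman--Kolmogorov identity, and finally marginalise the posterior to obtain $q(x_0^{(k)}\cond\x_t)$. The bookkeeping points you flag (disjoint summation variables, deferring the sum over $\x_0^{\setminus k}$ to the end, finiteness of all sums) are exactly the right things to check and are consistent with the paper's derivation.
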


\paragraph{Proof.} 

The proof is algebraic, relying on Bayes' rule and the coordinate-wise independence of the forward process and Markov property. We use the notation $\x^{\setminus k} \coloneqq (x^{(1)}, \dots, x^{(k-1)}, x^{(k+1)}, \dots, x^{(d)})$ to denote all coordinates except the $k$-th.

\begin{align*} q\left(x_{t-1}^{(k)}\,\cond\, \x_t\right) &= \sum_{\x_{t-1}^{\setminus k}} q\left(\x_{t-1}\,\cond\, \x_t\right) \\
&= \sum_{\x_{t-1}^{\setminus k}} q\left(\x_t\,\cond\, \x_{t-1}\right)\, \frac{q\left(\x_{t-1}\right)}{q\left(\x_t\right)} \\
&= \sum_{\x_{t-1}^{\setminus k}} q\left(\x_t\,\cond\, \x_{t-1}\right)\, \sum_{\x_0}\frac{q\left(\x_{t-1}\,\cond\, \x_0\right)q(\x_0)}{q\left(\x_t\right)} \\
&= \sum_{\x_{t-1}^{\setminus k}} q\left(\x_t\,\cond\, \x_{t-1}\right)\, \sum_{\x_0}\frac{q\left(\x_{t-1}\,\cond\, \x_0\right)}{q\left(\x_t\,\cond\, \x_0\right)}\,q(\x_0\,\cond\, \x_t) \\
&= \sum_{\x_0} q(\x_0\,\cond\, \x_t)\; \sum_{\x_{t-1}^{\setminus k}} \frac{q\left(\x_t\,\cond\, \x_{t-1}\right)\,q\left(\x_{t-1}\,\cond\, \x_0\right)}{q\left(\x_t\,\cond\, \x_0\right)} \\
&= \sum_{\x_0} q(\x_0\,\cond\, \x_t)\; \sum_{\x_{t-1}^{\setminus k}} \frac{\prod_{j} q\left(x_t^{(j)}\,\cond\, x_{t-1}^{(j)}\right)\; \prod_{j} q\left(x_{t-1}^{(j)}\,\cond\, x_{0}^{(j)}\right)} {\prod_{j} q\left(x_t^{(j)}\,\cond\, x_{0}^{(j)}\right)} \\
&= \sum_{\x_0} q(\x_0\,\cond\, \x_t)\; \sum_{\x_{t-1}^{\setminus k}} \biggl[ \frac{q\left(x_t^{(k)}\,\cond\, x_{t-1}^{(k)}\right)\,q\left(x_{t-1}^{(k)}\,\cond\, x_{0}^{(k)}\right)} {q\left(x_t^{(k)}\,\cond\, x_{0}^{(k)}\right)} \prod_{j\neq k} \frac{q\left(x_t^{(j)}\,\cond\, x_{t-1}^{(j)}\right)\,q\left(x_{t-1}^{(j)}\,\cond\, x_{0}^{(j)}\right)} {q\left(x_t^{(j)}\,\cond\, x_{0}^{(j)}\right)} \biggr] \\
&= \sum_{\x_0} q(\x_0\,\cond\, \x_t)\; \biggl[ \frac{q\left(x_t^{(k)}\,\cond\, x_{t-1}^{(k)}\right)\,q\left(x_{t-1}^{(k)}\,\cond\, x_{0}^{(k)}\right)} {q\left(x_t^{(k)}\,\cond\, x_{0}^{(k)}\right)} \prod_{j\neq k} \sum_{x_{t-1}^{(j)}} \frac{q\left(x_t^{(j)}\,\cond\, x_{t-1}^{(j)}\right)\,q\left(x_{t-1}^{(j)}\,\cond\, x_{0}^{(j)}\right)} {q\left(x_t^{(j)}\,\cond\, x_{0}^{(j)}\right)} \biggr] \\
&= \sum_{\x_0} q(\x_0\,\cond\, \x_t)\; \biggl[ \frac{q\left(x_t^{(k)}\,\cond\, x_{t-1}^{(k)}\right)\,q\left(x_{t-1}^{(k)}\,\cond\, x_{0}^{(k)}\right)} {q\left(x_t^{(k)}\,\cond\, x_{0}^{(k)}\right)} \biggr] \\
&= q\left(x_t^{(k)}\,\cond\, x_{t-1}^{(k)}\right)\; \sum_{\x_0} q(\x_0\,\cond\, \x_t)\, \frac{q\left(x_{t-1}^{(k)}\,\cond\, x_{0}^{(k)}\right)} {q\left(x_t^{(k)}\,\cond\, x_{0}^{(k)}\right)} \\
&= q\left(x_t^{(k)}\,\cond\, x_{t-1}^{(k)}\right)\; \sum_{x_0^{(k)}} \frac{q\left(x_{t-1}^{(k)}\,\cond\, x_0^{(k)}\right)} {q\left(x_t^{(k)}\,\cond\, x_0^{(k)}\right)}\, q\left(x_0^{(k)}\,\cond\, \x_t\right). \end{align*}
This concludes the proof.

\section{From discrete to continuous-time in discrete-state space \label{apx:discrete-to-continuous-time-discrete-state}
}

\label{apx:discrete-to-continuous-time-discrete-state}

To derive a continuous-time diffusion process from a discrete-time Markov chain, we consider a process discretised over a fixed time interval $[0,1]$ with increasingly fine time steps $\Delta t \coloneqq 1/T$. The continuous-time process is obtained by taking the limit as $\Delta t \to 0$, which defines a rate matrix $\R_t$ characterising the infinitesimal dynamics.

\begin{lemma}[Rate matrix for interpolating transitions]\label{lem:rate-matrix-interpolation}
Consider a discrete state space diffusion process with cumulative transition matrix parameterised as an interpolation (\cref{eq:interpolation-discretediff}):
\begin{equation}
\Q_t = \alpha_t \I + (1-\alpha_t)\,\pnoise \mathbf{1}^\top,
\end{equation}
where $\alpha_t \in [0,1]$ is a differentiable, non-increasing noise schedule and $\pnoise \in \Delta^{K-1}$ is the noise distribution. The conditional transition matrix from time $t-\Delta t$ to time $t$ is:
\begin{equation}
\Q_{t|t-\Delta t} = \frac{\alpha_t}{\alpha_{t-\Delta t}}\,\I + \left(1 - \frac{\alpha_t}{\alpha_{t-\Delta t}}\right)\pnoise \mathbf{1}^\top.
\end{equation}
Then the rate matrix governing the continuous-time limit is:
\begin{equation}\label{eq:rate-matrix-result}
\R_t \;\coloneqq\; \lim_{\Delta t \to 0} \frac{\Q_{t|t-\Delta t} - \I}{\Delta t} \;=\; \frac{\alpha'_t}{\alpha_t}\bigl(\I - \pnoise\mathbf{1}^\top\bigr),
\end{equation}
where $\alpha'_t = \frac{\diff}{\diff t}\alpha_t < 0$.
\end{lemma}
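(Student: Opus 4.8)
The plan is to compute the rate matrix directly from its definition \eqref{eq:rate-matrix-result} using the multi-step transition formula already established in \cref{lem:interpolation-inverse}. First I would record that, by \cref{eq:Qt|s-formula} with $s = t-\Delta t$,
\begin{equation*}
\Q_{t\cond t-\Delta t} = \frac{\alpha_t}{\alpha_{t-\Delta t}}\,\I + \left(1 - \frac{\alpha_t}{\alpha_{t-\Delta t}}\right)\pnoise\mathbf{1}^\top,
\end{equation*}
so that
\begin{equation*}
\frac{\Q_{t\cond t-\Delta t} - \I}{\Delta t} = \frac{1}{\Delta t}\left(\frac{\alpha_t}{\alpha_{t-\Delta t}} - 1\right)\bigl(\I - \pnoise\mathbf{1}^\top\bigr).
\end{equation*}
The matrix factor $\I - \pnoise\mathbf{1}^\top$ is independent of $\Delta t$, so the whole problem reduces to evaluating the scalar limit $\lim_{\Delta t\to 0}\frac{1}{\Delta t}\bigl(\tfrac{\alpha_t}{\alpha_{t-\Delta t}} - 1\bigr)$.

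Next I would carry out that scalar computation. Writing $\frac{\alpha_t}{\alpha_{t-\Delta t}} - 1 = \frac{\alpha_t - \alpha_{t-\Delta t}}{\alpha_{t-\Delta t}}$, one gets
\begin{equation*}
\frac{1}{\Delta t}\left(\frac{\alpha_t}{\alpha_{t-\Delta t}} - 1\right) = \frac{1}{\alpha_{t-\Delta t}}\cdot\frac{\alpha_t - \alpha_{t-\Delta t}}{\Delta t} \xrightarrow[\Delta t\to 0]{} \frac{\alpha'_t}{\alpha_t},
\end{equation*}
using differentiability of the schedule $\alpha_t$ (so the difference quotient tends to $\alpha'_t$) together with continuity and strict positivity of $\alpha_t$ (so $\alpha_{t-\Delta t}\to\alpha_t\neq 0$). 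This immediately gives
$\R_t = \frac{\alpha'_t}{\alpha_t}\bigl(\I - \pnoise\mathbf{1}^\top\bigr)$, which is \eqref{eq:rate-matrix-result}. One should also check the sign remark: since $\alpha_t$ is non-increasing, $\alpha'_t \le 0$, so the off-diagonal entries of $\R_t$, namely $-\frac{\alpha'_t}{\alpha_t}[\pnoise\mathbf{1}^\top]_{ij} = -\frac{\alpha'_t}{\alpha_t}[\pnoise]_i \ge 0$, are non-negative, and the columns of $\I - \pnoise\mathbf{1}^\top$ sum to $1 - \mathbf{1}^\top\pnoise = 0$, confirming $\R_t$ is a valid CTMC rate matrix.

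There is no real obstacle here; the result is essentially a one-line consequence of the already-proved closed form for $\Q_{t\cond s}$ and a first-order Taylor expansion. The only point deserving care is the interchange of the $\Delta t\to 0$ limit with the (fixed, $\Delta t$-independent) matrix structure, which is trivial since $\I - \pnoise\mathbf{1}^\top$ factors out cleanly — the situation would be more delicate for a general transition matrix where the $O(\Delta t)$ correction need not be a fixed matrix times a scalar. I would present the argument in the two displayed steps above, then append the sign/divergence-free verification as a short concluding paragraph, and note that this confirms the consistency of \eqref{eq:def-transition-rate-matrix} with the interpolation parameterisation \eqref{eq:interpolation-discretediff}.
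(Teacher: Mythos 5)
Your proposal is correct and follows essentially the same route as the paper's proof: factor out $\I - \pnoise\mathbf{1}^\top$ from $\Q_{t\cond t-\Delta t}-\I$, reduce to the scalar difference quotient $\frac{1}{\alpha_{t-\Delta t}}\cdot\frac{\alpha_t-\alpha_{t-\Delta t}}{\Delta t}\to\frac{\alpha'_t}{\alpha_t}$, and conclude. Your closing verification of off-diagonal non-negativity and the column-sum-to-zero property matches what the paper records in a remark immediately after the lemma.
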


\begin{proof}
Substituting the expression for $\Q_{t|t-\Delta t}$ into the rate matrix definition:

\begin{align}
    \frac{\Q_{t|t-\Delta t} -\I}{\Delta t}&=\frac{\alpha_{t|t-\Delta t}\I + (1-\alpha_{t|t-\Delta t})\pnoise \mathbf{1}\transp  - \I}{\Delta t}\\
    &= \frac{(1-\frac{\alpha_{t}}{\alpha_{t-\Delta t}})(\pnoise \mathbf{1}\transp  - \I)}{\Delta t}\\
    &= \frac{\alpha_{t-\Delta t}-\alpha_t}{\Delta t}\frac{(\pnoise \mathbf{1}\transp  - \I)}{\alpha_{t-\Delta t}}
\end{align}
As $\Delta t \to 0$:
 \begin{align}
     \lim_{\Delta t \rightarrow 0}\frac{\alpha_{t-\Delta t}-\alpha_t}{\Delta t}=-\alpha'_t\\
     \lim_{\Delta t \rightarrow 0} \alpha_{t-\Delta t} = \alpha_t
 \end{align}
Finally:

\begin{align}
    \R_t &= \lim_{\Delta \rightarrow 0} \frac{\alpha_{t-\Delta t}-\alpha_t}{\Delta t}\frac{(\pnoise \mathbf{1}\transp  - \I)}{\alpha_{t-\Delta t}}\\
    &= \frac{\alpha'_t}{\alpha_t}(\I - \pnoise\mathbf{1}\transp).
\end{align}
\end{proof}

\begin{remark}[Properties of the rate matrix]
The rate matrix $\R_t$ satisfies the required properties for a CTMC generator:
\begin{enumerate}[label=(\roman*),leftmargin=*,itemsep=2pt]
\item \emph{Off-diagonal non-negativity}: For $i \neq j$, $[\R_t]_{ij} = \frac{\alpha'_t}{\alpha_t}[-\pnoise]_i \geq 0$ since $\alpha'_t \leq 0$ and $\alpha_t > 0$.
\item \emph{Column-sum-to-zero}: $\mathbf{1}^\top \R_t = \frac{\alpha'_t}{\alpha_t}\mathbf{1}^\top(\I - \pnoise\mathbf{1}^\top) = \frac{\alpha'_t}{\alpha_t}(\mathbf{1}^\top - \mathbf{1}^\top) = \mathbf{0}^\top$.
\end{enumerate}
\end{remark}

\section{Sequence-level rate matrix\label{apx:rate-matrix-sequence-lvl}}

\paragraph{Setup.} 
Let $\{\x_t\}_{t \in [0,1]}$ be a continuous-time Markov chain (CTMC) on the product space $\calX^d = \calX^{(1)} \times \cdots \times \calX^{(d)}$, where each $\calX^{(k)}$ has cardinality $K$. We adopt the ML convention (destination = row, source = column) for all transition and rate matrices. Let $\R_t^{(k)} \in \RR^{K \times K}$ denote the rate matrix for coordinate $k$, satisfying:
\begin{enumerate}
    \item $[\R_t^{(k)}]_{ij} \geq 0$ for $i \neq j$ (non-negative off-diagonal entries),
    \item $\mathbf{1}^\top \R_t^{(k)} = \mathbf{0}^\top$ (columns sum to zero).
\end{enumerate}

\begin{lemma}[Kronecker sum structure of sequence-level rate matrices]\label{lem:kronecker-sum-rate}
Assume an i.i.d.\ forward noising process. Then the sequence-level rate matrix $\Rseq_t \in \RR^{K^d \times K^d}$ is the \emph{Kronecker sum} of the per-coordinate rate matrices:
\begin{equation}
\Rseq_t
\;=\; \bigoplus_{k=1}^{d} \R_t^{(k)}
\;\coloneqq\; \sum_{k=1}^{d} \Bigl(\I^{\otimes(k-1)} \otimes \R_t^{(k)} \otimes \I^{\otimes(d-k)}\Bigr),
\label{eq:kronecker-sum-rate-apx}
\end{equation}
where $\I \in \RR^{K \times K}$ is the identity matrix and $\I^{\otimes 0} \coloneqq 1$ by convention.
\end{lemma}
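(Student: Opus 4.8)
The plan is to work directly with the defining limit of the rate matrix, specialised to the product space $\calX^d$, and read off the Kronecker-sum structure from the fact that over an interval $\Delta t$ only single-coordinate jumps contribute at order $\Delta t$. First I would write the sequence-level one-step transition matrix. Under the i.i.d.\ noising assumption, the joint transition factorises across coordinates, so $\Qseq_{t\cond t-\Delta t} = \bigotimes_{k=1}^d \Q^{(k)}_{t\cond t-\Delta t}$, where each per-coordinate factor admits the expansion $\Q^{(k)}_{t\cond t-\Delta t} = \I + \R^{(k)}_t\,\Delta t + o(\Delta t)$ from \cref{eq:def-transition-rate-matrix}. Substituting this into the Kronecker product of $d$ factors and expanding, I would collect terms by their power of $\Delta t$: the zeroth-order term is $\I^{\otimes d}$; the first-order terms are exactly the $d$ summands $\I^{\otimes(k-1)}\otimes\R^{(k)}_t\otimes\I^{\otimes(d-k)}$ (one for each coordinate in which the ``$\R^{(k)}_t\,\Delta t$'' factor is picked while all others contribute $\I$); and every term involving two or more $\R^{(k)}_t$ factors carries $\Delta t^2$ or higher and is therefore $o(\Delta t)$. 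Hence $\Qseq_{t\cond t-\Delta t} - \I^{\otimes d} = \bigl(\sum_{k=1}^d \I^{\otimes(k-1)}\otimes\R^{(k)}_t\otimes\I^{\otimes(d-k)}\bigr)\Delta t + o(\Delta t)$.

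Next I would divide by $\Delta t$ and pass to the limit $\Delta t \to 0$, using the definition $\Rseq_t \coloneqq \lim_{\Delta t\to 0}(\Qseq_{t\cond t-\Delta t}-\I)/\Delta t$ from \cref{eq:def-transition-rate-matrix}; this yields precisely \cref{eq:kronecker-sum-rate-apx}. To make the argument self-contained I would also verify (briefly) that the resulting matrix is a legitimate CTMC generator: off-diagonal non-negativity follows because a Kronecker product of identity blocks with one $\R^{(k)}_t$ block has off-diagonal entries that are either zero or equal to an off-diagonal entry $[\R^{(k)}_t]_{ij}\ge 0$, and distinct summands have disjoint off-diagonal supports (they change different single coordinates), so there is no cancellation; the column-sum-to-zero property follows from $\mathbf 1_{K^d}^\top = \mathbf 1_K^{\otimes d}$ together with $(\mathbf 1_K^\top A)\otimes(\mathbf 1_K^\top B) = \mathbf 1_K^\top A \otimes \mathbf 1_K^\top B$ and $\mathbf 1_K^\top \R^{(k)}_t = \mathbf 0_K^\top$, applied term by term. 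Alternatively, one can simply invoke the standard identity $e^{A\oplus B} = e^A \otimes e^B$ to confirm that the transition semigroup of a Kronecker-sum generator is the tensor product of the per-coordinate semigroups, which is the continuous-time restatement of coordinate-wise independence.

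I would also include the corollary-style consequence used elsewhere in the paper: writing out $[\Rseq_t]_{\y,\x}$ entrywise, the Kronecker-sum form says $\Rseq_t(\y,\x) = \sum_{k=1}^d \R^{(k)}_t(y^{(k)},x^{(k)})\,\prod_{j\ne k}\delta_{y^{(j)},x^{(j)}}$, so $\Rseq_t(\y,\x)=0$ unless $\y$ and $\x$ agree in all but at most one coordinate (Hamming distance at most one) — this is the ``only single-coordinate jumps occur at $O(\Delta t)$'' statement, and it is what reduces sums over $\calX^d$ to sums over local moves in the ELBO and sampler derivations.

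The main obstacle, such as it is, is purely bookkeeping rather than conceptual: one must be careful with the ordering of the tensor factors so that ``coordinate $k$'' consistently occupies slot $k$, and must track that the cross terms really are $O(\Delta t^2)$ even though there are $\binom{d}{2}$ of them — finitely many, so their sum is still $o(\Delta t)$, but this requires $d$ to be fixed (independent of $\Delta t$), which it is. A secondary subtlety is justifying that the per-coordinate expansion $\Q^{(k)}_{t\cond t-\Delta t} = \I + \R^{(k)}_t\Delta t + o(\Delta t)$ may be multiplied and the $o(\Delta t)$ remainders combined; this is immediate since we are multiplying a fixed finite number of $K\times K$ matrices whose entries are bounded and whose remainders are uniformly $o(\Delta t)$, so no uniformity-in-dimension issues arise. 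No deeper difficulty is expected.
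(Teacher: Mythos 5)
Your proposal is correct and follows essentially the same route as the paper's proof: factorise the one-step transition matrix as a Kronecker product under i.i.d.\ noising, expand each factor as $\I + \R^{(k)}_t\,\Delta t + o(\Delta t)$, keep only first-order terms, and pass to the limit in the defining quotient. The extra checks you include (off-diagonal non-negativity, column sums to zero, the entrywise Hamming-distance-one characterisation) are not in the paper's proof of the lemma itself but appear as a corollary and remark immediately after it, so they are welcome rather than redundant.
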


\begin{proof}
We provide both geometric intuition and algebraic derivation.
\paragraph{Geometric intuition (d=2).}
List joint states as a grid of pairs $(i,j)\in\calX^{(1)}\times\calX^{(2)}$.
\begin{enumerate}
\item Horizontal arrows $(i,j)\to(i',j)$ are caused by jumps in coordinate 1 with rates $\R_t^{(1)}(i',i)$.
\item Vertical arrows $(i,j)\to(i,j')$ are caused by jumps in coordinate 2 with rates $\R_t^{(2)}(j',j)$.
\item Diagonal moves $(i,j)\to(i',j')$ require two simultaneous single–site jumps and hence occur with probability $O(\Delta t^2)$.
\end{enumerate}
Thus, to a first order approximation, \emph{only} one coordinate changes.

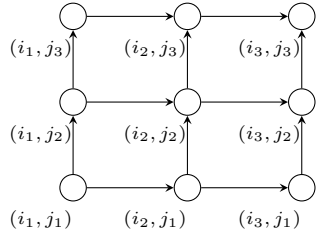
\begin{figure}[h!]
\centering
\begin{tikzpicture}[>=stealth,scale=0.95]
  \tikzset{node style/.style={circle,draw,inner sep=1.2pt,minimum size=10pt}}
  \foreach \i in {1,2,3}{
    \foreach \j in {1,2,3}{
      \node[node style] (n\i\j) at (1.6*\i,1.2*\j) {};
      \node at (1.6*\i-0.45,1.2*\j-0.45) {\scriptsize $(i_\i,j_\j)$};
    }
  }
  \foreach \j in {1,2,3}{
    \foreach \i/\ip in {1/2,2/3}{
      \draw[->] (n\i\j) -- (n\ip\j);
    }
  }
  \foreach \i in {1,2,3}{
    \foreach \j/\jp in {1/2,2/3}{
      \draw[->] (n\i\j) -- (n\i\jp);
    }
  }
\end{tikzpicture}
\caption{For $d{=}2$, $O(\Delta t)$ moves are horizontal or vertical (single–coordinate),
while diagonal two–coordinate moves are $O(\Delta t^2)$ and vanish in the rate matrix.}
\end{figure}

\paragraph{Algebraic perspective.}

Under the i.i.d.\ noising across dimension assumption, the transition matrix on the product space factorises
as a Kronecker product of one-dimensional transitions: 

\begin{align}q(\x_{t+\Delta t}=\y \cond \x_t=\x)
=\prod_{k=1}^d q\bigl(x^{(k)}_{t+\Delta t}=y^{(k)} \cond x^{(k)}_t=x^{(k)}\bigr)
=\prod_{k=1}^d \Q_{t+\Delta t\cond t}^{(k)}\!\bigl(y^{(k)},x^{(k)}\bigr)
\end{align}
which in matrix form reads $\Qseq_{t+\Delta t \cond t} = \bigotimes_{k=1}^{d} \Q_{t+\Delta t \cond t}^{(k)}$.
Each per-coordinate transition matrix admits the infinitesimal expansion:
\begin{equation}
\Q_{t+\Delta t \cond t}^{(k)} = \I + \Delta t\,\R_t^{(k)} + o(\Delta t).
\end{equation}
Substituting in the Kronecker product yields:
\begin{align}
    \Qseq_{t+\Delta t\cond t}
 = 
\bigotimes_{k=1}^{d}\!\Bigl(I+\Delta t\,\R_t^{(k)}+o(\Delta t)\Bigr).
\end{align}
Expanding to first order using $(A+B)\otimes C = A\otimes C + B\otimes C$ and
$(I+\Delta t A)\otimes(I+\Delta t B)=I\otimes I+\Delta t(A\otimes I+I\otimes B)+O(\Delta t^2)$,
we obtain
\begin{align}
\Qseq_{t+\Delta t\cond t}
 = 
I
\;+\;
\Delta t\,\sum_{k=1}^{d}\Bigl(\I^{\otimes(k-1)}\otimes \R_t^{(k)}\otimes \I^{\otimes(d-k)}\Bigr)
\;+\;
O(\Delta t^2).
\end{align}
By definition of the generator, $\Rseq_t  =  \lim_{\Delta t\to0}\frac{\Qseq_{t+\Delta t\cond t}-I}{\Delta t}$, hence the claimed Kronecker–sum form.
\end{proof}

\paragraph{2-dimensional Rate Matrix.}
Take
\(
\R_t^{(1)}=\begin{bmatrix}a_{11}&a_{12}\\ a_{21}&a_{22}\end{bmatrix},
\;
\R_t^{(2)}=\begin{bmatrix}b_{11}&b_{12}\\ b_{21}&b_{22}\end{bmatrix}.
\)
Then (indexing joint states as $(1,1),(1,2),(2,1),(2,2)$):
\begin{align*}
\R_t^{(1)}\otimes \I + \I\otimes \R_t^{(2)}
&=
\begin{bmatrix}
a_{11} & 0 & a_{12} & 0\\
0 & a_{11} & 0 & a_{12}\\
a_{21} & 0 & a_{22} & 0\\
0 & a_{21} & 0 & a_{22}
\end{bmatrix}
\;+\;
\begin{bmatrix}
b_{11} & b_{12} & 0 & 0\\
b_{21} & b_{22} & 0 & 0\\
0 & 0 & b_{11} & b_{12}\\
0 & 0 & b_{21} & b_{22}
\end{bmatrix}
\\&=
\begin{bmatrix}
a_{11}{+}b_{11} & b_{12} & a_{12} & \color{red}{0}\\
b_{21} & a_{11}{+}b_{22} & \color{red}{0} & a_{12}\\
a_{21} & \color{red}{0} & a_{22}{+}b_{11} & b_{12}\\
\color{red}{0} & a_{21} & b_{21} & a_{22}{+}b_{22}
\end{bmatrix}.
\end{align*}
Entrywise, for general sizes,
\begin{align}
\bigl[\R_t^{(1)}\otimes \I + \I\otimes \R_t^{(2)}\bigr]_{(i',j'),(i,j)}
=
\R_t^{(1)}(i',i)\,\delta_{j'j}
\;+\;
\delta_{i'i}\,\R_t^{(2)}(j',j),
\end{align}
which shows that only one coordinate changes at first order (red entries in the developed matrix). 

\begin{corollary}[Entrywise characterisation]\label{cor:rate-matrix-entries}
Let $\x = (x^{(1)}, \ldots, x^{(d)})$ and $\y = (y^{(1)}, \ldots, y^{(d)})$ be two states in $\calX^d$. The entries of $\Rseq_t$ satisfy:
\begin{enumerate}[label=(\roman*)]
    \item \emph{Single-coordinate transitions:} If $\y$ differs from $\x$ in exactly one coordinate $k$, then
    \begin{equation}
    \Rseq_t(\y, \x) = \R_t^{(k)}(y^{(k)}, x^{(k)}).
    \end{equation}
    
    \item \emph{Multi-coordinate transitions:} If $\y$ differs from $\x$ in two or more coordinates, then
    \begin{equation}
    \Rseq_t(\y, \x) = 0.
    \end{equation}
    
    \item \emph{Diagonal entries:} The column-sum-to-zero property implies
    \begin{equation}
    \Rseq_t(\x, \x) = -\sum_{\y \neq \x} \Rseq_t(\y, \x) = -\sum_{k=1}^{d} \sum_{y^{(k)} \neq x^{(k)}} \R_t^{(k)}(y^{(k)}, x^{(k)}).
    \end{equation}
\end{enumerate}
\end{corollary}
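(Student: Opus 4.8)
The plan is to obtain all three claims by simply reading off the entries of $\Rseq_t$ from the Kronecker-sum representation established in \cref{lem:kronecker-sum-rate}, and then splitting into cases according to how many coordinates distinguish $\x$ from $\y$. Throughout I would identify a joint state $\x=(x^{(1)},\dots,x^{(d)})\in\calX^d$ with the corresponding multi-index, so that a matrix on $\calX^d$ is indexed by pairs $(\y,\x)$ of such tuples, and the entries of the identity matrix $\I$ are Kronecker deltas $\delta_{y,x}$.

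The key computational step is the standard entrywise rule for Kronecker products: for matrices $A$ on $\calX^{d_1}$ and $B$ on $\calX^{d_2}$, the entry of $A\otimes B$ at $\bigl((\y_1,\y_2),(\x_1,\x_2)\bigr)$ is $A(\y_1,\x_1)\,B(\y_2,\x_2)$. Applying this to the $k$-th summand in \cref{eq:kronecker-sum-rate-apx}, the identity blocks contribute products of deltas on the remaining coordinates, so
\begin{align}
\bigl[\I^{\otimes(k-1)}\otimes \R_t^{(k)}\otimes \I^{\otimes(d-k)}\bigr](\y,\x)
= \R_t^{(k)}\bigl(y^{(k)},x^{(k)}\bigr)\prod_{\ell\neq k}\delta_{y^{(\ell)},x^{(\ell)}}\ .
\end{align}
Summing over $k$ yields the master identity $\Rseq_t(\y,\x)=\sum_{k=1}^d \R_t^{(k)}\bigl(y^{(k)},x^{(k)}\bigr)\prod_{\ell\neq k}\delta_{y^{(\ell)},x^{(\ell)}}$, and the three parts of the corollary then follow by inspection.

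For part (i), if $\y$ and $\x$ agree on all coordinates except $k_0$, then for every $k\neq k_0$ the product $\prod_{\ell\neq k}\delta_{y^{(\ell)},x^{(\ell)}}$ contains the vanishing factor $\delta_{y^{(k_0)},x^{(k_0)}}$, so only the $k=k_0$ term survives, giving $\Rseq_t(\y,\x)=\R_t^{(k_0)}(y^{(k_0)},x^{(k_0)})$. For part (ii), if $\y$ and $\x$ differ in two distinct coordinates $k_1\neq k_2$, then for any fixed $k$ at least one of $k_1,k_2$ is $\neq k$, hence the corresponding delta in $\prod_{\ell\neq k}$ vanishes and every term of the sum is zero. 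For part (iii), I would invoke the column-sum-to-zero property $\mathbf{1}^\top\Rseq_t=\mathbf{0}^\top$ (inherited from $\mathbf{1}^\top\R_t^{(k)}=\mathbf{0}^\top$ through the Kronecker-sum structure, and already recorded for $\Rseq_t$ in the master-equation setup), so that $\Rseq_t(\x,\x)=-\sum_{\y\neq\x}\Rseq_t(\y,\x)$; by (i)–(ii) the only off-diagonal contributors are the single-coordinate changes, which gives $\sum_{\y\neq\x}\Rseq_t(\y,\x)=\sum_{k=1}^d\sum_{y^{(k)}\neq x^{(k)}}\R_t^{(k)}(y^{(k)},x^{(k)})$.

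I do not anticipate any real obstacle: the argument is pure bookkeeping in multi-index Kronecker algebra. The only places that deserve a moment's care are the combinatorial step in (ii)—the pigeonhole-type remark that a single index $k$ cannot coincide with two distinct mismatch positions—and making fully explicit that the identity factors in each summand encode "leave the other coordinates unchanged" precisely as the product of Kronecker deltas, so that the master identity above is genuinely justified rather than asserted.
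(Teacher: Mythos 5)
Your proof is correct and follows essentially the same route as the paper, which derives the entrywise formula $\Rseq_t(\y,\x)=\sum_{k}\R_t^{(k)}(y^{(k)},x^{(k)})\prod_{\ell\neq k}\delta_{y^{(\ell)},x^{(\ell)}}$ from the Kronecker-sum representation (illustrated there via the $d=2$ example) and then performs the same casework on the number of mismatched coordinates. Your general-$d$ write-up of the master identity and the pigeonhole remark for part (ii) are exactly the bookkeeping the paper leaves implicit.
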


\begin{remark}[Computational implications]
The Kronecker sum structure has important computational consequences. Although $\Rseq_t$ has dimension $K^d \times K^d$, it is determined entirely by the $d$ matrices $\{\R_t^{(k)}\}_{k=1}^d$, each of size $K \times K$. This is essential for practical implementations in high-dimensional discrete diffusion models.
\end{remark}

\section{Introduction to the infinitesimal generator of a Markov process}

\subsection{Deriving the reverse generator\label{apx:reverse-generator}}

Consider a (potentially) time-inhomogeneous Markov process $\{\x_t\}_{t\in[0,1]}$ on a state space $\calX^d$ with forward marginal distributions $\{q_t\}_{t\in[0,1]}$. The \emph{(spatial) infinitesimal generator} $\Lgen_t$ quantifies the instantaneous rate of change of expectations under the process dynamics. For a test function $\phi : \calX^d \rightarrow \RR$ in the domain $\mathcal{D}(\Lgen_t)$, it is defined as:
\begin{align}
\Lgen_t\phi(\x) \;=\; \lim_{\Delta t\to0^+}\frac{\E \left[\phi\big(\x_{t+\Delta t}\big) \,\big|\, \x_t=\x\right]-\phi(\x)}{\Delta t}.
\end{align}

The evolution of marginal distributions is governed by the adjoint operator $(\Lgen_t)^*$ through the Kolmogorov forward equation (KFE):
\begin{equation}
\partial_t q_t \;=\; (\Lgen_t)^* q_t,
\qquad
q_0=\qdata,
\end{equation}
where the adjoint satisfies $\langle \Lgen_t f,\, g \rangle = \langle f,\, (\Lgen_t)^* g \rangle$ for the inner product $\langle f,g \rangle = \int f(\x) \, g(\x) \, \diff \mu(\x)$, where $\mu$ in the Lebesgue measure in continuous state space and the counting measure in discrete state space.

\paragraph{Time reversal setup.} For time reversal, we define the reversed process $\hat{\x}_s \coloneqq \x_{1-s}$ and reversed marginals $\hat{q}_s \coloneqq q_{1-s}$ for $s \in [0,1]$. By change of variable and chain rule:
\begin{align}
\partial_s \hat{q}_s(\x)
\;=\; -\,(\Lgen_{1-s})^{*}\,\hat{q}_s(\x).
\label{eq:time-reversal-marginal-apx}
\end{align}

However, \cref{eq:time-reversal-marginal-apx} cannot directly be used for simulation because $(\Lgen_{1-s})$ conditions on the future when expressed in reversed time:
\begin{equation}
\Lgen_{1-s} \phi(\hat{\x}_s) \;=\; -\lim_{\Delta s\to0^+}\frac{\E\left[\phi\big(\hat{\x}_{s-\Delta s}\big) \,\big|\, \hat{\x}_s\right]-\phi(\hat{\x}_s)}{\Delta s}.
\end{equation}
To obtain a tractable forward-in-time description of the reversed process, we must derive an alternative generator that conditions on the past in reversed time.

\begin{proposition}[Reversed Generator]\label{prop:reversed-generator}
The reversed process $\{\hat{\x}_s\}_{s\in[0,1]}$, defined by $\hat{\x}_s \coloneqq \x_{1-s}$, is Markovian with infinitesimal generator:
\begin{equation}\label{eq:reversed-generator-formula-apx}
\hat{\Lgen}_s \phi(\x)
\;=\; \frac{1}{q_{1-s}(\x)}\Bigl((\Lgen_{1-s})^*(q_{1-s} \phi)(\x) - \phi(\x)\,(\Lgen_{1-s})^* q_{1-s}(\x) \Bigr),
\end{equation}
where $(q_{1-s} \phi) : \x \mapsto q_{1-s}(\x)\,\phi(\x)$ denotes the pointwise product, and $(\Lgen_{1-s})^*$ acts on this product as a single function. This generator satisfies the standard forward-in-$s$ definition:
\begin{equation}
\hat{\Lgen}_s \phi(\x) \;=\; \lim_{\Delta s\to0^+}\frac{\E\left[\phi\big(\hat{\x}_{s+\Delta s}\big) \,\big|\, \hat{\x}_s=\x\right]-\phi(\x)}{\Delta s}.
\end{equation}
\end{proposition}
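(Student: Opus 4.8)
The plan is to establish \cref{prop:reversed-generator} in two stages: first, a short \emph{consistency} argument showing that the operator in \eqref{eq:reversed-generator-formula-apx} produces the correct marginal evolution \eqref{eq:time-reversal-marginal-apx}; second, the substantive part, verifying that it really is the generator of the reversed \emph{process}, i.e.\ that it satisfies the forward-in-$s$ limit definition. The consistency check is exactly the ``derivation sketch'' already displayed in the excerpt after \cref{prop:reversed-generator} (computing $(\hat\Lgen_s)^*\hat q_s$ by testing against $\phi$, splitting into $\mathcal I_1$ and $\mathcal I_2$, using $\Lgen_{1-s}\mathbf 1 = 0$ to kill $\mathcal I_1$, and recognizing $\mathcal I_2 = \int\phi\,(\Lgen_{1-s})^*\hat q_s\,\diff\mu$); I would reproduce that and note it shows $(\hat\Lgen_s)^*\hat q_s = -(\Lgen_{1-s})^*\hat q_s$, matching \eqref{eq:time-reversal-marginal-apx}.

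For the process-level statement, I would work at the level of transition kernels and use Bayes' rule with the forward marginals. Write the reversed kernel as
\[
\hat q_{s+\Delta s\cond s}(\y \cond \x)
= q(\x_{1-s-\Delta s} = \y \cond \x_{1-s} = \x)
= \frac{q(\x_{1-s}=\x \cond \x_{1-s-\Delta s}=\y)\, q_{1-s-\Delta s}(\y)}{q_{1-s}(\x)}.
\]
Then
\[
\E[\phi(\hat\x_{s+\Delta s}) \cond \hat\x_s = \x]
= \frac{1}{q_{1-s}(\x)} \sum_{\y}\big(\text{or }\textstyle\int\big)\; q(\x_{1-s}=\x \cond \x_{1-s-\Delta s}=\y)\, q_{1-s-\Delta s}(\y)\, \phi(\y),
\]
and I would recognize the inner sum/integral as $\big(P_{1-s-\Delta s,\,1-s}\big)^{*}$ applied to the function $q_{1-s-\Delta s}\phi$, where $P_{r,t}$ is the forward transition semigroup (so $(P_{r,t})^{*}$ propagates densities forward). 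Subtract $\phi(\x)$, divide by $\Delta s$, and take $\Delta s\to 0^{+}$. Using $\partial_t (P_{r,t})^{*}\big|_{t=r} = (\Lgen_r)^{*}$ (the forward-equation-for-transitions \eqref{eq:KFE-eq-transition}) together with $\partial_r\, q_{1-s-\Delta s}\big|_{\Delta s = 0}$ governed by the KFE as well, a product-rule expansion produces exactly the two terms $\frac{1}{q_{1-s}}\big[(\Lgen_{1-s})^{*}(q_{1-s}\phi) - \phi\,(\Lgen_{1-s})^{*}q_{1-s}\big]$, the $\partial_r q$ contributions canceling against part of the derivative of the prefactor $1/q_{1-s}$. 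This also establishes that the reversed process is genuinely Markov (the increment law depends on $\hat\x_s$ only), which is part of the claim.

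I expect the main obstacle to be the careful first-order expansion in the second stage: one must differentiate the product of three $\Delta s$-dependent pieces — the prefactor $1/q_{1-s}$ (actually $s$-dependent, not $\Delta s$-dependent, so it is a constant in the limit), the backward kernel $q(\x_{1-s}=\x\cond\x_{1-s-\Delta s}=\,\cdot\,)$, and the marginal $q_{1-s-\Delta s}(\cdot)$ — and correctly track which derivative produces the adjoint generator $(\Lgen_{1-s})^{*}$ acting on the \emph{product} $q_{1-s}\phi$ versus on $q_{1-s}$ alone. Subtleties here include: (i) the backward kernel's short-time behavior is governed by $\Lgen$ acting on the \emph{second} argument, so one must be cautious about whether $(\Lgen_{1-s})^{*}$ or $\Lgen_{1-s}$ appears after the Bayes inversion (the inversion is precisely what converts one into the other, weighted by marginals); (ii) justifying the interchange of limit with sum/integral (trivial in the discrete case with $\calD(\Lgen_t)=\ell_\infty$, requiring the standing regularity/domain assumptions in the continuous case); and (iii) ensuring $q_{1-s}(\x) > 0$ so the formula is well-defined, which I would note as a standing assumption (as elsewhere in the paper). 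A cleaner alternative for stage two, which I would mention, is to invoke the Doob $h$-transform / time-reversal result of \citet{palmowski2002technique} directly — the generator formula \eqref{eq:reversed-generator-formula-apx} is the $h$-transform of $\Lgen_{1-s}$ by $h = q_{1-s}$ combined with the adjoint, and their Theorem gives the process-level statement — but I would still present the Bayes-rule computation as the self-contained route.
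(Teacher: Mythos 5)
Your proposal is correct and follows essentially the same route as the paper's appendix proof: Bayes' rule to invert the conditioning, a first-order expansion of the forward transition kernel in $\Delta s$ (your propagator-adjoint identity $\partial_t (P_{r,t})^*\big|_{t=r}=(\Lgen_r)^*$ is exactly the paper's delta-function/adjoint-identity step yielding $(\Lgen_{1-s})^*(q_{1-s}\phi)$), and the KFE expansion of $q_{1-s-\Delta s}$ producing the $-\phi\,(\Lgen_{1-s})^*q_{1-s}$ term. The only slip is the remark that the $\partial_r q$ contribution ``cancels against the derivative of the prefactor'' --- the prefactor $1/q_{1-s}(\x)$ is $\Delta s$-independent (as you yourself note), and the only cancellation is of the zeroth-order term $\phi(\x)$; this does not affect the final formula.
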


\begin{proof}
Starting from the definition \eqref{eq:reversed-generator-def} and using $\hat{\x}_s = \x_{1-s}$:
\begin{align}
\hat{\Lgen}_s \phi(\x) 
&= \lim_{\Delta s\to0^+}\frac{\E\left[\phi(\hat{\x}_{s+\Delta s}) \,\big|\, \hat{\x}_s=\x\right]-\phi(\x)}{\Delta s} \nonumber\\
&= \lim_{\Delta s\to0^+}\frac{\E\left[\phi(\x_{1-s-\Delta s}) \,\big|\, \x_{1-s}=\x\right]-\phi(\x)}{\Delta s}.\label{eq:rev-gen-step1}
\end{align}

The key challenge is that the conditioning is in the ``wrong'' temporal direction: we know $\x_{1-s}$ but want to compute expectations about $\x_{1-s-\Delta s}$ in the past. We reverse this conditioning using Bayes' rule. Therefore:
\begin{align}
\E\left[\phi(\x_{1-s-\Delta s}) \,\big|\, \x_{1-s}=\x\right] &= \int \phi(\y) \, q(\x_{1-s-\Delta s}=\y \cond \x_{1-s}=\x) \, \diff\mu(\y) \nonumber\\
&= \int \phi(\y) \, \frac{q(\x_{1-s}=\x \cond \x_{1-s-\Delta s}=\y) \, q_{1-s-\Delta s}(\y)}{q_{1-s}(\x)} \, \diff\mu(\y).\label{eq:cond-exp-reversed}
\end{align}

Define the forward propagator $P_{s \to t} $ by:
\begin{align}
P_{s \to t} \phi(\x) &= \E[\phi(\x_t) \cond \x_s = \x]
\end{align}

For small increments, using the definition of the generator \cref{eq:gen-def} we have the standard expansion
\begin{align*}
(P_{t\to t+\Delta t}\phi)(\y)  =  \phi(\y) \;+\; \Delta t\,(\Lgen_t \phi)(\y) \;+\; o(\Delta t),
\end{align*}
hence, with $t=1-s-\Delta s$,
\begin{align*}
P_{1-s-\Delta s \to 1-s}  =  I \;+\; \Delta s\,\Lgen_{1-s} \;+\; o(\Delta s),
\end{align*}
where $I$ is the identity on $\mathcal{D}(\Lgen_t)$ and we assume here $t\mapsto\Lgen_t$ is continuous and replace $\Lgen_{1-s-\Delta s}$ by $\Lgen_{1-s}$ without changing the $o(\Delta s)$ remainder.

We express the forward propagator using the transition kernel:
\begin{align}
   (P_{1-s-\Delta s \to 1-s}  \phi)(\y) &= \int \phi(\x) q(\x_{1-s} =\x \cond \x_{1-s-\Delta s}=\y) \diff \mu(\x) \\
    \phi(\y) + \Delta s \; (\Lgen_{1-s} \phi)(\y)+ o(\Delta s)&= \int \phi(\x) q(\x_{1-s} =\x \cond \x_{1-s-\Delta s}=\y) \diff \mu(\x)
    \label{eq:forward-propagator-expansion-equal-kernel-integral}
\end{align}
Interpreting in distribution in the $\x$ variable the LHS:
\begin{align}
\int \phi(\x) [ \delta_{\y}(\x) + \Delta s \Lgen^\star_{1-s} \delta_{\y}(\x) + o(\Delta s)] \diff \mu(\x) = \int \phi(\x) q(\x_{1-s} =\x \cond \x_{1-s-\Delta s}=\y) \diff \mu(\x)
\label{eq:distributional-push-forward-expansion}
\end{align}
By equality between \cref{eq:forward-propagator-expansion-equal-kernel-integral} and \cref{eq:distributional-push-forward-expansion}, and applying the fundamental lemma of calculus of variations, we have:
\begin{align}
    q(\x_{1-s} =\x \cond \x_{1-s-\Delta s}=\y) = \delta_{\y}(\x) + \Delta s \Lgen^\star_{1-s} \delta_{\y}(\x) + o(\Delta s)
    \label{eq:transition-expansion}
\end{align}

Substituting \eqref{eq:transition-expansion} into \eqref{eq:cond-exp-reversed}:
\begin{align}
&\E\left[\phi(\x_{1-s-\Delta s}) \,\big|\, \x_{1-s}=\x\right] \nonumber\\
&= \frac{1}{q_{1-s}(\x)} \int \phi(\y) \Bigl[\delta_{\y}(\x) + \Delta s \cdot [(\Lgen_{1-s})^* \delta_{\y}](\x)\Bigr] q_{1-s-\Delta s}(\y) \, \diff\mu(\y) + o(\Delta s) \nonumber\\
&= \frac{1}{q_{1-s}(\x)} \Bigg[\phi(\x) q_{1-s}(\x) - \Delta s\, \phi(\x)\, [(\Lgen_{1-s})^\star q_{1-s}](\x)
+ \Delta s \int \phi(\y) \big[(\Lgen_{1-s})^* \delta_{\y}\big](\x) \, q_{1-s}(\y) \, \diff\mu(\y)
\Bigg] + o(\Delta s),
\label{eq:exp-expanded}
\end{align}
where in the last line we used $q_{1-s-\Delta s}(\x) = q_{1-s}(\x) - \Delta s \Lgen_{1-s}^\star q_{1-s}(\x)+o(\Delta s)$.

By Fubini’s theorem and the adjoint identity, for any test $\psi$, we have $\int \psi(\x)\,[(\Lgen_{1-s})^*\delta_{\y}](\x)\,\diff\mu(\x)=(\Lgen_{1-s}\psi)(\y)$:
\begin{align*}
\int \psi(\x)\!\left[\int \phi(\y)\, q_{1-s}(\y)\, [(\Lgen_{1-s})^* \delta_{\y}](\x)\, \diff\mu(\y)\right]\diff\mu(\x)
&= \int \phi(\y)\, q_{1-s}(\y)\, (\Lgen_{1-s}\psi)(\y)\, \diff\mu(\y)\\
&= \langle q_{1-s}\phi,\, \Lgen_{1-s}\psi \rangle 
\;=\; \langle (\Lgen_{1-s})^*(q_{1-s}\phi),\, \psi \rangle\\
&= \int \psi(\x)\, [(\Lgen_{1-s})^*(q_{1-s}\phi)](\x)\, \diff\mu(\x).
\end{align*}
By the fundamental lemma of calculus of variations, we recover:
\begin{align}
\int \phi(\y)\, q_{1-s}(\y)\, [(\Lgen_{1-s})^* \delta_{\y}](\x)\, \diff\mu(\y)
&= [(\Lgen_{1-s})^* (q_{1-s}\phi)](\x).
\label{eq:adjoint-application}
\end{align}
Substituting \eqref{eq:adjoint-application} into \eqref{eq:exp-expanded}:
\begin{align}
\E\left[\phi(\x_{1-s-\Delta s}) \,\big|\, \x_{1-s}=\x\right] 
&= \phi(\x) + \frac{\Delta s}{q_{1-s}(\x)} \Big( [(\Lgen_{1-s})^\star (q_{1-s} \cdot \phi)](\x) - \phi(\x)\, [(\Lgen_{1-s})^\star q_{1-s}](\x) \Big) + o(\Delta s).
\end{align}
Returning to \eqref{eq:rev-gen-step1}:
\begin{align}
\hat{\Lgen}_s \phi(\x) 
&= \lim_{\Delta s\to0^+}\frac{1}{\Delta s}\left[\E\left[\phi(\x_{1-s-\Delta s}) \,\big|\, \x_{1-s}=\x\right] - \phi(\x)\right] \nonumber\\
&= \frac{1}{q_{1-s}(\x)} \Big( [(\Lgen_{1-s})^\star(q_{1-s} \cdot \phi)](\x) - \phi(\x)\, [(\Lgen_{1-s})^\star q_{1-s}](\x) \Big).
\end{align}

This establishes \eqref{eq:reversed-generator}. 

\end{proof}

\subsection{Decomposition of the time-inhomogeneous infinitesimal generator\label{apx:time-inh-generator-decomposition}}

For a time-inhomogeneous Markov process $\{\x_t\}_{t \in [0,1]}$ on state space $\calX^d$, we derive the decomposition of the extended generator into temporal and spatial components.

\begin{proposition}[Extended generator decomposition]\label{prop:extended-generator-decomp}
Let $\{\x_t\}_{t\in [0,1]}$ be a time-inhomogeneous Markov process on $\calX^d$ with spatial infinitesimal generator $\Lgen_t$. For a time-dependent test function $\phi_t : \calX^d \to \RR$ (equivalently $\phi : \calX^d \times [0,1] \to \RR$ with $\phi_t(\x) \coloneqq \phi(\x,t)$), the extended generator admits the decomposition:
\begin{equation}\label{eq:time_inhomog_generator}
\Ggen_t \phi_t(\x) \;=\; \partial_t \phi_t(\x) \;+\; \Lgen_t \phi_t(\x),
\end{equation}
where $\Lgen_t$ is the spatial generator acting on $\phi_t$ for fixed $t$.
\end{proposition}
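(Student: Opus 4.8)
The plan is to realise the time-inhomogeneous process as a time-\emph{homogeneous} one on the enlarged state space $\calX^d\times[0,1]$ by tracking the clock as part of the state, exactly as the statement suggests: set $\w_t\coloneqq(\x_t,t)$, which is Markov and time-homogeneous because its one-step law from $\w_t=(\x,t)$ depends only on the increment $\Delta t$ once the current time is remembered in the state. By definition $\Ggen_t$ is the infinitesimal generator of $\w_t$, written in the notation $\Ggen_t\phi_t(\x)\coloneqq\Ggen\phi(\x,t)$ with $\phi_t(\x)=\phi(\x,t)$, so that
\begin{equation*}
\Ggen_t\phi_t(\x)=\lim_{\Delta t\to0^+}\frac{\E\!\left[\phi_{t+\Delta t}(\x_{t+\Delta t})\mid \x_t=\x\right]-\phi_t(\x)}{\Delta t}\ ,
\end{equation*}
with domain consisting of $\phi$ that are $C^1$ in $t$ and satisfy $\phi_t\in\calD(\Lgen_t)$ for each $t$.

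First I would split the numerator by inserting and subtracting the intermediate term $\phi_{t+\Delta t}(\x)$:
\begin{equation*}
\E\!\left[\phi_{t+\Delta t}(\x_{t+\Delta t})\mid \x_t=\x\right]-\phi_t(\x)
=\underbrace{\Big(\E\!\left[\phi_{t+\Delta t}(\x_{t+\Delta t})\mid \x_t=\x\right]-\phi_{t+\Delta t}(\x)\Big)}_{\text{spatial increment}}
+\underbrace{\Big(\phi_{t+\Delta t}(\x)-\phi_t(\x)\Big)}_{\text{temporal increment}}\ .
\end{equation*}
Dividing by $\Delta t$ and letting $\Delta t\to0^+$, the temporal increment converges to $\partial_t\phi_t(\x)$ directly from the assumed $C^1$-in-$t$ regularity. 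For the spatial increment I would use the definition of the spatial generator \eqref{eq:gen-def}: for a \emph{fixed} test function the quotient converges to $\Lgen_t$ applied to that function. Since here the test function $\phi_{t+\Delta t}$ itself drifts with $\Delta t$, I would write $\phi_{t+\Delta t}=\phi_t+\Delta t\,\partial_t\phi_t+o(\Delta t)$ in the relevant function-space norm, use linearity of $\Lgen_t$, and note that $\Lgen_t$ applied to the $O(\Delta t)$ correction contributes only $O(\Delta t)$ to the quotient and hence vanishes in the limit, leaving $\Lgen_t\phi_t(\x)$. Summing the two limits gives $\Ggen_t\phi_t(\x)=\partial_t\phi_t(\x)+\Lgen_t\phi_t(\x)$.

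The main obstacle is precisely this uniformity step: justifying that the $o(\Delta t)$ remainder in the short-time expansion $\E[\psi(\x_{t+\Delta t})\mid\x_t=\x]=\psi(\x)+\Delta t\,\Lgen_t\psi(\x)+o(\Delta t)$ can be taken uniformly over the family $\{\psi=\phi_{t+\Delta t}\}_{\Delta t}$, together with enough continuity of $t\mapsto\Lgen_t$ (e.g.\ strong continuity on the common core $\cap_s\calD(\Lgen_s)$) that $\Lgen_t\phi_{t+\Delta t}(\x)\to\Lgen_t\phi_t(\x)$. Under the blanket regularity already adopted in this section (test functions in a common domain, smooth coefficients) these conditions hold and the argument closes. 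As a consistency check I would also confirm the decomposition against Dynkin's formula \eqref{eq:general-Dynkin} applied to the augmented process and against the Kolmogorov forward equation $\partial_t q_t=(\Lgen_t)^*q_t$; both are recovered at once from $\Ggen_t=\partial_t+\Lgen_t$, which is reassuring although logically weaker than the direct limit computation.
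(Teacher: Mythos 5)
Your proposal is correct and follows essentially the same route as the paper's proof: insert and subtract the intermediate term $\phi_{t+\Delta t}(\x)$, identify the temporal increment with $\partial_t\phi_t(\x)$, and identify the spatial increment with $\Lgen_t\phi_{t+\Delta t}$ before passing to $\Lgen_t\phi_t$ in the limit. You are in fact slightly more careful than the paper at the one delicate point --- the test function drifting with $\Delta t$ --- where the paper simply invokes ``continuity of $\phi$ in time'' while you spell out the expansion $\phi_{t+\Delta t}=\phi_t+\Delta t\,\partial_t\phi_t+o(\Delta t)$ and the uniformity requirement on the remainder.
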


\begin{proof}
For any test function $\phi_t: \calX^d \times [0,1] \rightarrow \RR$, in $\calD(\Lgen_t)$ the following limit is well defined:

\begin{align}
\Ggen_t \phi_t(\x)=\lim _{\Delta t \rightarrow 0} \frac{\E\left[\phi_{t+\Delta t}\left(\x_{t+\Delta t}\right)-\phi_t\left(\x_t\right) \cond \x_t=\x\right]}{\Delta t} .
\end{align}

We decompose the numerator by adding and subtracting $\phi_{t+\Delta t}(\x_t)$:

\begin{align}
\Ggen_t \phi_t(\x)=\lim _{\Delta t \rightarrow 0} \frac{\E\left[\phi_{t+\Delta t}\left(\x_{t+\Delta t}\right)-\phi_{t+\Delta t}\left(\x_{t}\right)\cond \x_t=\x\right] + \E\left[ \phi_{t+\Delta t}\left(\x_{t}\right)-\phi_t\left(\x_t\right) \cond \x_t=\x\right]}{\Delta t} .
\end{align}

Since $\x_t = \x$ is given, the second expectation simplifies to:
$$\mathbb{E}\left[\phi_{t+\Delta t}(\x_t) - \phi_t(\x_t) \cond \x_t = \x\right] = \phi_{t+\Delta t}(\x) - \phi_t(\x).$$
Therefore:
$$\Ggen_t \phi_t(\x) = \lim_{\Delta t \rightarrow 0^+} \frac{\mathbb{E}\left[\phi_{t+\Delta t}(\x_{t+\Delta t}) - \phi_{t+\Delta t}(\x_t) \cond \x_t = \x\right]}{\Delta t} + \lim_{\Delta t \rightarrow 0^+} \frac{\phi_{t+\Delta t}(\x) - \phi_t(\x)}{\Delta t}.$$

The first term is precisely the spatial generator $\Lgen_t$ applied to $\phi_{t+\Delta t}$:
$$\lim_{\Delta t \rightarrow 0^+} \frac{\mathbb{E}\left[\phi_{t+\Delta t}(\x_{t+\Delta t}) - \phi_{t+\Delta t}(\x_t) \cond \x_t = \x\right]}{\Delta t} = \Lgen_t \phi_t(\x),$$

where we used the continuity of $\phi$ in time to pass from $\phi_{t+\Delta t}$ to $\phi_t$ in the limit, where $\Lgen_t$ is defined as:
\begin{align}
\Lgen_t\coloneqq\lim _{\Delta t \rightarrow 0} \frac{\E\left[\phi_{t}\left(\x_{t+\Delta t}\right)-\phi_t\left(\x_t\right) \cond \x_t=\x\right]}{\Delta t} 
\end{align}

The second term is the partial time derivative:
$$\lim_{\Delta t \rightarrow 0^+} \frac{\phi_{t+\Delta t}(\x) - \phi_t(\x)}{\Delta t} = \partial_t \phi_t(\x).$$
Combining both terms yields:
$$\Ggen_t \phi_t(\x) = \Lgen_t\phi_t(\x) + \partial_t \phi_t(\x) = \left(\partial_t + \Lgen_t\right) \phi_t(\x).$$
This establishes the decomposition.
\end{proof}

\subsection{Derivation of the Kolmogorov forward equation \label{apx:KFE}}

\begin{proposition}[Kolmogorov Forward Equation]
Let $\{\x_t\}_{t \in [0,1]}$ be a (potentially) time-inhomogeneous Markov process on state space $\calX^d$ with spatial infinitesimal generator $\Lgen_t$. The marginal distributions $\{q_t\}_{t\in[0,1]}$ satisfy:
\begin{equation}\label{eq:KFE-apx}
\partial_t q_t = (\Lgen_t)^* q_t,
\end{equation}
where $(\Lgen_t)^*$ denotes the $L^2$-adjoint of $\Lgen_t$.
\end{proposition}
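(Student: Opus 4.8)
The plan is to recover \cref{eq:KFE-apx} from its \emph{weak} (test-function) formulation, following the duality argument already sketched around \cref{eq:dynkin-step2,eq:duality-argument}. First I would fix an arbitrary test function $\phi \in \mathcal{D}(\Lgen_t)$ and study the scalar map $t \mapsto \langle \phi, q_t\rangle = \E_{q_t}[\phi(\x_t)]$. Differentiating in $t$ and inserting the law of total expectation over the infinitesimal transition kernel $q(\x_{t+\Delta t}\cond \x_t)$, the difference quotient becomes
\begin{equation}
\frac{\langle\phi,q_{t+\Delta t}\rangle - \langle\phi,q_t\rangle}{\Delta t}
= \E_{q_t}\!\left[\frac{\E_{q(\x_{t+\Delta t}\cond\x_t)}[\phi(\x_{t+\Delta t})] - \phi(\x_t)}{\Delta t}\right].
\end{equation}
Under the standing regularity assumptions permitting the exchange of limit and expectation, the inner limit is exactly $\Lgen_t\phi(\x_t)$ by \cref{eq:gen-def}, so that $\frac{\diff}{\diff t}\langle\phi,q_t\rangle = \langle \Lgen_t\phi, q_t\rangle$, which is the backward-side identity of \cref{eq:duality-argument}.

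Next I would invoke the definition of the $L^2$-adjoint \cref{eq:adjoint-definition} to rewrite the right-hand side as $\langle \phi, (\Lgen_t)^* q_t\rangle$, and, assuming $q_t$ is regular enough in $t$ that $\partial_t q_t$ exists with $\frac{\diff}{\diff t}\langle\phi,q_t\rangle = \langle \phi, \partial_t q_t\rangle$, conclude $\langle \phi, \partial_t q_t\rangle = \langle \phi, (\Lgen_t)^* q_t\rangle$ for every $\phi$ in the test-function class. Since this class is dense in the relevant space, the fundamental lemma of calculus of variations forces $\partial_t q_t = (\Lgen_t)^* q_t$ pointwise ($\mu$-a.e.), establishing \cref{eq:KFE-apx}. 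The identical computation with the transition kernel $\y \mapsto q(\x_t=\x\cond\x_s=\y)$ and $s$ held fixed (differentiating in the terminal time $t$) yields the transition-level form \cref{eq:KFE-eq-transition}; marginalising over $\x_s$ with $s=0$ recovers the marginal statement.

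The only real obstacle is technical rather than conceptual: justifying the two interchanges --- limit versus expectation in passing from the difference quotient to $\Lgen_t\phi$, and $\frac{\diff}{\diff t}$ versus the spatial integral defining $\langle\phi,q_t\rangle$ --- and verifying the test-function class is rich enough for the fundamental lemma. In continuous state spaces one controls this via dominated convergence, using the infinitesimal moment bounds \cref{eq:diffusion-moments} and the compact support of $\phi \in C_0^2(\calX^d)$; in discrete spaces the sums are finite (or the chain is uniformly bounded), so the exchanges are immediate. I would fold these into the ``sufficient regularity'' hypothesis already stated in the proposition and not belabour them, since the substantive content is entirely the duality $\Lgen_t \leftrightarrow (\Lgen_t)^*$ combined with the generator definition.
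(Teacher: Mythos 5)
Your proposal is correct and follows essentially the same route as the paper's proof: the weak formulation $\frac{\diff}{\diff t}\langle\phi,q_t\rangle=\langle\Lgen_t\phi,q_t\rangle$, the $L^2$-adjoint, and the fundamental lemma of calculus of variations, with the interchanges of limit, derivative, and integral folded into the regularity hypotheses. The only cosmetic difference is that the paper first integrates (Dynkin's formula applied to the transition kernel $q_t(\cdot\cond\x_0=\x)$) and then differentiates before marginalising over $\x_0\sim\qdata$, whereas you differentiate the marginal expectation directly via the difference quotient; both reduce to the same duality argument.
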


\begin{proof}
The proof proceeds in three steps: (i) apply Dynkin's formula, (ii) differentiate in time, and (iii) invoke the adjoint definition and the fundamental lemma of calculus of variations.

\paragraph{Step 1: Dynkin's formula.}
For a time-independent test function $\phi : \calX^d \to \RR$ in the domain $\mathcal{D}(\Lgen_t)$, Dynkin's formula states that for any $t \in [0,1]$ and initial condition $\x_0 = \x$:
\begin{equation}
\E_{q(\x_t \cond \x_0=\x)}\left[\phi(\x_t) \right] = \phi(\x) + \E_{q(\x_t \cond \x_0=\x)}\left[\int_0^t \Lgen_s \phi(\x_s) \diff s\right].
\label{eq:dynkin-KFE}
\end{equation}

Expressing the expectations as integrals against the transition density $q(\x_t \cond \x_0 = \x)$:
\begin{equation}
\int_{\calX^d} \phi(\y) \, q_t(\y \cond \x_0 = \x) \,\diff\mu(\y) = \phi(\x) + \int_0^t \int_{\calX^d} \Lgen_s \phi(\z) \, q_s(\z \cond \x_0 = \x) \,\diff\mu(\z) \,\diff s,
\label{eq:dynkin-integral-form}
\end{equation}
where we have applied Fubini's theorem to exchange the order of integration (valid under standard regularity conditions on $\phi$ and $q_s$). In discrete state spaces, $\diff\mu$ denotes counting measure and integrals become sums. In continuous state spaces with Lebesgue measure, $\diff\mu(\x) = \diff\x$.

\paragraph{Step 2: Time differentiation.}
Differentiating both sides of \cref{eq:dynkin-integral-form} with respect to $t$: on the left-hand side, we exchange differentiation and integration (justified by the dominated convergence theorem under standard regularity conditions on $q_t$); on the right-hand side, we apply the fundamental theorem of calculus:
\begin{equation}
\int_{\calX^d} \phi(\y) \, \partial_t q_t(\y \cond \x_0 = \x) \,\diff\mu(\y) = \int_{\calX^d} \Lgen_t \phi(\y) \, q_t(\y \cond \x_0 = \x) \,\diff\mu(\y).
\label{eq:differentiated-dynkin}
\end{equation}

\paragraph{Step 3: Adjoint and fundamental lemma.}
Rewriting \cref{eq:differentiated-dynkin} in inner product notation on $L^2(\calX^d, \mu)$:
\begin{equation}
\bigl\langle \phi,\, \partial_t q_t(\cdot \cond \x_0 = \x) \bigr\rangle = \bigl\langle \Lgen_t \phi,\, q_t(\cdot \cond \x_0 = \x) \bigr\rangle.
\end{equation}

By definition of the $L^2$-adjoint $(\Lgen_t)^*$:
\begin{equation}
\bigl\langle \phi,\, \partial_t q_t(\cdot \cond \x_0 = \x) \bigr\rangle = \bigl\langle \phi,\, (\Lgen_t)^* q_t(\cdot \cond \x_0 = \x) \bigr\rangle.
\end{equation}

Since this identity holds for all test functions $\phi$ in a dense subset of $L^2(\calX^d, \mu)$,\footnote{We assume that $\mathcal{D}(\Lgen_t) \cap L^2(\calX^d, \mu)$ is dense in $L^2(\calX^d, \mu)$.} the fundamental lemma of calculus of variations implies:
\begin{equation}
\partial_t q_t(\cdot \cond \x_0 = \x) = (\Lgen_t)^* q_t(\cdot \cond \x_0 = \x).
\label{eq:KFE-transition}
\end{equation}

\paragraph{Extension to marginals.}
Integrating \cref{eq:KFE-transition} over the initial distribution $\x_0 \sim \qdata$:
\begin{align}
\partial_t q_t(\y) 
&= \partial_t \int_{\calX^d} q_t(\y \cond \x_0 = \x) \, \qdata(\x) \,\diff\mu(\x) \notag\\
&= \int_{\calX^d} (\Lgen_t)^* q_t(\y \cond \x_0 = \x) \, \qdata(\x) \,\diff\mu(\x) \notag\\
&= (\Lgen_t)^* \int_{\calX^d} q_t(\y \cond \x_0 = \x) \, \qdata(\x) \,\diff\mu(\x) \notag\\
&= (\Lgen_t)^* q_t(\y),
\end{align}
where the third equality uses the linearity of $(\Lgen_t)^*$ (acting on the $\y$ variable) and Fubini's theorem.
\end{proof}

\begin{remark}[Kolmogorov backward equation]
An analogous derivation, differentiating with respect to the \emph{initial} time $s$ rather than the terminal time $t$, yields the Kolmogorov backward equation:
\begin{equation}
\partial_s q(\x_t = \y \cond \x_s = \x) = -\Lgen_s q(\x_t = \y \cond \x_s = \x),
\end{equation}
where $\Lgen_s$ acts on the initial variable $\x$. Note that this describes backward differentiation in time, not time reversal of the process itself.
\end{remark}

\subsection{Evaluation of the infinitesimal generator}\label{apx:derivation-infinitesimal-generator}

\subsubsection{Continuous state space}

We derive the closed-form expression for the infinitesimal generator of a diffusion process in continuous state space. For clarity, we present the one-dimensional case; the extension to $\RR^d$ follows by applying the multivariate Itô formula.

\begin{proposition}[Infinitesimal generator for diffusion processes]\label{prop:generator-continuous}
Let $\{x_t\}_{t \in [0,1]}$ with $x_t \in \RR$ solve the Itô SDE:
\begin{equation}
\diff x_t = f_t(x_t)\,\diff t + g_t\,\diff w_t,
\end{equation}
where $w_t$ is a standard Brownian motion, $f_t : \RR\times \calX \to \RR$ is the drift, and $g_t : \RR \to \RR$ is the diffusion coefficient.

\begin{enumerate}[label=(\roman*)]
\item The \emph{spatial infinitesimal generator} $\Lgen_t$, acting on time-independent test functions $\phi \in \calD(\Lgen_t)$, is:
\begin{equation}\label{eq:spatial-generator-1d}
\Lgen_t \phi(x) = f_t(x)\,\partial_x \phi(x) + \frac{1}{2}g_t^2\,\partial_x^2 \phi(x).
\end{equation}

\item The \emph{extended generator} $\Ggen_t$, acting on time-dependent test functions $\phi_t \in \calD(\Ggen_t)$, is:
\begin{equation}\label{eq:extended-generator-1d}
\Ggen_t \phi_t(x) = \partial_t \phi_t(x) + \Lgen_t \phi_t(x) = \partial_t \phi_t(x) + f_t(x)\,\partial_x \phi_t(x) + \frac{1}{2}g_t^2\,\partial_x^2 \phi_t(x).
\end{equation}
\end{enumerate}
\end{proposition}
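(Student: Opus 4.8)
The plan is to derive both formulas from It\^o's lemma, which converts the pathwise dynamics of $\phi(x_t)$ into an explicit semimartingale decomposition whose finite-variation (drift) part is exactly the claimed generator. First I would fix a test function $\phi \in C_0^2(\RR) \subseteq \calD(\Lgen_t)$ and apply the It\^o formula to $\phi(x_t)$ along the solution of the SDE; using $\diff\langle x\rangle_t = g_t^2\,\diff t$, this gives
\begin{align*}
\phi(x_{t+\Delta t}) - \phi(x_t)
&= \int_t^{t+\Delta t}\!\Bigl(f_s(x_s)\,\partial_x\phi(x_s) + \tfrac{1}{2}g_s^2\,\partial_x^2\phi(x_s)\Bigr)\,\diff s \\
&\quad + \int_t^{t+\Delta t}\! g_s\,\partial_x\phi(x_s)\,\diff w_s .
\end{align*}

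Next I would take the conditional expectation given $x_t = x$. The second integral is an It\^o stochastic integral; under the standing regularity assumptions (compact support of $\phi$, hence boundedness of $\partial_x\phi$, together with local boundedness of $g_s$) it is a true martingale over $[t,t+\Delta t]$, so its conditional expectation vanishes. Dividing by $\Delta t$ then leaves
\begin{equation*}
\frac{\E[\phi(x_{t+\Delta t}) - \phi(x_t) \mid x_t = x]}{\Delta t}
= \frac{1}{\Delta t}\,\E\!\left[\int_t^{t+\Delta t}\!\Bigl(f_s(x_s)\,\partial_x\phi(x_s) + \tfrac{1}{2}g_s^2\,\partial_x^2\phi(x_s)\Bigr)\diff s \,\Big|\, x_t = x\right].
\end{equation*}
Letting $\Delta t \to 0^+$, the continuity of the integrand in $s$ at $s=t$ together with dominated convergence (to pass the limit through the conditional expectation) collapses the time-average to the integrand evaluated at $s=t$, $x_s=x$, yielding $\Lgen_t\phi(x) = f_t(x)\,\partial_x\phi(x) + \tfrac{1}{2}g_t^2\,\partial_x^2\phi(x)$, i.e.\ \cref{eq:spatial-generator-1d}.

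For the extended generator (part (ii)) I would either repeat the argument with the time-dependent It\^o formula applied to $(s,x)\mapsto\phi_s(x)$, which produces the extra drift term $\partial_s\phi_s(x_s)$, or---more economically---invoke the decomposition $\Ggen_t\phi_t = \partial_t\phi_t + \Lgen_t\phi_t$ already established in \cref{prop:extended-generator-decomp} and substitute part (i); either route gives \cref{eq:extended-generator-1d}. The $d$-dimensional case is identical after replacing It\^o's lemma by its multivariate form with $\diff\langle x^{(i)},x^{(j)}\rangle_t = g_t^2\,\delta_{ij}\,\diff t$ (isotropic diffusion), so that the first- and second-order terms become $\f_t\cdot\nabla_\x\phi$ and $\tfrac{1}{2}g_t^2\,\Delta_\x\phi$, recovering the generator displayed in \cref{tab:diffusion_comparison}.

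The main obstacle is analytic rather than algebraic: the two interchanges of limit and expectation. One must justify (a) that the stochastic-integral term is a genuine martingale, not merely a local martingale---which needs an $L^2$ bound $\E\int_t^{t+\Delta t} g_s^2\,(\partial_x\phi(x_s))^2\,\diff s < \infty$---and (b) that $\lim_{\Delta t\to0^+}$ may be taken inside the conditional expectation of the time-average, which needs an integrable dominating envelope for the integrand along the path. Both hold under the blanket regularity hypotheses implicit in the choice of domain $\calD(\Lgen_t)$ (e.g.\ $\phi\in C_0^2$, local boundedness and at most linear growth of $f_t$, boundedness of $g_t$), so in the final write-up I would state these as standing assumptions and refer to standard references \cite{oksendal2003stochastic,pavliotis_stochastic_2014} rather than reproving the martingale and dominated-convergence lemmas.
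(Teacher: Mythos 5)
Your proposal is correct and follows essentially the same route as the paper: apply It\^o's lemma to $\phi(x_t)$ (resp.\ $\phi_t(x_t)$), observe that the stochastic-integral term has vanishing conditional expectation, and identify the remaining drift with the generator in the limit $\Delta t \to 0^+$. Your integral-form presentation and explicit attention to the true-martingale and dominated-convergence interchanges are somewhat more careful than the paper's differential-form sketch, but the underlying argument is identical.
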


\begin{proof}
Let $\phi_t \in C^{1,2}([0,1] \times \RR)$. By Itô's lemma applied to $\phi_t(x_t)$ \cite{oksendal2003stochastic}:
\begin{equation}
\diff \phi_t(x_t) = \partial_t \phi_t(x_t)\,\diff t + \partial_x \phi_t(x_t)\,\diff x_t + \frac{1}{2}\partial_x^2 \phi_t(x_t)\,(\diff x_t)^2.
\end{equation}

Using the Itô calculus rules $(\diff t)^2 = 0$, $\diff t \cdot \diff w_t = 0$, and $(\diff w_t)^2 = \diff t$, we compute:
\begin{equation}
(\diff x_t)^2 = \bigl(f_t(x_t)\,\diff t + g_t\,\diff w_t\bigr)^2 = g_t^2\,\diff t + o(\diff t).
\end{equation}

Substituting and collecting terms:
\begin{align}
\diff \phi_t(x_t) 
&= \partial_t \phi_t(x_t)\,\diff t + \partial_x \phi_t(x_t)\bigl[f_t(x_t)\,\diff t + g_t\,\diff w_t\bigr] + \frac{1}{2}\partial_x^2 \phi_t(x_t)\,g_t^2\,\diff t \notag\\
&= \underbrace{\Bigl[\partial_t \phi_t(x_t) + f_t(x_t)\,\partial_x \phi_t(x_t) + \frac{1}{2}g_t^2\,\partial_x^2 \phi_t(x_t)\Bigr]}_{\text{drift terms}}\,\diff t 
+ \underbrace{g_t\,\partial_x \phi_t(x_t)\,\diff w_t}_{\text{martingale term}}.
\label{eq:ito-expansion}
\end{align}

Taking the conditional expectation given $x_t = x$:
\begin{equation}
\E\bigl[\diff \phi_t(x_t) \,\big|\, x_t = x\bigr] 
= \Bigl[\partial_t \phi_t(x) + f_t(x)\,\partial_x \phi_t(x) + \frac{1}{2}g_t^2\,\partial_x^2 \phi_t(x)\Bigr]\,\diff t,
\end{equation}
since the martingale term has zero conditional expectation: $\E[\diff w_t \cond x_t = x] = 0$.

By definition of the extended generator:
\begin{equation}
\Ggen_t \phi_t(x) \coloneqq \lim_{\Delta t \to 0^+} \frac{\E\bigl[\phi_{t+\Delta t}(x_{t+\Delta t}) - \phi_t(x_t) \,\big|\, x_t = x\bigr]}{\Delta t} = \frac{\E\bigl[\diff \phi_t(x_t) \,\big|\, x_t = x\bigr] }{\dt}
=\partial_t \phi_t(x) + \Lgen_t \phi_t(x),
\end{equation}
which establishes \eqref{eq:extended-generator-1d}. For time-independent test functions ($\partial_t \phi = 0$), the extended generator reduces to the spatial generator \eqref{eq:spatial-generator-1d}.
\end{proof}

\begin{proposition}[$L^2$-adjoint of the spatial generator]\label{prop:adjoint-continuous}
Let $\Lgen_t$ be the spatial generator \eqref{eq:spatial-generator-1d}. Its $L^2$-adjoint $(\Lgen_t)^*$, defined via
\begin{equation}
\int_\RR \phi(x)\,(\Lgen_t \psi)(x)\,\diff x = \int_\RR ((\Lgen_t)^* \phi)(x)\,\psi(x)\,\diff x
\quad \text{for all } \phi, \psi \in \calD({\Lgen_t})\cap C_0^\infty(\RR),
\end{equation}
is given by:
\begin{equation}\label{eq:adjoint-generator-1d}
(\Lgen_t)^* \phi(x) = -\partial_x\bigl[f_t(x)\,\phi(x)\bigr] + \frac{1}{2}\partial_x^2\bigl[g_t^2\,\phi(x)\bigr].
\end{equation}
\end{proposition}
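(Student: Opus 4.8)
\medskip

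\noindent The statement is established by integration by parts, moving every derivative off the ``inner'' function $\psi$ onto the ``outer'' function $\phi$; the plan is purely computational. First I would write out the pairing, separating the first- and second-order parts of $\Lgen_t$:
\begin{align*}
\langle \phi, \Lgen_t \psi\rangle
= \int_\RR \phi(x)\, f_t(x)\, \partial_x \psi(x)\,\diff x
\;+\; \int_\RR \phi(x)\, \tfrac{1}{2} g_t^2(x)\, \partial_x^2 \psi(x)\,\diff x .
\end{align*}
Since $\phi,\psi \in \calD(\Lgen_t)\cap C_0^\infty(\RR)$ have compact support, every boundary term produced below vanishes identically, so no growth hypotheses at infinity are needed.

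For the drift term, a single integration by parts gives $\int_\RR \phi\, f_t\, \partial_x\psi\,\diff x = -\int_\RR \partial_x\!\big[f_t(x)\phi(x)\big]\,\psi(x)\,\diff x$, contributing $-\partial_x[f_t\phi]$ to the adjoint. For the diffusion term, I would integrate by parts twice:
\begin{align*}
\int_\RR \phi\, \tfrac{1}{2} g_t^2\, \partial_x^2\psi\,\diff x
&= -\int_\RR \partial_x\!\Big[\tfrac{1}{2} g_t^2(x)\phi(x)\Big]\, \partial_x\psi(x)\,\diff x \\
&= \int_\RR \partial_x^2\!\Big[\tfrac{1}{2} g_t^2(x)\phi(x)\Big]\, \psi(x)\,\diff x ,
\end{align*}
contributing $\tfrac{1}{2}\partial_x^2[g_t^2\phi]$. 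Adding the two pieces yields $\langle \phi, \Lgen_t\psi\rangle = \int_\RR \big(-\partial_x[f_t\phi] + \tfrac12\partial_x^2[g_t^2\phi]\big)\,\psi\,\diff x$, and since this holds for all $\psi \in C_0^\infty(\RR)$, the fundamental lemma of the calculus of variations identifies $(\Lgen_t)^*\phi$ with the bracketed expression, which is exactly \eqref{eq:adjoint-generator-1d}.

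There is no genuine obstacle here — the computation is routine — but two points deserve care. First, one must apply the product rule correctly inside the brackets when $g_t^2$ depends on $x$; when $g_t$ is state-independent (as assumed elsewhere in the text), $\tfrac12\partial_x^2[g_t^2\phi] = \tfrac12 g_t^2\,\partial_x^2\phi$ and the adjoint collapses to the familiar form. Second, the restriction to $C_0^\infty(\RR)$ is precisely what makes the boundary terms disappear; extending the identity to a larger domain would require additionally imposing suitable decay of $\phi$, $f_t\phi$, $g_t^2\phi$ and their first derivatives at $\pm\infty$. Finally, running the same argument coordinatewise (with $\partial_x \to \nabla_\x$ for the first-order part and the second-order part handled via $\Delta_\x$, using the product rules $\nabla_\x\!\cdot(u\mathbf v)=\nabla_\x u\cdot\mathbf v + u\,\nabla_\x\!\cdot\mathbf v$ and $\Delta_\x(uv)$) gives the $d$-dimensional adjoint $(\Lgen_t)^*\phi = -\nabla_\x\!\cdot(\f_t\phi) + \tfrac12\Delta_\x(g_t^2\phi)$ used in the Fokker--Planck box.
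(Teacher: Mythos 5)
Your argument is correct and follows essentially the same route as the paper's proof: split $\Lgen_t$ into drift and diffusion parts, integrate by parts once and twice respectively with boundary terms vanishing by compact support, and identify the adjoint via the resulting pairing. The extra remarks on state-dependent $g_t$ and the $d$-dimensional extension are sound but not needed for the stated one-dimensional result.
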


\begin{proof}
We compute the adjoint by integration by parts, assuming test functions vanish at infinity so that boundary terms vanish.

\paragraph{Drift term.} Let $A\psi(x) \coloneqq f_t(x)\,\partial_x \psi(x)$. Then:
\begin{align}
\int_\RR \phi(x)\,(A\psi)(x)\,\diff x
&= \int_\RR \phi(x)\,f_t(x)\,\partial_x \psi(x)\,\diff x \notag\\
&= -\int_\RR \partial_x\bigl[\phi(x)\,f_t(x)\bigr]\,\psi(x)\,\diff x,
\end{align}
where we integrated by parts once. Thus:
\begin{equation}
A^* \phi(x) = -\partial_x\bigl[f_t(x)\,\phi(x)\bigr].
\end{equation}

\paragraph{Diffusion term.} Let $B\psi(x) \coloneqq \frac{1}{2}g_t^2\,\partial_x^2 \psi(x)$. Integrating by parts twice:
\begin{align}
\int_\RR \phi(x)\,(B\psi)(x)\,\diff x
&= \frac{1}{2}\int_\RR \phi(x)\,g_t^2\,\partial_x^2 \psi(x)\,\diff x \notag\\
&= -\frac{1}{2}\int_\RR \partial_x\bigl[\phi(x)\,g_t^2\bigr]\,\partial_x \psi(x)\,\diff x \notag\\
&= \frac{1}{2}\int_\RR \partial_x^2\bigl[\phi(x)\,g_t^2\bigr]\,\psi(x)\,\diff x.
\end{align}
Thus:
\begin{equation}
B^* \phi(x) = \frac{1}{2}\partial_x^2\bigl[g_t^2\,\phi(x)\bigr].
\end{equation}

\paragraph{Combined adjoint.} By linearity of adjoints:
\begin{equation}
(\Lgen_t)^* = A^* + B^* \quad \Rightarrow \quad (\Lgen_t)^* \phi(x) = -\partial_x\bigl[f_t(x)\,\phi(x)\bigr] + \frac{1}{2}\partial_x^2\bigl[g_t^2\,\phi(x)\bigr].
\end{equation}
\end{proof}

\begin{remark}[Extension to $\RR^d$]
For $\x_t \in \RR^d$ with drift $\f_t : \RR^d \to \RR^d$ and diffusion matrix $\Sigmab_t : \RR^d \to \RR^{d \times d}$, the spatial generator and its adjoint become:
\begin{align}
\Lgen_t \phi(\x) &= \f_t(\x)^\top \nabla_\x \phi(\x) + \frac{1}{2}\mathrm{tr}\bigl(\Sigmab_t(\x)\Sigmab_t(\x)^\top \nabla_\x^2 \phi(\x)\bigr), \\
(\Lgen_t)^* \phi(\x) &= -\nabla_\x \cdot \bigl[\f_t(\x)\,\phi(\x)\bigr] + \frac{1}{2}\sum_{i,j} \partial_{x_i}\partial_{x_j}\bigl[(\Sigmab_t \Sigmab_t^\top)_{ij}(\x)\,\phi(\x)\bigr].
\end{align}
For scalar diffusion coefficient $g_t$ (as in the main text), this simplifies to:
\begin{equation}
(\Lgen_t)^* \phi(\x) = -\nabla_\x \cdot \bigl[\f_t(\x)\,\phi(\x)\bigr] + \frac{1}{2}g_t^2\,\Delta_\x \phi(\x),
\end{equation}
recovering the Fokker–Planck operator in \cref{eq:KFE-continuous-sp}.
\end{remark}

\subsubsection{Discrete state space\label{apx:discrete-infinitesimal-generator}}

We derive the infinitesimal generator and its adjoint for continuous-time Markov chains (CTMCs) on finite state spaces, establishing the connection to rate matrices.

\begin{proposition}[Generator for CTMCs]\label{prop:ctmc-generator}
Let $\{\x_t\}_{t\in[0,1]}$ be a CTMC on the finite state space $\calX^d$ with $|\calX| = K$, governed by a time-dependent rate matrix $\Rseq_t \in \RR^{K^d \times K^d}$ under our ML convention (destination state = row index, source state = column index). The spatial infinitesimal generator $\Lgen_t$ acts on test functions $\phi : \calX^d \to \RR$ as:
\begin{equation}\label{eq:ctmc-generator-formula}
\Lgen_t \phi(\x) = \sum_{\y \in \calX^d} \Rseq_t(\y, \x) \, \phi(\y).
\end{equation}
In matrix-vector notation, with $\phib \coloneqq (\phi(\x))_{\x \in \calX^d} \in \RR^{K^d}$:
\begin{equation}
\Lgen_t \phib = \Rseq_t^\top \phib, \qquad \text{i.e.,} \quad \Lgen_t = \Rseq_t^\top.
\end{equation}
\end{proposition}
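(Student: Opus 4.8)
The plan is to start from the definition of the spatial infinitesimal generator, \cref{eq:gen-def}, and compute the required one-sided limit directly using the short-time expansion of the CTMC transition probabilities. First I would write $\Lgen_t \phi(\x) = \lim_{\Delta t \to 0^+} \tfrac{1}{\Delta t}\bigl(\E_{q(\x_{t+\Delta t}\cond \x_t=\x)}[\phi(\x_{t+\Delta t})] - \phi(\x)\bigr)$ and expand the conditional expectation as a sum over destination states $\y \in \calX^d$, weighted by $q(\x_{t+\Delta t}=\y \cond \x_t=\x)$. Using \cref{eq:ctmc-transition}, i.e. $q(\x_{t+\Delta t}=\y\cond\x_t=\x) = \delta_{\x,\y} + \Rseq_t(\y,\x)\,\Delta t + o(\Delta t)$, the $\delta_{\x,\y}$ piece produces exactly $\phi(\x)$, which cancels the subtracted $\phi(\x)$, leaving $\sum_{\y} \Rseq_t(\y,\x)\,\phi(\y) + o(1)$ after dividing by $\Delta t$. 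Taking the limit yields \cref{eq:ctmc-generator-formula}. I would note that the sum is finite since $|\calX^d| = K^d < \infty$, so interchanging the limit with the finite sum is trivially justified — no dominated-convergence argument is needed.

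The second assertion, the matrix-vector form $\Lgen_t = \Rseq_t^\top$, is then a direct transcription: stacking $\phi$ into the vector $\phib$, the quantity $\sum_{\y} \Rseq_t(\y,\x)\,\phi(\y)$ is precisely the $\x$-th component of $\Rseq_t^\top \phib$, because in our convention $\Rseq_t(\y,\x)$ sits in row $\y$, column $\x$, so $[\Rseq_t^\top]_{\x,\y} = \Rseq_t(\y,\x)$. I would add a one-line remark that one can equivalently write $\Lgen_t\phi(\x) = \sum_{\y\neq\x}\Rseq_t(\y,\x)[\phi(\y)-\phi(\x)]$ by using the column-sum-to-zero property $\Rseq_t(\x,\x) = -\sum_{\y\neq\x}\Rseq_t(\y,\x)$ stated below \cref{eq:ctmc-transition}; this makes the ``difference operator'' structure of the generator manifest and connects to the reverse-generator formula \cref{eq:reversed-generator-discrete-sp}. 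Presumably the proposition will also address the adjoint $(\Lgen_t)^*\phi(\x) = \sum_\y \Rseq_t(\x,\y)\phi(\y)$, i.e. $(\Lgen_t)^* = \Rseq_t$: this follows from the counting-measure inner product $\langle f,g\rangle = \sum_{\x} f(\x)g(\x)$ and the elementary index-swap $\sum_\x f(\x)\sum_\y \Rseq_t(\y,\x)g(\y) = \sum_\y g(\y)\sum_\x \Rseq_t(\y,\x)f(\x)$, which exhibits $\Rseq_t$ (not its transpose) as the adjoint — consistent with the footnote near \cref{eq:def-transition-matrix} warning about the convention.

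I do not anticipate a genuine obstacle here: the only subtlety is bookkeeping the row/column convention consistently, so that the transpose appears in $\Lgen_t = \Rseq_t^\top$ but not in $(\Lgen_t)^* = \Rseq_t$. The one place warranting a sentence of care is the justification that the $o(\Delta t)$ remainder in \cref{eq:ctmc-transition}, summed over the finitely many destination states $\y$ and multiplied by the bounded test function $\phi$, still contributes $o(\Delta t)$ overall — this is immediate for finite $\calX^d$ but should be stated. If one wanted the cleanest possible argument, one could instead invoke Dynkin's formula \cref{eq:dynkin-formula} specialized to the CTMC, but the direct limit computation is shorter and more self-contained, so that is the route I would take.
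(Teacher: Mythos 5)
Your proposal is correct and follows essentially the same route as the paper's proof: expand the conditional expectation over destination states, substitute the short-time expansion $q(\x_{t+\Delta t}=\y\cond\x_t=\x)=\delta_{\x,\y}+\Rseq_t(\y,\x)\Delta t+o(\Delta t)$, cancel the $\phi(\x)$ term, and take the limit, with the matrix form $\Lgen_t=\Rseq_t^\top$ read off from the indexing convention. Your added remarks on the difference-operator form and the adjoint $(\Lgen_t)^*=\Rseq_t$ are also correct and match the paper's separate treatment of the adjoint.
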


\begin{proof}
By definition of the spatial infinitesimal generator (cf.\ \cref{apx:time-inh-generator-decomposition}):
\begin{equation}
\Lgen_t \phi(\x) = \lim_{\Delta t \to 0^+} \frac{\E\left[\phi(\x_{t+\Delta t}) \,\big|\, \x_t = \x\right] - \phi(\x)}{\Delta t}.
\end{equation}

Expanding the conditional expectation:
\begin{equation}
\E\left[\phi(\x_{t+\Delta t}) \,\big|\, \x_t = \x\right] = \sum_{\y \in \calX^d} q(\x_{t+\Delta t} = \y \cond \x_t = \x) \, \phi(\y).
\end{equation}

From the definition of the rate matrix (\cref{eq:def-transition-rate-matrix}):
\begin{equation}
q(\x_{t+\Delta t} = \y \cond \x_t = \x) = \delta_{\x,\y} + \Rseq_t(\y, \x) \, \Delta t + o(\Delta t).
\end{equation}

Substituting and separating the diagonal ($\y = \x$) and off-diagonal ($\y \neq \x$) terms:
\begin{align}
\E\left[\phi(\x_{t+\Delta t}) \,\big|\, \x_t = \x\right]
&= \sum_{\y \in \calX^d} \left[\delta_{\x,\y} + \Rseq_t(\y, \x) \, \Delta t + o(\Delta t)\right] \phi(\y) \notag\\
&= \phi(\x) + \Rseq_t(\x, \x) \, \Delta t \, \phi(\x) + \sum_{\y \neq \x} \Rseq_t(\y, \x) \, \Delta t \, \phi(\y) + o(\Delta t) \notag\\
&= \phi(\x) + \Delta t \sum_{\y \in \calX^d} \Rseq_t(\y, \x) \, \phi(\y) + o(\Delta t).
\end{align}

Taking the limit:
\begin{equation}
\Lgen_t \phi(\x) = \lim_{\Delta t \to 0^+} \frac{\Delta t \sum_{\y \in \calX^d} \Rseq_t(\y, \x) \, \phi(\y) + o(\Delta t)}{\Delta t} = \sum_{\y \in \calX^d} \Rseq_t(\y, \x) \, \phi(\y).
\end{equation}

In matrix notation, this reads $(\Lgen_t \phib)(\x) = \sum_{\y} \Rseq_t(\y, \x) \, \phi(\y) = (\Rseq_t^\top \phib)(\x)$.
\end{proof}

\begin{proposition}[Adjoint generator for CTMCs]\label{prop:ctmc-adjoint}
The $L^2$-adjoint of $\Lgen_t$ with respect to counting measure on $\calX^d$ is:
\begin{equation}\label{eq:ctmc-adjoint-formula}
(\Lgen_t)^* \phi(\x) = \sum_{\y \in \calX^d} \Rseq_t(\x, \y) \, \phi(\y).
\end{equation}
In matrix-vector notation:
\begin{equation}
(\Lgen_t)^* \phib = \Rseq_t \, \phib, \qquad \text{i.e.,} \quad (\Lgen_t)^* = \Rseq_t.
\end{equation}
\end{proposition}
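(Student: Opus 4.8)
The plan is to verify the adjoint identity directly from the defining relation, using the explicit form of $\Lgen_t$ established in the preceding proposition together with the fact that, on a finite state space equipped with the counting measure, the $L^2$ inner product is simply the finite sum $\langle f, g\rangle = \sum_{\x \in \calX^d} f(\x)\,g(\x)$. Since every sum in sight is finite, there are no convergence or integrability subtleties, and the whole argument collapses to an interchange in the order of summation (finite Fubini), which at the matrix level is just transposition.

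Concretely, first I would recall that $\Lgen_t \phi(\x) = \sum_{\y \in \calX^d} \Rseq_t(\y,\x)\,\phi(\y)$, i.e.\ $\Lgen_t = \Rseq_t^\top$ in matrix-vector form. Then, for arbitrary test functions $\phi,\psi : \calX^d \to \RR$, I would compute
\[
\langle \Lgen_t \phi,\, \psi\rangle \;=\; \sum_{\x \in \calX^d} \Big(\sum_{\y \in \calX^d} \Rseq_t(\y,\x)\,\phi(\y)\Big)\psi(\x)\;=\;\sum_{\y \in \calX^d} \phi(\y)\Big(\sum_{\x \in \calX^d} \Rseq_t(\y,\x)\,\psi(\x)\Big),
\]
where the second equality is the (finite) exchange of the two summations. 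By the definition of the $L^2$-adjoint, the last expression must equal $\langle \phi, (\Lgen_t)^*\psi\rangle = \sum_{\y}\phi(\y)\,(\Lgen_t)^*\psi(\y)$, and since this holds for every $\phi$ we read off $(\Lgen_t)^*\psi(\y) = \sum_{\x}\Rseq_t(\y,\x)\,\psi(\x)$. Renaming the dummy variables yields the claimed formula $(\Lgen_t)^*\phi(\x) = \sum_{\y}\Rseq_t(\x,\y)\,\phi(\y)$, equivalently $(\Lgen_t)^* = (\Rseq_t^\top)^\top = \Rseq_t$. As a consistency check I would note that this is exactly the operator appearing in the Kolmogorov forward (master) equation $\partial_t q_t = \Rseq_t\,q_t$ of \cref{eq:KFE-discrete-sp}, in agreement with the general identity $\partial_t q_t = (\Lgen_t)^* q_t$.

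There is essentially no hard step here: the only point requiring care is the indexing convention (destination $=$ row, source $=$ column) inherited from \cref{eq:def-transition-rate-matrix}, which is precisely what makes the adjoint equal $\Rseq_t$ rather than $\Rseq_t^\top$. I would add a one-line remark that under the opposite (classical probability) convention one obtains the transpose instead, reconciling this statement with treatments found elsewhere in the literature.
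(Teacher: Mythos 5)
Your proposal is correct and follows essentially the same route as the paper's proof: expand $\langle \Lgen_t \phi, \psi\rangle$ using the explicit form $\Lgen_t = \Rseq_t^\top$, exchange the two finite sums, and read off $(\Lgen_t)^* = \Rseq_t$ from the defining relation, with the indexing convention doing all the work. The added consistency check against the master equation and the remark on the opposite convention are correct and harmless additions.
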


\begin{proof}
For discrete state spaces with counting measure, the inner product is:
\begin{equation}
\langle f, g \rangle = \sum_{\x \in \calX^d} f(\x) \, g(\x).
\end{equation}

The adjoint $(\Lgen_t)^*$ is defined by the relation $\langle \Lgen_t f, g \rangle = \langle f, (\Lgen_t)^* g \rangle$ for all test functions $f, g$. Computing the left-hand side using \cref{eq:ctmc-generator-formula}:
\begin{align}
\langle \Lgen_t f, g \rangle 
&= \sum_{\x \in \calX^d} (\Lgen_t f)(\x) \, g(\x) \notag\\
&= \sum_{\x \in \calX^d} \left(\sum_{\y \in \calX^d} \Rseq_t(\y, \x) \, f(\y)\right) g(\x) \notag\\
&= \sum_{\x \in \calX^d} \sum_{\y \in \calX^d} \Rseq_t(\y, \x) \, f(\y) \, g(\x).
\end{align}

Exchanging the order of summation:
\begin{align}
\langle \Lgen_t f, g \rangle 
&= \sum_{\y \in \calX^d} f(\y) \sum_{\x \in \calX^d} \Rseq_t(\y, \x) \, g(\x) \notag\\
&= \sum_{\y \in \calX^d} f(\y) \left(\sum_{\x \in \calX^d} \Rseq_t(\y, \x) \, g(\x)\right).
\end{align}

By the definition of adjoint, we identify:
\begin{equation}
((\Lgen_t)^* g)(\y) = \sum_{\x \in \calX^d} \Rseq_t(\y, \x) \, g(\x).
\end{equation}

Relabeling $\y \to \x$ and $\x \to \y$ in the summation yields the claimed formula \eqref{eq:ctmc-adjoint-formula}.

In matrix notation we have $(\Lgen_t)^* = \Rseq_t$.
\end{proof}

\subsection{Dimensional decomposition of the generator\label{apx:generator-decomposition-dimension}}

\begin{proposition}[Additive decomposition of the generator]\label{prop:generator-decomposition}
Let $\{\x_t\}_{t\in[0,1]}$ be a (time-inhomogeneous) Markov process on the product space $\calX^d = \calX^{(1)} \times \cdots \times \calX^{(d)}$. If the forward noising is \emph{independent across coordinates}, i.e., the transition kernel factorises as
\begin{equation}
q_{t|s}(\x \cond \y) = \prod_{k=1}^d q_{t|s}(x^{(k)} \cond y^{(k)}),
\end{equation}
then the sequence-level spatial generator decomposes additively:
\begin{equation}\label{eq:generator-additive-decomp}
\Lgen_t = \sum_{k=1}^d \Lgen_t^{(k)},
\end{equation}
where $\Lgen_t^{(k)}$ acts only on coordinate $k$. Consequently, defining the $k$-th coordinate marginal as
\begin{equation}
q_t^{(k)}(x^{(k)}) \coloneqq \int_{\calX^{d-1}} q_t(\x) \, \diff\mu(\x^{\setminus k}),
\end{equation}
where $\x^{\setminus k} = (x^{(1)}, \ldots, x^{(k-1)}, x^{(k+1)}, \ldots, x^{(d)})$ denotes all coordinates except the $k$-th, each coordinate marginal evolves according to its own single-dimension KFE:
\begin{equation}
\partial_t q_t^{(k)} = (\Lgen_t^{(k)})^* q_t^{(k)}.
\end{equation}
\end{proposition}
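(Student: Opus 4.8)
The plan is to work directly from the definition of the infinitesimal generator in \cref{eq:gen-def} and exploit the product structure of the transition kernel. First I would fix a test function $\phi$ on $\calX^d$ and, for each $j = 1,\dots,d$, introduce the single-coordinate transition operator $P^{(j)}_{t,\Delta t}$ that updates only coordinate $j$ according to $q_{t+\Delta t\cond t}(\cdot\cond\cdot)$ while holding all other coordinates fixed. Under the coordinate-independence hypothesis the full conditional expectation factorises, $\E[\phi(\x_{t+\Delta t})\cond \x_t=\x] = \bigl(\prod_{j=1}^d P^{(j)}_{t,\Delta t}\bigr)\phi(\x)$, and each factor admits the first-order expansion $P^{(j)}_{t,\Delta t} = \mathrm{Id} + \Delta t\,\Lgen_t^{(j)} + o(\Delta t)$, where $\Lgen_t^{(j)}$ is the one-dimensional generator (the diffusion generator of \cref{prop:generator-continuous} or the rate-matrix generator of \cref{prop:ctmc-generator}) acting on the $j$-th slot and leaving the others unchanged. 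Multiplying these expansions, every term involving two or more distinct coordinates carries a factor $\Delta t^2$ or smaller — this is exactly the statement, already established at the rate level in \cref{lem:kronecker-sum-rate}, that simultaneous multi-coordinate moves are $o(\Delta t)$ — so only the identity term and the $d$ single-coordinate linear terms survive. Dividing by $\Delta t$ and letting $\Delta t \to 0^+$ gives $\Lgen_t\phi = \sum_{j=1}^d \Lgen_t^{(j)}\phi$, which is \cref{eq:generator-additive-decomp}.

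For the marginal evolution I would start from the sequence-level Kolmogorov forward equation $\partial_t q_t = (\Lgen_t)^* q_t$ (\cref{eq:KFE-eq}), use additivity of adjoints to write $(\Lgen_t)^* = \sum_{j=1}^d (\Lgen_t^{(j)})^*$, and integrate both sides over $\x^{\setminus k}$. Two observations close the argument. (i) For $j = k$, since $\Lgen_t^{(k)}$ and hence $(\Lgen_t^{(k)})^*$ acts only on the variable $x^{(k)}$, it commutes with $\int_{\calX^{d-1}}(\cdot)\,\diff\mu(\x^{\setminus k})$, so that term becomes $(\Lgen_t^{(k)})^* q_t^{(k)}$. (ii) For $j \neq k$, the coordinate $x^{(j)}$ is one of the integration variables in $\x^{\setminus k}$, and integrating $(\Lgen_t^{(j)})^* q_t$ over $x^{(j)}$ yields zero because $(\Lgen_t^{(j)})^*$ is the one-dimensional Fokker--Planck (continuous) or rate-matrix (discrete) operator and therefore conserves total probability in the $j$-th variable — equivalently, $\Lgen_t^{(j)}$ annihilates constants — so the remaining integration over the other coordinates of $\x^{\setminus k}$ gives zero as well. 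Combining (i) and (ii), $\partial_t q_t^{(k)} = (\Lgen_t^{(k)})^* q_t^{(k)}$. Finally, if the initial marginals $q_0^{(k)}$ and the generators $\Lgen_t^{(k)}$ coincide across $k$ (the i.i.d.\ case), uniqueness of the KFE solution forces the $q_t^{(k)}$ to coincide for all $t$, recovering \cref{eq:kfe-1d-discrete}.

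The main obstacle is making the first-order expansion rigorous simultaneously in continuous and discrete state spaces and controlling the remainders. In the discrete case this is essentially the Kronecker-sum computation of \cref{apx:rate-matrix-sequence-lvl} re-read through the generator, hence routine; in the continuous case I would either apply Itô's lemma coordinate by coordinate — the i.i.d.\ noising hypothesis makes the diffusion matrix block-diagonal, so the second-order term splits cleanly and no cross-derivative term survives — or impose a mild regularity assumption on the joint transition density ensuring the $o(\Delta t)$ remainders are uniform enough to pass to the limit. The remaining technical points (exchanging $\partial_t$, the generator limit, and the $\x^{\setminus k}$-integral; the Fubini step; and the commutation of $(\Lgen_t^{(k)})^*$ with integration over the other coordinates) all hold under the standing domain assumptions of this section — $C_0^2(\calX^d)$ test functions in the continuous case, $\ell_\infty(\calX^d)$ in the discrete case — and I would flag these rather than belabour them.
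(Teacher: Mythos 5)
Your proposal is correct and follows essentially the same route as the paper: factorise the short-time conditional expectation under coordinate independence, expand each factor to first order so that cross-coordinate terms are $O(\Delta t^2)$, and then obtain the marginal KFE by integrating the sequence-level forward equation over $\x^{\setminus k}$, using commutation of $(\Lgen_t^{(k)})^*$ with that integral for $j=k$ and mass conservation (equivalently $\Lgen_t^{(j)}\mathbf{1}=0$, or $\mathbf 1^\top\R_t^{(j)}=\mathbf 0^\top$) for $j\neq k$. The only cosmetic differences are that you expand a product of commuting single-coordinate transition operators on general test functions where the paper restricts to separable test functions and extends by density, and that you give a unified adjoint/mass-conservation argument for the marginalisation step where the paper spells it out as an explicit Kronecker-sum computation in the discrete case.
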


\begin{proof}
The proof proceeds in two parts: (i) establishing the additive decomposition of the generator, and (ii) deriving the marginal evolution in continuous and discrete state spaces.

\paragraph{Part I: Additive decomposition of the generator.}

Let $\Lgen_t^{(k)}$ denote the single-coordinate generator acting on $\calX^{(k)}$. We work on the dense subspace of separable test functions:
\begin{equation}
\phi(\x) = \prod_{k=1}^d \psi_k(x^{(k)}), \qquad \psi_k : \calX^{(k)} \to \RR.
\end{equation}

By the independence assumption, the short-time conditional expectation factorises:
\begin{align}
\E\!\left[\phi(\x_{t+\Delta t}) \cond \x_t = \x\right]
&= \prod_{k=1}^d \E\!\left[\psi_k(x^{(k)}_{t+\Delta t}) \cond x^{(k)}_t = x^{(k)}\right] \notag\\
&= \prod_{k=1}^d \Bigl(\psi_k(x^{(k)}) + \Delta t \cdot \Lgen_t^{(k)} \psi_k(x^{(k)}) + o(\Delta t)\Bigr).
\end{align}

Expanding the product to first order in $\Delta t$ (cross-terms are $O(\Delta t^2)$):
\begin{align}
\E\!\left[\phi(\x_{t+\Delta t}) \cond \x_t = \x\right]
= \phi(\x) + \Delta t \sum_{k=1}^d \Bigl[\Lgen_t^{(k)} \psi_k(x^{(k)}) \prod_{\ell \neq k} \psi_\ell(x^{(\ell)})\Bigr] + o(\Delta t).
\end{align}

Applying the generator definition:
\begin{align}
\Lgen_t \phi(\x)
&= \lim_{\Delta t \to 0^+} \frac{\E[\phi(\x_{t+\Delta t}) \cond \x_t = \x] - \phi(\x)}{\Delta t} \notag\\
&= \sum_{k=1}^d \Lgen_t^{(k)} \psi_k(x^{(k)}) \prod_{\ell \neq k} \psi_\ell(x^{(\ell)}).
\end{align}

By linearity and density of the separable core in $\calD(\Lgen_t) \subseteq L^2(\calX^d)$, the identity $\Lgen_t = \sum_{k=1}^d \Lgen_t^{(k)}$ extends to the full domain $\mathcal{D}(\Lgen_t)$.

\paragraph{Part II: Marginal evolution.}

We now show that each coordinate marginal evolves independently.

\subparagraph{Continuous state spaces.}
For a diffusion process with diagonal noise (i.e., $\f_t(\x) = (f_t^{(1)}(x^{(1)}), \ldots, f_t^{(d)}(x^{(d)}))$ and scalar diffusion coefficient $g_t$), the single-coordinate \emph{adjoint} generator is:
\begin{equation}
(\Lgen_t^{(k)})^* \phi(x^{(k)}) = -\partial_{x^{(k)}} \bigl(f_t^{(k)}(x^{(k)}) \phi(x^{(k)})\bigr) + \tfrac{1}{2} g_t^2 \, \partial_{x^{(k)} x^{(k)}} \phi(x^{(k)}).
\end{equation}
This is simply the per-coordinate decomposition of the full adjoint generator:
\begin{equation}
(\Lgen_t)^* \phi(\x) = -\nabla_\x \cdot (\f_t(\x) \phi(\x)) + \tfrac{1}{2} g_t^2 \Delta_\x \phi(\x).
\end{equation}
The marginal $q_t^{(k)}(x^{(k)}) = \int_{\calX^{d-1}} q_t(\x) \, \diff\x^{\setminus k}$ then satisfies $\partial_t q_t^{(k)} = (\Lgen_t^{(k)})^* q_t^{(k)}$ by integrating the full KFE over complementary coordinates.

\subparagraph{Discrete state spaces.}
Let $\calX = \{1, \ldots, K\}$. The additive generator decomposition corresponds to the \emph{Kronecker sum} of rate matrices:
\begin{equation}\label{eq:kronecker-sum-apx}
\Rseq_t = \bigoplus_{k=1}^d \R_t^{(k)} = \sum_{k=1}^d \Bigl(\I^{\otimes(k-1)} \otimes \R_t^{(k)} \otimes \I^{\otimes(d-k)}\Bigr).
\end{equation}

We prove that each coordinate marginal $\q_t^{(k)} \in \RR^K$ satisfies $\partial_t \q_t^{(k)} = \R_t^{(k)} \q_t^{(k)}$.

\textit{Setup.} Stack the joint probabilities into a column vector $\q_t \in \RR^{K^d}$ using lexicographic order. Define the marginalisation operator $\Sb_k : \RR^{K^d} \to \RR^K$ that sums over all coordinates except $k$:
\begin{equation}
(\Sb_k \mathbf{q}_t)(i) = \sum_{\substack{x^{(\ell)} \in \calX \\ \ell \neq k}} q_t(x^{(1)}, \ldots, x^{(k)} = i, \ldots, x^{(d)}), \qquad i \in \{1, \ldots, K\}.
\end{equation}

Under our stacking convention, $\Sb_k$ admits the Kronecker form:
\begin{equation}\label{eq:Sk-kronecker-apx}
\Sb_k = (\mathbf{1}^\top)^{\otimes(k-1)} \otimes \I_K \otimes (\mathbf{1}^\top)^{\otimes(d-k)},
\end{equation}
where $\mathbf{1}^\top \in \RR^{1 \times K}$ is the row vector of ones and $\I_K \in \RR^{K \times K}$ is the identity.

\textit{Example ($d=2$, $K=3$).} Let states be $(i,j) \in \{1,2,3\}^2$, stacked as:
\begin{equation*}
\mathbf{q}_t = \bigl(q(1,1), q(1,2), q(1,3), q(2,1), q(2,2), q(2,3), q(3,1), q(3,2), q(3,3)\bigr)^\top.
\end{equation*}
Then:
\begin{equation*}
\Sb_1 = \I_3 \otimes \mathbf{1}^\top =
\begin{bmatrix}
1 & 1 & 1 & 0 & 0 & 0 & 0 & 0 & 0\\
0 & 0 & 0 & 1 & 1 & 1 & 0 & 0 & 0\\
0 & 0 & 0 & 0 & 0 & 0 & 1 & 1 & 1
\end{bmatrix}, \quad
\Sb_2 = \mathbf{1}^\top \otimes \I_3 =
\begin{bmatrix}
1 & 0 & 0 & 1 & 0 & 0 & 1 & 0 & 0\\
0 & 1 & 0 & 0 & 1 & 0 & 0 & 1 & 0\\
0 & 0 & 1 & 0 & 0 & 1 & 0 & 0 & 1
\end{bmatrix}.
\end{equation*}
Applying $\Sb_1$ sums each row of the $3 \times 3$ joint probability table (marginalising over $x^{(2)}$); $\Sb_2$ sums each column (marginalising over $x^{(1)}$).

\textit{Main calculation.} We will now show that $\partial_t \q_t^{(k)} = \Sb_k\partial_t \q_t = \R_t^{(k)} \q_t^{(k)}$. From the joint KFE $\partial_t \mathbf{q}_t = \Rseq_t \mathbf{q}_t$:
\begin{align}
\partial_t \q_t^{(k)} 
&= \Sb_k \, \partial_t \mathbf{q}_t 
= \Sb_k \, \Rseq_t \, \mathbf{q}_t 
= \sum_{j=1}^d \Sb_k \Bigl(\I^{\otimes(j-1)} \otimes \R_t^{(j)} \otimes \I^{\otimes(d-j)}\Bigr) \mathbf{q}_t.
\label{eq:marginal-derivation}
\end{align}

We evaluate each term using the \emph{mixed-product property}: $(A_1 \otimes A_2 \otimes A_3)(B_1 \otimes B_2 \otimes B_3) = (A_1 B_1) \otimes (A_2 B_2) \otimes (A_3 B_3)$, valid whenever the matrix products $A_i B_i$ are well-defined (i.e., the number of columns of $A_i$ equals the number of rows of $B_i$ for each $i$).

\textit{Case $j = k$:}
\begin{align}
&\Sb_k \Bigl(\I^{\otimes(k-1)} \otimes \R_t^{(k)} \otimes \I^{\otimes(d-k)}\Bigr) \notag\\
&= \Bigl[(\mathbf{1}^\top)^{\otimes(k-1)} \otimes \I_K \otimes (\mathbf{1}^\top)^{\otimes(d-k)}\Bigr] \Bigl[\I^{\otimes(k-1)} \otimes \R_t^{(k)} \otimes \I^{\otimes(d-k)}\Bigr] \notag\\
&= \underbrace{\bigl((\mathbf{1}^\top)^{\otimes(k-1)} \cdot \I^{\otimes(k-1)}\bigr)}_{= (\mathbf{1}^\top)^{\otimes(k-1)}} \otimes \underbrace{(\I_K \cdot \R_t^{(k)})}_{= \R_t^{(k)}} \otimes \underbrace{\bigl((\mathbf{1}^\top)^{\otimes(d-k)} \cdot \I^{\otimes(d-k)}\bigr)}_{= (\mathbf{1}^\top)^{\otimes(d-k)}} \notag\\
&= (\mathbf{1}^\top)^{\otimes(k-1)} \otimes \R_t^{(k)} \otimes (\mathbf{1}^\top)^{\otimes(d-k)}\\
&= \R_t^{(k)} \Bigl[(\mathbf{1}^\top)^{\otimes(k-1)} \otimes \I \otimes (\mathbf{1}^\top)^{\otimes(d-k)}\Bigr]=\R_t^{(k)} \Sb_k .
\label{eq:case-j-equals-k}
\end{align}

To recover the last equality \cref{eq:case-j-equals-k}, we simply write $\R_t^{(k)} = \I_1 \otimes \R_t^{(k)} \otimes \I_1$, where $\I_1 = [1]$ is the $1 \times 1$ identity. Applying the mixed-product property:
\begin{align}
&\R_t^{(k)} \cdot \Sb_k = \bigl(\I_1 \otimes \R_t^{(k)} \otimes \I_1\bigr) \bigl((\mathbf{1}^\top)^{\otimes(k-1)} \otimes \I_K \otimes (\mathbf{1}^\top)^{\otimes(d-k)}\bigr) \notag\\
&\qquad= (\mathbf{1}^\top)^{\otimes(k-1)} \otimes \R_t^{(k)} \otimes (\mathbf{1}^\top)^{\otimes(d-k)}.
\end{align}

\textit{Case $j \neq k$:} Without loss of generality, assume $j < k$ (the case $j > k$ is analogous). The mixed-product property yields a factor:
\begin{equation}
\mathbf{1}^\top \cdot \R_t^{(j)} = \mathbf{0}^\top,
\end{equation}
by the \emph{column-sum-to-zero property} of rate matrices: $\sum_{i=1}^K [\R_t^{(j)}]_{i\ell} = 0$ for all $\ell$. Therefore:
\begin{align}
\Sb_k \Bigl(\I^{\otimes(j-1)} \otimes \R_t^{(j)} \otimes \I^{\otimes(d-j)}\Bigr) = \mathbf{0}.\\
\end{align}

\textit{Conclusion.} Substituting into \cref{eq:marginal-derivation}:
\begin{equation}
\partial_t \p_t^{(k)} = \R_t^{(k)} \Sb_k \mathbf{q}_t = \R_t^{(k)} \p_t^{(k)}.
\end{equation}
This completes the proof.
\end{proof}

\begin{remark}[Continuous analogue of marginalisation]
In continuous state spaces with Lebesgue measure, the marginalisation operator $\Sb_k$ corresponds to integration over complementary coordinates:
\begin{equation}
(\Sb_k q_t)(x^{(k)}) = \int_{\RR^{d-1}} q_t(x^{(1)}, \ldots, x^{(k)}, \ldots, x^{(d)}) \, \diff x^{\setminus k}.
\end{equation}
The Kronecker form \eqref{eq:Sk-kronecker-apx} is its discrete analogy.
\end{remark}
\subsection{Score identity}
\label{apx:tweedie-identity}

The score identity provides a fundamental link between the intractable marginal (Stein) score $\nabla_{\x} \log q_t(\x_t)$ and the tractable conditional score $\nabla_{\x} \log q(\x_t \cond \x_0)$.

\begin{lemma}[Score Identity]\label{prop:tweedie}
Let $\{\x_t\}_{t \in [0,1]}$ be a diffusion process on $\RR^d$ with marginal density $q_t$ and forward transition density $q(\x_t \cond \x_0)$. Under standard regularity conditions permitting the exchange of differentiation and integration, the marginal score satisfies:
\begin{equation}\label{eq:tweedie-main}
\nabla_{\x_t} \log q_t(\x_t) = \E\left[\nabla_{\x_t} \log q(\x_t \cond \x_0) \,\big|\, \x_t\right].
\end{equation}
\end{lemma}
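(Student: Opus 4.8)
The plan is to start from the marginalisation identity $q_t(\x_t) = \int_{\RR^d} q(\x_t \cond \x_0)\,\qdata(\x_0)\,\diff\x_0$, differentiate under the integral sign, and then rewrite the result as a conditional expectation via Bayes' rule. First I would take the gradient with respect to $\x_t$ of both sides; assuming the regularity conditions that permit exchanging $\nabla_{\x_t}$ with $\int\diff\x_0$, this gives $\nabla_{\x_t} q_t(\x_t) = \int_{\RR^d} \nabla_{\x_t} q(\x_t \cond \x_0)\,\qdata(\x_0)\,\diff\x_0$.

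Next I would apply the elementary log-derivative identity $\nabla_{\x_t} q(\x_t \cond \x_0) = q(\x_t \cond \x_0)\,\nabla_{\x_t}\log q(\x_t \cond \x_0)$ inside the integral, and then divide both sides by $q_t(\x_t) > 0$. Using $\nabla_{\x_t}\log q_t(\x_t) = \nabla_{\x_t} q_t(\x_t)/q_t(\x_t)$ on the left, this yields
\begin{align*}
\nabla_{\x_t}\log q_t(\x_t)
= \int_{\RR^d} \frac{q(\x_t \cond \x_0)\,\qdata(\x_0)}{q_t(\x_t)}\,\nabla_{\x_t}\log q(\x_t \cond \x_0)\,\diff\x_0 .
\end{align*}
The final step is to recognise the weight $\frac{q(\x_t \cond \x_0)\,\qdata(\x_0)}{q_t(\x_t)}$ as the posterior density $q(\x_0 \cond \x_t)$ by Bayes' rule, so the integral is exactly $\E_{q(\x_0 \cond \x_t)}\!\left[\nabla_{\x_t}\log q(\x_t \cond \x_0)\right] = \E\!\left[\nabla_{\x_t}\log q(\x_t \cond \x_0)\mid \x_t\right]$, which is \eqref{eq:tweedie-main}.

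The only nontrivial point—the ``main obstacle''—is the justification of differentiation under the integral sign. I would handle this by invoking dominated convergence: for the Gaussian forward kernel $q(\x_t\cond\x_0)=\mathcal N(\x_t;\alpha_t\x_0,\sigma_t^2\I)$ used throughout the paper, $\nabla_{\x_t} q(\x_t\cond\x_0) = -\sigma_t^{-2}(\x_t-\alpha_t\x_0)\,q(\x_t\cond\x_0)$, which is dominated locally uniformly in $\x_t$ by an integrable function of $\x_0$ (a Gaussian times a polynomial), provided $\qdata$ has suitable moments or compact support; this legitimises the exchange. In the general-state-space reading one simply takes this exchangeability as the stated hypothesis of the lemma, so no further work is needed there.
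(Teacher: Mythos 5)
Your proposal is correct and follows essentially the same route as the paper's proof: differentiate the marginalisation identity under the integral sign, apply $\nabla f = f\nabla\log f$, and identify the resulting weight as the posterior $q(\x_0\cond\x_t)$ via Bayes' rule. Your extra remark justifying the exchange via dominated convergence for the Gaussian kernel is a welcome addition, since the paper only invokes "standard regularity assumptions" at that step.
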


\begin{proof}
Starting from the definition of the score and the law of total probability:
\begin{align}
\nabla_{\x_t} \log q(\x_t)
&= \frac{\nabla_{\x_t} q(\x_t)}{q(\x_t)} 
\label{eq:tweedie-step1}\\[4pt]
&= \frac{1}{q(\x_t)} \nabla_{\x_t} \int_{\RR^d} q(\x_t \cond \x_0)\, q(\x_0) \,\diff \x_0 
\label{eq:tweedie-step2}\\[4pt]
&= \frac{1}{q(\x_t)} \int_{\RR^d} \nabla_{\x_t} q(\x_t \cond \x_0)\, q(\x_0) \,\diff \x_0 
\label{eq:tweedie-step3}\\[4pt]
&= \frac{1}{q(\x_t)} \int_{\RR^d} q(\x_t \cond \x_0)\, \nabla_{\x_t} \log q(\x_t \cond \x_0)\, q(\x_0) \,\diff \x_0 
\label{eq:tweedie-step4}\\[4pt]
&= \int_{\RR^d} \nabla_{\x_t} \log q(\x_t \cond \x_0) \cdot \frac{q(\x_t \cond \x_0)\, q(\x_0)}{q_t(\x_t)} \,\diff \x_0 
\label{eq:tweedie-step5}\\[4pt]
&= \int_{\RR^d} \nabla_{\x_t} \log q(\x_t \cond \x_0) \cdot q(\x_0 \cond \x_t) \,\diff \x_0 
\label{eq:tweedie-step6}\\[4pt]
&= \E\left[\nabla_{\x_t} \log q(\x_t \cond \x_0) \,\big|\, \x_t\right].
\label{eq:tweedie-step7}
\end{align}

\noindent\textit{Justification of key steps:}
\begin{enumerate}
    \item \textit{\Cref{eq:tweedie-step2} $\to$ \cref{eq:tweedie-step3}}: Exchange of gradient and integral, valid under the dominated convergence theorem under standard regularity assumptions on $\nabla_{\x_t} q(\x_t \cond \x_0)$.
    \item \textit{\Cref{eq:tweedie-step3} $\to$ \cref{eq:tweedie-step4}}: Uses the identity $\nabla f = f \nabla \log f$ for $f > 0$.
    \item \textit{\Cref{eq:tweedie-step5} $\to$ \cref{eq:tweedie-step6}}: Applies Bayes' rule: $q(\x_0 \cond \x_t) = \frac{q(\x_t \cond \x_0)\, q_0(\x_0)}{q_t(\x_t)}$.
\end{enumerate}
\end{proof}

\begin{corollary}[Score as Conditional Expectation Minimiser]\label{cor:score-regression}
Since the conditional expectation $\E[\,\cdot \cond \x_t]$ is the orthogonal projection onto the space of $\sigma(\x_t)$-measurable functions in $L^2$, Tweedie's identity implies that the marginal score minimises the expected squared error:
\begin{equation}\label{eq:score-regression-apx}
\nabla_{\x_t} \log q_t = \operatorname*{argmin}_{f \,\in\, L^2(\RR^d, q_t)} \E_{\x_0, \x_t}\left[\bigl\|f(\x_t) - \nabla_{\x_t} \log q(\x_t \cond \x_0)\bigr\|^2\right].
\end{equation}
This justifies denoising score matching: training a neural network $\mathbf{s}_\theta(\x_t, t)$ to predict $\nabla_{\x_t} \log q(\x_t \cond \x_0)$ recovers the marginal score $\nabla_{\x_t} \log q_t(\x_t)$ at the optimum.
\end{corollary}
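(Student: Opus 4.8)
The plan is to recognise \cref{cor:score-regression} as an instance of the standard $L^2$ orthogonal-projection (``bias--variance'') decomposition, and then to close it using the score identity of \cref{prop:tweedie}. Introduce the $\RR^d$-valued random variable $Y \coloneqq \nabla_{\x_t}\log q(\x_t\cond\x_0)$ on the joint law $q(\x_0,\x_t)=q_0(\x_0)\,q(\x_t\cond\x_0)$, and write $m(\x_t)\coloneqq \E[Y\cond\x_t]$ for its conditional expectation given the noisy observation. First I would invoke the regularity assumptions already in force for \cref{prop:tweedie}, supplemented by the requirement $Y\in L^2$ (a finite-conditional-Fisher-information hypothesis, automatic for the Gaussian forward kernels considered in the main text), so that $Y$, $m(\x_t)$, and every expectation below are well defined and finite.

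Next, for an arbitrary candidate $f\in L^2(\RR^d,q_t)$ I would add and subtract $m(\x_t)$ inside the norm:
\[
\|f(\x_t)-Y\|^2 = \|f(\x_t)-m(\x_t)\|^2 + 2\,\bigl\langle f(\x_t)-m(\x_t),\; m(\x_t)-Y\bigr\rangle + \|m(\x_t)-Y\|^2 .
\]
Taking the joint expectation, the cross term vanishes by the tower property: conditionally on $\x_t$ the factor $f(\x_t)-m(\x_t)$ is $\sigma(\x_t)$-measurable and comes out of the inner conditional expectation, while $\E[\,m(\x_t)-Y\cond\x_t\,]=m(\x_t)-\E[Y\cond\x_t]=0$. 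This yields the decomposition
\[
\E_{\x_0,\x_t}\!\bigl[\|f(\x_t)-Y\|^2\bigr] = \E_{\x_t}\!\bigl[\|f(\x_t)-m(\x_t)\|^2\bigr] + \E_{\x_0,\x_t}\!\bigl[\|m(\x_t)-Y\|^2\bigr].
\]
Since only the first term depends on $f$ and is nonnegative, the objective is minimised precisely when $\E_{\x_t}\|f(\x_t)-m(\x_t)\|^2=0$, i.e.\ when $f=m$ holds $q_t$-almost everywhere; thus $m$ is the unique minimiser in $L^2(\RR^d,q_t)$ (up to $q_t$-null sets). The computation is pointwise across the $d$ coordinates, so nothing beyond reading $\|\cdot\|$ as the Euclidean norm is needed in the vector-valued case.

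To finish, I would apply the score identity \cref{eq:tweedie-main}, which asserts exactly that $m(\x_t)=\E[\nabla_{\x_t}\log q(\x_t\cond\x_0)\cond\x_t]=\nabla_{\x_t}\log q_t(\x_t)$, and conclude that the minimiser coincides with the marginal score, establishing \cref{eq:score-regression-apx}. I do not expect a genuine obstacle here: the only delicate point is the integrability bookkeeping --- verifying $Y\in L^2$ so that the cross term is legitimate and the three-way decomposition holds --- which I would absorb into the ``standard regularity conditions'' already referenced in the corollary, with Gaussian forward kernels as the canonical case where it is immediate.
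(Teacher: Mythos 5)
Your proposal is correct and follows the same route the paper intends: the corollary's stated justification is precisely that the conditional expectation is the $L^2$ orthogonal projection (which you prove via the standard add-and-subtract/tower-property decomposition) combined with the score identity \cref{eq:tweedie-main} to identify that projection with the marginal score. Your explicit attention to the integrability hypothesis $\nabla_{\x_t}\log q(\x_t\cond\x_0)\in L^2$ is a reasonable refinement of the paper's ``standard regularity conditions.''
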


\section{Loss derivation from ELBO\label{apx:ELBO_derivation}}

\subsection{Discrete time path-space to denoising step KL \label{apx:discrete-time-ELBO}}

\begin{lemma}[ELBO decomposition]\label{lem:elbo-decomposition}
Let $q(\x_{1:T} \cond \x_0) = \prod_{t=1}^{T} q(\x_t \cond \x_{t-1})$ be the forward diffusion process and $p^\thetab(\x_{0:T}) = p(\x_T) \prod_{t=1}^{T} p^\thetab(\x_{t-1} \cond \x_t)$ be the parameterised reverse process. Then the reconstruction term simplifies to
\begin{equation}\label{eq:reconstruction-simplification}
\E_{q(\x_{1:T}\cond \x_0)} \left[ \log p^\thetab(\x_0 \cond \x_{1:T}) \right] = \E_{q(\x_1 \cond \x_0)} \left[\log p^\thetab(\x_0\cond \x_1)\right],
\end{equation}
and the path-space KL divergence decomposes as
\begin{align}\label{eq:path-kl-decomposition}
D_{\mathrm{KL}} \left( q(\x_{1:T} \cond \x_0) \,\|\, p^\thetab(\x_{1:T}) \right)
&= D_{\mathrm{KL}}\bigl(q(\x_T \cond \x_0)\,\|\,p(\x_T)\bigr) \notag\\
&\quad + \sum_{t=2}^T \E_{q(\x_t \cond \x_0)}\left[D_{\mathrm{KL}}\bigl(q(\x_{t-1} \cond \x_{t}, \x_0)\,\|\,p^\thetab(\x_{t-1} \cond \x_{t})\bigr)\right],
\end{align}
wherer $p(\x_T)\approx p_\text{noise}$ the prior distribution and $D_{\mathrm{KL}}(\cdot \| \cdot)$ is the Kullback-Leibler divergence.
\end{lemma}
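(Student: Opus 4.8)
The plan is to treat the two claims separately, each a direct consequence of the Markov factorisations $q(\x_{1:T}\cond\x_0)=\prod_{t=1}^T q(\x_t\cond\x_{t-1})$ and $p^\thetab(\x_{0:T})=p(\x_T)\prod_{t=1}^T p^\thetab(\x_{t-1}\cond\x_t)$ together with Bayes' rule. First I would record the two structural facts needed throughout. Marginalising the reverse joint over $\x_0$ gives $p^\thetab(\x_{1:T})=p(\x_T)\prod_{t=2}^T p^\thetab(\x_{t-1}\cond\x_t)$, since $\sum_{\x_0}p^\thetab(\x_0\cond\x_1)=1$; dividing the joint by this marginal shows $p^\thetab(\x_0\cond\x_{1:T})=p^\thetab(\x_0\cond\x_1)$, i.e.\ the reverse chain is Markov in the generative direction. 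For the forward chain, the Markov property lets us write $q(\x_t\cond\x_{t-1})=q(\x_t\cond\x_{t-1},\x_0)$ for every $t$.

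For \cref{eq:reconstruction-simplification}, I would substitute $p^\thetab(\x_0\cond\x_{1:T})=p^\thetab(\x_0\cond\x_1)$ into the reconstruction term. The integrand then depends on $\x_{1:T}$ only through $\x_1$, so by the tower rule the expectation over $\x_{2:T}$ collapses and $\E_{q(\x_{1:T}\cond\x_0)}[\log p^\thetab(\x_0\cond\x_1)]=\E_{q(\x_1\cond\x_0)}[\log p^\thetab(\x_0\cond\x_1)]$.

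For \cref{eq:path-kl-decomposition}, I would write the path-space KL as $\E_{q(\x_{1:T}\cond\x_0)}\bigl[\log\bigl(q(\x_{1:T}\cond\x_0)/p^\thetab(\x_{1:T})\bigr)\bigr]$ and rewrite the numerator. The key manipulation: for each $t\ge2$ replace $q(\x_t\cond\x_{t-1})$ by $q(\x_t\cond\x_{t-1},\x_0)=q(\x_{t-1}\cond\x_t,\x_0)\,q(\x_t\cond\x_0)/q(\x_{t-1}\cond\x_0)$ via Bayes' rule, keeping the $t=1$ factor as $q(\x_1\cond\x_0)$. The product $\prod_{t=2}^T q(\x_t\cond\x_0)/q(\x_{t-1}\cond\x_0)$ telescopes to $q(\x_T\cond\x_0)/q(\x_1\cond\x_0)$, the stray $q(\x_1\cond\x_0)$ cancels, and one obtains $q(\x_{1:T}\cond\x_0)=q(\x_T\cond\x_0)\prod_{t=2}^T q(\x_{t-1}\cond\x_t,\x_0)$. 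Dividing by $p^\thetab(\x_{1:T})=p(\x_T)\prod_{t=2}^T p^\thetab(\x_{t-1}\cond\x_t)$ and taking logs yields $\log\bigl(q(\x_T\cond\x_0)/p(\x_T)\bigr)+\sum_{t=2}^T\log\bigl(q(\x_{t-1}\cond\x_t,\x_0)/p^\thetab(\x_{t-1}\cond\x_t)\bigr)$. Taking $\E_{q(\x_{1:T}\cond\x_0)}$ term by term: the first term depends only on $\x_T$, giving $\KL(q(\x_T\cond\x_0)\,\|\,p(\x_T))$; the $t$-th summand depends only on $(\x_{t-1},\x_t)$, so the tower rule $\E_{q(\x_t\cond\x_0)}\E_{q(\x_{t-1}\cond\x_t,\x_0)}[\,\cdot\,]$ turns it into $\E_{q(\x_t\cond\x_0)}\bigl[\KL(q(\x_{t-1}\cond\x_t,\x_0)\,\|\,p^\thetab(\x_{t-1}\cond\x_t))\bigr]$. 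Summing recovers \cref{eq:path-kl-decomposition}.

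The main obstacle is purely bookkeeping: the boundary of the telescoping product ($t=1$ versus $t\ge2$) must be handled carefully, and each reduction of a path-space expectation to a low-dimensional marginal expectation must be justified by checking that the integrand genuinely depends only on the claimed variables — which in turn relies on the Markov factorisations of both chains. There is no deeper difficulty; this is the standard discrete-time diffusion ELBO derivation.
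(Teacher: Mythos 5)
Your proposal is correct and follows essentially the same route as the paper's proof: Bayes' rule on the forward kernels, telescoping of the marginal ratios, and the tower property to collapse path-space expectations onto the relevant marginals. Your bookkeeping is in fact slightly cleaner than the paper's, since you apply Bayes' rule only for $t\ge 2$ (avoiding the degenerate $t=1$ case) and explicitly identify $p^\thetab(\x_{1:T})=p(\x_T)\prod_{t=2}^{T}p^\thetab(\x_{t-1}\cond\x_t)$ by marginalising out $\x_0$, which the paper leaves implicit.
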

\begin{proof}
We prove each part separately.

\paragraph{Part I: Reconstruction term.}
Since the reverse process $p^\thetab$ is first-order Markov:
\begin{equation}
p^\thetab(\x_0, \x_1, \ldots, \x_T) = p(\x_T) \prod_{t=1}^{T} p^\thetab(\x_{t-1} \cond \x_t).
\end{equation}
Conditioning on the full trajectory $\x_{1:T}$ and applying the Markov property:
\begin{align}
p^\thetab(\x_0 \cond \x_{1:T})
&= \frac{p(\x_T) \prod_{t=1}^{T} p^\thetab(\x_{t-1} \cond \x_t)}{p(\x_T) \prod_{t=2}^{T} p^\thetab(\x_{t-1} \cond \x_t)}
= p^\thetab(\x_0 \cond \x_1).
\end{align}
Hence $\E_{q(\x_{1:T}\cond \x_0)}[\log p^\thetab(\x_0 \cond \x_{1:T})] = \E_{q(\x_1 \cond \x_0)}[\log p^\thetab(\x_0 \cond \x_1)]$.

\paragraph{Part II: Path-space KL decomposition.}
By definition of KL divergence:
\begin{align}
D_{\mathrm{KL}}\bigl(q(\x_{1:T}\cond\x_0)\,\|\,p^\thetab(\x_{1:T})\bigr)
&= \E_{q(\x_{1:T}\cond\x_0)}
   \left[
      \log\frac{\prod_{t=1}^{T}q(\x_t\cond\x_{t-1})}
               {p(\x_T)\prod_{t=1}^{T}p^\thetab(\x_{t-1}\cond\x_t)}
   \right].
\label{eq:path-kl-start}
\end{align}

\emph{Telescoping the forward process.} Using Bayes' rule on the forward kernel:
\begin{equation}
q(\x_t \cond \x_{t-1}) = q(\x_t \cond \x_{t-1}, \x_0) = q(\x_{t-1} \cond \x_t, \x_0) \cdot \frac{q(\x_t \cond \x_0)}{q(\x_{t-1} \cond \x_0)},
\end{equation}
the product telescopes:
\begin{align}
\prod_{t=1}^{T} q(\x_t \cond \x_{t-1})
&= \prod_{t=1}^{T} q(\x_{t-1} \cond \x_t, \x_0) \cdot \frac{q(\x_t \cond \x_0)}{q(\x_{t-1} \cond \x_0)} \notag\\
&= q(\x_T \cond \x_0) \prod_{t=2}^{T} q(\x_{t-1} \cond \x_t, \x_0).
\end{align}

\emph{Substituting into \cref{eq:path-kl-start}:}
\begin{align}
D_{\mathrm{KL}}\bigl(q(\x_{1:T}\cond\x_0)\,\|\,p^\thetab(\x_{1:T})\bigr)
&= \E_{q(\x_{1:T}\cond\x_0)}
   \left[
      \log\frac{q(\x_T\cond\x_0)}{p(\x_T)}
      + \sum_{t=2}^{T}
        \log\frac{q(\x_{t-1}\cond\x_t,\x_0)}{p^\thetab(\x_{t-1}\cond\x_t)}
   \right] \notag\\
&= D_{\mathrm{KL}}\bigl(q(\x_T \cond \x_0) \,\|\, p(\x_T)\bigr) 
   + \sum_{t=2}^{T} \E_{q(\x_{1:T}\cond\x_0)}\left[\log\frac{q(\x_{t-1}\cond\x_t,\x_0)}{p^\thetab(\x_{t-1}\cond\x_t)}\right].
\label{eq:path-kl-telescoped}
\end{align}

\emph{Simplifying the per-step expectations.} Since the integrand $\log\frac{q(\x_{t-1}\cond\x_t,\x_0)}{p^\thetab(\x_{t-1}\cond\x_t)}$ depends only on $(\x_{t-1}, \x_t)$, we apply the law of iterated expectation:
\begin{align}
\E_{q(\x_{1:T}\cond\x_0)}\left[\log\frac{q(\x_{t-1}\cond\x_t,\x_0)}{p^\thetab(\x_{t-1}\cond\x_t)}\right]
&= \E_{q(\x_t\cond\x_0)}\left[
   \E_{q(\x_{t-1}\cond\x_t,\x_0)}\left[
      \log\frac{q(\x_{t-1}\cond\x_t,\x_0)}{p^\thetab(\x_{t-1}\cond\x_t)}
   \right]
\right].
\label{eq:iterated-expectation}
\end{align}

By definition of KL divergence, for each fixed $\x_t$:
\begin{equation}
D_{\mathrm{KL}}\bigl(q(\x_{t-1}\cond\x_t,\x_0)\,\|\,p^\thetab(\x_{t-1}\cond\x_t)\bigr)
= \E_{q(\x_{t-1}\cond\x_t,\x_0)}\left[
   \log\frac{q(\x_{t-1}\cond\x_t,\x_0)}{p^\thetab(\x_{t-1}\cond\x_t)}
\right].
\label{eq:kl-definition}
\end{equation}

Substituting \cref{eq:kl-definition} into \cref{eq:iterated-expectation}:
\begin{equation}
\E_{q(\x_{1:T}\cond\x_0)}\left[\log\frac{q(\x_{t-1}\cond\x_t,\x_0)}{p^\thetab(\x_{t-1}\cond\x_t)}\right]
= \E_{q(\x_t \cond \x_0)} \left[ D_{\mathrm{KL}}\bigl(q(\x_{t-1}\cond\x_t,\x_0)\,\|\,p^\thetab(\x_{t-1}\cond\x_t)\bigr) \right].
\end{equation}

Combining with \cref{eq:path-kl-telescoped} completes the proof.
\end{proof}

\subsection{Path-space KL divergence and the ELBO \label{apx:ELBO_paths}}

This appendix establishes the key inequality relating the marginal KL divergence to the path-space KL divergence, which underpins the ELBO derivation.

\subsubsection{Chain rule decomposition}

\begin{proposition}[Path-space chain rule]\label{prop:path-chain-rule}
Let $\hat \Qpath$ denote the path measure of the true reverse process $\{\hat{\x}_s\}_{s \in [0,1]}$ and $\hat \Ppath^\theta$ the path measure of the learned reverse process. Then:
\begin{equation}
\KL(\hat{\Qpath} \,\|\, \hat \Ppath^\theta)
\;=\;
\KL(\qdata \,\|\, p_0^\theta)
\;+\;
\E_{\x \sim \qdata}
\Bigl[
  \KL
  \bigl(
   \hat{\Qpath}(\cdot \cond \hat{\x}_1 = \x)
    \,\big\|\,
    \hat \Ppath^\theta(\cdot \cond \hat{\x}_1^\theta = \x)
  \bigr)
\Bigr],
\label{eq:KL-chain-rule}
\end{equation}
where $\hat{\x}_1 = \x_0$ denotes the endpoint of the reverse process (i.e., the reconstructed data).
\end{proposition}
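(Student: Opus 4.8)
The plan is to recognise \eqref{eq:KL-chain-rule} as an instance of the \emph{chain rule (disintegration identity) for relative entropy}, applied to the decomposition of each path measure into the law of its terminal coordinate $\hat\x_1 = \omega(1)$ together with the bridge (conditional path) measure given that coordinate. Since the reverse process runs from noise at $s=0$ to data at $s=1$, the terminal coordinate $\hat\x_1$ is exactly the reconstructed datum; moreover, its marginal law under $\hat\Qpath$ is $\hat q_1 = q_0 = \qdata$ by the boundary condition on the time-reversed marginals (\cref{rem:hat-convention}), while under $\hat\Ppath^\thetab$ it is $p_0^\thetab$ by construction of the learned generative process. So the statement is precisely the two-variable KL chain rule with the ``coarse'' variable being $\hat\x_1$ and the ``fine'' variable being the rest of the trajectory.

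First I would invoke the disintegration theorem on the relevant Polish path space (e.g.\ $C([0,1];\calX^d)$ in the continuous case, or the corresponding c\`adl\`ag/Skorokhod space for CTMCs), which yields regular conditional probabilities and the factorisations
\[
\hat\Qpath(\diff\omega) = \qdata(\diff\x)\,\hat\Qpath\bigl(\diff\omega \cond \hat\x_1 = \x\bigr),
\qquad
\hat\Ppath^\thetab(\diff\omega) = p_0^\thetab(\diff\x)\,\hat\Ppath^\thetab\bigl(\diff\omega \cond \hat\x_1^\thetab = \x\bigr).
\]
Next, under the absolute continuity $\hat\Qpath \ll \hat\Ppath^\thetab$ that is required for the path-space KL to be finite (the same hypothesis underlying the Girsanov argument used in the ELBO derivation), the Radon--Nikodym derivative factorises along the disintegration,
\[
\frac{\diff\hat\Qpath}{\diff\hat\Ppath^\thetab}(\omega)
= \frac{\diff\qdata}{\diff p_0^\thetab}\bigl(\hat\x_1(\omega)\bigr)\cdot
\frac{\diff\,\hat\Qpath(\cdot \cond \hat\x_1(\omega))}{\diff\,\hat\Ppath^\thetab(\cdot \cond \hat\x_1(\omega))}(\omega),
\]
and $\hat\Qpath(\cdot \cond \hat\x_1) \ll \hat\Ppath^\thetab(\cdot \cond \hat\x_1)$ for $\qdata$-a.e.\ endpoint. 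Taking logarithms, taking the expectation under $\hat\Qpath$, and applying Fubini--Tonelli to the second term (integrating first over the conditional law of $\omega$ given $\hat\x_1 = \x$, then over $\x \sim \qdata$) turns the first factor into $\KL(\qdata \,\|\, p_0^\thetab)$ and the second into $\E_{\x\sim\qdata}\bigl[\KL(\hat\Qpath(\cdot\cond\hat\x_1=\x)\,\|\,\hat\Ppath^\thetab(\cdot\cond\hat\x_1^\thetab=\x))\bigr]$, which is exactly \eqref{eq:KL-chain-rule}.

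The main obstacle is purely measure-theoretic rather than computational: one must justify that the conditional path measures are bona fide regular conditional probabilities and that the Radon--Nikodym density splits multiplicatively along the disintegration. On the Polish path spaces relevant here this is standard (existence of regular conditional distributions, together with the factorisation of densities under disintegration), but it does rely on the mild regularity assumptions tacitly in force throughout the manuscript---namely a common dominating reference measure and $\hat\Qpath \ll \hat\Ppath^\thetab$. Once the disintegration is in place, the remaining steps (logarithm, iterated expectation, Fubini) are elementary and require no new estimates.
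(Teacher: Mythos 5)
Your proposal is correct and follows exactly the same route as the paper: the paper's proof simply invokes the chain rule for KL divergence with $A=\hat\x_1$ (the endpoint) and $B$ the rest of the trajectory, which is precisely your disintegration argument. You merely spell out the measure-theoretic details (regular conditional probabilities, multiplicative splitting of the Radon--Nikodym derivative, Fubini--Tonelli) that the paper leaves implicit by citing the chain rule directly.
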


\begin{proof}
This follows from the chain rule for KL divergence \cite{leonard2014some}. For any joint distributions $P(A, B)$ and $Q(A, B)$:
\begin{equation}
\KL(P(A, B) \,\|\, Q(A, B)) = \KL(P(A) \,\|\, Q(A)) + \E_{a \sim P(A)}\bigl[\KL(P(B \cond A=a) \,\|\, Q(B \cond A=a))\bigr].
\end{equation}
Applying this with $A = \hat{\x}_1$ (the endpoint/data) and $B = \{\hat{\x}_s\}_{s \in [0,1)}$ (the rest of the path) yields \cref{eq:KL-chain-rule}.
\end{proof}

Since KL divergence is non-negative, we immediately obtain:
\begin{equation}
\KL(\qdata \,\|\, p_0^\theta) \;\leq\; \KL(\hat{\Qpath} \,\|\, \hat \Ppath^\theta).
\label{eq:marginal-path-inequality}
\end{equation}

\subsubsection{Data processing inequality}\label{apx:data_processing_inequality}

We provide an alternative of \cref{eq:marginal-path-inequality} to relate the marginal KL divergence to the path-space KL divergence, using the data processing inequality, following \cite{song2021maximum}.

\begin{proposition}[Marginal bound via data processing]
For path measures $\hat{\Qpath}$ and $\hat \Ppath^\theta$ of the true and learned reverse processes:
\begin{equation}
\KL(\qdata \,\|\, p_0^\theta) \;\leq\; \KL(\hat{\Qpath} \,\|\, \hat \Ppath^\theta).
\end{equation}
\end{proposition}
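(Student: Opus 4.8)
The plan is to deduce the inequality directly from the data processing inequality for relative entropy: for any measurable map $\Phi$ and any probability measures $\mu,\nu$ on its domain, $\KL(\Phi_\#\mu \,\|\, \Phi_\#\nu) \le \KL(\mu \,\|\, \nu)$, with equality when $\Phi$ is sufficient. I would instantiate this with $\mu = \hat\Qpath$, $\nu = \hat\Ppath^\theta$ (the two path measures on trajectory space, over paths $\hat\x_{[0,1]}$ of the reverse process), and with $\Phi$ the endpoint evaluation $\Phi(\hat\x_{[0,1]}) \coloneqq \hat\x_1$, which is measurable.

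The only real content is identifying the two pushforwards. Since the true reverse process has marginals $\hat q_s = q_{1-s}$, its terminal marginal is $\hat q_1 = q_0 = \qdata$, so $\Phi_\#\hat\Qpath = \qdata$; and since the approximate reverse process is, by construction, started from $p_{\text{noise}}$ and run under the learned dynamics, its terminal marginal is the model's data-space law $p_0^\theta$, so $\Phi_\#\hat\Ppath^\theta = p_0^\theta$. Plugging these into the data processing inequality gives $\KL(\qdata \,\|\, p_0^\theta) \le \KL(\hat\Qpath \,\|\, \hat\Ppath^\theta)$. For self-containedness I would include the one-line proof of the data processing inequality via the chain rule for KL already invoked in Proposition~\ref{prop:path-chain-rule}: $\KL(\mu\|\nu) = \KL(\Phi_\#\mu \,\|\, \Phi_\#\nu) + \E_{y\sim\Phi_\#\mu}\big[\KL(\mu(\cdot\mid\Phi{=}y)\,\|\,\nu(\cdot\mid\Phi{=}y))\big] \ge \KL(\Phi_\#\mu\,\|\,\Phi_\#\nu)$, the last step using nonnegativity of KL. This is exactly the route of \citet{song2021maximum} referenced in the statement.

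I do not anticipate a substantive obstacle; this is essentially a bookkeeping argument, and the harder analytic work (identifying $\hat\Qpath$, $\hat\Ppath^\theta$, and establishing absolute continuity) lives in the Girsanov step already developed in \cref{sect:generator-ELBO}. The two points deserving a sentence of care are (i) well-definedness, i.e.\ all KL divergences are read with the convention that $\KL = +\infty$ whenever the absolute-continuity hypothesis $\hat\Ppath^\theta \ll \hat\Qpath$ fails, in which case the bound is vacuous and no generality is lost; and (ii) the time-reversal convention, i.e.\ checking that the endpoint of the reverse process at $s=1$ indeed carries the data marginal via $\hat q_1 = q_{1-1} = q_0 = \qdata$ (and not the noise marginal), so that the left-hand side is precisely $\KL(\qdata\,\|\,p_0^\theta)$ as stated.
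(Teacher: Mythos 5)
Your proposal is correct and follows essentially the same route as the paper: both apply the data processing inequality to the endpoint-evaluation map (the paper phrases it as a deterministic Markov kernel $K(\{\hat\x_s\},\y)=\delta_{\hat\x_1}(\y)$, you as a pushforward $\Phi_\#$), identify the pushforwards as $\qdata$ and $p_0^\theta$ via $\hat q_1 = q_0$, and conclude. Your added one-line derivation of the DPI from the chain rule simply folds in the content of the paper's preceding Proposition~\ref{prop:path-chain-rule}, and your remarks on absolute continuity and the time-reversal convention are sound but not substantively different.
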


\begin{proof}
Define the Markov kernel $K$ that extracts the endpoint of a reverse path:
\begin{equation}
K\bigl(\{\hat{\x}_s\}_{s \in [0,1]},\, \y\bigr) \;\coloneqq\; \delta_{\hat{\x}_1}(\y),
\end{equation}
where $\hat{\x}_1$ is the terminal value of the reverse process (corresponding to the reconstructed data $\x_0$). Intuitively, $K$ ``forgets'' the entire trajectory except its final state.

The pushforward of a path measure under $K$ yields the marginal distribution at the endpoint:
\begin{align}
K \# \hat{\Qpath} &= \hat{q}_1 = q_0 = \qdata, \\
K \# \hat \Ppath^\theta &= p_0^\theta.
\end{align}

The \emph{data processing inequality} (monotonicity of KL divergence under Markov kernels) states that for any probability measures $\alpha, \beta$ on the domain of a kernel $K$:
\begin{equation}
\KL(K \# \alpha \,\|\, K \# \beta) \;\leq\; \KL(\alpha \,\|\, \beta).
\end{equation}
This inequality reflects the fact that coarsening or discarding information (here, forgetting the path and retaining only the endpoint) cannot increase the distinguishability between two distributions.

Applying this with $\alpha = \hat{\Qpath}$ and $\beta = \hat \Ppath^\theta$:
\begin{equation}
\KL(\qdata \,\|\, p_0^\theta) 
\;=\; \KL(K \# \hat{\Qpath} \,\|\, K \# \hat \Ppath^\theta) 
\;\leq\; \KL(\hat{\Qpath} \,\|\, \hat \Ppath^\theta).
\end{equation}
\end{proof}

\begin{corollary}[Connection to the ELBO]
The inequality $\KL(\qdata \,\|\, p_0^\theta) \leq \KL(\hat{\Qpath} \,\|\, \hat \Ppath^\theta)$ implies:
\begin{equation}
-\E_{\qdata}[\log p_0^\theta(\x_0)] \;\leq\; \KL(\hat{\Qpath} \,\|\, \hat \Ppath^\theta) + \mathcal{H}(\qdata),
\end{equation}
where $\mathcal{H}(\qdata)$ is the entropy of the data distribution. Since $\mathcal{H}(\qdata)$ is constant with respect to $\theta$, minimising the path-space KL divergence $\KL(\hat{\Qpath} \,\|\, \hat \Ppath^\theta)$ provides an upper bound on the negative log-likelihood, forming the basis of the ELBO in \cref{sect:ELBO}.
\end{corollary}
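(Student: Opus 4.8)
The plan is to unfold the definition of the Kullback--Leibler divergence on the left-hand side of the already-established data processing inequality and recognise the data entropy as the piece that separates $\KL(\qdata\,\|\,p_0^\theta)$ from the negative log-likelihood. Concretely, I would first write
\begin{equation*}
\KL(\qdata\,\|\,p_0^\theta)
= \E_{\x_0\sim\qdata}\!\left[\log\qdata(\x_0)\right]
- \E_{\x_0\sim\qdata}\!\left[\log p_0^\theta(\x_0)\right]
= -\,\mathcal{H}(\qdata) - \E_{\qdata}\!\left[\log p_0^\theta(\x_0)\right],
\end{equation*}
using the definition $\mathcal{H}(\qdata)\coloneqq\E_{\qdata}[-\log\qdata(\x_0)]$ from the body of the text. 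Rearranging this identity immediately yields
$-\,\E_{\qdata}[\log p_0^\theta(\x_0)] = \KL(\qdata\,\|\,p_0^\theta) + \mathcal{H}(\qdata)$.

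Next I would invoke the data processing inequality proven just above (Proposition ``Marginal bound via data processing''), namely $\KL(\qdata\,\|\,p_0^\theta)\le\KL(\hat\Qpath\,\|\,\hat\Ppath^\theta)$, obtained by pushing both path measures forward through the endpoint-extraction kernel $K$. Substituting this bound into the identity from the previous step gives
\begin{equation*}
-\,\E_{\qdata}[\log p_0^\theta(\x_0)]
= \KL(\qdata\,\|\,p_0^\theta) + \mathcal{H}(\qdata)
\le \KL(\hat\Qpath\,\|\,\hat\Ppath^\theta) + \mathcal{H}(\qdata),
\end{equation*}
which is exactly the claimed inequality. Finally I would note that $\mathcal{H}(\qdata)$ does not depend on $\theta$, so minimising the path-space KL divergence $\KL(\hat\Qpath\,\|\,\hat\Ppath^\theta)$ over $\theta$ minimises an upper bound on the negative log-likelihood; this is the observation that links the generator-based development back to the ELBO objective of \cref{sect:ELBO}.

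There is essentially no hard step here: the result is a one-line algebraic rearrangement combined with an already-proven inequality. The only point that warrants a word of care is the implicit finiteness of $\mathcal{H}(\qdata)$ (and of the cross-entropy), which is what makes the additive splitting of $\KL(\qdata\,\|\,p_0^\theta)$ meaningful; under the standing regularity assumptions on $\qdata$ this is immediate, and I would state it as a hypothesis rather than belabour it.
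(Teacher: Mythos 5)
Your proposal is correct and follows exactly the paper's own route: the identity $-\E_{\qdata}[\log p_0^\theta(\x_0)] = \KL(\qdata\,\|\,p_0^\theta) + \mathcal{H}(\qdata)$ obtained by unfolding the KL divergence, combined with the data processing inequality $\KL(\qdata\,\|\,p_0^\theta)\le\KL(\hat\Qpath\,\|\,\hat\Ppath^\theta)$ established just above. Your remark on the implicit finiteness of $\mathcal{H}(\qdata)$ is a reasonable additional caveat but does not change the argument.
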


We derive the score matching (SM) objective by bounding the KL divergence between data distributions using path-space KL divergences and Girsanov's theorem.

\paragraph{Setup.}
Let $q_t, p_t : \RR^d \times [0,1] \to \RR_{\geq 0}$ be two probability density paths over time $t\in [0,1]$, induced by two reverse-time SDEs:
\begin{align}
    \diff\x_t &= \mub_q(\x_t, t) \,\diff t + g_t \,\diff\wtilde_t, \quad \x_1 \sim q_1, \label{eq:reverse-sde-q}\\
    \diff\x_t &= \mub_p(\x_t, t) \,\diff t + g_t \,\diff\wtilde'_t, \quad \x_1 \sim p_1, \label{eq:reverse-sde-p}
\end{align}
where time flows backwards from $t=1$ (noise) to $t=0$ (data), and $\wtilde_t, \wtilde'_t$ are reverse-time Wiener processes. In the context of diffusion generative modelling:
\begin{enumerate}
    \item True reverse process \eqref{eq:reverse-sde-q}: The time-reversal of the forward noising process (e.g., variance-preserving Ornstein--Uhlenbeck), mapping $q_1(\x) = \int q_{1|0}(\x \cond \x_0) \,\qdata(\x_0) \,\diff\x_0$ to $q_0 = \qdata$. The drift $\mub_q$ depends on the intractable marginal score $\nabla_{\x} \log q_t(\x)$.
    \item Learned reverse process \eqref{eq:reverse-sde-p}: Maps a tractable prior $p_1$ to a learned distribution $p_0 \approx \qdata$. The drift $\mub_p$ replaces the true score with a learned score model $\s_\theta(\x, t) \approx \nabla_{\x} \log q_t(\x)$.
\end{enumerate}

Let $\Qpath$ and $\Ppath$ denote the path measures induced by \eqref{eq:reverse-sde-q} and \eqref{eq:reverse-sde-p} on the space of continuous trajectories $C([0,1], \RR^d)$.

\begin{proposition}[ELBO via path-space KL]\label{prop:elbo-girsanov}
Under the setup above, the KL divergence between the generated and data distributions satisfies:
\begin{equation}
\KL(q_0 \| p_0) \leq \KL(q_1 \| p_1) + \frac{1}{2} \int_0^1 g_t^2 \, \E_{\x_t \sim q_t} \left[ \| \s_\theta(\x_t, t) - \nabla_{\x_t} \log q_t(\x_t) \|^2 \right] \diff t.
\label{eq:elbo-bound-girsanov}
\end{equation}
\end{proposition}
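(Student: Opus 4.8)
The plan is to combine the data processing inequality with Girsanov's theorem, exactly as in \citet{song2021maximum}. First I would observe that the endpoint map $K : C([0,1],\RR^d) \to \RR^d$ sending a trajectory to its terminal value $\x_0$ (at reverse time $t=0$) is a measurable function, hence a (deterministic) Markov kernel. Pushing the path measures forward under $K$ gives $K\#\Qpath = q_0$ and $K\#\Ppath = p_0$, so the data processing inequality yields $\KL(q_0\|p_0) \le \KL(\Qpath\|\Ppath)$. This reduces the problem to computing the path-space KL divergence between the two reverse-time diffusions \eqref{eq:reverse-sde-q} and \eqref{eq:reverse-sde-p}.

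Next I would apply the chain rule for KL divergence on path space, splitting a trajectory into its initial point $\x_1$ (at reverse time $t=1$) and the remaining path conditioned on $\x_1$:
\begin{align*}
\KL(\Qpath\|\Ppath) = \KL(q_1\|p_1) + \E_{\x_1\sim q_1}\bigl[\KL\bigl(\Qpath(\cdot\mid\x_1)\,\|\,\Ppath(\cdot\mid\x_1)\bigr)\bigr].
\end{align*}
The conditional path measures correspond to the two SDEs started from the \emph{same} deterministic initial condition $\x_1$ and driven by Brownian motion with the \emph{same} diffusion coefficient $g_t$; they differ only in drift ($\mub_q$ versus $\mub_p$). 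Girsanov's theorem (in the form already invoked via the generator framework, or classically, e.g.\ \citet{oksendal2003stochastic}) then gives the Radon--Nikodym derivative as an exponential martingale, and taking $\E_{\Qpath}[\log(\diff\Qpath/\diff\Ppath)\mid\x_1]$ produces the standard quadratic expression
\begin{align*}
\KL\bigl(\Qpath(\cdot\mid\x_1)\,\|\,\Ppath(\cdot\mid\x_1)\bigr) = \frac{1}{2}\int_0^1 \E_{\x_t\sim q_t(\cdot\mid\x_1)}\Bigl[\tfrac{1}{g_t^2}\|\mub_q(\x_t,t)-\mub_p(\x_t,t)\|^2\Bigr]\diff t,
\end{align*}
where the Novikov-type integrability condition is assumed as part of the hypotheses. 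Averaging over $\x_1\sim q_1$ replaces the conditional marginal $q_t(\cdot\mid\x_1)$ with the unconditional $q_t$.

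Finally I would substitute the explicit drifts. From the reverse-SDE formulas (\cref{eq:reverse-SDE}), both processes share the term $-\f_{1-s}+g^2\cdot(\text{score})$ in forward-time notation; written in the reverse-time convention of \eqref{eq:reverse-sde-q}--\eqref{eq:reverse-sde-p}, the difference $\mub_q - \mub_p$ equals $g_t^2\,(\nabla_{\x_t}\log q_t(\x_t) - \s_\theta(\x_t,t))$, since $\mub_q$ uses the true marginal score and $\mub_p$ substitutes $\s_\theta$. Plugging this in, the factor $1/g_t^2$ against $\|g_t^2(\cdots)\|^2$ collapses to $g_t^2\|\cdots\|^2$, yielding \eqref{eq:elbo-bound-girsanov}. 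The main obstacle is not any single computation but the rigorous justification of Girsanov's theorem in this setting: one must verify that the change-of-measure exponential is a genuine martingale (not merely a local martingale), which requires integrability assumptions on the score mismatch along $q_t$, and one must handle the reverse-time Itô calculus carefully so that the diffusion coefficients genuinely match under both measures — these technical regularity conditions are exactly what the phrase ``under the setup above'' is tacitly carrying.
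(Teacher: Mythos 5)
Your proposal is correct and follows essentially the same route as the paper: reduce the marginal KL to the path-space KL, decompose the latter at the $\x_1$ endpoint via the chain rule, apply Girsanov to the conditional path measures, and substitute the drift difference $g_t^2(\nabla_{\x_t}\log q_t - \s_\theta)$ so that the $1/g_t^2$ weight collapses to $g_t^2$. The only cosmetic difference is that you invoke the data processing inequality for the endpoint map where the paper subtracts the two chain-rule decompositions at $t=0$ and $t=1$ and drops the non-negative term — the paper itself presents these as interchangeable in \cref{apx:data_processing_inequality}.
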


\begin{proof}
The proof proceeds as follows.

By the chain rule of KL divergence for path measures \cref{eq:KL-chain-rule}, we can decompose $\KL(\Qpath \| \Ppath)$ at either endpoint:
\begin{align}
    \KL(\Qpath \| \Ppath)
    &= \KL(q_0 \| p_0) + \E_{\x_0 \sim q_0} \left[ \KL( \Qpath_{\x_0} \| \Ppath_{\x_0} ) \right], \label{eq:chain-rule-t0}\\
    \KL(\Qpath \| \Ppath)
    &= \KL(q_1 \| p_1) + \E_{\x_1 \sim q_1} \left[ \KL( \Qpath_{\x_1} \| \Ppath_{\x_1} ) \right], \label{eq:chain-rule-t1}
\end{align}
where $\Qpath_{\x_t}$ and $\Ppath_{\x_t}$ denote the conditional path measures given $\x_t$ (obtained by disintegration).

Subtracting \eqref{eq:chain-rule-t0} from \eqref{eq:chain-rule-t1} and using non-negativity of KL divergences:
\begin{equation}
\KL(q_0 \| p_0) \leq \KL(q_1 \| p_1) + \E_{\x_1 \sim q_1} \left[ \KL( \Qpath_{\x_1} \| \Ppath_{\x_1} ) \right].
\label{eq:kl-bound-endpoint}
\end{equation}

Since both SDEs share the same diffusion coefficient $g_t$ and terminal condition, Girsanov's theorem \cite{oksendal2003stochastic} yields the Radon--Nikodym derivative between conditional path measures. The resulting KL divergence is:
\begin{equation}
\KL( \Qpath_{\x_1} \| \Ppath_{\x_1} )
= \frac{1}{2} \E_{\Qpath_{\x_1}} \left[ \int_0^1 \frac{1}{g_t^2} \|\mub_p(\x_t, t) - \mub_q(\x_t, t)\|^2 \,\diff t \right].
\label{eq:girsanov-kl}
\end{equation}

Taking the expectation over $\x_1 \sim q_1$ and applying the tower property:
\begin{align}
\E_{\x_1 \sim q_1} \left[ \KL( \Qpath_{\x_1} \| \Ppath_{\x_1} ) \right]
&= \frac{1}{2} \E_{\Qpath} \left[ \int_0^1 \frac{1}{g_t^2} \|\mub_p(\x_t, t) - \mub_q(\x_t, t)\|^2 \,\diff t \right].
\end{align}

By the Fubini--Tonelli theorem (applicable since the integrand is non-negative), we exchange the expectation and integral. Since the integrand at time $t$ depends only on $\x_t$, the path expectation reduces to a marginal expectation:
\begin{equation}
\E_{\Qpath} \left[ \int_0^1 \frac{1}{g_t^2} \|\mub_p(\x_t, t) - \mub_q(\x_t, t)\|^2 \,\diff t \right] 
= \int_0^1 \frac{1}{g_t^2} \, \E_{\x_t \sim q_t} \left[ \|\mub_p(\x_t, t) - \mub_q(\x_t, t)\|^2 \right] \diff t.
\label{eq:fubini-marginal}
\end{equation}

For the reverse-time SDEs, the drifts are (cf.\ \cref{eq:reverse-SDE}):
\begin{align}
    \mub_q(\x_t, t) &= -\f_t(\x_t) + g_t^2 \nabla_{\x_t} \log q_t(\x_t), \\
    \mub_p(\x_t, t) &= -\f_t(\x_t) + g_t^2 \s_\theta(\x_t, t),
\end{align}
where $\f_t$ is the forward drift. Substituting into \eqref{eq:fubini-marginal}:
\begin{align}
\|\mub_p - \mub_q\|^2 = g_t^4 \| \s_\theta(\x_t, t) - \nabla_{\x_t} \log q_t(\x_t) \|^2.
\end{align}

Combining with \eqref{eq:kl-bound-endpoint} yields the claimed bound \eqref{eq:elbo-bound-girsanov}.
\end{proof}

\section{Latent diffusion ELBO \label{apx:latent-diff-ELBO}}

We derive the evidence lower bound (ELBO) for latent diffusion models, where data $\x_0$ is encoded into a latent space $\mathcal{Z}$, and diffusion is performed in this latent space.

\begin{lemma}[Latent Diffusion ELBO]
Let $q^{\psib}(\z_0 \cond \x_0)$ be an encoder mapping data to latent representations, $p^\phib(\x_0 \cond \z_0)$ a decoder reconstructing data from latents, and $p^\thetab(\z_0)$ the latent prior learned via diffusion. The negative log-likelihood satisfies:
\begin{equation}\label{eq:latent-elbo-bound}
-\log p^\thetab(\x_0) \;\leq\; \KL\bigl(q^\psib(\z_0 \cond \x_0) \,\|\, p^\thetab(\z_0)\bigr) - \E_{q^\psib(\z_0 \cond \x_0)}\bigl[\log p^\phib(\x_0 \cond \z_0)\bigr].
\end{equation}
\end{lemma}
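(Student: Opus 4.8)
## Proof Proposal

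The plan is to derive the latent-diffusion ELBO by the same variational argument used throughout the paper for ordinary latent-variable models (Section~\ref{sect:diffusion-primer}), treating $\z_0$ as the latent variable, $q^\psib(\z_0\cond\x_0)$ as the variational posterior, and the pair $(p^\thetab(\z_0),\,p^\phib(\x_0\cond\z_0))$ as defining the generative model $p^\thetab(\x_0)=\int p^\phib(\x_0\cond\z_0)\,p^\thetab(\z_0)\,\diff\z_0$. The key identity is that the joint model factorises as $p(\x_0,\z_0)=p^\phib(\x_0\cond\z_0)\,p^\thetab(\z_0)$, so the marginal likelihood is the integral of this joint over $\z_0$. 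The diffusion structure of $p^\thetab(\z_0)$ plays no role at this stage: it only matters that $p^\thetab(\z_0)$ is a well-defined density on $\mathcal{Z}$.

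The concrete steps, in order, are: (i) write $\log p^\thetab(\x_0)=\log\int p^\phib(\x_0\cond\z_0)\,p^\thetab(\z_0)\,\diff\z_0$; (ii) multiply and divide the integrand by $q^\psib(\z_0\cond\x_0)$ to rewrite the integral as $\E_{q^\psib(\z_0\cond\x_0)}\!\big[p^\phib(\x_0\cond\z_0)\,p^\thetab(\z_0)/q^\psib(\z_0\cond\x_0)\big]$; (iii) apply Jensen's inequality (concavity of $\log$) to pull the logarithm inside the expectation, yielding $\log p^\thetab(\x_0)\ge \E_{q^\psib(\z_0\cond\x_0)}\big[\log p^\phib(\x_0\cond\z_0)+\log p^\thetab(\z_0)-\log q^\psib(\z_0\cond\x_0)\big]$; (iv) regroup the last two terms into $-\KL(q^\psib(\z_0\cond\x_0)\,\|\,p^\thetab(\z_0))$ using the definition of KL divergence; (v) negate both sides to obtain the stated upper bound on $-\log p^\thetab(\x_0)$. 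This mirrors exactly the VAE-style derivation already displayed in the excerpt, just with the relabelling $\z\leftrightarrow\z_0$ and the generative prior being the diffusion prior rather than a fixed $p(\z)$.

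There is no real obstacle here—the statement is a direct instance of the standard single-latent-variable ELBO, and the only thing to be careful about is bookkeeping: ensuring the encoder $q^\psib$, decoder $p^\phib$, and latent prior $p^\thetab$ appear with the correct conditioning and that the KL term is oriented as $\KL(q^\psib(\z_0\cond\x_0)\,\|\,p^\thetab(\z_0))$ (posterior from prior, not the reverse). If one wishes, one can additionally remark that tightness holds iff $q^\psib(\z_0\cond\x_0)=p^\thetab(\z_0\cond\x_0)$ almost surely, again inheriting the general characterisation from Section~\ref{sect:diffusion-primer}. For the full latent-diffusion objective one then further upper-bounds the KL term using the data-processing inequality to replace $\KL(q^\psib(\z_0\cond\x_0)\,\|\,p^\thetab(\z_0))$ by the path-space KL $\KL(q^\psib(\z_{[0,1]}\cond\x_0)\,\|\,p^\thetab(\z_{[0,1]}))$, but that step is already carried out in the main text and is not part of this lemma.
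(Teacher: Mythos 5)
Your proposal is correct and matches the paper's argument in substance: both treat $\z_0$ as a single latent variable and derive the standard one-layer ELBO with $q^\psib(\z_0\cond\x_0)$ as variational posterior. The only cosmetic difference is that you obtain the bound via Jensen's inequality, whereas the paper writes $\log p(\x_0)=\E_{q^\psib(\z_0\cond\x_0)}[\log p(\x_0)]$, substitutes $p(\x_0)=p^\phib(\x_0\cond\z_0)p^\thetab(\z_0)/p(\z_0\cond\x_0)$, and drops the explicit non-negative gap term $\KL\bigl(q^\psib(\z_0\cond\x_0)\,\|\,p(\z_0\cond\x_0)\bigr)$ --- the two derivations are equivalent, and your tightness remark recovers exactly what the explicit-gap form makes visible.
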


\begin{proof}
We derive the evidence lower bound using the standard variational decomposition. Starting from the log-marginal likelihood:
\begin{align}
\log p(\x_0) 
&= \int q(\z_0 \cond \x_0) \log p(\x_0) \,\diff\z_0 
= \int q(\z_0 \cond \x_0) \log \frac{p(\z_0)\, p(\x_0 \cond \z_0)}{p(\z_0 \cond \x_0)} \,\diff\z_0 \\
&= \int q(\z_0 \cond \x_0) \log \frac{q(\z_0 \cond \x_0)\, p(\z_0)\, p(\x_0 \cond \z_0)}{q(\z_0 \cond \x_0)\, p(\z_0 \cond \x_0)} \,\diff\z_0 \\
&= \underbrace{\E_{q(\z_0 \cond \x_0)}\bigl[\log p(\x_0 \cond \z_0)\bigr] - \KL\bigl(q(\z_0 \cond \x_0) \,\|\, p(\z_0)\bigr)}_{\text{ELBO}} + \underbrace{\KL\bigl(q(\z_0 \cond \x_0) \,\|\, p(\z_0 \cond \x_0)\bigr)}_{\geq 0}.
\end{align}

Since the KL divergence is non-negative, we obtain:
\begin{equation}
\log p(\x_0) \;\geq\; \E_{q(\z_0 \cond \x_0)}\bigl[\log p(\x_0 \cond \z_0)\bigr] - \KL\bigl(q(\z_0 \cond \x_0) \,\|\, p(\z_0)\bigr).
\end{equation}

Substituting the learned components—encoder $q^\psib(\z_0 \cond \x_0) \approx p(\z_0 \cond \x_0)$, decoder $p^\phib(\x_0 \cond \z_0) \approx p(\x_0 \cond \z_0)$, and latent prior $p^\thetab(\z_0) \approx p(\z_0)$ (learned via diffusion)—yields:
\begin{equation}
\log p^\thetab(\x_0) \;\geq\; \E_{q^\psib(\z_0 \cond \x_0)}\bigl[\log p^\phib(\x_0 \cond \z_0)\bigr] - \KL\bigl(q^\psib(\z_0 \cond \x_0) \,\|\, p^\thetab(\z_0)\bigr).
\end{equation}

\end{proof}

\newpage

\end{document}